\PassOptionsToPackage{unicode}{hyperref}
\PassOptionsToPackage{hyphens}{url}
\PassOptionsToPackage{dvipsnames,svgnames,x11names}{xcolor}
\documentclass[
  12pt]{article}

\usepackage{amsmath,amssymb}
\usepackage{iftex}
\ifPDFTeX
  \usepackage[T1]{fontenc}
  \usepackage[utf8]{inputenc}
  \usepackage{textcomp} 
\else 
  \usepackage{unicode-math}
  \defaultfontfeatures{Scale=MatchLowercase}
  \defaultfontfeatures[\rmfamily]{Ligatures=TeX,Scale=1}
\fi
\usepackage{lmodern}
\ifPDFTeX\else  
\fi
\IfFileExists{upquote.sty}{\usepackage{upquote}}{}
\IfFileExists{microtype.sty}{
  \usepackage[]{microtype}
  \UseMicrotypeSet[protrusion]{basicmath} 
}{}
\makeatletter
\@ifundefined{KOMAClassName}{
  \IfFileExists{parskip.sty}{%
    \usepackage{parskip}
  }{
    \setlength{\parindent}{0pt}
    \setlength{\parskip}{6pt plus 2pt minus 1pt}}
}{
  \KOMAoptions{parskip=half}}
\makeatother
\usepackage{xcolor}
\makeatletter
\ifx\paragraph\undefined\else
  \let\oldparagraph\paragraph
  \renewcommand{\paragraph}{
    \@ifstar
      \xxxParagraphStar
      \xxxParagraphNoStar
  }
  \newcommand{\xxxParagraphStar}[1]{\oldparagraph*{#1}\mbox{}}
  \newcommand{\xxxParagraphNoStar}[1]{\oldparagraph{#1}\mbox{}}
\fi
\ifx\subparagraph\undefined\else
  \let\oldsubparagraph\subparagraph
  \renewcommand{\subparagraph}{
    \@ifstar
      \xxxSubParagraphStar
      \xxxSubParagraphNoStar
  }
  \newcommand{\xxxSubParagraphStar}[1]{\oldsubparagraph*{#1}\mbox{}}
  \newcommand{\xxxSubParagraphNoStar}[1]{\oldsubparagraph{#1}\mbox{}}
\fi
\makeatother

\usepackage{longtable,booktabs,array}
\usepackage{calc} 
\usepackage{etoolbox}
\makeatletter
\patchcmd\longtable{\par}{\if@noskipsec\mbox{}\fi\par}{}{}
\makeatother
\IfFileExists{footnotehyper.sty}{\usepackage{footnotehyper}}{\usepackage{footnote}}
\makesavenoteenv{longtable}
\usepackage{graphicx}
\makeatletter
\def\maxwidth{\ifdim\Gin@nat@width>\linewidth\linewidth\else\Gin@nat@width\fi}
\def\maxheight{\ifdim\Gin@nat@height>\textheight\textheight\else\Gin@nat@height\fi}
\makeatother
\setkeys{Gin}{width=\maxwidth,height=\maxheight,keepaspectratio}
\makeatletter
\def\fps@figure{htbp}
\makeatother

\makeatletter
\@ifpackageloaded{caption}{}{\usepackage{caption}}
\AtBeginDocument{%
\ifdefined\contentsname
  \renewcommand*\contentsname{Table of contents}
\else
  \newcommand\contentsname{Table of contents}
\fi
\ifdefined\listfigurename
  \renewcommand*\listfigurename{List of Figures}
\else
  \newcommand\listfigurename{List of Figures}
\fi
\ifdefined\listtablename
  \renewcommand*\listtablename{List of Tables}
\else
  \newcommand\listtablename{List of Tables}
\fi
\ifdefined\figurename
  \renewcommand*\figurename{Figure}
\else
  \newcommand\figurename{Figure}
\fi
\ifdefined\tablename
  \renewcommand*\tablename{Table}
\else
  \newcommand\tablename{Table}
\fi
}
\@ifpackageloaded{float}{}{\usepackage{float}}
\floatstyle{ruled}
\@ifundefined{c@chapter}{\newfloat{codelisting}{h}{lop}}{\newfloat{codelisting}{h}{lop}[chapter]}
\floatname{codelisting}{Listing}

\makeatother
\makeatletter
\@ifpackageloaded{caption}{}{\usepackage{caption}}
\@ifpackageloaded{subcaption}{}{\usepackage{subcaption}}
\makeatother

\ifLuaTeX
  \usepackage{selnolig}  
\fi
\usepackage[]{natbib}
\usepackage{bookmark}

\IfFileExists{xurl.sty}{\usepackage{xurl}}{} 
\hypersetup{
  pdftitle={Private Generative Bootstrap via Blocking},
  pdfauthor={Jinwon Sohn, Veronika Rockova},
  pdfkeywords={Amortized inference; Differential privacy; Loss-likelihood inference; Nonparametric inference; Quantile regression; Uncertainty quantification},
  colorlinks=true,
  linkcolor={blue},
  filecolor={Maroon},
  citecolor={Blue},
  urlcolor={Blue},
  pdfcreator={LaTeX via pandoc}}

\newcommand{\anon}{1}

\usepackage{amsthm}
\usepackage{amsfonts}
\usepackage{mathtools}
\usepackage{multirow}
\usepackage{algorithm}
\usepackage{algpseudocode}
\usepackage{placeins}
\usepackage{bibunits}   
\defaultbibliographystyle{agsm}
\allowdisplaybreaks

\graphicspath{{images/}}
\newtheorem{theorem}{Theorem}[section]
\newtheorem{corollary}{Corollary}[section]
\newtheorem{proposition}{Proposition}[section]
\newtheorem{remark}{Remark}[section]

\newtheorem{lemma}{Lemma}[section]

\newtheorem{definition}{Definition}[section]

\DeclareMathOperator*{\argmin}{arg\,min}
\DeclareMathOperator{\Var}{Var}
\DeclareMathOperator{\Cov}{Cov}
\DeclareMathOperator{\diam}{diam}
\newcommand{\E}{{\mathbb E}}
\newcommand{\R}{{\mathbb R}}

\renewcommand{\P}{{\mathbb P}}

\newcommand{\Diri}{\text{Dirichlet}}
\newcommand{\sub}{\text{Block}}
\newcommand{\Beta}{\text{Beta}}
\newcommand{\DP}{{\cal DP}}
\newcommand{\bX}{{\bf X}}

\newcommand{\bu}{{\bf u}}
\newcommand{\bw}{{\bf w}}
\newcommand{\bc}{{\bf c}}

\newcommand{\op}{\text{op}}

\newcommand{\tr}{\text{Tr}}

\newcommand{\cM}{\mathcal{M}}

\newcommand{\bomega}{\boldsymbol{\omega}}

\begin{document}
\begin{bibunit}[agsm]   

\def\spacingset#1{\renewcommand{\baselinestretch}%
{#1}\small\normalsize} \spacingset{1}


\if1\anon
{
  \title{\bf Private Generative Bootstrap via Blocking}
  \author{Jinwon Sohn\thanks{Corresponding author: \texttt{Jinwon.Sohn@ChicagoBooth.edu}.}\hspace{.2cm}\\
    Booth School of Business, University of Chicago\\
    and \\
    Veronika Ro\v{c}kov\'a\\
    Booth School of Business, University of Chicago}
  \maketitle
} \fi

\if0\anon
{
  \bigskip\bigskip\bigskip
  \begin{center}
    {\LARGE\bf Private Generative Bootstrap via Blocking}
  \end{center}
  \medskip
} \fi

\bigskip
\begin{abstract}
With  AI systems gaining more access to individuals' information, it is important to protect privacy when reporting statistical answers. Equally important is to privatize the reporting of uncertainty in such answers. To this end, we adopt a Bayesian likelihood-free framework and make simulation from the posterior private. In particular, we propose a new private instantiation of the Bayesian bootstrap using a blocking strategy. Rather than assigning idiosyncratic random weights to each individual, we randomly group individuals and assign a single weight to each group. By concealing individuals' contributions within a group, we fortify differential privacy gates. We harness amortized inference that decouples private learning from posterior sampling. A push-forward map from observation weights to posterior samples is learned privately by adding calibrated noise during training. Subsequent posterior draws require no additional privacy and computation budget. We call the resulting method the Private Generative Bayesian Bootstrap (PGBB). We establish a differential privacy guarantee, analyze convergence to the non-private blocked-bootstrap target, and quantify the discrepancy between the ordinary and blocked Bayesian-bootstrap posteriors. In addition, we derive data-free tuning of the block Dirichlet concentration parameter that restores posterior dispersion asymptotically. We also show a single fit of PGBB can support a family of loss-based decision rules simultaneously without additional privacy cost. In simulations and in applications to U.S. Census returns to schooling and U.S. natality birthweight quantiles, PGBB gives competitive private uncertainty quantification and improves over private Bayesian alternatives that require a specified data-generating model in common settings.
\end{abstract}

\noindent%
{\it Keywords:} Amortized Inference; Differential Privacy; Loss-likelihood Inference; Nonparametric Inference; Private Uncertainty Quantification
\vfill

\newpage
\spacingset{1.8} 
\section{Introduction}

Protecting individuals' privacy while reporting statistical answers is an important consideration in modern data analysis. Differential privacy (DP), proposed by \cite{dwor:etal:06}, has become a standard mathematical formalism for such guarantees. This framework controls the effect of any single record on the released output even against adversaries with auxiliary information. Beyond point summaries, a private analysis must also quantify how uncertain those summaries are, all the more so because the injected privacy noise is itself an additional source of variability. We call the task of reporting statistically valid uncertainty under a formal privacy guarantee \emph{private uncertainty quantification} (private UQ), a goal targeted by a growing literature \citep{wang:etal:15,ferr:etal:22,avel:etal:23,awan:wang:25,wang:etal:25}. Private UQ nonetheless remains challenging when one wishes to report statistically and practically reasonable uncertainty estimates without committing to a fully specified likelihood or data-generating process.

There have been various approaches for private UQ. Related to our work, a Bayesian formulation is natural for UQ because the posterior directly encodes uncertainty about the target, and a substantial literature has studied Bayesian inference under DP. Categorized mainly into three branches are privatizing posterior sampling itself \citep{wang:etal:15,dimi:etal:17,jews:etal:23}, perturbing MCMC transitions or gradients \citep{heik:etal:19,bai:etal:19,zhan:zhan:23}, and releasing noisy summary statistics for subsequent Bayesian updating such as Gibbs sampling \citep{bern:shel:18} or simulation-based draws \citep{gong:22,ju:etal:22}. However, private UQ under standard Bayesian inference is compelling only when a working likelihood matches an underlying data-generating model. It is well-known that, under misspecification, the posterior may concentrate around a pseudo-true parameter and report credible sets calibrated to the wrong model \citep{mull:13}. While the nonparametric private bootstrap \citep{wang:etal:25} avoids likelihood specification, it repeatedly accesses the data as many times as needed to collect private bootstrap estimates. This would require nontrivial privacy budgets in practice when numerous bootstrap samples are required. For these reasons, this work aims to develop a private Bayesian bootstrap method that directly overcomes the limitations of existing branches. 

As a backbone inferential framework, we adopt a decision-theoretic perspective that defines the target as a functional of the population distribution. Namely, for $X\sim F$ and a loss $l(\theta,x):\Theta\times{\cal X}\to\mathbb R^+$, the inferential target is $\theta(F):=\argmin_{\theta\in\Theta}\int l(\theta,x)\,dF(x)$, so the parameter is a decision rule rather than a component of a parametric model \citep{lydd:etal:19,fong:etal:19}. Uncertainty about $\theta(F)$ can be propagated from a nonparametric posterior on $F$ through the bootstrap. For instance, the posterior bootstrap of \cite{fong:etal:19} places a Dirichlet-process prior $F\sim {\cal DP}(\alpha,F_{\text{base}})$ on the data $\bX_n=(X_1,\dots,X_n)$ and repeatedly re-solves the weighted optimization to obtain draws of $\theta(\bw)$ for $\bw\sim\Diri(\alpha+n,\frac{\alpha}{\alpha+n}F_{\text{base}}+\frac{n}{n+\alpha}\sum_{i=1}^n\delta_{X_i})$, reducing to Rubin's Bayesian bootstrap \citep{rubi:81} when $\alpha=0$. The bootstrap is thus a natural vehicle that avoids specifying a working likelihood or data-generating process, toward private UQ nonparametrically. A recent advancement of Bayesian bootstrap  \citep{nie:rock:23,shin:etal:24} finds a generative map between $\bw$ and $\theta(\bw)$, known to enjoy computational scalability and produce effective i.i.d. posterior samples by construction.

Addressing the major limitations of existing work, this work suggests finding a private generative map under a block bootstrap scheme. To make this concrete, recall that the Bayesian bootstrap \citep{rubi:81} draws random weights $\bw=(w_1,\dots,w_n)\sim\Diri(\mathbf 1_n)$ over the $n$ records and reports $\theta(\bw)$. Our construction, however, begins to replace these per-record Dirichlet weights by block-level Dirichlet weights $\bu=(u_1,\dots,u_S)\sim\Diri(\gamma\mathbf 1_S)$ where $n$ individuals are randomly partitioned into $S$ equal-sized blocks. In this design, an individual contribution to the optimization reduces from 1 to $1/n_S$ in the worst case by concealing an individual into a subgroup, and therefore the block scheme provides a better privacy-preserving environment for devising an efficient private mechanism than the original scheme. Then, a proposed noisy training finds an amortized map which is the push-forward map from the blocked Dirichlet weights onto the block bootstrap posterior. We call the resulting map the Private Generative Bayesian Bootstrap (PGBB). Once PGBB is trained, any number of posterior draws is obtained by evaluating it at fresh weights, which is post-processing and spends no additional budget, in contrast to private MCMC \citep{heik:etal:19,zhan:zhan:23} and private bootstrap \citep{wang:etal:25}, whose budget grows with the number of draws to be released. Besides, PGBB can carry out private UQ for a range of decision rules without paying more privacy budgets, through joint amortized learning proposed in \cite{shin:etal:24}, which is an inherent advantage of PGBB. Table~\ref{tab:method-comparison} in SM contrasts PGBB with representative private inference methods. To our knowledge, this is the first work to harness the block bootstrap structure for private UQ. 

The blocked construction for privacy introduces two statistical questions. First, larger blocks improve privacy protection, but they also make the blocked posterior bootstrap under-dispersed relative to the ordinary posterior bootstrap. \emph{Thus, blocking creates a tension between privacy and posterior fidelity}. To address this, we calibrate the concentration parameter $\gamma$ of the blocked Dirichlet weights so that the posterior dispersion is restored asymptotically, and notably, it does not require additional privacy budget. Second, because the posterior sampler is learned through noisy private training rather than solved exactly, understanding the gap between the private generator and the blocked bootstrap target is statistically important for understanding how privacy noise affects posterior approximation. 

Our contributions are as follows. First, we develop the PGBB under a block-bootstrap scheme, with a DP guarantee that accounts for the randomness of block-level Dirichlet weights (Section~\ref{sec:private-GBB}). Second, we analyze the statistical error from noisy private training. Section~\ref{sec:theory} gives, to our knowledge, the first convergence guarantee for an amortized Bayesian bootstrap under noisy training, with the non-private generator as a special case. Third, we quantify and correct the posterior distortion caused by blocking. Section~\ref{sec:blocking} gives a finite-sample Wasserstein bound against the ordinary Bayesian bootstrap and introduces a concentration calibration that improves under-dispersion. Finally, Sections~\ref{sec:simul} and~\ref{sec:realdata} show that PGBB gives reliable private uncertainty under misspecification and that one trained generator can answer many quantile targets without additional privacy budget. Proofs and additional experiments are collected in the Supplementary Material (SM).

\section{Private Generative Bayesian Bootstrap}
\label{sec:private-GBB}

\subsection{Differential Privacy}

Bayesian inference may leak individuals' privacy. Imagine posterior summaries of a target rule $\theta$ are reported. An adversary may attempt to identify someone in the report or their private data record for potential exploitation by contrasting the institution's statistics (or decision) with their own posterior summaries obtained from auxiliary information. In such a case, differential privacy (DP) shall ask the reported posterior quantities to satisfy an indistinguishability condition defined as follows.  
\begin{definition}[\cite{dwor:etal:06b}]
\label{def:dp}
    Let $\bX_n=\{X_i\}_{i=1}^n$ and its neighboring database $\bX_n'=\{X_i'\}_{i=1}^n$ which differs in a single record, i.e., $X_i=X_i'$ holds for all $i\neq j$ except for a certain $j$. For fixed  $\epsilon >0$ and $\delta\in[0,1]$, a randomized mechanism $\cM$ is called $(\epsilon,\delta)$-DP if 
    \begin{align}
    \label{eqn:approx-dp}
        P(\cM(\bX_n)\in {\cal B})\le e^{\epsilon}P(\cM(\bX_n')\in {\cal B})+\delta,
    \end{align}
    for any neighboring databases $\bX_n, \bX_n'$ and all Borel measurable sets ${\cal B}$. 
\end{definition}
\noindent In words, the two posterior quantities $\cM(\bX_n)$ and $\cM(\bX_n')$, differing at only one arbitrary individual, are statistically indistinguishable up to $e^{\epsilon}$ tilting and $\delta$ probability shift. We say $(\epsilon,\delta)$ is the privacy budget. As $\epsilon,\delta$ get smaller, $\cM$ has stronger privacy protection. A $(\epsilon,\delta)$-DP mechanism satisfies the following important property. 
\begin{proposition}[Post processing, \cite{dwor:etal:14}]
\label{prop:post-processing}
Let $\cM$ be an $(\epsilon,\delta)$-DP mechanism. For any measurable map $A$, the composed mechanism
$A\circ \cM$ is also $(\epsilon,\delta)$-DP.
\end{proposition}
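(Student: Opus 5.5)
The plan is to handle deterministic post-processing maps first and then extend to randomized ones by conditioning on the independent internal randomness. Let $\cM$ take values in a measurable space $(\mathcal Y,\mathcal F)$, and let $A:\mathcal Y\to\mathcal Z$ be measurable. Fix neighboring databases $\bX_n,\bX_n'$ as in Definition~\ref{def:dp} and a measurable set ${\cal B}\subseteq\mathcal Z$.

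\textbf{Deterministic $A$.} Measurability of $A$ gives that the preimage $A^{-1}({\cal B})$ lies in $\mathcal F$. Hence
\begin{align*}
P(A(\cM(\bX_n))\in{\cal B})&=P(\cM(\bX_n)\in A^{-1}({\cal B}))\\
&\le e^{\epsilon}P(\cM(\bX_n')\in A^{-1}({\cal B}))+\delta\\
&=e^{\epsilon}P(A(\cM(\bX_n'))\in{\cal B})+\delta .
\end{align*}
The inequality is \eqref{eqn:approx-dp} applied to the event $A^{-1}({\cal B})$. Since ${\cal B}$ and the neighboring pair were arbitrary, $A\circ\cM$ is $(\epsilon,\delta)$-DP. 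This case is the core of the argument and involves no real difficulty.

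\textbf{Randomized $A$.} I would write $A(y)=g(y,R)$, where $R$ is independent of $\cM$ and of the data, has law $\mu$, and $g$ is jointly measurable. In PGBB, for example, $R$ is the fresh Dirichlet weights $\bu$ fed to the trained generator. For each fixed $r$, the map $g(\cdot,r)$ is deterministic and measurable, so the previous step applies to it. By independence and Fubini,
\[
P(A(\cM(\bX_n))\in{\cal B})=\int P\bigl(g(\cM(\bX_n),r)\in{\cal B}\bigr)\,\mu(dr).
\]
Each integrand is bounded by $e^{\epsilon}P(g(\cM(\bX_n'),r)\in{\cal B})+\delta$. Integrating this bound against the probability measure $\mu$ preserves the inequality, and the additive $\delta$ survives unchanged because $\mu$ has total mass one. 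The right-hand side then integrates back to $e^{\epsilon}P(A(\cM(\bX_n'))\in{\cal B})+\delta$, which proves the claim.

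\textbf{Main obstacle.} The only delicate point is the measure-theoretic bookkeeping in the randomized case. One needs to represent the randomized map as a Markov kernel, or equivalently as a jointly measurable function of $(y,R)$, so that the section $\{(y,r):g(y,r)\in{\cal B}\}$ is measurable and Fubini applies. I would state this representation as an assumption and note that it holds for any generator evaluated at independently drawn weights.
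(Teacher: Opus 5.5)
Your argument is correct and is essentially the standard proof in the cited source; the paper itself gives no proof and simply defers to \cite{dwor:etal:14}. There, the preimage argument handles deterministic measurable $A$, and your Fubini step over independent randomness $R$ (whose law does not depend on the database, as you require) is exactly the ``convex combination of deterministic maps'' step, with the measurability bookkeeping made explicit.
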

In our context, therefore, if the amortized generator ${\cM}$ is $(\epsilon,\delta)$-DP, then any downstream statistics constructed from the private sampler inherit the same $(\epsilon,\delta)$ privacy level. 

\subsection{Nonparametric Bayesian Bootstrap and Blocking}
\label{sec:npbb}

As established in the Introduction, our inferential target is the decision rule $\theta(F)=\argmin_{\theta\in\Theta}\int l(\theta,x)\,dF(x)$ for $X\sim F$, and we quantify uncertainty about it through the Bayesian bootstrap \citep{rubi:81}, drawing weights $\bw=(w_1,\dots,w_n)\sim\Diri(\mathbf 1_n)$ and reports $\theta^\star_{\Diri}:=\theta^\star(\bw):=\argmin_{\theta}\sum_{i=1}^n w_i\,l(\theta,X_i)$ for observations $X_1,\dots,X_n \sim F$ i.i.d. This is the $\alpha=0$ instance of the posterior bootstrap of \cite{fong:etal:19}, whose Dirichlet-process prior $F\sim\DP(\alpha,F_{\text{base}})$ additionally mixes the data with pseudo-samples from a prior $F_{\text{base}}$. We work throughout in the fully data-driven regime $\alpha=0$ and defer the extension to $\alpha>0$ to Discussion. We next recall the amortized form of this bootstrap and then introduce blocking, the foundation of our privacy mechanism.

\subsubsection{Generative Bootstrap via Amortization}

With remarkable computational efficiency, a recent amortized bootstrap method aims to find a generative map from the bootstrap weights to the estimates of the decision rule $\theta(F)$ \citep{shin:etal:21,nie:rock:23,shin:etal:24}. Founded on the aforementioned nonparametric Bayesian framework, the amortized sampler is defined as 
\begin{align}
\label{eqn:nbb}
    G^{\star}:=\argmin_{G\in {\cal G}} \E_{\bw \sim \Diri({\bf 1}_n)}\Big[\sum_{i=1}^n w_il(G(\bw),X_i)\Big], \quad G(\bw):\mathbb{R}^{n}\rightarrow \mathbb{R}^{d_{\theta}}, 
\end{align}
where $l(\cdot,\cdot):\mathbb{R}^{d_{\theta}} \times {\cal X}\rightarrow \mathbb{R}^+$ is a non-negative real-valued function. Intuitively, an ideal generator that attains the pointwise weighted minimum, $G^\star(\bw):=\argmin_{\theta}\sum_i w_i l(\theta,X_i)$ for almost every $\bw$, satisfies $G^\star(\bw)=\theta^\star_{\Diri}(\bw)$ for a fresh draw $\bw$ \citep{shin:etal:24}. Thus the ideal $G^\star$ serves as the push-forward map from $\Diri({\bf 1}_n)$ to the posterior distribution of $\theta^{\star}_{\Diri}$. In practice, a class ${\cal G}_{\text{NN}}=\{G_{\phi}:\phi\in \Phi\}$ of neural networks is employed where $\Phi$ is the space of neural-net's parameters. A feed-forward network $G_{\phi}(x)=W_V h_{V-1}(W_{V-1}h_{V-2}(\cdots W_1x))$ is a common example where $\phi=(W_1,\dots,W_V)$ and $h_1,\dots,h_{V-1}$ are nonlinear elementwise activation functions. The optimization \eqref{eqn:nbb} then simplifies to finding $G^{\star}_{\phi}(\bw):=\argmin_{\phi\in \Phi} \E_{\bw \sim \Diri({\bf 1}_n)}[\sum_{i=1}^n w_il(G_{\phi}(\bw),X_i)]$. Thus, for a fresh draw $\bw\sim \Diri({\bf 1}_n)$, $G^{\star}_{\phi}(\bw)$ approximates $G^{\star}(\bw)$ and accordingly $\theta^{\star}_{\Diri}(\bw)$ as well. In particular, it is well-known that, if $G^{\star}$ is continuous, then $G_{\phi}$ is a universal approximator; there exists $G_{\phi}\in {\cal G}_{\text{NN}}$ such that $G_{\phi}$ gets closer to $G^{\star}$ as $G_{\phi}$ gets deeper and wider \citep{hani:19}. Posterior bootstrap samples are obtained by repeatedly drawing $\bw$ and evaluating $G^\star_{\phi}(\bw)$, and this work harnesses this decoupled inference from learning the stage for the privacy purpose remarked below.
\begin{remark}
Due to the decoupled phases between finding a posterior map and generating posterior samples from it, once the generative map is $(\epsilon,\delta)$-DP, the generation phase is also $(\epsilon,\delta)$-DP by the post-processing property (Proposition~\ref{prop:post-processing}). 
\end{remark}

The neural-net generator $G^\star_{\phi}$ is estimated by the iterative update of $\phi$ as follows. After $\phi^{(0)}$ is initialized, at iteration $t$, we \textit{(i)} sample $\bw_t^{(k)}\sim\Diri({\bf 1}_n)$ and construct an independent minibatch $B_t^{(k)}$ for $k=1,\dots,K$, and \textit{(ii)} update
$\phi^{(t+1)}=\phi^{(t)}-\eta\nabla g_t$ where 
\begin{align}
\label{eqn:opt-grad}
    \nabla g_t=\frac{1}{K}\sum_{k=1}^K\sum_{i\in B_t^{(k)}} w_{i,t}^{(k)} \nabla l(g^{(k)}_t,X_i)|_{g^{(k)}_t=G_{\phi^{(t)}}(\bw^{(k)}_t)} \cdot \nabla_{\phi}G_{\phi^{(t)}}(\bw^{(k)}_t)
\end{align}
for $t=1,\dots,T$. When $n$ is huge, the minibatch scheme brings substantial scalability. The averaging across $1,\dots,K$ stands for the Monte Carlo estimate of the gradient since the target optimization~\eqref{eqn:nbb} is defined as the average on the Dirichlet distribution. The last iterate $G_{\phi^{(T)}}$ is usually employed as an estimate of $G^{\star}_{\phi}$.

\subsubsection{Blocked Bootstrap to Curb Individual's Contribution}
\label{sec:subgroup}

Denote by $\pi:[n]\to[S]$ a blocking process, where $[N]=\{1,\dots,N\}$. It partitions the $n$ observations into $S$ disjoint groups $I_1,\dots,I_S$ of equal size $n_S:=n/S$ by sampling without replacement. Assume $n_S\in\mathbb N$ for simplicity. The blocked bootstrap collects the solutions of $\theta^{\star}_{\sub}:=\theta^{\star}(\bar\bw):=\arg\min_{\theta}\sum_{i=1}^n \bar w_i\,l(\theta,X_i)$, where $\bar\bw=(\bar w_1,\dots,\bar w_n)$ and $\bar w_i=u_{\pi(i)}/n_S$, by drawing $\bu:=(u_1,\dots,u_S)\sim\Diri(\gamma\mathbf 1_S)$ for $\gamma>0$. The blocking scheme underlies $\sum_{i=1}^n\bar w_i\delta_{X_i}=\sum_{s=1}^S u_s\hat F_s$ with $\hat F_s:=\tfrac1{n_S}\sum_{i\in I_s}\delta_{X_i}$ for a given $\pi$. In other words, it can be a structured prior on $F$ that puts a random Dirichlet mass on each block and an equal deterministic weight within it. Thus, the block bootstrap can be an {exact} posterior sampler when the estimand depends on the data only through the block aggregates $\{\hat F_s\}$, and an approximation otherwise.

In this work, the block bootstrap with the partition $\pi$ randomized independently to $(\bu,\bX_n)$ offers two foundational merits in designing our privacy mechanisms. Specifically, we use the uniform random partition $\pi=\pi_\tau(i)=\lceil \tau(i)\,S/n\rceil$, where $\tau$ is a uniformly random permutation of $[n]$ independent of $(\bu,\bX_n)$. First, whereas the original bootstrap with $\bw\sim \Diri({\bf 1}_n)$ lets an individual contribute up to $1$, the block scheme caps each record's worst-case influence at most $\bar w_i=u_{\pi(i)}/n_S\leq 1/n_S$. In other words, concealing an individual under the block scheme becomes more efficient than the original one in designing a DP mechanism. Also, it advocates using equal-sized blocks since an unbalanced partition scheme may increase the risk of revealing an individual's worst-case contribution. Second, drawing $\pi$ as a data-independent random partition makes the guarantee agnostic to the realized blocks. If $\cM_{\pi}$ is $(\epsilon,\delta)$-DP  mechanism conditional on every fixed partition $\pi$, marginalizing preserves the same DP guarantee, i.e., $P(\cM(D)\in{\cal B})=\int P(\cM_\pi(D)\in{\cal B})\,d\nu(\pi)\le e^{\epsilon}P(\cM(D')\in{\cal B})+\delta$, so privacy holds regardless of which blocks are realized, e.g., even if a block, by chance, mirrors some structure in the data.

Applying the block bootstrap for privacy purposes is novel to our knowledge. In statistics, a block-bootstrap procedure has been used for diverse reasons: time-series bootstraps \citep{kuns:89} to capture a time-dependency effect across data instances and amortized learning \citep{shin:etal:21} to improve scalability under a large sample regime. While grouping data for devising a privacy mechanism is not new in a high-level view, e.g., sample-and-aggregate for point estimation \citep{niss:etal:07}, teacher ensembles in supervised learning \citep{pape:etal:17}, we first utilize the block-bootstrap scheme for private uncertainty quantification.

\subsection{Noisy Training}
\label{sec:noisy-training}

Blocking alone is not enough to achieve privacy. Capping each record's weight at $1/n_S$ limits any single record's {influence}, but it does not by itself make a released bootstrap optimizer DP. As a counterexample, consider $X_i=0$ for $i<n$ and $X_n=M$ for a large $M$. Even though $X_n$ carries only its group's share of the weight, the blocked bootstrap solution $\theta^{\star}(\bar\bw)$, and also $\theta^{\star}(\bw)$, still shifts detectably with $M$, so releasing it with no further randomization can reveal whether the $n$th record is present, violating the indistinguishability in Definition~\ref{def:dp}. Formal $(\epsilon,\delta)$-DP therefore requires an additional step, and we inject Gaussian noise during training. The role of blocking thus is seen to reduce an individual's contribution, so that less noise is needed to conceal the individual during training.

To make the amortized learning phase $(\epsilon,\delta)$-differentially private, we suggest perturbing the update procedure of \eqref{eqn:opt-grad} with Gaussian noise. At the $t$th iteration, for a given partition rule $\pi$, the gradient descent takes 
\begin{align}
\label{eqn:perturbed-grad}
    \frac{1}{K}\sum_{k=1}^K \Big(\sum_{i\in B_t^{(k)}} \bar w_{i,t}^{(k)} \nabla l(g^{(k)}_t,X_i) +Z_{t}^{(k)}\Big) \cdot \nabla_{\phi}G_{\phi^{(t)}}^{\text{DP}}(\bu_{t}^{(k)}), \quad g^{(k)}_t=G_{\phi^{(t)}}^{\text{DP}}(\bu_{t}^{(k)}),
\end{align}
where $Z_{t}^{(k)}\sim {\rm N}(0,\sigma^2{\bf I})$ and $\bu_{t}^{(k)}\sim \Diri(\gamma {\bf 1}_S)$, and each index in $B_t^{(k)}$ is randomly picked by $q$ probability for every $t,k$. The superscript of $G_{\phi}^{\text{DP}}$ indicates $G_{\phi}$ is paired with the noisy training \eqref{eqn:perturbed-grad}. The Gaussian noise is added to the weighted output-gradient statistic before multiplication by the generator's gradient because the generator does not take data points directly. For this reason, this mechanism may be seen as structurally less vulnerable to the Gaussian noise when compared to a common setup in the literature of standard private stochastic gradient descent \citep[DP-SGD,][]{abad:etal:16}. Algorithm~\ref{alg:bayesian_bootstrap} in SM~\ref{app:alg} describes the updating scheme in detail. Note the final $T^{\text{th}}$ iterate of the gradient-based update by \eqref{eqn:perturbed-grad} stands for $G_{\phi^{(T)}}^{\text{DP}}$, and next section shows $G_{\phi^{(T)}}^{\text{DP}}$ satisfies $(\epsilon,\delta)$-DP. More precisely, the collection $\{\phi^{(t)}\}_{t=1}^T$ from $\bX_n$ is $\cM(\bX_n)$, so evaluation of $\bu$ through $G_{\phi^{(T)}}^{\text{DP}}$ is DP by the post-processing. 

The block structure contributes substantially to enhancing the privacy protection over non-block noisy training. The Dirichlet weights mainly influence the randomness in \eqref{eqn:perturbed-grad} on top of the Gaussian privacy noise. Addressing the Dirichlet randomness is particularly important. Intuitively, an individual's influence in \eqref{eqn:perturbed-grad} can be quantified by $\sup_{D, D',g}\|\sum_{i=1}^n \bar w_i \nabla l(g,x_i) -\sum_{i=1}^n \bar w_i \nabla l(g,x_i') \|=\sup_{x,x',g} \|\bar w_i(\nabla l(g,x)-\nabla l(g,x')) \|$. If $\|\nabla l(\cdot,\cdot)\|\leq C$ is uniformly bounded, the magnitude is bounded by $2\bar w_iC$. Although the block strategy limits $\bar w_i\le 1/n_S$, quantifying an individual's influence by this deterministic upper bound, as a nominal DP-SGD application, may still be too conservative. Under the blocked Dirichlet weights, the marginal distribution satisfies $n_S\bar w_i\sim \mathrm{Beta}(\gamma,(S-1)\gamma)$, and most of its distributional mass is away from the endpoint $1$. Accounting for the Dirichlet randomness, therefore, can substantially reduce the individual contribution measured in privacy accounting. 

\subsection{Certified Privacy for Noisy Training with Block}
\label{sec:formal-dp}
This section certifies that the noisy training of Section~\ref{sec:noisy-training} produces an amortized generator satisfying $(\epsilon,\delta)$-DP, and characterizes the privacy gain from the Dirichlet randomness. For better readability, we first present the asymptotic analysis on the size of the privacy noise $\sigma$ in Theorem~\ref{thm:asymp-dp}, and then describe the formal $(\epsilon,\delta)$-DP guarantee in Proposition~\ref{prop:formal-dp}. 
\begin{theorem}
\label{thm:asymp-dp}
Let $G^{\text{DP}}_{\phi^{(T)}}$ be the $T^{\text{th}}$ iterate of \eqref{eqn:perturbed-grad}. Suppose $\sup_{x,g}\|\nabla l(g,x)\|\leq C$ for some $C>0$, and fix $\epsilon>0$, $T,K,S\in\mathbb N$, $\gamma>0$, $q\in(0,1)$, and a target $\delta\in(0,1)$. Set $\kappa_{S,\gamma}:=\tfrac{\gamma+1}{S(S\gamma+1)}$. Assume $\sqrt{\tfrac{\log(1/\delta)}{2TK\kappa_{S,\gamma}}}\ll 1$ and $\tfrac{q\log(1/\delta)}{(1-q)\epsilon}=O(1)$. At leading order, a sufficient condition for $G^{\text{DP}}_{\phi^{(T)}}$ to satisfy $(\epsilon,\delta)$-DP is
\begin{align}\label{eqn:thm-asymp-sigma-box}
    \sigma&\gtrsim  \frac{Cq}{n_S} 
    \frac{\sqrt{TK \log(1/\delta) \kappa_{S,\gamma}}}{\epsilon},
    \qquad
    \frac{2TKq^{2}C^{2}\kappa_{S,\gamma}}{\sigma^{2}n_S^{2}}
    \le \log(1/\delta),
    \qquad
    \frac{C}{\sigma n_S}=o(1).
\end{align}
\end{theorem}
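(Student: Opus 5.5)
The plan is to treat each of the $TK$ noisy output-gradient releases in \eqref{eqn:perturbed-grad} as a Poisson-subsampled Gaussian mechanism whose sensitivity is \emph{random} through the block weight, compose these releases adaptively in the R\'enyi-DP (moments accountant) sense, and convert to $(\epsilon,\delta)$-DP at the end. Throughout, the partition $\pi$ is fixed and the statement is marginalized over $\pi$ at the end, as explained in Section~\ref{sec:subgroup}. Post-processing (Proposition~\ref{prop:post-processing}) then transfers the guarantee from $\{\phi^{(t)}\}_{t=1}^T$ to $G^{\text{DP}}_{\phi^{(T)}}$, since the multiplication by $\nabla_\phi G^{\text{DP}}_{\phi^{(t)}}(\bu^{(k)}_t)$ and the parameter update are data-independent maps of the noisy statistic.

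\emph{Step 1: per-release sensitivity.} Let $\bX_n,\bX_n'$ differ in record $j$, and let $s=\pi(j)$. Conditional on the past iterates and on $\bu^{(k)}_t$, the statistic $\sum_{i\in B^{(k)}_t}\bar w^{(k)}_{i,t}\nabla l(g^{(k)}_t,X_i)$ changes only when $j\in B^{(k)}_t$, which happens with probability $q$. When it does change, the shift is
\[
\Delta_{t,k}=\frac{u^{(k)}_{s,t}}{n_S}\bigl(\nabla l(g,X_j)-\nabla l(g,X_j')\bigr),\qquad \|\Delta_{t,k}\|\le \frac{2C\,u^{(k)}_{s,t}}{n_S}.
\]
To be conservative, I let the adversary observe $\bu^{(k)}_t$. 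Each release is then a $q$-subsampled Gaussian mechanism with effective noise multiplier $\sigma n_S/(2C u^{(k)}_{s,t})$. The key observation is that $u^{(k)}_{s,t}\sim\Beta(\gamma,(S-1)\gamma)$ is drawn fresh and independently of the history, and
\[
\E\bigl[(u^{(k)}_{s,t})^2\bigr]=\frac{\gamma(\gamma+1)}{S\gamma(S\gamma+1)}=\kappa_{S,\gamma},
\]
which is exactly where $\kappa_{S,\gamma}$ comes from.

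\emph{Step 2: accounting.} In the small-sensitivity regime $C/(\sigma n_S)=o(1)$, together with $q\log(1/\delta)/((1-q)\epsilon)=O(1)$, I use the standard leading-order R\'enyi bound for the subsampled Gaussian. Up to constants, the order-$\alpha$ R\'enyi divergence of one release is at most $\alpha q^2\|\Delta_{t,k}\|^2/\sigma^2$, valid for $\alpha$ in a range of size roughly $\log(1/\delta)$. Adaptive composition of R\'enyi DP, applied conditionally on the $u$'s, gives a total of
\[
\alpha q^2 \sum_{t,k}\frac{4C^2(u^{(k)}_{s,t})^2}{\sigma^2 n_S^2}.
\]
Since the $(u^{(k)}_{s,t})^2\in[0,1]$ are i.i.d.\ with mean $\kappa_{S,\gamma}$, Hoeffding's inequality gives, except on an event of probability $\delta/2$,
\[
\sum_{t,k}(u^{(k)}_{s,t})^2\le TK\kappa_{S,\gamma}\Bigl(1+\sqrt{\tfrac{\log(2/\delta)}{2TK\kappa_{S,\gamma}^2}}\,\kappa_{S,\gamma}\Bigr).
\]
By the assumption $\sqrt{\log(1/\delta)/(2TK\kappa_{S,\gamma})}\ll 1$, the right-hand side is $TK\kappa_{S,\gamma}(1+o(1))$. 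This bad event is absorbed into $\delta$.

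\emph{Step 3: conversion and the main obstacle.} Converting R\'enyi DP to approximate DP gives
\[
\epsilon \le \alpha\,\frac{4TKq^2C^2\kappa_{S,\gamma}}{\sigma^2n_S^2}+\frac{\log(2/\delta)}{\alpha-1}.
\]
Optimizing at $\alpha\approx \sigma n_S\sqrt{\log(1/\delta)}/\bigl(qC\sqrt{TK\kappa_{S,\gamma}}\bigr)$ yields the first condition in \eqref{eqn:thm-asymp-sigma-box}. The second condition is what ensures that this optimal $\alpha$ lies in the admissible range of the subsampled-Gaussian bound. The step I expect to be most delicate is justifying the mixture treatment: the sensitivity is random and $g^{(k)}_t$ depends on earlier noisy iterates, so a fixed-sensitivity composition does not apply directly. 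I will handle this by conditioning on the realized weight vector, which is independent of the data and of the Gaussian noise. Conditionally on it, the composition is an adaptive composition with \emph{known} per-step sensitivities. The high-probability Hoeffding event is then removed at cost $\delta/2$ in the final $(\epsilon,\delta)$ statement, which is legitimate because the event does not depend on the data.
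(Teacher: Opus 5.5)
\paragraph{Different route.} You and the paper spend the Dirichlet randomness in different places. The paper keeps the weights inside each release. It takes the replace-one Poisson-subsampled Gaussian bound of Birrell et al.\ (their Theorem C.9) conditional on $\bar w_n=w$. It then averages the R\'enyi moment over the $\Beta(\gamma,(S-1)\gamma)/n_S$ law of $\bar w_n$ by joint convexity, so every single release already carries $\E[\bar w_n^2]=\kappa_{S,\gamma}/n_S^2$. Only then does it compose the $TK$ releases, apply Mironov's conversion at $\lambda^\star=1+\sqrt{\log(1/\delta)/c(\sigma)}$ with $c(\sigma):=2TKq^2C^2\kappa_{S,\gamma}/(\sigma^2n_S^2)$, and bound the dropped slack through $R_\Phi$ and $\varrho=q\lambda C/(\sigma n_S)$.

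You instead condition on the whole weight sequence, compose with random per-step sensitivities, and replace $\sum_{t,k}(u^{(k)}_{s,t})^2$ by $TK\kappa_{S,\gamma}$ on a data-independent event that costs $\delta/2$. This is legitimate and gives the same leading constant, because the per-release RDP is linear in $u^2$ at leading order. It does, however, need a concentration step and a $\delta$ split that the paper avoids. Per-release averaging also remains valid when the weights are revealed: for a data-independent mixing variable with the same law under both databases, the joint R\'enyi moment is exactly the average of the conditional ones.

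\paragraph{Gap in the concentration step.} The concentration step fails as written. Hoeffding for $TK$ i.i.d.\ variables in $[0,1]$ gives $\sum_{t,k}(u^{(k)}_{s,t})^2\le TK\kappa_{S,\gamma}+\sqrt{TK\log(2/\delta)/2}$. That is a relative error of $\sqrt{\log(2/\delta)/(2TK\kappa_{S,\gamma}^2)}$; your display has a spurious extra factor $\kappa_{S,\gamma}$. The correct error is about $\varrho^\star/\sqrt{\kappa_{S,\gamma}}$ with $\varrho^\star:=\sqrt{\log(1/\delta)/(2TK\kappa_{S,\gamma})}$. Since $\kappa_{S,1}=2/(S(S+1))$, this is not $o(1)$ under the theorem's hypothesis $\varrho^\star\ll1$.

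The repair is Bernstein's inequality. Because $u\le 1$, $\Var((u^{(k)}_{s,t})^2)\le\E[(u^{(k)}_{s,t})^4]\le\kappa_{S,\gamma}$. The deviation is then at most $\sqrt{2TK\kappa_{S,\gamma}\log(2/\delta)}+\tfrac23\log(2/\delta)$, a relative error of $O(\varrho^\star+(\varrho^\star)^2)$. This is exactly what the first assumption controls.

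\paragraph{Roles of the conditions.} You also misassign what the three conditions do.

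\emph{Validity of the expansion at $\alpha^\star$.} Since you condition on $\bu$, the leading-order subsampled-Gaussian bound must hold at $\alpha^\star$ uniformly over $u\le1$. For the replace-one adjacency, this should be a replace-one bound such as Birrell et al.'s. Its expansion parameter at the worst case $u=1$ is, up to constants, $q\alpha^\star C/(\sigma n_S)=qC/(\sigma n_S)+\varrho^\star$. Note also that $\alpha^\star\asymp\log(1/\delta)/\epsilon$, not a range of size $\log(1/\delta)$. So it is the first and third conditions that validate the expansion at the selected order. The condition $q\log(1/\delta)/((1-q)\epsilon)=O(1)$ is needed as well, to tame the $(1-q)^{-(\lambda+m-1)}$ factor in the remainder. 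The first assumption therefore does double duty in your argument.

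\emph{The second condition.} It reads $c(\sigma)\le\log(1/\delta)$, which is equivalent to $\alpha^\star-1\ge1$. It keeps the selected order at least $2$ and keeps the linear term $c(\sigma)$ below $\sqrt{c(\sigma)\log(1/\delta)}$.

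With Bernstein in place of Hoeffding and these roles made explicit, your argument goes through.
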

Equation~\eqref{eqn:thm-asymp-sigma-box} shows how the block size $n_S$ enters the privacy condition. The direct role of a larger $n_S$ is in the third term $C/(\sigma n_S)=o(1)$, while a smaller $S$ increases $\kappa_{S,\gamma}$ and thereby makes the external regime $TK\kappa_{S,\gamma}\gg\log(1/\delta)$ easier to satisfy. The noise scale $\sigma$ can be negligible for a large-sample regime, but still increases by $\sqrt{TK}$. For $\sqrt{T}$, the 1/2-rate aligns with the standard DP-SGD literature \citep{abad:etal:16}. Although $\sqrt K$ appears to increase the required noise scale, its effect on the update is mitigated by the averaging over $K$ Monte Carlo draws. This point is made more explicit in the convergence analysis. The following remark characterizes the privacy amplification effect by encompassing the Dirichlet randomness, in contrast to the standard private SGD. SM~\ref{app:privacy-amplification} writes the derivation.
\begin{remark}
\label{rem:amplification}
    For the Dirichlet-aware accountant, the leading noise scale is of order $Cq\sqrt{TK\,\log(1/\delta)\kappa_{S,\gamma}}/(n_S\epsilon)$ where $\kappa_{S,\gamma}=\tfrac{\gamma+1}{S(S\gamma+1)}<1$ for $S\geq 2$. If one ignores the Dirichlet distribution and uses only the deterministic bound $\bar w\le 1/n_S$, the corresponding worst-case scale is of order $Cq\sqrt{TK\log(1/\delta)}/(n_S\epsilon)$. Thus the ratio of the two leading scales is $\sqrt{\kappa_{S,\gamma}}$ when the validity conditions for each method are met.
\end{remark}
The assumption $\sup_{x,g}\|\nabla l(g,x)\|\le C$ is common in DP statistical estimation. A representative example is a private robust M-estimation \citep{avel:etal:23}. A wide range of statistical problems satisfies the uniform bound under realistic situations. For example, for the absolute loss $l(g;(y,x))=|y-x^\top g|$, a subgradient with respect to $g$ is $-\mathrm{sign}(y-x^\top g)x$, whose norm is bounded when $x$ lies in a bounded domain.  Likewise, on the bounded input domain, the pinball loss for quantile regression \citep{koen:bass:78} has the bounded derivative, and the Huber loss for robust regression does as well \citep{hube:92}. If the condition is not met, one can enforce it by clipping only the multiplier $\nabla l(X_i,g)$ in \eqref{eqn:perturbed-grad} before multiplying by $\nabla_\phi G_\phi^{\rm DP}(\bu)$. 

Proposition~\ref{prop:formal-dp} provides the formal $(\epsilon,\delta)$-DP guarantee of the noisy training by \eqref{eqn:perturbed-grad}. One subtle point of our analysis is that neighboring databases $\bX_n,\bX_n'$ are given the same Dirichlet distributions. In the DP-SGD literature, \cite{birr:etal:24} presented the analysis of the noisy gradient with the Gaussian noise and Poisson subsampling under the replace-one neighboring scheme. Our analysis extends it to accommodate the Dirichlet randomness and the Monte Carlo averaging. 
\begin{proposition}
\label{prop:formal-dp}
Suppose $\sup_{x,g}\|\nabla l(g,x)\|\leq C$ for some $C>0$. Fix $\epsilon>0$, $\sigma^{2}>0$, $T,K,S\in\mathbb N$, $\gamma>0$ and $q\in(0,1)$. With pre-specified truncation orders $J\in\mathbb N$ and $m\ge 3$, define, for integer $\lambda\ge2$, $M_{J,\lambda}:= 2\lambda(\lambda-1)\sum_{j=0}^{J}\tfrac{(C/\sigma)^{4j+2}}{(2j+1)!}\cdot\tfrac{(\gamma)_{4j+2}}{(S\gamma)_{4j+2} n_S^{4j+2}}$, and $R_{J,\lambda}:= 2\lambda(\lambda-1)[\sinh (C^{2}/(\sigma^{2}n_S^{2}))-\sum_{j=0}^{J}\tfrac{(C/(\sigma n_S))^{4j+2}}{(2j+1)!}]$
respectively, where $(x)_k:=x(x+1)\cdots(x+k-1)$. Let 
$A_{J,\lambda,m}(q):=\log[1+q^{2}(M_{J,\lambda}+R_{J,\lambda})+\mathcal{R}_{J,\lambda,m}(q)]$ where $\mathcal{R}_{J,\lambda,m}(q)$ appears in \eqref{eqn:bir-Rcal-bar} of the SM. Then, $G^{\text{DP}}_{\phi^{(T)}}$ satisfies $(\epsilon,\delta_{\star}(\epsilon))$-DP with $\delta_{\star}(\epsilon):= \inf_{\lambda\in\mathbb N, \lambda\ge 2}\exp \bigl\{ TK A_{J,\lambda,m}(q) - (\lambda-1)\epsilon \bigr\}$.
\end{proposition}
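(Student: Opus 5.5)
We will prove Proposition~\ref{prop:formal-dp} by a Rényi-DP (RDP) accounting argument: bound the RDP of one noisy Monte Carlo summand, compose, then convert to $(\epsilon,\delta)$-DP.

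\textbf{Reduction.} Fix the partition $\pi$. As explained in Section~\ref{sec:subgroup}, it suffices to prove the guarantee conditionally on $\pi$ and then average over the data-independent law of $\pi$. Conditionally on $\pi$ and on the past iterates $\phi^{(t)}$, the $k$th term in \eqref{eqn:perturbed-grad} is a function of the noisy statistic
\[
Y_t^{(k)}=\sum_{i\in B_t^{(k)}}\bar w_{i,t}^{(k)}\nabla l(g_t^{(k)},X_i)+Z_t^{(k)}
\]
and of the data-free quantities $\bu_t^{(k)}$ and $\nabla_\phi G$. We therefore treat each pair $(\bu_t^{(k)},Y_t^{(k)})$ as one adaptive mechanism. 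The $TK$ such mechanisms are composed, and the trajectory $\{\phi^{(t)}\}$ is a post-processing of their outputs (Proposition~\ref{prop:post-processing}). By adaptive RDP composition, the total RDP of order $\lambda$ is at most $TK$ times the per-step bound. The standard conversion then gives
\[
\delta \le \exp\{TK\,\mathrm{RDP}_\lambda-(\lambda-1)\epsilon\},
\]
and taking the infimum over $\lambda$ yields $\delta_\star(\epsilon)$.

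\textbf{Per-step bound.} Under replace-one neighbors differing at record $j$, the two outputs share $\bu$. Poisson subsampling with rate $q$ makes each distribution a mixture:
\[
P=(1-q)\mu_0+q\mu_1,\qquad P'=(1-q)\mu_0+q\mu_1',
\]
where $\mu_0$ is the law without $j$ in the batch. Given $\bu$, $\mu_1$ and $\mu_1'$ are Gaussians shifted by $\bar w_j\nabla l(g,X_j)$ and $\bar w_j\nabla l(g,X_j')$ respectively. We follow the replace-one analysis of \cite{birr:etal:24} and expand the Rényi moment $\E_{P'}[(P/P')^\lambda]$ binomially in $q$. The zeroth- and first-order terms give $1$ and $0$. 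The second-order term involves the $\chi^2$-type quantity $\E[(\mu_1-\mu_1')^2/\mu_0]$ conditional on $\bu$.

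For Gaussians, this quantity is an exponential of inner products of the shifts. A symmetrization over the sign of the shift gives a $\cosh/\sinh$ form, in which the odd terms cancel. The resulting bound is
\[
2\lambda(\lambda-1)\sinh\bigl(\|\Delta\|^2/\sigma^2\bigr),\qquad \|\Delta\|\le C\,n_S\bar w_j/n_S .
\]

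\textbf{Averaging over the Dirichlet weights.} We integrate this bound over $n_S\bar w_j\sim\Beta(\gamma,(S-1)\gamma)$ and expand $\sinh$ as a power series. The Beta moments are
\[
\E[(n_S\bar w_j)^{4j+2}]=\frac{(\gamma)_{4j+2}}{(S\gamma)_{4j+2}},
\]
so the first $J+1$ terms give exactly $M_{J,\lambda}$. The tail of the series is bounded by using $n_S\bar w_j\le1$, which gives $R_{J,\lambda}$. The terms of order $q^3$ and higher in the binomial expansion are truncated at order $m$ and controlled as in \cite{birr:etal:24}; this produces $\mathcal R_{J,\lambda,m}(q)$. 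Taking the logarithm gives $A_{J,\lambda,m}(q)$.

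\textbf{Main obstacle.} The subtle step is justifying that the Dirichlet expectation can be taken inside the Rényi moment. The released output is $(\bu,Y)$ jointly, and $\bu$ has the same law under both neighbors. This lets us condition on $\bu$, compute the conditional Rényi moment, and then average it. That averaging is legitimate for $\E[(p/p')^\lambda]$ of joint laws with a common marginal, but not for the RDP itself, so the argument must be carried out at the level of moments before taking the logarithm.

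The higher-order terms must also be controlled uniformly in $\bu$ using the cap $n_S\bar w_j\le 1$. Controlling these terms is where I expect the most technical work.
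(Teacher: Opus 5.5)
Your proposal is correct and follows the paper's proof essentially step for step: the replace-one Theorem~C.9 bound of \citet{birr:etal:24} for the Poisson-subsampled Gaussian, conditional on the differing record's weight. That bound's leading $2\lambda(\lambda-1)\sinh(\bar w^2C^2/\sigma^2)$ term is then integrated against the $\Beta(\gamma,(S-1)\gamma)/n_S$ law, with $J$-truncation and the cap $\bar w\le 1/n_S$ for the tail, followed by $TK$-fold composition and Mironov's conversion. The one small difference is how you justify averaging the conditional R\'enyi moments: you treat $\bu$ as part of the released output, so the joint moment equals the $\bu$-average exactly, whereas the paper keeps $\bu$ internal and bounds the $\bar w$-mixture via joint convexity of $(x,y)\mapsto x^\lambda y^{1-\lambda}$; both yield the same inequality.

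One precision point concerns the stated $\mathcal R_{J,\lambda,m}(q)$. To obtain exactly that quantity, the higher-order terms are not capped uniformly in $\bu$; they are also integrated against the Beta law. Specifically, the truncated series is used as a lower bound on the negative-sign $\E\, e^{\ell(\ell-1)\tau_{\bar w}}$ terms in $M_k$, and the truncated series plus the capped tail as an upper bound on the positive-sign ones. Cauchy--Schwarz handles the odd-$k$ envelopes $\sqrt{M_{k-1}M_{k+1}}$. A uniform cap instead would still give a valid guarantee, but with a larger $\delta$ than the one stated.
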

Because the analysis tracks consumed privacy cost across $T$, it is usually called privacy accounting. The leading term $M_{J,\lambda}$ specifically reflects Dirichlet randomness via the moment $\mathbb E[\bar w^r]=(\gamma)_r/((S\gamma)_r n_S^r)$, in particular with $r=4j+2$. $R_{J,\lambda}$ accounts for the remainder of the $J$th order approximation $M_{J,\lambda}$ through the deterministic bound of $\bar w_i\leq 1/n_S$. The same technique applies to $\mathcal{R}_{J,\lambda,m}$. In this way, the formal privacy guarantee incorporates the randomness of the Dirichlet weights, leveraging the deterministic worst-case contribution. The choice of $J,m$ depends on the numerical stability, and we set $J=32$ and $m=5$ by default. Figure~\ref{fig:budget} draws the pattern of the privacy curve in various cases. Notably, smaller $S$ brings stronger protection. This shows the distinctive privacy-side advantage of using $S\ll n$, beyond computational scalability.
\begin{figure}[ht!]
    \centering
    \includegraphics[width=1.0\linewidth]{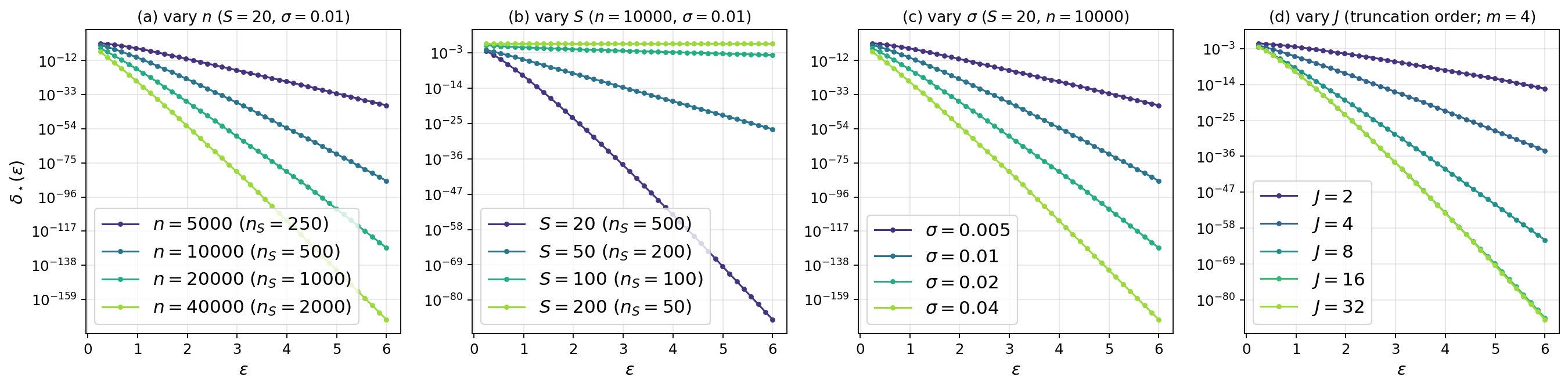}
    \caption{ $(\varepsilon,\delta_\star(\varepsilon))$-DP curve in Proposition~\ref{prop:formal-dp}, varying (a) the sample size $n$, (b) the block count $S$, (c) the noise scale $\sigma$, and (d) the Beta-moment truncation order $J$. The four panels illustrate the anticipated monotone dependence on the privacy-related parameters.}
    \label{fig:budget}
\end{figure}

\subsection{Joint Learning across Decision Rules}
\label{sec:joint-learning}
This section introduces a distinctive feature of the amortized sampler.  A single generator can privatize the posterior of a range of decision rules $\theta_{\tau}$ for $\tau\sim P_{\tau}$ simultaneously under a given $(\epsilon,\delta)$ privacy budget. More specifically, the amortized learning need not stick to a single decision rule. Let $\tau\in\mathcal T$ index a family of losses $l_\tau(\theta,x)$ with targets $\theta_\tau(F):=\argmin_{\theta\in\Theta}\int l_\tau(\theta,x)\,dF(x)$. The index may encode a tuning parameter, a loss, or a scientific target, and in our quantile-regression application it is the quantile level. Following the multi-purpose sampler of \cite{shin:etal:24}, we feed both the bootstrap weight and the index to one generator, and solve $\min_{G\in\mathcal G}\ \E_{\bu\sim\Diri(\gamma\mathbf 1_S),\,\tau\sim P_\tau} \Big[\textstyle\sum_{i=1}^n \bar w_i\, l_\tau(G(\bu,\tau),X_i)\Big]$ via the noisy training, extending of \eqref{eqn:perturbed-grad}, 
\begin{align*}
    \frac{1}{K}\sum_{k=1}^K \Big(\sum_{i\in B_t^{(k)}} \bar w_{i,t}^{(k)} \nabla l(g_{\tau,t}^{(k)},X_i) +Z_{t}^{(k)}\Big) \cdot \nabla_{\phi}G_{\phi}^{\text{DP}}(\bu_{t}^{(k)},\tau_t^{(k)}), \quad g_{\tau,t}^{(k)}=G_{\phi}^{\text{DP}}(\bu_{t}^{(k)},\tau_t^{(k)}),
\end{align*}
where $\tau_t^{(k)} \sim P_\tau$ is independently sampled regardless of data. For DP guarantee, it suffices to set the uniform bound $\sup_{x,g_\tau}\|\nabla l(g_{\tau},x)\|\leq C$. Once the private generator $G^{\rm DP}_{\phi^{(T)}}(\bu,\tau)$ is trained, evaluating it at any $\tau$ is post-processing, so draws and summaries across the entire $\tau\sim P_{\tau}$ are tied to $(\epsilon,\delta)$ budget. This brings a concrete privacy gain. In stark contrast, reporting private statistics over a grid of $\tau$ by running a separate private method per $\tau$ accesses the data repeatedly, so by composition, the budget grows according to the grid size. To our knowledge, this joint structure has not been developed (or explored) in existing private UQ literature.

\section{Convergence Analysis}
\label{sec:theory}

Having established the privacy guarantee, this section studies how the privacy noise affects the training of the neural-net-based generator $G^{\text{DP}}_{\phi^{(T)}}$. We consider a standard feed-forward neural network with a bounded final activation. Set $G_\phi(\bu)=B_G h_V(\tilde G_\phi(\bu))$, where $\tilde G_\phi(\bu):=W_V h_{V-1}(W_{V-1}h_{V-2}(\cdots W_1\bu\cdots))$ and $\phi=(W_1,\dots,W_V)$. $h_1,\dots,h_V$ are activation functions and suppose $h_V$ are bounded. Bias parameters are suppressed, since they can be represented by concatenating a constant $1$ to the input. Output-clipping radius $B_G>0$ can be chosen so that $[-B_G,B_G]^{d_\theta}\subseteq \Theta$. This structural design allows $G_{\phi}(\bu)\in \Theta$ for any $\phi$ and $\bu$. We impose $\phi\in \Phi:=\{\phi : \|W_v\|_{\op} \le B_v,\;v = 1,\dots,V\}$ and suppose $h_1,\dots,h_V$ are twice differentiable.
This covers a wide range of activation functions, e.g., tanh, shifted-sigmoid, GELU, etc. The bounded weight matrices are common in theoretical analysis when neural networks are involved \citep{golo:etal:18} and can also be readily satisfied via normalization or clipping techniques \citep{miya:etal:18}.

Our analysis imposes the following conditions. 
\begin{description}
\item[(A1)]
$X_1,\dots,X_n$ are i.i.d.\ from a distribution $F$ supported on a compact set $\mathcal X\subset\R^{d_X}$.

\item[(A2)]
$\Theta\subset\R^{d_\theta}$ is convex and compact, and the population minimizer $\theta_\infty^\star := \arg\min_{\theta\in\Theta}\E\,l(\theta,X)$ is unique and lies in $\mathrm{int}(\Theta)$.

\item[(A3)]
$l(x,\theta)$ is $C^3$ in $(x,\theta)$ on a neighborhood of $\mathcal X\times\Theta$ with uniformly bounded derivatives up to order three.

\item[(A4)]
$l(x,\theta)$ is convex in $\theta$ for each $x\in\mathcal X$.

\item[(A5)]
The population Hessian $H_\infty := \E[\nabla_\theta^2 l(\theta_\infty^\star,X)]$ satisfies $H_\infty \succeq \mu_\infty {\bf I}$ for some $\mu_\infty>0$, and $\Sigma_\infty := \E[\nabla l(\theta_\infty^\star,X)\nabla l(\theta_\infty^\star,X)^\top]$ denotes the population score second moment.
\end{description}
Together with the bounded network class, these assumptions imply uniform bounds on the generator Jacobian and Hessian. Define $L(\theta;\bw):=\sum_{i=1}^n w_i l(\theta,X_i)$ for any given weight vector $\bw$. Let $D_\Theta:=\diam(\Theta)$ and denote by $\mathcal{L}(\phi;\bX_n,\pi)=\E_{\bu\sim\Diri(\gamma\mathbf 1_S)}[L(G_\phi(\bu);\bar\bw(\bu))]$ the Dirichlet-averaged objective conditional on $\bX_n,\pi$ and $J_{\phi}(\bu)=\nabla_{\phi} G_\phi(\bu)$. Under (A1-5), since $\Phi$ is compact and $G_\phi(\bu)$ is a finite composition of smooth activations and bounded weight maps, the generator's Jacobian and parameter Hessian are uniformly bounded, $\beta_1 := \sup_{\phi\in\Phi,\,\bu}\|J_\phi(\bu)\|_{\op} <\infty$ and $\beta_2 := \sup_{\phi\in\Phi,\,\bu}\|\nabla_\phi^2 G_\phi(\bu)\|_{\op} <\infty$. Letting $H_{\bar \bw(\bu)}(\theta):=\nabla^2_{\theta} L(\theta;\bar \bw(\bu))$, a standard chain rule applied to $\nabla_\phi^2\mathcal{L}(\phi;\bX_n,\pi)=\E_\bu[J_\phi(\bu)^\top H_{\bar\bw(\bu)}(G_\phi(\bu))J_\phi(\bu)] +\E_\bu[\langle\nabla_\phi^2 G_\phi(\bu),\nabla_\theta L(G_\phi(\bu);\bar\bw(\bu))\rangle]$ via the dominated convergence theorem yields the closed-form smoothness constant $M_{\mathcal L}=\sup_{\bX_n,\pi,\phi\in\Phi}\|\nabla_\phi^2\mathcal{L}(\phi;\bX_n,\pi)\|_{\op}\;\le\;\beta_1^2\,K_2\;+\;\beta_2\,C$ where $K_2:=\sup_{x,g}\|\nabla^2 l(g,x)\|_{\op}$ and $\sup_{x,g}\|\nabla l(g,x)\|\leq C$. Hence we view $\mathcal{L}(\phi;\bX_n,\pi)$ as a $M_{\cal L}$-smooth function over a convex domain $\phi \in \Phi$ for any given $\bX_n,\pi$.

Our convergence analysis characterizes the discrepancy between the private amortized generator $G_{\phi^{(T)}}^{\text{DP}}(\bu)$ and the exact blocked-bootstrap-solution $\theta^{\star}_{\sub}$. Let $\bomega_T$ collect the algorithmic randomness used to obtain the $T^\text{th}$ iterate, including initialization, minibatch sampling, Dirichlet draws used during training, and Gaussian perturbations. 
\begin{theorem}
\label{thm:conv-gbb}
Suppose (A1-5) hold and $\phi_t\in \Phi$ holds. Let ${\cal L}^\star_{\Phi,\bX_n,\pi}:=\inf_{\phi\in\Phi}\mathcal L(\phi;\bX_n,\pi)$ and suppose that there exists $\mu_{\mathcal L}>0$ such that, for all $\phi\in\Phi$, $\|\nabla_\phi\mathcal L(\phi;\bX_n,\pi)\|^2 \ge 2\mu_{\mathcal L} (\mathcal L(\phi;\bX_n,\pi)-\mathcal L^\star_{\Phi,\bX_n,\pi})$ for every given $\bX_n,\pi$. For fixed $\gamma>0$, $q\in(0,1)$, $\eta\in(0,\,1/(qM_{\mathcal L})]$, the last iterate $G_{\phi^{(T)}}^{\text{DP}}$, for a fresh draw $\bu\sim \Diri(\gamma {\bf 1}_S)$, satisfies
\begin{align}
\label{eqn:thm-conv-G}
\E_{\bX_n,\pi,\bomega_T,\bu}
\bigl\|G^{\text{DP}}_{\phi^{(T)}}(\bu)-\theta^\star_\sub(\bar\bw(\bu))\bigr\|^{2}
\;\lesssim\; \mathcal B_T ,
\end{align}
where $\mathcal B_T
={(1-\eta q\mu_{\mathcal L})^{T}\,\Delta_0}+{\frac{M_{\mathcal L}\,\eta}{q\,\mu_{\mathcal L}}\,\Pi_{\rm total}}+{\varepsilon_{\rm app}}+{n^{-2}}$ with initial gap $\Delta_0:=\E_{\bX_n,\pi,\phi_0} [\mathcal L(\phi_0;\bX_n,\pi)-\mathcal L^\star_{\Phi,\bX_n,\pi}]$ and amortization gap $\varepsilon_{\rm app}:=\E_{\bX_n,\pi}[\mathcal L^\star_{\Phi,\bX_n,\pi} -\E_\bu\{L(\theta^\star_\sub(\bar\bw(\bu));\bar\bw(\bu))\}]$. It splits into an optimization transient, a noise floor, an approximation error and a remainder, respectively. In particular, 
\begin{align*}
\Pi_{\rm total}=
\underbrace{\frac{\beta_1^2 C^2 q(1-q)(\gamma+1)S}{K n(S\gamma+1)}}_{\text{Poisson subsampling}}
+\underbrace{\frac{q^2\beta_1^2 C^2}{K}}_{\text{Monte Carlo}}
+\underbrace{\frac{\beta_1^2 d_\theta \sigma^2}{K}}_{\text{privacy noise}} .
\end{align*}
\end{theorem}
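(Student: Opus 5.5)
The plan has three parts: a PL-type descent argument for the noisy stochastic gradient iteration, which controls the excess objective $\E[\mathcal L(\phi^{(T)};\bX_n,\pi)-\mathcal L^\star_{\Phi,\bX_n,\pi}]$; a transfer step from excess objective to squared distance to $\theta^\star_\sub$; and a careful second-moment computation of the gradient noise that produces $\Pi_{\rm total}$.

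\emph{Step 1 (descent).} Write the update \eqref{eqn:perturbed-grad} as $\phi^{(t+1)}=\phi^{(t)}-\eta\,\hat g_t$. Conditionally on $\bX_n,\pi,\phi^{(t)}$, Poisson subsampling gives $\E\hat g_t=q\nabla_\phi\mathcal L(\phi^{(t)};\bX_n,\pi)$, because $\E[\mathbf 1\{i\in B\}]=q$ and the Gaussian $Z$ is centred. Write $\hat g_t=q\nabla\mathcal L+\xi_t$ with $\E\xi_t=0$. By $M_{\mathcal L}$-smoothness,
\begin{align*}
\E\mathcal L(\phi^{(t+1)})\le \mathcal L(\phi^{(t)})-\eta q\|\nabla\mathcal L\|^2+\tfrac{M_{\mathcal L}\eta^2}{2}\bigl(q^2\|\nabla\mathcal L\|^2+\E\|\xi_t\|^2\bigr).
\end{align*}
With $\eta\le 1/(qM_{\mathcal L})$ the gradient coefficient is at most $-\eta q/2$. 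Applying the PL inequality then gives
\begin{align*}
\Delta_{t+1}\le(1-\eta q\mu_{\mathcal L})\Delta_t+\tfrac{M_{\mathcal L}\eta^2}{2}\,\E\|\xi_t\|^2 .
\end{align*}
Unrolling the recursion and summing the geometric series yields $(1-\eta q\mu_{\mathcal L})^T\Delta_0+\frac{M_{\mathcal L}\eta}{2q\mu_{\mathcal L}}\sup_t\E\|\xi_t\|^2$. The projection or clipping to $\Phi$ is absorbed by the hypothesis $\phi_t\in\Phi$.

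\emph{Step 2 (variance decomposition, $\Pi_{\rm total}$).} The $K$ Monte Carlo terms are i.i.d. given $\phi^{(t)}$, so the variance is $1/K$ times a single-draw variance. For a single draw I split by the law of total variance into three pieces.

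The Gaussian piece is $\E\|Z^\top J_\phi\|^2\le\beta_1^2 d_\theta\sigma^2$.

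The Poisson subsampling piece, given $\bu$, is $\sum_i q(1-q)\bar w_i^2\|\nabla l\|^2\beta_1^2\le\beta_1^2C^2q(1-q)\sum_i\E\bar w_i^2$. Using $\E\bar w_i^2=\frac{\gamma(\gamma+1)}{S\gamma(S\gamma+1)n_S^2}$ and summing over $n$ indices gives $\frac{(\gamma+1)S}{n(S\gamma+1)}$, which matches the stated term.

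The Dirichlet Monte Carlo piece is the variance over $\bu$ of $q\sum_i\bar w_i\nabla l\,J_\phi$. Since $\sum_i\bar w_i=1$, its norm is at most $q\beta_1C$, so the variance is at most $q^2\beta_1^2C^2$.

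\emph{Step 3 (objective to parameter error).} For fixed $\bu$, the function $L(\cdot;\bar\bw(\bu))$ is convex by (A4). It is also strongly convex near $\theta^\star_\sub$ with high probability: the empirical weighted Hessian concentrates around $H_\infty\succeq\mu_\infty\mathbf I$ by (A1), (A3) and (A5), and $\theta^\star_\sub$ lies in $\mathrm{int}(\Theta)$. Hence
\begin{align*}
\|G_\phi(\bu)-\theta^\star_\sub\|^2\le\tfrac{2}{\mu}\bigl[L(G_\phi(\bu))-L(\theta^\star_\sub)\bigr].
\end{align*}
Taking $\E_\bu$ and splitting $L(G_\phi)-L(\theta^\star_\sub)=[\mathcal L(\phi)-\mathcal L^\star]+[\mathcal L^\star-\E_\bu L(\theta^\star_\sub)]$ produces the Step 1 bound plus $\varepsilon_{\rm app}$. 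On the bad event where strong convexity fails, I bound the squared error by $D_\Theta^2$ times the event's probability. Hoeffding or matrix Bernstein for the weighted Hessian with third-derivative control from (A3) should make this probability at most $O(n^{-2})$, giving the remainder term.

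\emph{Main obstacle.} I expect Step 3 to be the hardest. Curvature must hold uniformly over $\theta$ on the segment between $G_\phi(\bu)$ and $\theta^\star_\sub$, not only at $\theta^\infty_\star$. The weighted Hessian also has Dirichlet weights rather than uniform ones, so its concentration depends on $S$ as well as $n$. My first attempt is a localization argument: first show $\theta^\star_\sub$ lies in a ball around $\theta_\infty^\star$ where $\nabla^2_\theta\E l\succeq\mu_\infty\mathbf I/2$, using the Lipschitz Hessian from (A3). Then, outside that ball, use convexity together with the bounded domain to get a linear-growth lower bound on the objective gap, which still controls the squared distance up to a constant depending on $D_\Theta$.
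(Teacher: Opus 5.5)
Your proposal is correct and follows the paper's proof essentially step for step. The shared structure is: the same unbiasedness and three-way orthogonal split of the gradient noise giving $\Pi_{\rm total}$, the same PL one-step recursion under $\eta\le 1/(qM_{\mathcal L})$, and the same conversion from objective gap to squared distance. That conversion uses local strong convexity on a ball around $\theta^\star_\sub$, then the convexity chord argument combined with $D_\Theta$ outside it (the paper's Lemma~\ref{lem:appB-var-bridge}), with the bad event charged $D_\Theta^2\,\P(\mathcal E^c)$.

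The only differences are technical. The paper localizes around $\theta_0^\star$ rather than $\theta^\star_\infty$, and obtains $\P(\mathcal E^c)=O(n^{-2})$ uniformly in $S$ from fourth-moment Markov bounds on both $g_\sub$ and $B_\sub$ (Lemma~\ref{lem:cm-four}) rather than from Hoeffding/matrix Bernstein. Also, your first inequality with constant $2/\mu$ holds only inside the ball, which your obstacle paragraph already fixes.
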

 The first term is a geometric transient that contracts at the per-iteration rate $1-\eta q\mu_{\mathcal L}$, so the initial optimization gap decays with the training horizon $T$. The term $\varepsilon_{\rm app}$ is the generator-class approximation error, measuring the amortization gap between a single trained map and the pointwise bootstrap optima $\theta^\star_\sub(\bar\bw(\bu))$. This vanishes as the network class $\Phi$ becomes sufficiently expressive and $\Theta=[-B_G,B_G]^{d_\theta}$, as is natural under (A2).

The remaining term $\Pi_{\rm total}$ is the noise floor from the private update. It decomposes the effects of Poisson subsampling, $K$-fold Monte Carlo averaging of the Dirichlet draws, and Gaussian privacy noise. Increasing $K$ shrinks the Poisson and Monte Carlo summands, while the privacy term is effectively $K$-invariant after substituting the leading-order noise scale from Theorem~\ref{thm:asymp-dp}. In this upper bound, the privacy term is then of smaller order because of its $n^{-2}$ rate, although this should be read as an ordering within the bound rather than a tight comparison of the three randomness sources.

The existence of such $\mu_{\cal L}$ is the Polyak-{\L}ojasiewicz (PL) condition. This condition is weaker than strong convexity and does not require convexity. It is commonly used to prove convergence for nonconvex objectives and over-parametrized models such as neural networks \citep{kari:etal:16,liu:zhu:belk:22}. To our knowledge, this is the first convergence analysis for a neural-net-based amortized bootstrap generator. The theorem covers the non-private case by setting $\sigma=0$.

\section{Posterior Analysis of the Blocking Bootstrap}
\label{sec:blocking}

Previous sections show that the noisy training satisfies $(\epsilon,\delta)$-DP and that a larger block size can improve the privacy accounting. The statistical cost is that applying the blocked bootstrap to i.i.d. data may underestimate posterior uncertainty, since observations within a block share the same weight. This section studies this posterior distortion and how to correct it while keeping the privacy benefit.

\subsection{Posterior Gap Induced by Blocking}
Let $\theta_0^\star:=\arg\min_{\theta\in\Theta}L(\theta;n^{-1}\mathbf 1_n)$ be the empirical risk minimizer, where $L(\theta;\bw):=\sum_{i=1}^n w_i l(\theta,X_i)$. Recall that $\bu\sim\Diri(\gamma{\bf 1}_S)$, $\pi$ is a uniform random partition, and $\bar w_i=u_{\pi(i)}/n_S$. The blocked solution (blk-BB) is $\theta^\star_\sub(\bar\bw):=\arg\min_\theta L(\theta;\bar\bw)$, while the ordinary Bayesian-bootstrap solution (org-BB) is $\theta^\star_\Diri(\bw):=\arg\min_\theta L(\theta;\bw)$ with $\bw\sim\Diri(\mathbf 1_n)$.
\begin{theorem}
\label{thm:solution-gap}
Suppose (A1-5) hold with $3\le S\le n$, and $v_\infty^{\bc} := \bc^\top H_\infty^{-1}\Sigma_\infty H_\infty^{-1}\bc > 0$ for a fixed contrast $\bc\in\R^{d_\theta}$, $\|\bc\|=1$. Suppose $\gamma \in [\underline \gamma,\overline\gamma]\subset(0,\infty)$ and $S\gamma\ge3$. Define the projected bootstrap laws $\mathcal L_\sub^{\bc,\bX_n,\pi}:=\mathrm{Law}\{\bc^\top\sqrt n\,(\theta^\star_\sub(\bar\bw)-\theta_0^\star)\mid \bX_n,\pi\}$ and $\mathcal L_\Diri^{\bc,\bX_n}:=\mathrm{Law}\{\bc^\top\sqrt n\,(\theta^\star_\Diri(\bw)-\theta_0^\star)\mid \bX_n\}$. Then it follows that
\begin{align}
\label{eqn:solution-gap}
\E_{\bX_n,\pi}\bigl[W_1\bigl(\mathcal L_\sub^{\bc,\bX_n,\pi}, \mathcal L_\Diri^{\bc,\bX_n}\bigr)\bigr] \lesssim  (S\gamma+1)^{-1/2}+n^{-1/2} + \bigl|1-\sqrt{\tfrac{(S-1)(n+1)}{(S\gamma+1)(n-1)}}\bigr|.
\end{align}
\end{theorem}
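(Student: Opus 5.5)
Our plan is to linearize both bootstrap solutions around $\theta_0^\star$, reduce each projected law to a weighted sum of scores, and compare the two weighted sums through Gaussian approximations whose variances differ by an explicit factor; the last term in \eqref{eqn:solution-gap} is exactly the $W_1$ distance between two centered Gaussians with standard deviations in that ratio.

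\emph{Step 1 (linearization).} For a generic weight vector $\bw$ with $\sum_i w_i=1$, write $\psi_i:=\nabla l(\theta_0^\star,X_i)$, which satisfies $\sum_i\psi_i=0$, and $\hat H:=n^{-1}\sum_i\nabla^2 l(\theta_0^\star,X_i)$. Using (A2)--(A5) and a one-step Taylor expansion of the first-order condition $\nabla_\theta L(\theta^\star(\bw);\bw)=0$, I would show
\[
\sqrt n\,(\theta^\star(\bw)-\theta_0^\star)=-\hat H^{-1}\sqrt n\textstyle\sum_i (w_i-\tfrac1n)\psi_i+\mathcal R(\bw),
\]
where $\E\|\mathcal R\|\lesssim \sqrt n\,\E\|\sum_i(w_i-\tfrac1n)\psi_i\|^2$ plus a Hessian-fluctuation term. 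Consistency (that $\theta^\star(\bw)$ lies near $\theta_0^\star$ with high probability) follows from convexity (A4) and uniform convergence over compact $\Theta$. For the org-BB weights, $\E\|\sum_i (w_i-\frac1n)\psi_i\|^2\asymp 1/n$. For the blocked weights, $\sum_i(\bar w_i-\tfrac1n)\psi_i=\sum_s(u_s-\tfrac1S)\bar\psi_s$ with $\bar\psi_s$ the block means; its second moment is of order $\frac{1}{S\gamma+1}\cdot\frac{S}{n}\cdot\frac1S$ up to constants, again $O(1/n)$. Hence both remainders are $O(n^{-1/2})$ in $L^1$, contributing the $n^{-1/2}$ term. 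We replace $\hat H$ by $H_\infty$ at cost $O(n^{-1/2})$ as well.

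\emph{Step 2 (Gaussian approximation).} Condition on $\bX_n,\pi$. Write the Dirichlet variables as normalized Gammas, $u_s=E_s/\sum_r E_r$ with $E_s\sim\mathrm{Gamma}(\gamma,1)$. Because $\sum_s\bar\psi_s=0$, the linear statistic equals $\sum_s (E_s-\gamma)\bar\psi_s/\sum_r E_r$. A delta-method step replaces $\sum_r E_r$ by $S\gamma$ at a relative cost of $(S\gamma)^{-1/2}$. A Berry--Esseen bound in $W_1$ for independent non-identical summands then yields an error of order $(\sum_s|a_s|^3)/(\sum_s a_s^2)^{3/2}$ with $a_s=\bc^\top H_\infty^{-1}\bar\psi_s$. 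In expectation over the random partition this is $\lesssim (S\gamma)^{-1/2}\asymp(S\gamma+1)^{-1/2}$; the condition $S\gamma\ge3$ keeps the Gamma third moments and the inverse-sum moments controlled. The same argument for $\Diri(\mathbf 1_n)$ gives a Gaussian error of $n^{-1/2}$.

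\emph{Step 3 (variance comparison).} The exact conditional variances are
\[
\Var_{\mathrm{blk}}=\frac{n\,\sum_s a_s^2}{S(S\gamma+1)},\qquad \Var_{\mathrm{org}}=\frac{n\sum_i b_i^2}{n(n+1)},\qquad b_i=\bc^\top H_\infty^{-1}\psi_i .
\]
The block sum satisfies $\sum_s a_s^2=\frac{1}{n_S^2}\sum_s(\sum_{i\in I_s}b_i)^2$. Under a uniform random partition with $\sum_i b_i=0$, its expectation is $\frac{S-1}{n-1}\cdot\frac{S}{n}\sum_i b_i^2$, using the standard sampling-without-replacement variance formula. The ratio of standard deviations is therefore $\sqrt{(S-1)(n+1)/((S\gamma+1)(n-1))}$, up to fluctuations of $\sum_s a_s^2$ around its mean. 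Those fluctuations have relative size $O(S^{-1/2})$, which is absorbed into $(S\gamma+1)^{-1/2}$ since $\gamma$ is bounded above. Using $W_1(N(0,\sigma_1^2),N(0,\sigma_2^2))=|\sigma_1-\sigma_2|\sqrt{2/\pi}$ together with $v_\infty^{\bc}>0$, so that $\sum_i b_i^2/n$ is bounded away from zero with high probability, gives the third term. The triangle inequality then assembles all pieces.

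\emph{Main obstacle.} I expect the hardest part to be Step 2's Berry--Esseen bound for the self-normalized Dirichlet sum at the rate $(S\gamma)^{-1/2}$, uniformly over $\gamma\in[\underline\gamma,\overline\gamma]$, while simultaneously controlling the random block means $\bar\psi_s$. I would first try Stein's method for $W_1$ on the Gamma representation, bounding $\E[\max_s|a_s|]$ via boundedness of $\psi$ from (A1)/(A3).
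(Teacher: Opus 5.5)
Your architecture is the paper's: linearize both solutions around $\theta_0^\star$, write the linear parts through normalized Gamma variables and pay $(S\gamma)^{-1/2}$ for the normalizer, apply a Stein-type $W_1$ Berry--Esseen bound (the paper uses the Chen--Goldstein--Shao zero-bias bound), and compare the Gaussian scales via the partition mean $\frac{S-1}{n-1}\frac{S}{n}\sum_i b_i^2$, a relative $O(S^{-1/2})$ partition fluctuation, and a floor from $v_\infty^{\bc}>0$. The gap is in the step you flagged yourself, and the tool you propose there does not give the stated rate.

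The quantity to average over $\pi$ is the unnormalized block error $\sqrt{v_\sub}\,\rho_\sub=\sqrt{n/(S(S\gamma+1))}\,\bigl(\sum_s|a_s|^3\bigr)/\bigl(\sum_s a_s^2\bigr)$. Boundedness of $\psi$ only gives $|a_s|\le B$. That yields a bound of order $\sqrt{n/(S(S\gamma+1))}$, which diverges for fixed $S$. (For org-BB boundedness does suffice, since $\sqrt{v_\Diri}\rho_\Diri\le B/\sqrt{n+1}$; this is why only the block case is delicate.) The missing idea is that, under the uniform partition, each block mean is the mean of $n_S$ draws without replacement from centered bounded scores. By Hoeffding's comparison it is therefore sub-Gaussian at scale $\sqrt{S/n}$. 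Even then, the bound $\sum_s|a_s|^3/\sum_s a_s^2\le\max_s|a_s|$ loses a factor $\sqrt{\log S}$ and gives only $\sqrt{\log S/(S\gamma+1)}$.

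To reach $(S\gamma+1)^{-1/2}$, set $M_k:=S^{-1}\sum_s|a_s|^k$ and split on the typical event $\mathcal T=\{M_2\ge\frac12\E_\pi M_2\}$. On $\mathcal T$, use exchangeability and $\E_\pi|a_1|^3\lesssim(S/n)^{3/2}\lesssim(\E_\pi M_2)^{3/2}$; the last step needs the floor $n^{-1}\sum_i b_i^2\ge c>0$. On $\mathcal T^c$, use the max bound with Cauchy--Schwarz and $\P_\pi(\mathcal T^c)=O(1/S)$ from Chebyshev. That Chebyshev step is exactly the fluctuation estimate $\Var_\pi(M_2)=O((\E_\pi M_2)^2/S)$ you assert in Step~3, so Steps~2 and~3 must share that computation (the paper does it by an overlap-class expansion).

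A second, smaller point: every ``with high probability'' in your sketch must hold with failure probability $O(n^{-1})$. Off the localization event the remainder is only bounded by $\sqrt n D_\Theta$, and off the floor event only $W_1\le 2\sqrt n D_\Theta$ is available. Uniform convergence gives no rate. For localization, use Markov on $\|g_\bullet\|^2$ and on the Hessian fluctuation, whose $\pi$-averaged second moments are $O(1/(n\gamma))$. For the floor, use Markov on $\|\theta_0^\star-\theta_\infty^\star\|^2$ plus concentration of the i.i.d.\ scores at $\theta_\infty^\star$. Separately, the block score second moment is $\frac{S-1}{(S\gamma+1)(n-1)}\sigma_{\psi,n}^2\asymp S/(n(S\gamma+1))$, not $1/(n(S\gamma+1))$; it is $O(1/n)$ only because $\gamma\ge\underline\gamma$.
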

We remark that standard bootstrap techniques requiring exchangeability \citep{prae:well:93,chen:huan:10} do not directly apply because the blocked weights are non-exchangeable conditional on $\pi$. \cite{shin:etal:24} address this issue by randomizing $\pi$ and borrowing the standard consistency result with $S\gg\sqrt n$. In contrast, since privacy favors fixed and moderate $S$, we compare the linearized blk-BB and org-BB through a finite normal-approximation bound \citep{chen:etal:10}, which leaves the blocking error of order $S^{-1/2}$ explicit. The third term in \eqref{eqn:solution-gap} stemming from the mismatch between each law's  variance disappears at 
\begin{align}
\label{eqn:alpha-star}
\gamma=\gamma^\star(n,S) := \frac{S(n+1)-2n}{S(n-1)}. 
\end{align}
When $\gamma=\gamma^\star(n,S)$ is used in Theorem~\ref{thm:solution-gap},
the condition $S\gamma\ge3$ is guaranteed for all $n>1$ if $S\ge5$. After rescaling back by $n^{-1/2}$, the same bound implies that the unscaled blocked and ordinary bootstrap laws become close for any fixed $S$. Provided that $\mathcal L_\Diri^{\bc,\bX_n}$ is known to converge to a normal distribution \citep{lydd:etal:19}, $\theta^\star_\sub(\bar\bw)$ involves approximation error for frequentist coverage for fixed $S$ and improves by order $S^{-1/2}$. 

The following proposition more specifically studies the implication of $\gamma^{\star}$ in relating the first and second moments of the bootstrap solutions. 
\begin{proposition}
\label{prop:choice-alpha}
Suppose (A1-5) hold with fixed $3\le S\le n$. Then $\|\E_{\bX_n,\pi}[\E_{\bu}[\theta^\star_\sub(\bar\bw)|\bX_n,\pi]-\E_{\bw}[\theta^\star_\Diri(\bw)|\bX_n]]\|=O(n^{-1})$ for any fixed $\gamma\in[\underline\gamma,\overline\gamma]$. Denote by $\Var(Z|{\cal Y}):=\E[\|Z-\E [Z|{\cal Y}]\|^2|{\cal Y}]$ the conditional variance of $Z$ given ${\cal Y}$. At $\gamma=\gamma^{\star}$ of \eqref{eqn:alpha-star}, it follows $|\E_{\bX_n,\pi}[\Var[\theta^\star_\sub(\bar\bw)|\bX_n,\pi]]-\E_{\bX_n}[\Var[\theta^\star_\Diri(\bw)|\bX_n]]|=O(n^{-3/2})$.
\end{proposition}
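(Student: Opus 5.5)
The plan is to compare the two bootstrap laws through a second-order stochastic expansion of the weighted $M$-estimator around the empirical risk minimizer $\theta_0^\star$, computing the leading moments exactly and bounding the rest. Write $\psi_i:=\nabla l(\theta_0^\star,X_i)$, so that $\sum_i\psi_i=0$ by first-order optimality (using (A2) and interiority on a high-probability event). Write $H_n:=n^{-1}\sum_i\nabla^2 l(\theta_0^\star,X_i)$ and $a_i:=H_n^{-1}\psi_i$. For a generic weight vector $\bw$ with $\sum_i w_i=1$, a Taylor expansion of $\nabla_\theta L(\theta;\bw)=0$ under (A3)--(A5) gives
\begin{align*}
\theta^\star(\bw)-\theta_0^\star=-\textstyle\sum_i w_i a_i+Q(\bw)+\rho(\bw).
\end{align*}
Here $Q(\bw)$ is a quadratic form in $\xi(\bw):=\sum_i w_i\psi_i$ and in the Hessian fluctuation $\sum_i (w_i-1/n)\nabla^2 l(\theta_0^\star,X_i)$. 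The remainder satisfies $\|\rho\|\lesssim\|\xi\|^3+(\text{Hessian fluctuation})\cdot\|\xi\|^2$.

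The bad event on which $H_n\not\succeq(\mu_\infty/2)\mathbf I$ or $\theta_0^\star$ is near $\partial\Theta$ has exponentially small probability by (A1)--(A5) and Hoeffding. Since $\Theta$ is compact, this event contributes negligibly to all moments.

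\textbf{Step 1 (means).} Both weight laws have $\E w_i=\E\bar w_i=1/n$, the latter after averaging over $\pi$ or directly because $\E u_s=1/S$. Hence the linear term has mean exactly zero in both cases. The remaining terms $Q$ and $\rho$ have expectations of order $\E\|\xi\|^2$ plus the matching Hessian cross term. I will show both are $O(n^{-1})$ using Dirichlet moment formulas. For the blocked weights I write $\xi=\sum_s u_s\Psi_s/n_S$ with block sums $\Psi_s$, and use that $\E_\pi\|\Psi_s/n_S\|^2=O(n^{-1})$ under sampling without replacement. So each mean is $O(n^{-1})$, and so is their difference.

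\textbf{Step 2 (variances).} The key is an exact computation for the linear term. For $\bw\sim\Diri(\mathbf 1_n)$, $\Cov(\bw)=(\mathbf I/n-\mathbf 1\mathbf 1^\top/n^2)/(n+1)$. Since $\sum_i a_i=0$, this gives $\Var(\sum_i w_ia_i\mid\bX_n)=\sum_i\|a_i\|^2/(n(n+1))$.

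For the blocked weights, $\sum_i\bar w_ia_i=\sum_s u_sA_s/n_S$ with $A_s=\sum_{i\in I_s}a_i$ and $\sum_sA_s=0$. Using $\Cov(\bu)=(\mathbf I/S-\mathbf 1\mathbf 1^\top/S^2)/(S\gamma+1)$, the conditional variance is $\sum_s\|A_s\|^2/(S(S\gamma+1)n_S^2)$. Averaging over the uniform partition, the without-replacement identity gives $\E_\pi\|A_s\|^2=n_S\frac{n-n_S}{n-1}\cdot\frac1n\sum_i\|a_i\|^2$. The blocked variance therefore equals $\frac{S-1}{(n-1)(S\gamma+1)}\cdot\frac1n\sum_i\|a_i\|^2$.

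The two leading variances coincide exactly when $(S-1)(n+1)=(S\gamma+1)(n-1)$, which is precisely $\gamma=\gamma^\star(n,S)$ of \eqref{eqn:alpha-star}. Since the same $H_n$ and $\psi_i$ enter both sides, no population approximation error appears at leading order.

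\textbf{Step 3 (remainders).} It remains to show that $\E[\|Q\|^2]$, the cross covariance between the linear term and $Q$, and all $\rho$ contributions are $O(n^{-3/2})$ after taking $\E_{\bX_n}$. The linear term is $O(n^{-1/2})$ in $L^p$ and $Q$ is $O(n^{-1})$ in $L^p$, so the cross term is $O(n^{-3/2})$ by Cauchy--Schwarz; in fact it is often smaller by third-moment symmetry.

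The main obstacle is obtaining uniform $L^p$ bounds $\E\|\xi\|^{p}\lesssim n^{-p/2}$, for $p$ up to $6$, under the blocked weights with \emph{fixed} $S$. Here $u_s$ does not shrink, so the smallness must come entirely from $\Psi_s/n_S$. I plan to handle this by conditioning on $\pi$, bounding $\|\xi\|\le\max_s\|\Psi_s/n_S\|$ because $\sum_s u_s=1$, and controlling $\E_{\bX_n,\pi}\max_s\|\Psi_s/n_S\|^p$. Boundedness of $\psi_i$ by (A1)--(A3) together with a Hoeffding/Serfling inequality for sampling without replacement yields $O(n^{-p/2})$, since the fixed $S$ only costs a constant factor. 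The Hessian fluctuation is treated identically.
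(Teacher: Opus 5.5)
Your proposal is correct and follows the same skeleton as the paper's proof. You linearize $\theta^\star_\bullet-\theta_0^\star=-H_{0,n}^{-1}g_\bullet+r_\bullet$. You compute the leading conditional second moments exactly from the Dirichlet covariances and the without-replacement identity $\E_\pi[\bar\Sigma_{\psi,n}\mid\bX_n]=\frac{S-1}{n-1}\Sigma_{\psi,n}$, so the two leading terms coincide at $\gamma^\star$ already conditionally on $\bX_n$. You bound the cross term by Cauchy--Schwarz as $O(n^{-1/2})\cdot O(n^{-1})$, and you get the mean statement from $\E\|r_\bullet\|=O(n^{-1})$. Conditioning on $\pi$ first, as you do, is slightly cleaner than the paper's route through the joint $(\bu,\pi)$ variance and the law of total variance.

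Three points of comparison are worth noting.

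First, splitting $r_\bullet$ into a quadratic $Q$ and a cubic $\rho$ is unnecessary. Moreover, the cubic bound on $\rho$ would need a Lipschitz third derivative, which (A3) does not give. All you actually use is $\|Q+\rho\|\lesssim(\|B_\bullet\|_{\op}+\|g_\bullet\|)\|g_\bullet\|$, which is exactly the paper's first-order remainder bound and does follow from (A3).

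Second, the higher moments are obtained differently. The paper uses the Gamma representation, independence of the Gamma normalizer from the normalized linear form, a fourth-moment inequality for independent sums, and Hoeffding's convex-function comparison for sampling without replacement. This gives $\E\|g_\sub\|^4=O(\gamma^{-3}n^{-2})$ uniformly in $S$. Your argument instead notes that $g_\sub=\sum_s u_s\bar\psi_s$ is a convex combination, so $\|g_\sub\|\le\max_s\|\bar\psi_s\|$, and then applies a per-block Hoeffding/Serfling bound. This is more elementary and free of $\gamma^{-3}$, at the price of polynomial factors in $S$. That cost is harmless here because the proposition fixes $S$.

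Third, you should make explicit a bootstrap-level localization event on which the expansion is valid, as with $\tilde E_n^\bullet$ in the paper: $\|g_\bullet\|$ and $\|B_\bullet\|$ below fixed thresholds. The data-level good event alone does not place $\theta^\star_\bullet$ in the strongly convex ball. Your $L^4$ bounds with Markov give this event's complement probability $O(n^{-2})$. Since $r_\bullet$ is bounded by compactness of $\Theta$, the complement adds only $O(n^{-2})$ to $\E\|r_\bullet\|^2$.
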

The calibrated concentration parameter $\gamma^\star(n,S)\leq 1$ is an increasing function over $S$. Intuitively, it diffuses the variability more for smaller $S$ so that the block weights and thus matches the posterior dispersion of blk-BB to that of org-BB. The value $\gamma^\star(n,S)$ depends only on $(n,S)$ and not on the observed data $\bX_n$. Hence, \emph{this variance calibration is data-free and requires no additional privacy accounting}. This is remarkably useful in the private setting. For instance, data-driven calibration methods, such as double-bootstrap corrections, would themselves have to be implemented under DP and composed with the rest of the analysis, which may require additional noise or privacy budget.

Figure~\ref{fig:thm21-regression} examines Theorem~\ref{thm:solution-gap} and Proposition~\ref{prop:choice-alpha} on the linear regression example. We draw $X_i\sim \mathcal{N}(\mathbf 0,{\bf I})$ and $y_i=X_i^{\!\top}\theta_0+\varepsilon_i$, $\varepsilon_i\sim \mathcal{N}(0,1)$, where $\theta_{0,j}=(-1)^{j}/(1+j)$ for $j=0,\dots,p-1$. The ridge-type solution is $\theta^{\star}(\bw):=(X^{\!\top}\mathrm{diag}(\mathbf w)X+\lambda I)^{-1}X^{\!\top}\mathrm{diag}(\mathbf w)y$. We set $p=5$ and $\lambda=10^{-4}$. The left plot illustrates the $1$-Wasserstein distance averaged over the $p$ coefficients when either $S$ or $n_S$ is fixed. The derived rate in Theorem~\ref{thm:solution-gap} is consistent with the simulation. It also justifies that the unscaled blk-BB distribution converges to the org-BB for any fixed $S$ as $n$ increases. The middle one depicts the blk-BB with $\gamma=1$ tends to substantially underestimate the posterior variance compared to the org-BB for fixed $S$, but $\gamma=\gamma^{\star}$ nearly matches the two variances. The right panel compares the frequentist coverage when either $\gamma=1$ or $\gamma=\gamma^{\star}$ is set. This highlights that fixing a relatively small $S$ can be reasonable when $n$ is moderately large.
\begin{figure}[ht!]
    \centering
    \includegraphics[width=1.0\linewidth]{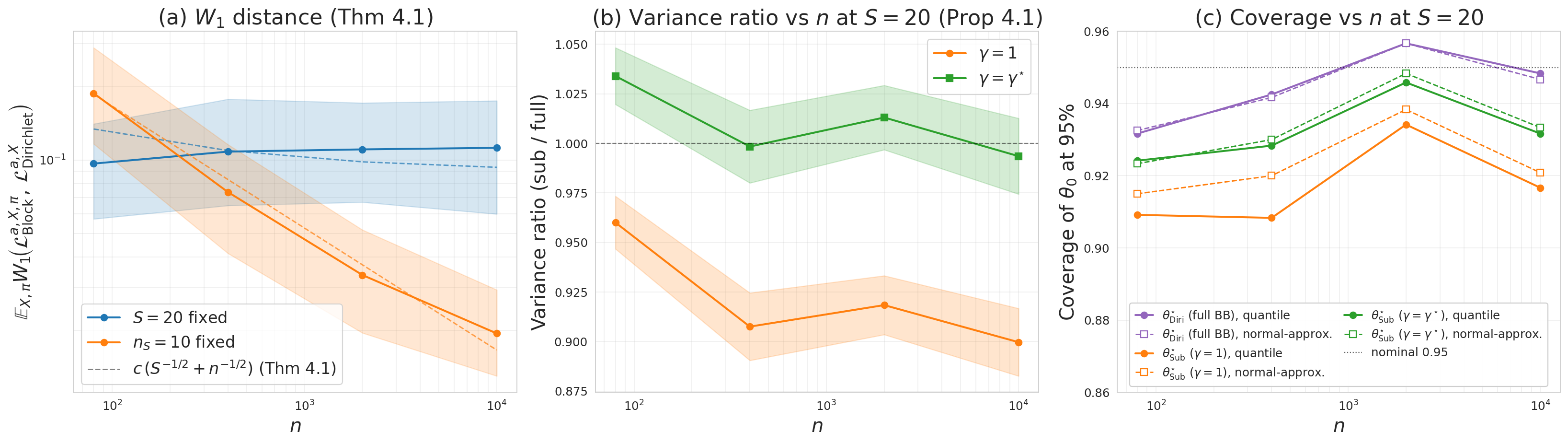}
    \caption{Numerical illustration of Theorem~\ref{thm:solution-gap} and Proposition~\ref{prop:choice-alpha} in the regression example with $S=20$ fixed and $n_S=10$ fixed, based on 300 replicates and 20k bootstrap draws. Panels show (a) 1-Wasserstein distance versus $n$, (b) total variance ratio ${\widehat\Var}[\theta^{\star}_{\sub}]/{\widehat\Var}[\theta^{\star}_{\Diri}]$, and (c) $95\%$ coverage under quantile intervals (solid) and normal approximations (dashed).}
\label{fig:thm21-regression}
\end{figure}

\subsection{Role of $\gamma^{\star}$ in Private Training} 

A natural follow-up question is then the role of $\gamma^{\star}$ in noisy training. The convergence analysis Theorem~\ref{thm:conv-gbb} extends to answer this, and to this end, we suggest $\gamma^{\star}$ in the theorem is a conservative choice in noisy training. We make the dependence on $\gamma$ explicit by writing $G_{\phi^{(T)}}^{(\gamma),\text{DP}}$ and define $\Delta_T(\gamma) := \E_{\bX_n,\pi,\bomega_T,\bu} \| G_{\phi^{(T)}}^{(\gamma),\text{DP}}(\bu) - \theta^\star_\sub(\bar\bw(\bu)) \|^2 $. The following corollary extends Proposition~\ref{prop:choice-alpha} for $G_{\phi^{(T)}}^{(\gamma),\text{DP}}$. 
\begin{corollary}
\label{prop:var-transfer}
Suppose the assumptions of Theorems~\ref{thm:conv-gbb} and \ref{thm:solution-gap} hold. Let $a_{S,n}(\gamma)=\tfrac{(S-1)}{(S\gamma+1)(n-1)}-\tfrac{1}{n+1}$. Then, it follows $| \E_{\bX_n,\pi,\bomega_T}\Var_\bu[G_{\phi^{(T)}}^{(\gamma),\text{DP}}(\bu)]-\E_{\bX_n}\Var_\bw[\theta^\star_\Diri(\bw)]|\leq |a_{S,n}(\gamma)a_n|+2D_\Theta \Delta_T(\gamma)^{1/2}+O((1+\sqrt{\tfrac{S-1}{S\gamma+1}}\gamma^{-3/2})n^{-3/2})$ for some constant $a_n\geq0$.
\end{corollary}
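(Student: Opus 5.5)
The proof splits the target difference into a training part and a blocking part via the triangle inequality:
\begin{align*}
\bigl|\E\Var_\bu[G^{(\gamma),\text{DP}}_{\phi^{(T)}}(\bu)]-\E\Var_\bw[\theta^\star_\Diri(\bw)]\bigr|
&\le \bigl|\E\Var_\bu[G^{(\gamma),\text{DP}}_{\phi^{(T)}}(\bu)]-\E\Var_\bu[\theta^\star_\sub(\bar\bw(\bu))]\bigr|\\
&\quad+\bigl|\E\Var_\bu[\theta^\star_\sub(\bar\bw(\bu))]-\E\Var_\bw[\theta^\star_\Diri(\bw)]\bigr|.
\end{align*}
In the first term, $\bu$ is shared and $(\bX_n,\pi,\bomega_T)$ are held fixed inside. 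The second term is the purely statistical blocking gap, extended from Proposition~\ref{prop:choice-alpha} to general $\gamma$.

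\emph{Training part.} For fixed $(\bX_n,\pi,\bomega_T)$, write $A=G(\bu)-\E_\bu G$ and $B=\theta^\star_\sub-\E_\bu\theta^\star_\sub$. Then
\[
|\E\|A\|^2-\E\|B\|^2|=|\E\langle A-B,A+B\rangle|\le (\E\|A-B\|^2)^{1/2}(\E\|A+B\|^2)^{1/2}.
\]
Since both $G_\phi(\bu)$ and $\theta^\star_\sub$ lie in $\Theta$, we have $\|A\|,\|B\|\le D_\Theta$, so $(\E\|A+B\|^2)^{1/2}\le 2D_\Theta$. Centering only shrinks second moments, so $\E\|A-B\|^2\le\E_\bu\|G(\bu)-\theta^\star_\sub(\bar\bw(\bu))\|^2$. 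Taking the outer expectation and applying Jensen to the square root gives the term $2D_\Theta\Delta_T(\gamma)^{1/2}$. Theorem~\ref{thm:conv-gbb} can then bound $\Delta_T(\gamma)$, but the corollary leaves it explicit.

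\emph{Blocking part.} This step is the real content. I would reuse the linearization behind Proposition~\ref{prop:choice-alpha} and Theorem~\ref{thm:solution-gap}. A one-step Taylor expansion of the weighted first-order condition around $\theta_0^\star$, using (A3)–(A5) and the convexity-based localization, gives
\[
\theta^\star(\bw)-\theta_0^\star=-\hat H^{-1}\textstyle\sum_i (w_i-1/n)\psi_i+R(\bw),
\]
where $\psi_i=\nabla l(\theta_0^\star,X_i)$ and $\hat H$ is the empirical Hessian. The key identity is the covariance structure of the linear term. For Dirichlet$(\mathbf 1_n)$ weights, $\Cov(\bw)=(\mathbf I-\mathbf 1\mathbf 1^\top/n)/(n(n+1))$. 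For blocked weights, $\Cov(\bar\bw)$ is $n_S^{-2}\Cov(\bu)$ lifted through $\pi$, and
\[
\Var(u_s)=\frac{S-1}{S^2(S\gamma+1)}.
\]
Expanding and averaging over the uniform random partition $\pi$ turns the within-block cross terms into the empirical score covariance, scaled by the factor $\frac{S-1}{(S\gamma+1)(n-1)}$. The ordinary bootstrap gives the factor $\frac1{n+1}$. Hence the two leading variances differ by $a_{S,n}(\gamma)\,a_n$ with $a_n:=\E\,\tr(\hat H^{-1}\hat\Sigma\hat H^{-1})\ge0$, which is up to a harmless $n/(n-1)$ normalization absorbed into $a_n$.

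\emph{Remainders.} The quadratic remainder $R$ contributes cross terms of order $\E\|\text{lin}\|\,\|R\|$. The blocked linear term has size about $\sqrt{(S-1)/((S\gamma+1)n)}$. The term $\|R\|\lesssim\|\text{lin}\|^2$ brings Dirichlet moments of $u_s$ up to fourth order, which are controlled by $\gamma^{-3/2}$-type factors for small $\gamma$. Together these give $O\bigl((1+\sqrt{(S-1)/(S\gamma+1)}\,\gamma^{-3/2})n^{-3/2}\bigr)$, and the ordinary-bootstrap remainder is $O(n^{-3/2})$.

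The main obstacle is tracking these remainder moments uniformly in $\gamma$, in particular bounding $\E\|R\|^2$ through higher Dirichlet moments and handling the event where localization fails. I would handle that event using compactness of $\Theta$ and an exponential tail bound on $\|\sum_s(u_s-1/S)\bar\psi_s\|$.
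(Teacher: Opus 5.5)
Your proposal is correct and follows the paper's proof: the same triangle split through $\theta^\star_\sub$, a variance-stability bound giving $2D_\Theta\Delta_T(\gamma)^{1/2}$, and the general-$\gamma$ expansion $a_{S,n}(\gamma)a_n+O(\cdot)$ already established in the proof of Proposition~\ref{prop:choice-alpha}. You obtain the stability bound via $\|A\|^2-\|B\|^2=\langle A-B,A+B\rangle$ and Cauchy--Schwarz on centered variables, whereas the paper uses the independent-copy identity $\Var(Z)=\tfrac12\E\|Z-Z'\|^2$; one small fix in your blocking part is that the remainder is bounded by $(\|B_\sub\|+\|g_\sub\|)\|g_\sub\|$, where $B_\sub$ is the Dirichlet-weighted Hessian fluctuation, not by $\|\text{lin}\|^2$ alone, but $B_\sub$ has the same linear Dirichlet structure and fourth-moment bounds, so your rates are unaffected.
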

At the variance-matching concentration $\gamma^\star(n,S)$ in Theorem~\ref{thm:solution-gap}, the first leading term is $|a_{S,n}(\gamma^{\star})a_n|=0$ exactly. Thus the trained generator approximates the original-bootstrap posterior variance up to its own training error $\Delta_T(\gamma^\star)^{1/2}$ and $O(n^{-3/2})$. Observe that $a_{S,n}(\gamma)$ is strictly decreasing w.r.t. $\gamma$. In the following remark, we observe that $\gamma^{\star}(n,S)$ serves as a conservative choice in general. 
\begin{remark}
\label{rem:gamma-conservative}
Write the leading term in Corollary \ref{prop:var-transfer} as ${\cal R}_T(\gamma)=|a_{S,n}(\gamma)|a_n+\Psi_T(\gamma)$ with $\Psi_T(\gamma):=2D_{\Theta}\Delta_T(\gamma)^{1/2}$. Assume $\Delta_T(\gamma)$ is differentiable at $\gamma^{\star}$. Since $a_{S,n}(\gamma^\star)=0$, the term $|a_{S,n}(\gamma)|$ has a kink at $\gamma^\star$ with one-sided slopes $\pm a_n|a'_{S,n}(\gamma^\star)|$, where $a_n|a'_{S,n}(\gamma^\star)|=\frac{S(n-1)a_n}{(S-1)(n+1)^2}$, so the one-sided derivatives of ${\cal R}_T$ at $\gamma^\star$ equal $\pm a_n|a'_{S,n}(\gamma^\star)|+\Psi_T'(\gamma^\star)$. Hence $\gamma^\star$ is locally optimal when $|\Psi_T'(\gamma^\star)|< a_n|a'_{S,n}(\gamma^\star)|$, whereas $\Psi_T'(\gamma^\star)< -a_n|a'_{S,n}(\gamma^\star)|$ admits a training-error-adjusted $\tilde\gamma^\star>\gamma^\star$ with ${\cal R}_T(\tilde\gamma^\star)<{\cal R}_T(\gamma^\star)$. As expected, $\gamma^{\star}<\tilde\gamma^\star$ would yield a larger uncertainty to the generator's output and thus be regarded as conservative. In either case $\gamma^\star$ removes the leading variance gap exactly and remains a principled data-free default.
\end{remark}
The assumption $\Psi_T'(\gamma^{\star})\le 0$ is realistic from our perspective. A larger $\gamma$ injects less stochastic volatility to the training system than $\gamma^{\star}$, and thus the generator-error term is likely to stay or decrease for $\gamma\geq \gamma^{\star}$. The remark is an effort to discuss that $\gamma^\star$ can still serve as the principled default in noisy training. Estimating $\tilde \gamma^{\star}$, however, seems practically intractable because it requires an additional privacy-accounting process, which may cost much more expensive budget than private training with $\gamma^{\star}$.

\section{Simulation Studies for Private UQ}
\label{sec:simul}

This section presents simulation studies to understand the statistical behavior of our neural-net-based private generative Bayesian bootstrap method (PGBB). 
Our experiments consider two settings: likelihood misspecification in linear regression and well-specified exponential-family mean estimation. In all following experiments, PGBB employs a feed-forward network with three hidden layers, each of which has 256 nodes. The Adam optimizer with a learning rate decay executes the gradient descent procedure after perturbing the gradient. Since the optimizer operates on the already perturbed gradients, it essentially satisfies the post-processing and therefore does not affect the established privacy guarantee. PGBB returns 2k posterior samples. By default, $\gamma=\gamma^{\star}$ for $\bu\sim \Diri(\gamma {\bf 1}_S)$ is set. Other details of simulation studies appear in SM~\ref{app:simulation}.

\subsection{Linear Regression under Heavy-tailed Errors}
\label{sec:simul-linear}

For each replicate, we draw $n\in \{10\text{k},40\text{k}\}$ independent observations $\{(X_i, Y_i)\}_{i=1}^n$ from $X_i \sim \mathrm{Uniform}[-1,1]^p$ and $Y_i = X_i^\top \boldsymbol{\beta}^\star + \varepsilon_i$ with $\boldsymbol{\beta}^\star = (1.5, -1.5, 1.0)^\top$ and noise scale $\sigma_\varepsilon = 0.5$. Two heavy-tailed error regimes are considered: Student-$t$ with $\varepsilon_i \sim \sigma_\varepsilon\, t_3$ and {contaminated normal} with $\varepsilon_i \sim 0.9\, N(0, \sigma_\varepsilon^2) + 0.1\, N(0, 64\sigma_\varepsilon^2)$. Both regimes are mean-zero, but represent a model misspecification setting if a likelihood-based Bayesian model assumes a standard normal distribution.

We compare PGBB with three private Bayesian or bootstrap baselines under the same target $(\epsilon,\delta)$-DP level, with $\delta=1/n$ unless stated otherwise. PGBB and DP-Bootstrap directly use the Huber loss $l((\beta,\sigma_H),(Y_i,X_i)) = \sigma_H\rho_c((Y_i-X_i^\top\beta)/\sigma_H)+\kappa\sigma_H/2$, where $\rho_c(z)=z^2/2$ for $|z|\le c$ and $\rho_c(z)=c|z|-c^2/2$ otherwise. We set $c=1.345$ and $\kappa=0.71$, following \cite{avel:etal:23}. NoisyGibbs \citep{bern:shel:19} implements a Gibbs sampler under a Gaussian linear model with summary statistics perturbed, where the non-private statistics play as a latent variable. To compare the performance of private UQ, we customize a DP-SGLD baseline (DP-SGLD) that uses Poisson subsampling and Gaussian perturbation to the updating gradient, so that it provides multiple posterior samples under the same privacy accounting procedure as ours except for the Dirichlet randomness. Note that a last iterate of SGLD can be DP under regularity conditions \citep{bai:etal:19,bert:etal:25}, but a single posterior draw is not enough for reasonable UQ. Our customized DP-SGLD is designed to use the same Huber loss via the Gibbs posterior \citep{biss:etal:16}, i.e., $p(\beta,\sigma_H|\{Y_i,X_i\}_{i=1}^n)\propto \exp(-\omega\sum_{i=1}^n l((\beta,\sigma_H),(Y_i,X_i)))$. Notably, our bootstrap-based method does not introduce such a scaling factor $\omega$.

For a fixed-draw comparison, PGBB releases 2k posterior draws after $T=10\text{k}$ iterations. DP-SGLD runs $20\text{k}$ iterations to secure a moderate effective sample size (ESS) and uses half of its chain for inference. Similarly, NoisyGibbs runs $400\text{k}$ iterations, considering ESS, and returns half of the chain. See SM Table~\ref{tab:ess-expo} summarizing ESS for both methods. Notably, DP-Bootstrap and PGBB generate fully effective samples by construction. DP-Bootstrap releases $20$ private bootstrap samples. We avoid the trivial failure mode for each method. For instance, the performance of DP-SGLD and DP-Bootstrap tends to deteriorate as more private estimates are collected, while PGBB can generate an arbitrary number of posterior estimates without any privacy restriction. Likewise, using too many iterations $T$ for PGBB may result in an uninformative sampler. 

\begin{figure}[ht!]
\centering
\begin{subfigure}[t]{\linewidth}
    \centering
    \includegraphics[width=1.00\linewidth]{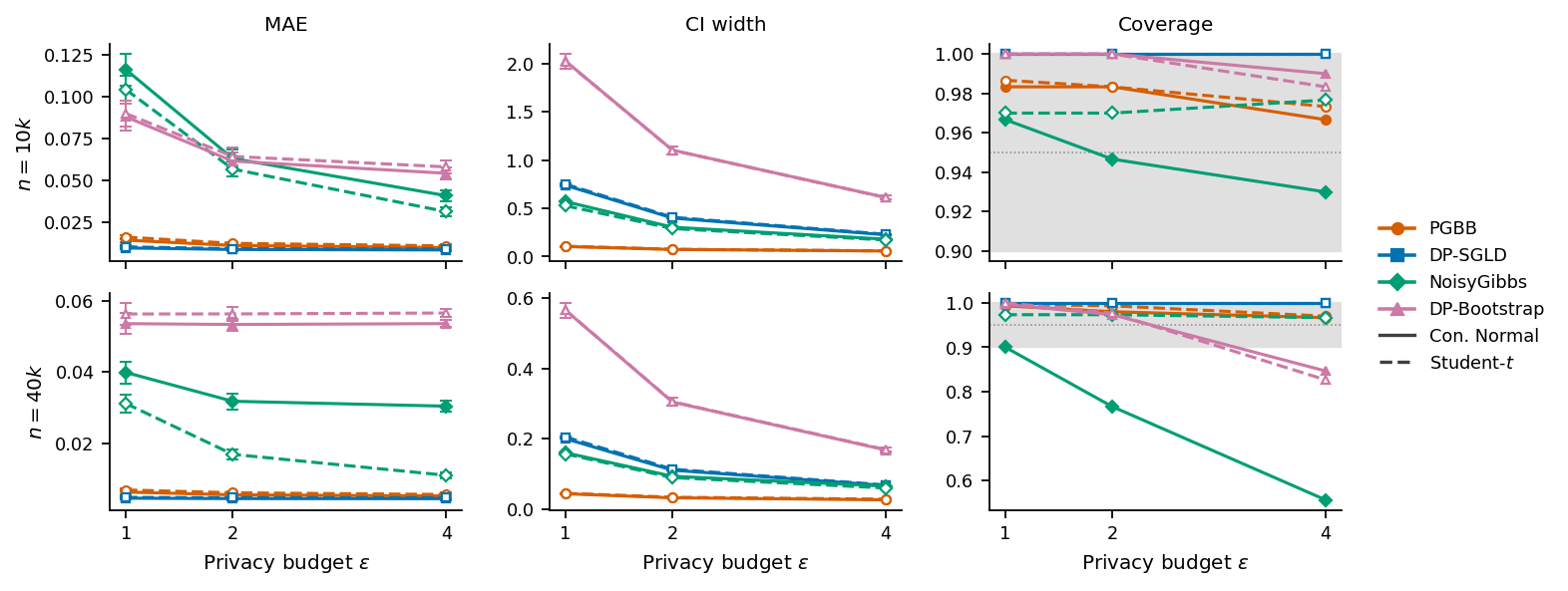}
    \caption{Comparison of competing methods}
    \label{fig:linear-reg}
\end{subfigure}

\vspace{0.2em}

\begin{subfigure}[t]{\linewidth}
    \centering
    \includegraphics[width=1.00\linewidth]{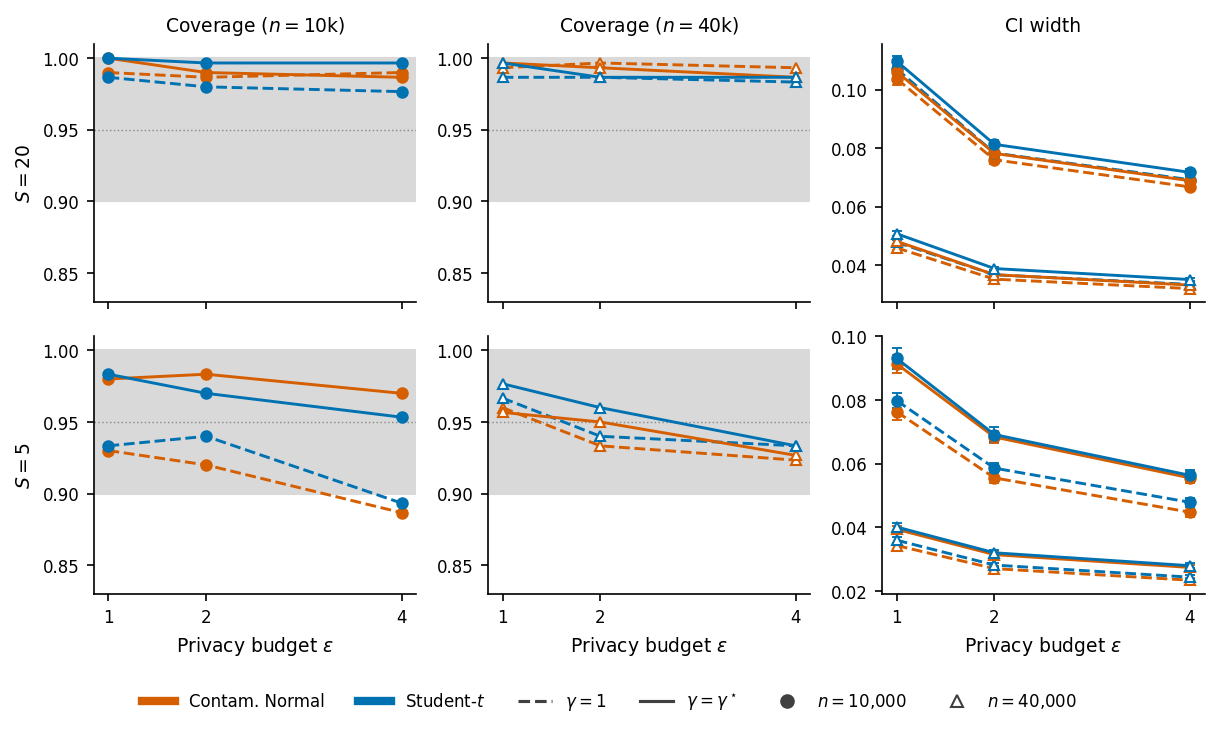}
    \caption{Comparison of $\gamma=1$ and $\gamma^\star(n,S)$ at $S\in\{20,5\}$}
    \label{fig:linear-reg-gamma}
\end{subfigure}
\caption{Linear regression under heavy-tailed errors. In panel (a), each point averages the metric over $3$ coefficients within each replicate and over $100$ replicates; error bars for MAE and CI width show $\pm2$ Monte Carlo SEs across replicate-level averages. Panel (b) compares $\gamma=1$ and $\gamma^\star$ by coverage and CI width. }
\label{fig:linear}
\end{figure}

\paragraph{Results} Figure~\ref{fig:linear-reg} overlays the results. PGBB attains competitive MAE and the narrowest credible intervals in every combination of $n\times\epsilon$, while maintaining frequentist coverage close to the nominal $95\%$ level. Performance overall improves as $n$ and $\epsilon$ increase. PGBB targets the robust loss-based decision rule directly and therefore does not have to fit the contaminated data through a Gaussian likelihood. In contrast, the likelihood-based private Bayesian methods pay a cost for this mismatch, either through wider uncertainty or deteriorating coverage. NoisyGibbs makes this cost especially clear under contamination, as its empirical coverage deteriorates as $n$ grows due to the wrong Gaussian likelihood. DP-Bootstrap exposes a different failure mode. It is extremely sensitive to the choice of $\ell_2$-regularizer (used for formal privacy guarantee) even when the number of posterior samples is scarce. Lastly, SM~\ref{app:mc-K-effect} verifies the $K$-invariance of PGBB's estimation performance discussed in Theorem~\ref{thm:conv-gbb}. SM Table~\ref{tab:linreg-summary} expands the numerical values of Figure~\ref{fig:linear-reg}, and SM Tables~\ref{tab:linreg-b1}--\ref{tab:linreg-b3} give coefficient-specific results.

\subsubsection{Choice of $\gamma$}
Figure~\ref{fig:linear-reg-gamma} contrasts the coverage on $\gamma=1$ with the variance-matched $\gamma=\gamma^\star(n,S)$ of Proposition~\ref{prop:choice-alpha}. Since the MAE remains almost the same, the figure summarizes coverage and CI width only. The canonical $\gamma=1$ under-covers at $S=5$, dipping to $0.87\sim 0.88$ at $(\varepsilon, n) = (4,\, 10\text{k})$ under both noise regimes, whereas $\gamma=\gamma^\star$ recovers the nominal coverage level by slightly widening the credible intervals. Thus, while $\gamma=1$ can be a serviceable default at moderate $S$, the calibrated choice $\gamma^\star$ provides a simple correction when coverage is sensitive to blocking. 

\subsection{Well-specified Likelihood}
\label{app:expo-family}

This section is designed to see whether the PGBB's uncertainty quantification is comparable to a known noise-aware Bayesian method. We consider Gaussian-mean estimation $X_i\sim\mathcal{N}(1.5,0.5^2)$ and exponential-rate estimation $X_i\sim\mathrm{Exp}(1/2)$ at relatively small sample sizes $n\in\{500,5000\}$. The target estimand is $\E[X]$ in both cases. For these exponential-family models, the noise-aware sufficient-statistic perturbation of \cite{bern:shel:18}, called NoisyGibbs, explicitly models the privacy noise in constructing the inference. Both DP-SGLD and PGBB use their clipped log likelihood. DP-Bootstrap reports 1k deconvoluted bootstrap estimates still far less than 2k, which is available only for a univariate parameter \citep{wang:etal:25}. The total number of iterations of DP-SGLD and NoisyGibbs is determined to secure moderate ESS (SM Table~\ref{tab:ess-expo}). Not surprisingly, for DP-SGLD, longer chains require more noise per step, which can degrade the estimate. PGBB and DP-Bootstrap draw independent samples always. DP-Bootstrap, however, spends more privacy budget to collect more draws, while NoisyGibbs does not in collecting more samples. 

\begin{figure}[h]
    \centering
    \includegraphics[width=1.0\linewidth]{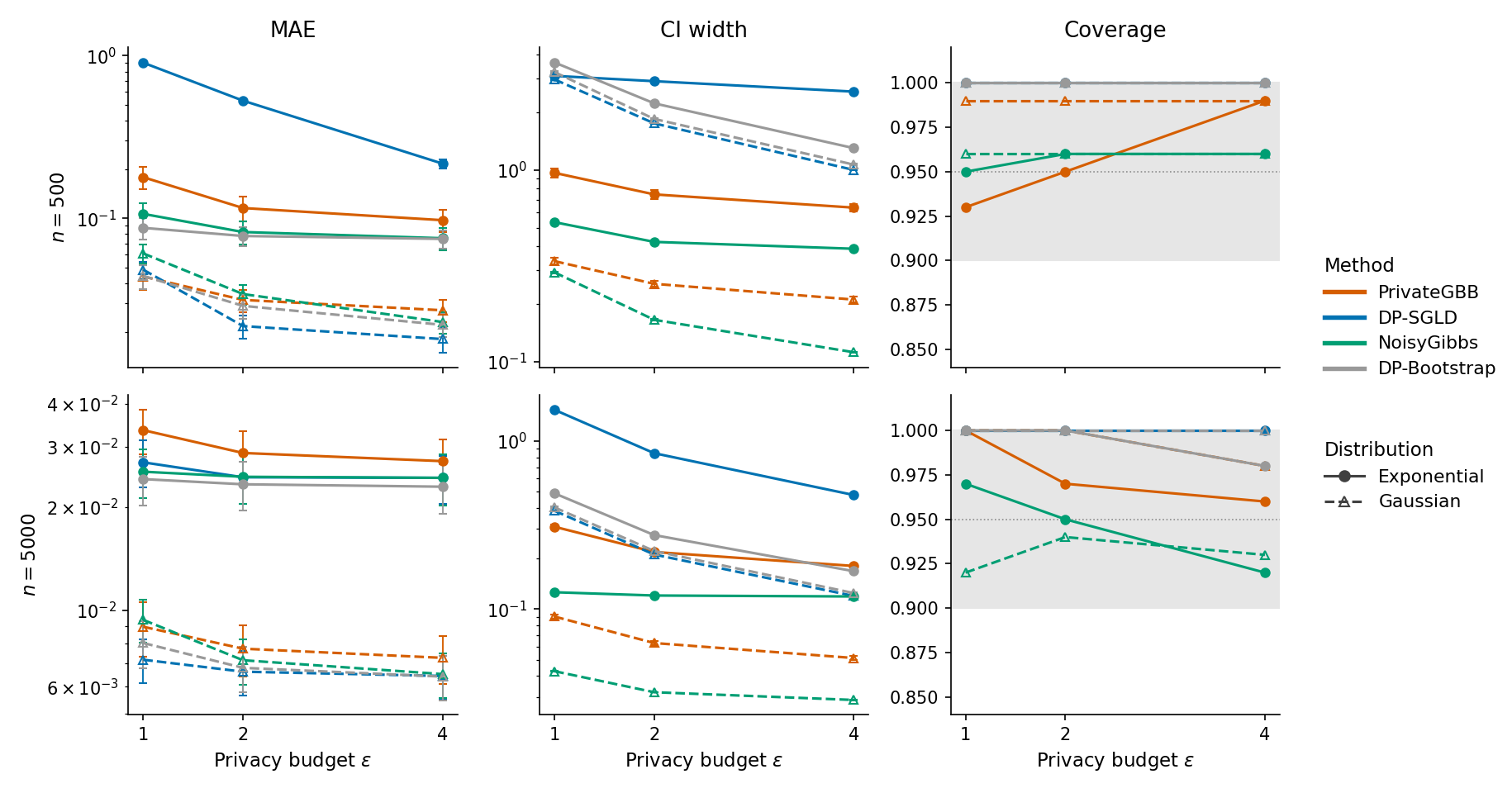}
    \caption{Exponential-family experiment. Points are means over $100$ replicates. Error bars, shown for MAE and CI width, give $\pm2$ Monte Carlo SEs across replicates. Coverage is the empirical inclusion rate of nominal $95\%$ credible intervals.}
    \label{fig:expo-family}
\end{figure}

Figure~\ref{fig:expo-family} demonstrates that PGBB is comparable when a model is correctly specified. All models have nominal coverages, although DP-SGLD has overly extensive coverages. NoisyGibbs tends to give relatively smaller MAE, coverage closer to the target $95\%$, and the shortest intervals. Interestingly, PGBB stays closer to the gold standard than competing methods, particularly for the case of $n=500$; MAE and interval width are slightly larger than those of NoisyGibbs, and coverage remains near nominal. While DP-Bootstrap underperforms in linear regression, it efficiently works in this exponential family example when $n=5\text{k}$. DP-SGLD achieves a smaller MAE in the Gaussian case, but it has substantially wider CI width. 

\section{Real Data Application}
\label{sec:realdata}

Quantile regression is an effective tool for distributional-level analysis. For instance, \cite{angr:etal:06} fit quantile regression for a grid of quantiles to study how the \texttt{return-to-education} varies across the \texttt{wage} distribution on \textbf{U.S. Census data}, where the response is the log weekly wage and the covariates are years of schooling, potential work experience, and a black indicator. They reported rising within-group inequality in the upper half of the distribution between 1980 ($n=65,023$) and 2000 ($n=97,397$). In another example, \cite{abre:01} studied how maternal demographics and behavior shift the distribution of birthweight for \textbf{U.S. Natality data}. Fitting the quantile regression of \texttt{birthweight} for various levels, the study found that several factors, e.g., notably \texttt{maternal-smoking}, \texttt{race}, \texttt{education}, and \texttt{prenatal-care}, have a markedly larger effect at the lower quantiles.

\begin{figure}[ht!]
\centering
\begin{subfigure}[t]{1.0\linewidth}
    \centering
    \includegraphics[width=0.8\linewidth]{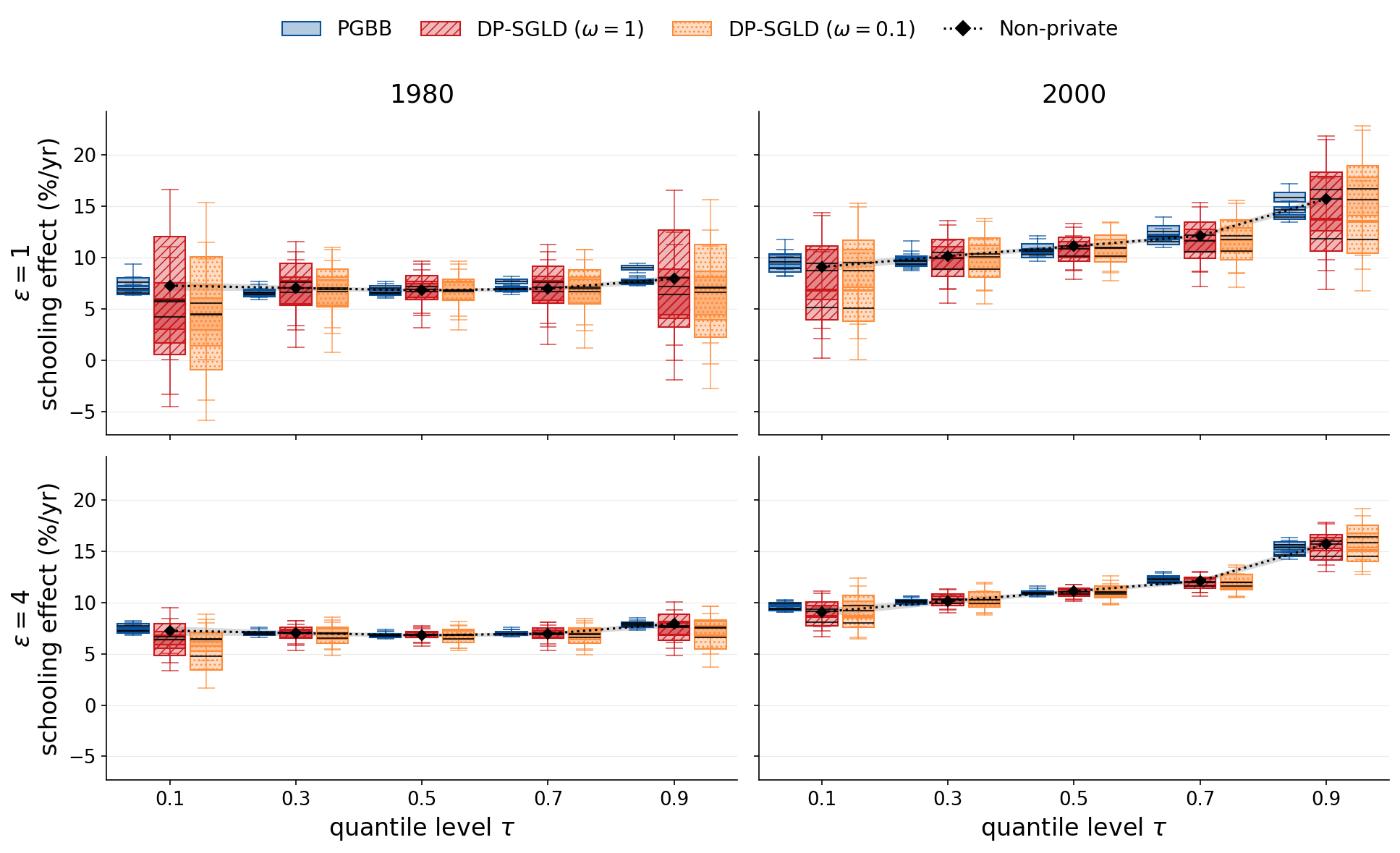}
    \caption{Returns to schooling in 1980 and 2000.}
    \label{fig:acfv_schooling}
\end{subfigure}
\vspace{0.6em}
\begin{subfigure}[t]{1.0\linewidth}
    \centering
    \includegraphics[width=0.8\linewidth]{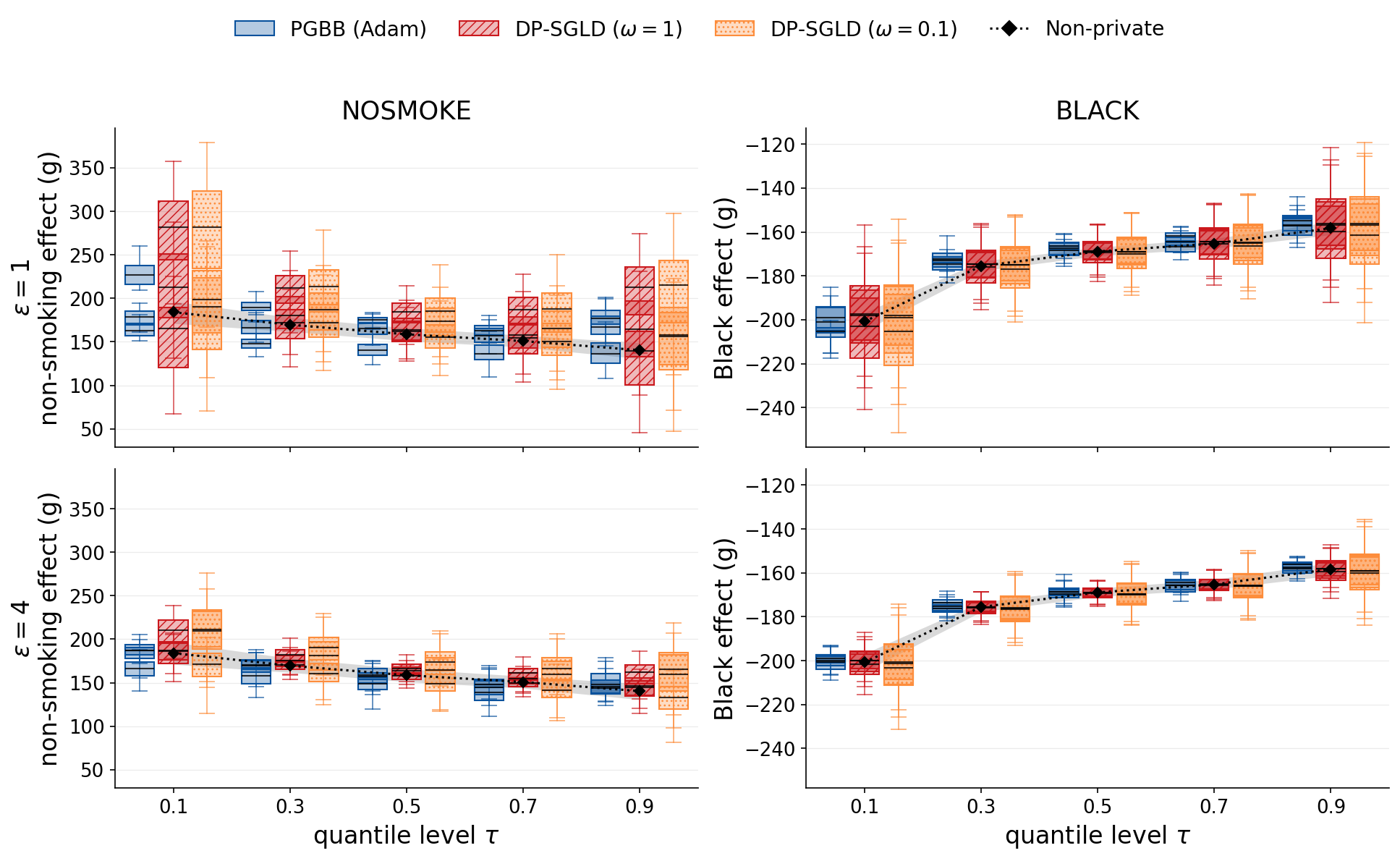}
    \caption{Birthweight coefficients (grams) for no-smoking and black indicators.}
    \label{fig:natality_focal}
\end{subfigure}
\vspace{-0.6cm}
\caption{Released DP quantile-regression coefficients across $\tau$: PGBB and DP-SGLD ($\omega\in\{1,0.1\}$) versus the non-private fit. Boxplots overlay three random runs per method.}
\label{fig:real_data_subfigures}
\end{figure}

The proposed joint PGBB scheme in Section~\ref{sec:joint-learning} can be highly beneficial when protecting individuals across all quantile levels is desirable, like the two aforementioned applications. We apply the kernel-smoothed quantile loss $\rho_{\tau,h}(u)=(\rho_\tau * K_h)(u)=\int_{-\infty}^{\infty}\rho_\tau(v)\,K_h(u-v)\,\mathrm{d}v$ \citep{tan:etal:22}, where $K_h(\cdot)$ is a non-negative kernel of bandwidth $h$, whose derivative lies in $[\tau-1,\tau]$ and is therefore bounded in magnitude by $\max\{\tau,\,1-\tau\}$. Accordingly, DP-SGLD uses the per-$\tau$ bound $C(\tau)=\max\{\tau,1-\tau\}B_X$ with $\|X\|\leq B_X$ and PGBB uses $\sup_{\tau} C(\tau)$ for the joint training. We set $\tau \sim \mathrm{discrete}(\{o_k\}_{k=1}^5,\{p_{k}\}_{k=1}^5)$ where $o_k=(2k-1)/10$ and $p_k\propto {(o_k(1-o_k))^{-1}}$. In each example, the original non-private quantile regression and the customized DP-SGLD via the Gibbs posterior are provided as reference models. Because DP-SGLD runs over the 5 grids of $\tau$ independently, the privacy accountant of DP-SGLD is updated to reflect this composition under the R\'enyi composition \citep{miro:17} rather than using the loose linear composition. Two DP-SGLD models are fitted for $\omega\in\{1,0.1\}$ for $T=200\text{k}$; both have the average effective sample size across all variables still less than $100$. To avoid trivial failure (increasing $T$ and perturbing gradients substantially more), we do not increase $T$ further. Further details appear in SM~\ref{sec:supp_realdata}. 

\begin{table}[t]
\centering
\scriptsize
\caption{U.S.\ Census and Natality quantile applications. Metrics are the MAE of the posterior mean relative to the non-private fit, the inclusion rate of the non-private fit in nominal $95\%$ credible intervals, and interval width. Values are averaged over the quantile grid and $10$ runs; parentheses give Monte Carlo SEs across runs.}
\label{tab:main_bias_unc}
\begin{tabular}{lll ccc ccc}
\toprule
& & & \multicolumn{3}{c}{$\varepsilon=1$} & \multicolumn{3}{c}{$\varepsilon=4$} \\
\cmidrule(lr){4-6}\cmidrule(lr){7-9}
Data & Variable & Method & MAE (SE) & Cov & Width (SE) & MAE (SE) & Cov & Width (SE) \\
\midrule
\multirow{6}{*}{U.S.\ Census}
 & \multirow{3}{*}{educ (1980)} & PGBB & 0.52\,(0.09) & 0.700 & 1.29\,(0.09) & 0.17\,(0.04) & 0.840 & 0.77\,(0.02) \\
 & & DP-SGLD ($\omega{=}1$) & 0.98\,(0.20) & 1.000 & 10.91\,(0.44) & 0.30\,(0.05) & 1.000 & 3.16\,(0.12) \\
 & & DP-SGLD ($\omega{=}0.1$) & 1.05\,(0.19) & 1.000 & 11.11\,(0.45) & 0.37\,(0.06) & 1.000 & 4.30\,(0.13) \\
\cmidrule(lr){2-9}
 & \multirow{3}{*}{educ (2000)} & PGBB & 0.69\,(0.11) & 0.680 & 1.70\,(0.11) & 0.30\,(0.03) & 0.820 & 0.99\,(0.06) \\
 & & DP-SGLD ($\omega{=}1$) & 0.51\,(0.09) & 1.000 & 9.26\,(0.48) & 0.17\,(0.02) & 1.000 & 2.79\,(0.16) \\
 & & DP-SGLD ($\omega{=}0.1$) & 0.49\,(0.10) & 1.000 & 9.59\,(0.54) & 0.22\,(0.05) & 1.000 & 4.04\,(0.22) \\
\midrule
\multirow{6}{*}{Natality}
 & \multirow{3}{*}{\texttt{NOSMOKE}} & PGBB & 16.5\,(2.7) & 0.820 & 58.7\,(6.7) & 6.4\,(1.0) & 0.960 & 38.1\,(2.5) \\
 & & DP-SGLD ($\omega{=}1$) & 21.2\,(5.1) & 0.980 & 109.4\,(3.1) & 6.5\,(1.3) & 1.000 & 40.7\,(1.3) \\
 & & DP-SGLD ($\omega{=}0.1$) & 22.8\,(4.6) & 1.000 & 137.6\,(4.3) & 11.8\,(1.8) & 1.000 & 91.2\,(3.9) \\
\cmidrule(lr){2-9}
 & \multirow{3}{*}{\texttt{BLACK}} & PGBB & 2.0\,(0.3) & 0.980 & 17.5\,(1.0) & 1.0\,(0.2) & 1.000 & 11.2\,(0.3) \\
 & & DP-SGLD ($\omega{=}1$) & 0.7\,(0.1) & 1.000 & 47.1\,(0.5) & 0.3\,(0.0) & 1.000 & 17.2\,(0.2) \\
 & & DP-SGLD ($\omega{=}0.1$) & 0.8\,(0.2) & 1.000 & 57.6\,(0.6) & 0.6\,(0.1) & 1.000 & 37.0\,(0.3) \\
\bottomrule
\end{tabular}
\end{table}

\paragraph{Results}
Figures~\ref{fig:acfv_schooling} and \ref{fig:natality_focal} show the released coefficients across $\tau$ with boxplots over three runs. In both applications, the posterior mean of both methods tracks the non-private fit. In particular, PGBB returns only slightly wider $95\%$ intervals, so the findings of non-private analysis are preserved. Intervals widen with $\max\{\tau,1-\tau\}$ and for less precisely identified coefficients such as not-smoking, where the privacy noise dominates. Figure~\ref{fig:se-totalvar} more specifically assesses such phenomena in detail. It relates the standard error of non-private coefficients to within-variability and between-variability, respectively. As we observe, PGBB has smaller within-variability overall, but slightly larger between-variability. Over $10$ independent runs, Table~\ref{tab:main_bias_unc} shows that PGBB has smaller MAE and moderate widths at comparable coverage, whereas DP-SGLD overall has the overcoverage issue and is sensitive to the choice of $\omega$. SM~\ref{app:further-result-realdata} reports the remaining covariates and the pooled plots.

\begin{figure}[t]
\centering
\includegraphics[width=\linewidth]{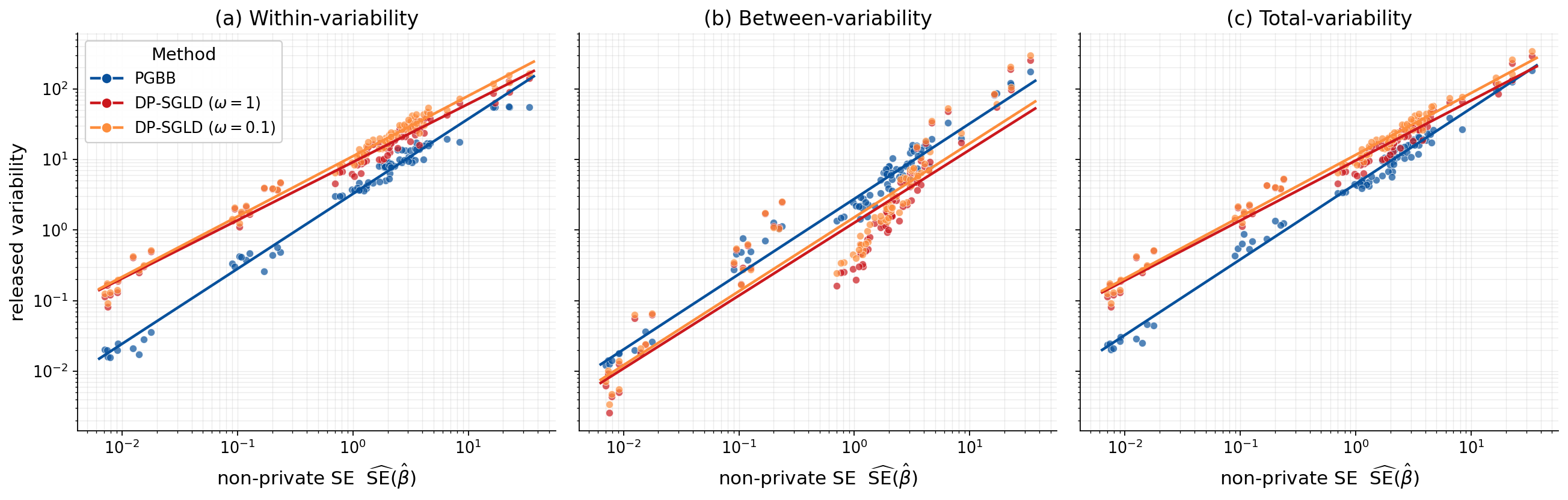}
\caption{Posterior variability versus the non-private standard error $\widehat{\mathrm{SE}}(\hat\beta)$, decomposed over $R=10$ independent runs and plotted on the standard-deviation scale. Panels show (a) within-run variability $\{R^{-1}\sum_r\widehat{\mathrm{Var}}_b(\beta_r^{(b)})\}^{1/2}$, (b) between-run variability $\{\widehat{\mathrm{Var}}_r(\bar\beta_r)\}^{1/2}$, and (c) total variability, the square root of the sum of the two variances. Panels pool all Census and Natality covariates at $\varepsilon=1$ across the quantile grid, with shared log axes.}
\label{fig:se-totalvar}
\end{figure}

\section{Discussion}
\label{sec:discussion}

This work develops a differentially private generative Bayesian bootstrap (PGBB). The method combines three advantages not jointly provided by existing private-UQ methods: decoupled private learning followed by post-processing, freedom from specifying a data-generating process, and joint learning for a range of decision rules under a single privacy budget. We believe these features can reduce the impact of privacy noise to improve uncertainty quantification in diverse real-world applications, such as when a range of loss-based decision rules with uncertainty is required for reliable inference, as in our quantile regression applications (Section~\ref{sec:realdata}). Nevertheless, we acknowledge these promising properties can come at a computational cost, since fitting a large amortized generator may require a high-performance GPU.

Several practical extensions may further improve private amortized learning. Shuffling the partition $\pi$ at each training iteration could help when a subgroup contains outliers and would preserve privacy, although the calibration rule $\gamma^\star$ would no longer apply. Nested random reweighting within groups is another possibility, but an efficient formal privacy accountant may be nontrivial. Finally, allowing pseudo-samples with $\alpha>0$ in the posterior bootstrap is attractive for privacy (SM~\ref{app:pseudo-sample-alpha}) and connects to the growing use of generative-AI outputs \citep{ohag:rock:25}. We leave this to future work because posterior-variance calibration and bias control become more subtle.

\section{Disclosure Statement}\label{disclosure-statement}
The authors report that there are no competing interests to declare.

\section{Data Availability Statement}\label{data-availability-statement}
The U.S.\ Census data and the U.S.\ natality data analyzed in this article are publicly available. The U.S.\ Census data are the public-use samples analyzed by \cite{angr:etal:06}, and the natality records are distributed by the National Bureau of Economic Research at \url{https://www.nber.org/research/data/vital-statistics-natality-birth-data}.

\renewcommand{\refname}{References}
\putbib[ref]
\end{bibunit}

\clearpage
\def\CombinedMode{}
\begin{bibunit}[agsm]   

\setcounter{section}{0}
\renewcommand{\thesection}{S\arabic{section}}
\renewcommand{\theHsection}{supp.\arabic{section}}
\setcounter{equation}{0}\renewcommand{\theequation}{S\arabic{equation}}\renewcommand{\theHequation}{supp.\arabic{equation}} \setcounter{table}{0}\renewcommand{\thetable}{S\arabic{table}}\renewcommand{\theHtable}{supp.\arabic{table}} \setcounter{figure}{0}\renewcommand{\thefigure}{S\arabic{figure}}
\begin{center}{\large\bf Supplementary Material (SM)}\end{center}
\begin{center}{Title: Private Generative Bayesian Bootstrap}\end{center}
\medskip
\section{Further Discussion}

\subsection{Comparison to Existing Studies}

Table~\ref{tab:method-comparison} summarizes representative private UQ methods by inference paradigm. The functionality of estimating a range of decision rules is further discussed in Section~\ref{sec:joint-learning}.
\begin{table}[h]
\centering
\caption{\emph{Data-model free}: requires only a loss $l(\theta,x)$; $\times^{s}$ means simulated data are required. \emph{Scaling free}: no loss scale or temperature is chosen, e.g., the scaling factor of the Gibbs posterior. \emph{Budget-free draws}: posterior draws are DP by post-processing. \emph{Multiple decision rules}: one private run covers a range of loss-based decision rules. $\dagger$: one posterior draw is DP under regularity conditions.}
\label{tab:method-comparison}
\scriptsize
\setlength{\tabcolsep}{6pt}
\renewcommand{\arraystretch}{1.2}
\begin{tabular}{@{}l cccc@{}}
\toprule
Method
 & \shortstack{Data-model\\free}
 & \shortstack{Scaling\\free}
 & \shortstack{Budget-free\\draws}
 & \shortstack{Multiple\\decision rules}\\
\midrule
\multicolumn{5}{@{}l}{\textit{Bayesian}}\\
\quad Posterior sampling$^{\dagger}$ \citep{wang:etal:15,jews:etal:23}
 & \checkmark & $\times$ & $\times$ & $\times$\\
\quad Stochastic Gradient Langevin Dynamics$^{\dagger}$ \citep{bai:etal:19}
 & \checkmark & $\times$ & $\times$ & $\times$\\
\quad MCMC \citep{heik:etal:19,zhan:zhan:23}
 & $\times$ & $\times$ & $\times$ & $\times$\\
\quad Noisy Summary Statistics \citep{bern:shel:18,bern:shel:19}
 & $\times$ & \checkmark & \checkmark & $\times$\\
\quad Approximate Bayesian Computation \citep{gong:22}
 & $\times^{s}$ & \checkmark & \checkmark & $\times$\\
\addlinespace[2pt]
\multicolumn{5}{@{}l}{\textit{Frequentist}}\\
\quad Repro-sample inference \citep{awan:wang:25}
 & $\times^{s}$ & \checkmark & \checkmark & $\times$\\
\quad Nonparametric bootstrap \citep{wang:etal:25}
 & \checkmark & \checkmark & $\times$ & $\times$\\
\midrule
\textbf{PGBB (proposed)}
 & \checkmark & \checkmark & \checkmark & \checkmark\\
\bottomrule
\end{tabular}
\end{table}

\subsection{Algorithm}
\label{app:alg}

Algorithm~\ref{alg:bayesian_bootstrap} trains the generator map $G_\phi$ by perturbing the gradient. Each step draws a Poisson minibatch and $K$ sets of Dirichlet block weights, forms the weighted loss gradient with added Gaussian noise, averages it over the $K$ draws, and descends on $\phi$. The trained map $\phi_T$ then generates posterior samples from fresh weights without further privacy cost.

\begin{algorithm}[ht!]
\caption{Practical Private Bayesian Bootstrap Map $G$}\label{alg:bayesian_bootstrap}
\footnotesize
\noindent\textbf{Require:} Learning rate $\eta$, noise scale $\sigma$, subsampling rate $q\in(0,1)$, block count $S$, MC samples $K$\\
\textbf{Ensure:} Parameter map $\phi_T$
\begin{list}{\arabic{enumi}:}{\usecounter{enumi}%
  \setlength{\leftmargin}{2.2em}\setlength{\labelsep}{0.4em}%
  \setlength{\itemsep}{2pt}\setlength{\topsep}{3pt}\setlength{\parsep}{0pt}}
\item Initialize $\phi_0$
\item Define group size $n_S := n/S$
\item \textbf{for} $t = 1, \dots, T$ \textbf{do}
\item \quad Initialize gradient accumulator $\Delta \phi_t \gets 0$
\item \quad \textbf{for} $k = 1, \dots, K$ \textbf{do}
\item \quad\quad Construct minibatch $B_t^{(k)} \subseteq \{1, \dots, n\}$ by including each index $i$ with probability $q$
\item \quad\quad Generate block weights $\bu^{(k)}$
\item \quad\quad Set $w_j^{(k)} = u_{\lceil j/n_S \rceil} / n_S$
\item \quad\quad Evaluate $g^{(k)}_t = G_{\phi^{(t)}}(\bu^{(k)})$
\item \quad\quad $\nabla{g}_t^{(k)} \gets \big(\sum_{j \in B_t^{(k)}} w_j^{(k)} l'(g^{(k)}_t,x_j) + \mathcal{N}(0, \sigma^2 \mathbf{I})\big)\, \nabla_{\phi} G_{\phi^{(t)}}(\bu^{(k)})$
\item \quad\quad $\Delta \phi_t \gets \Delta \phi_t + \nabla{g}_t^{(k)}$
\item \quad \textbf{end for}
\item \quad \textbf{Descent:}\ $\phi_{t+1} \gets \phi_t - \eta \Delta \phi_t/K$
\item \textbf{end for}
\end{list}
\end{algorithm}

For quantile regression, a single privacy budget can cover the entire quantiles. Algorithm~\ref{alg:joint-qr} adapts Algorithm~\ref{alg:bayesian_bootstrap} to a $\tau$-conditioned map $G_\phi(\bu,\tau)$ trained across the level grid $\tau\in \mathcal T$. Denote by $\psi_\tau(\cdot)$ the derivative of the convoluted quantile loss for the quantile $\tau$. At each Monte Carlo draw, a level $\tau^{(k)}$ is sampled from $P_\tau$ over $\mathcal T$ independently of the data. The level-uniform clip $C_\star=\bigl(\max_{\tau\in\mathcal T}\max(\tau,1-\tau)\bigr)B_x\sqrt p$ bounds the per-record contribution at every level, so the per-record sensitivity is identical across $\tau$.

\begin{algorithm}[ht!]
\caption{Joint Quantile-Process Training of PGBB $G_\phi(\bu,\tau)$}\label{alg:joint-qr}
\footnotesize
\noindent\textbf{Require:} Learning rate $\eta$, noise scale $\sigma$ (calibrated at clip $C_\star$), subsampling rate $q\in(0,1)$, block count $S$, Monte Carlo samples $K$, quantile-level grid $\mathcal T$ with sampling law $P_\tau$\\
\textbf{Ensure:} Parameter map $\phi_T$
\begin{list}{\arabic{enumi}:}{\usecounter{enumi}%
  \setlength{\leftmargin}{2.2em}\setlength{\labelsep}{0.4em}%
  \setlength{\itemsep}{2pt}\setlength{\topsep}{3pt}\setlength{\parsep}{0pt}}
\item Initialize $\phi_0$
\item Define group size $n_S := n/S$ and level-uniform clip $C_\star := \bigl(\max_{\tau\in\mathcal T}\max(\tau,1-\tau)\bigr)\,B_x\sqrt p$
\item \textbf{for} $t = 1, \dots, T$ \textbf{do}
\item \quad Initialize gradient accumulator $\Delta \phi_t \gets 0$
\item \quad \textbf{for} $k = 1, \dots, K$ \textbf{do}
\item \quad\quad Construct minibatch $B_t^{(k)} \subseteq \{1, \dots, n\}$ by including each index $i$ with probability $q$
\item \quad\quad Generate block weights $\bu^{(k)}$ and set $w_j^{(k)} = u_{\lceil j/n_S \rceil} / n_S$
\item \quad\quad Sample a level $\tau^{(k)} \sim P_\tau$ over $\mathcal T$, independent of the data
\item \quad\quad Evaluate $\beta^{(k)} = G_{\phi}(\bu^{(k)}, \tau^{(k)})$ and residuals $r_j = y_j - X_j^\top \beta^{(k)}$
\item \quad\quad $\nabla{g}_t^{(k)} \gets \big(\sum_{j \in B_t^{(k)}} w_j^{(k)}\, \psi_{\tau^{(k)}}(r_j)\, X_j + \mathcal{N}(0, \sigma^2 \mathbf{I})\big)\, \nabla_{\phi} G_{\phi}(\bu^{(k)}, \tau^{(k)})$
\item \quad\quad $\Delta \phi_t \gets \Delta \phi_t + \nabla{g}_t^{(k)}$
\item \quad \textbf{end for}
\item \quad \textbf{Descent:}\ $\phi_{t+1} \gets \phi_t - \eta \Delta \phi_t/K$
\item \textbf{end for}
\end{list}
\end{algorithm}

\subsection{Further Simulation Results}
\label{app:extra-simulation-results}

\subsubsection{Linear-regression Summary}
\label{app:linreg-summary}

Table~\ref{tab:linreg-summary} lists the values underlying Figure~\ref{fig:linear-reg}: MAE, $95\%$ coverage, and credible-interval width by method, $n$, and $\epsilon$, for contaminated-normal and Student-$t$ noise. Standard errors across the $100$ replicates appear in parentheses for MAE and width, and are the source of the error bars in Figure~\ref{fig:linear-reg}. Tables~\ref{tab:linreg-b1}-\ref{tab:linreg-b3} break Table~\ref{tab:linreg-summary} down by individual regression coefficient, without averaging over the coordinates.

\clearpage

\begin{table}[ht!]
\centering
\scriptsize
\setlength{\tabcolsep}{3pt}
\renewcommand{\arraystretch}{0.95}
\caption{Numerics of Figure~\ref{fig:linear-reg}, over $100$ replicates at $S=20$ and $\gamma=\gamma^\star$ for PGBB. MAE is the mean absolute deviation from the true coefficient, Cov is the $95\%$ inclusion rate, and Width is the credible-interval length, each averaged over the $3$ coordinates and $100$ replicates. Standard errors across the replicates are in parentheses for MAE and Width.}
\label{tab:linreg-summary}
\begin{tabular}{c c l c c c c c c}
\toprule
\multicolumn{3}{c}{} & \multicolumn{3}{c}{Contam. normal} & \multicolumn{3}{c}{Student-$t$} \\
\cmidrule(lr){4-6} \cmidrule(lr){7-9}
$\epsilon$ & $n$ & Method & MAE (SE) & Cov & Width (SE) & MAE (SE) & Cov & Width (SE) \\
\midrule
\multirow{8}{*}{1} & \multirow{4}{*}{10{,}000} & \texttt{PGBB} & 0.0142\,(0.0006) & 0.983 & 0.104\,(0.001) & 0.0160\,(0.0006) & 0.987 & 0.107\,(0.001) \\
 &  & \texttt{DP-SGLD} & 0.0096\,(0.0004) & 1.000 & 0.738\,(0.003) & 0.0102\,(0.0004) & 1.000 & 0.749\,(0.004) \\
 &  & \texttt{NoisyGibbs} & 0.1160\,(0.0047) & 0.967 & 0.569\,(0.002) & 0.1041\,(0.0042) & 0.970 & 0.526\,(0.002) \\
 &  & \texttt{DP-Bootstrap} & 0.0878\,(0.0040) & 1.000 & 2.023\,(0.038) & 0.0898\,(0.0039) & 1.000 & 2.022\,(0.038) \\
\cmidrule(lr){2-9}
 & \multirow{4}{*}{40{,}000} & \texttt{PGBB} & 0.0062\,(0.0003) & 0.993 & 0.044\,($\approx 0$) & 0.0068\,(0.0003) & 0.997 & 0.046\,($\approx 0$) \\
 &  & \texttt{DP-SGLD} & 0.0045\,(0.0002) & 1.000 & 0.200\,($\approx 0$) & 0.0047\,(0.0002) & 1.000 & 0.205\,($\approx 0$) \\
 &  & \texttt{NoisyGibbs} & 0.0399\,(0.0016) & 0.900 & 0.161\,($\approx 0$) & 0.0311\,(0.0012) & 0.973 & 0.157\,($\approx 0$) \\
 &  & \texttt{DP-Bootstrap} & 0.0536\,(0.0015) & 0.997 & 0.564\,(0.011) & 0.0564\,(0.0016) & 1.000 & 0.565\,(0.010) \\
\midrule
\multirow{8}{*}{2} & \multirow{4}{*}{10{,}000} & \texttt{PGBB} & 0.0110\,(0.0005) & 0.983 & 0.074\,(0.001) & 0.0123\,(0.0005) & 0.983 & 0.076\,(0.001) \\
 &  & \texttt{DP-SGLD} & 0.0087\,(0.0004) & 1.000 & 0.399\,(0.001) & 0.0091\,(0.0004) & 1.000 & 0.407\,(0.001) \\
 &  & \texttt{NoisyGibbs} & 0.0630\,(0.0026) & 0.947 & 0.304\,($\approx 0$) & 0.0567\,(0.0022) & 0.970 & 0.290\,(0.001) \\
 &  & \texttt{DP-Bootstrap} & 0.0614\,(0.0026) & 1.000 & 1.103\,(0.020) & 0.0642\,(0.0027) & 1.000 & 1.101\,(0.021) \\
\cmidrule(lr){2-9}
 & \multirow{4}{*}{40{,}000} & \texttt{PGBB} & 0.0054\,(0.0002) & 0.980 & 0.033\,($\approx 0$) & 0.0060\,(0.0002) & 0.993 & 0.034\,($\approx 0$) \\
 &  & \texttt{DP-SGLD} & 0.0043\,(0.0002) & 1.000 & 0.111\,($\approx 0$) & 0.0046\,(0.0002) & 1.000 & 0.114\,($\approx 0$) \\
 &  & \texttt{NoisyGibbs} & 0.0318\,(0.0011) & 0.767 & 0.094\,($\approx 0$) & 0.0168\,(0.0007) & 0.973 & 0.090\,($\approx 0$) \\
 &  & \texttt{DP-Bootstrap} & 0.0534\,(0.0008) & 0.973 & 0.305\,(0.006) & 0.0564\,(0.0009) & 0.977 & 0.305\,(0.006) \\
\midrule
\multirow{8}{*}{4} & \multirow{4}{*}{10{,}000} & \texttt{PGBB} & 0.0098\,(0.0005) & 0.967 & 0.058\,($\approx 0$) & 0.0108\,(0.0004) & 0.973 & 0.059\,(0.001) \\
 &  & \texttt{DP-SGLD} & 0.0084\,(0.0003) & 1.000 & 0.225\,($\approx 0$) & 0.0089\,(0.0004) & 1.000 & 0.231\,($\approx 0$) \\
 &  & \texttt{NoisyGibbs} & 0.0408\,(0.0017) & 0.930 & 0.181\,($\approx 0$) & 0.0312\,(0.0012) & 0.977 & 0.172\,($\approx 0$) \\
 &  & \texttt{DP-Bootstrap} & 0.0541\,(0.0017) & 0.990 & 0.612\,(0.011) & 0.0580\,(0.0018) & 0.983 & 0.611\,(0.011) \\
\cmidrule(lr){2-9}
 & \multirow{4}{*}{40{,}000} & \texttt{PGBB} & 0.0051\,(0.0002) & 0.967 & 0.027\,($\approx 0$) & 0.0055\,(0.0002) & 0.970 & 0.029\,($\approx 0$) \\
 &  & \texttt{DP-SGLD} & 0.0043\,(0.0002) & 1.000 & 0.068\,($\approx 0$) & 0.0046\,(0.0002) & 1.000 & 0.069\,($\approx 0$) \\
 &  & \texttt{NoisyGibbs} & 0.0304\,(0.0008) & 0.557 & 0.066\,($\approx 0$) & 0.0109\,(0.0005) & 0.967 & 0.061\,($\approx 0$) \\
 &  & \texttt{DP-Bootstrap} & 0.0537\,(0.0005) & 0.847 & 0.168\,(0.003) & 0.0567\,(0.0006) & 0.827 & 0.169\,(0.003) \\
\bottomrule
\end{tabular}
\end{table}

\clearpage
\begin{table}[ht!]
\centering
\scriptsize
\setlength{\tabcolsep}{3pt}
\renewcommand{\arraystretch}{0.95}
\caption{Results of $\beta_1$ in the regression example. Refer to Table~\ref{tab:linreg-summary} for exposition.}
\label{tab:linreg-b1}
\begin{tabular}{c c l c c c c c c}
\toprule
\multicolumn{3}{c}{} & \multicolumn{3}{c}{Contam. normal} & \multicolumn{3}{c}{Student-$t$} \\
\cmidrule(lr){4-6} \cmidrule(lr){7-9}
$\epsilon$ & $n$ & Method & MAE (SE) & Cov & Width (SE) & MAE (SE) & Cov & Width (SE) \\
\midrule
\multirow{8}{*}{1} & \multirow{4}{*}{10{,}000} & \texttt{PGBB} & 0.0133\,(0.0010) & 0.990 & 0.110\,(0.002) & 0.0176\,(0.0012) & 0.990 & 0.113\,(0.002) \\
 &  & \texttt{DP-SGLD} & 0.0095\,(0.0008) & 1.000 & 0.736\,(0.004) & 0.0100\,(0.0007) & 1.000 & 0.748\,(0.004) \\
 &  & \texttt{NoisyGibbs} & 0.1087\,(0.0081) & 0.980 & 0.566\,(0.002) & 0.0941\,(0.0075) & 0.990 & 0.516\,(0.003) \\
 &  & \texttt{DP-Bootstrap} & 0.0928\,(0.0062) & 1.000 & 1.961\,(0.066) & 0.0959\,(0.0064) & 1.000 & 1.955\,(0.066) \\
\cmidrule(lr){2-9}
 & \multirow{4}{*}{40{,}000} & \texttt{PGBB} & 0.0063\,(0.0005) & 0.990 & 0.045\,(0.001) & 0.0071\,(0.0005) & 0.990 & 0.047\,(0.001) \\
 &  & \texttt{DP-SGLD} & 0.0042\,(0.0003) & 1.000 & 0.201\,($\approx 0$) & 0.0045\,(0.0003) & 1.000 & 0.205\,($\approx 0$) \\
 &  & \texttt{NoisyGibbs} & 0.0381\,(0.0030) & 0.860 & 0.160\,($\approx 0$) & 0.0287\,(0.0023) & 0.980 & 0.157\,($\approx 0$) \\
 &  & \texttt{DP-Bootstrap} & 0.0620\,(0.0026) & 1.000 & 0.564\,(0.017) & 0.0656\,(0.0026) & 1.000 & 0.565\,(0.017) \\
\midrule
\multirow{8}{*}{2} & \multirow{4}{*}{10{,}000} & \texttt{PGBB} & 0.0101\,(0.0007) & 0.990 & 0.077\,(0.001) & 0.0134\,(0.0008) & 0.990 & 0.080\,(0.001) \\
 &  & \texttt{DP-SGLD} & 0.0083\,(0.0006) & 1.000 & 0.399\,(0.001) & 0.0088\,(0.0007) & 1.000 & 0.407\,(0.001) \\
 &  & \texttt{NoisyGibbs} & 0.0582\,(0.0045) & 0.950 & 0.304\,(0.001) & 0.0520\,(0.0039) & 0.980 & 0.287\,(0.001) \\
 &  & \texttt{DP-Bootstrap} & 0.0662\,(0.0042) & 1.000 & 1.069\,(0.036) & 0.0704\,(0.0045) & 1.000 & 1.063\,(0.036) \\
\cmidrule(lr){2-9}
 & \multirow{4}{*}{40{,}000} & \texttt{PGBB} & 0.0057\,(0.0005) & 0.970 & 0.033\,($\approx 0$) & 0.0061\,(0.0004) & 0.980 & 0.036\,($\approx 0$) \\
 &  & \texttt{DP-SGLD} & 0.0041\,(0.0003) & 1.000 & 0.111\,($\approx 0$) & 0.0044\,(0.0003) & 1.000 & 0.114\,($\approx 0$) \\
 &  & \texttt{NoisyGibbs} & 0.0337\,(0.0019) & 0.770 & 0.094\,($\approx 0$) & 0.0156\,(0.0013) & 0.980 & 0.090\,($\approx 0$) \\
 &  & \texttt{DP-Bootstrap} & 0.0615\,(0.0015) & 0.970 & 0.304\,(0.009) & 0.0652\,(0.0015) & 0.960 & 0.305\,(0.009) \\
\midrule
\multirow{8}{*}{4} & \multirow{4}{*}{10{,}000} & \texttt{PGBB} & 0.0088\,(0.0007) & 0.990 & 0.059\,(0.001) & 0.0113\,(0.0007) & 0.990 & 0.062\,(0.001) \\
 &  & \texttt{DP-SGLD} & 0.0079\,(0.0006) & 1.000 & 0.225\,($\approx 0$) & 0.0088\,(0.0007) & 1.000 & 0.230\,($\approx 0$) \\
 &  & \texttt{NoisyGibbs} & 0.0377\,(0.0030) & 0.940 & 0.180\,($\approx 0$) & 0.0287\,(0.0022) & 0.990 & 0.171\,($\approx 0$) \\
 &  & \texttt{DP-Bootstrap} & 0.0597\,(0.0029) & 0.980 & 0.593\,(0.020) & 0.0643\,(0.0031) & 0.970 & 0.587\,(0.020) \\
\cmidrule(lr){2-9}
 & \multirow{4}{*}{40{,}000} & \texttt{PGBB} & 0.0053\,(0.0004) & 0.950 & 0.028\,($\approx 0$) & 0.0055\,(0.0004) & 0.970 & 0.030\,($\approx 0$) \\
 &  & \texttt{DP-SGLD} & 0.0041\,(0.0003) & 1.000 & 0.068\,($\approx 0$) & 0.0044\,(0.0003) & 1.000 & 0.069\,($\approx 0$) \\
 &  & \texttt{NoisyGibbs} & 0.0336\,(0.0014) & 0.510 & 0.066\,($\approx 0$) & 0.0105\,(0.0008) & 0.980 & 0.061\,($\approx 0$) \\
 &  & \texttt{DP-Bootstrap} & 0.0613\,(0.0009) & 0.800 & 0.167\,(0.005) & 0.0649\,(0.0009) & 0.770 & 0.169\,(0.005) \\
\bottomrule
\end{tabular}
\end{table}

\clearpage
\begin{table}[ht!]
\centering
\scriptsize
\setlength{\tabcolsep}{3pt}
\renewcommand{\arraystretch}{0.95}
\caption{Results of $\beta_2$ in the regression example. Refer to Table~\ref{tab:linreg-summary} for exposition.}
\label{tab:linreg-b2}
\begin{tabular}{c c l c c c c c c}
\toprule
\multicolumn{3}{c}{} & \multicolumn{3}{c}{Contam. normal} & \multicolumn{3}{c}{Student-$t$} \\
\cmidrule(lr){4-6} \cmidrule(lr){7-9}
$\epsilon$ & $n$ & Method & MAE (SE) & Cov & Width (SE) & MAE (SE) & Cov & Width (SE) \\
\midrule
\multirow{8}{*}{1} & \multirow{4}{*}{10{,}000} & \texttt{PGBB} & 0.0163\,(0.0012) & 0.970 & 0.110\,(0.002) & 0.0149\,(0.0011) & 0.990 & 0.115\,(0.002) \\
 &  & \texttt{DP-SGLD} & 0.0090\,(0.0006) & 1.000 & 0.740\,(0.004) & 0.0106\,(0.0007) & 1.000 & 0.751\,(0.004) \\
 &  & \texttt{NoisyGibbs} & 0.1148\,(0.0081) & 0.980 & 0.567\,(0.002) & 0.1080\,(0.0076) & 0.980 & 0.518\,(0.003) \\
 &  & \texttt{DP-Bootstrap} & 0.0889\,(0.0069) & 1.000 & 2.080\,(0.061) & 0.0913\,(0.0070) & 1.000 & 2.083\,(0.061) \\
\cmidrule(lr){2-9}
 & \multirow{4}{*}{40{,}000} & \texttt{PGBB} & 0.0064\,(0.0005) & 0.990 & 0.046\,(0.001) & 0.0066\,(0.0005) & 1.000 & 0.047\,(0.001) \\
 &  & \texttt{DP-SGLD} & 0.0050\,(0.0004) & 1.000 & 0.200\,($\approx 0$) & 0.0043\,(0.0003) & 1.000 & 0.205\,($\approx 0$) \\
 &  & \texttt{NoisyGibbs} & 0.0443\,(0.0030) & 0.890 & 0.161\,($\approx 0$) & 0.0322\,(0.0021) & 0.990 & 0.157\,($\approx 0$) \\
 &  & \texttt{DP-Bootstrap} & 0.0603\,(0.0028) & 1.000 & 0.577\,(0.016) & 0.0630\,(0.0028) & 1.000 & 0.577\,(0.016) \\
\midrule
\multirow{8}{*}{2} & \multirow{4}{*}{10{,}000} & \texttt{PGBB} & 0.0118\,(0.0008) & 1.000 & 0.077\,(0.001) & 0.0109\,(0.0008) & 1.000 & 0.079\,(0.001) \\
 &  & \texttt{DP-SGLD} & 0.0083\,(0.0006) & 1.000 & 0.400\,(0.001) & 0.0094\,(0.0007) & 1.000 & 0.408\,(0.001) \\
 &  & \texttt{NoisyGibbs} & 0.0648\,(0.0047) & 0.930 & 0.304\,(0.001) & 0.0590\,(0.0040) & 0.970 & 0.288\,(0.001) \\
 &  & \texttt{DP-Bootstrap} & 0.0660\,(0.0046) & 1.000 & 1.134\,(0.033) & 0.0693\,(0.0047) & 1.000 & 1.137\,(0.033) \\
\cmidrule(lr){2-9}
 & \multirow{4}{*}{40{,}000} & \texttt{PGBB} & 0.0054\,(0.0004) & 0.990 & 0.033\,(0.001) & 0.0057\,(0.0004) & 1.000 & 0.035\,($\approx 0$) \\
 &  & \texttt{DP-SGLD} & 0.0049\,(0.0003) & 1.000 & 0.111\,($\approx 0$) & 0.0040\,(0.0003) & 1.000 & 0.114\,($\approx 0$) \\
 &  & \texttt{NoisyGibbs} & 0.0370\,(0.0020) & 0.630 & 0.094\,($\approx 0$) & 0.0175\,(0.0012) & 0.970 & 0.090\,($\approx 0$) \\
 &  & \texttt{DP-Bootstrap} & 0.0606\,(0.0016) & 0.980 & 0.312\,(0.009) & 0.0633\,(0.0016) & 1.000 & 0.312\,(0.009) \\
\midrule
\multirow{8}{*}{4} & \multirow{4}{*}{10{,}000} & \texttt{PGBB} & 0.0099\,(0.0007) & 0.990 & 0.059\,(0.001) & 0.0098\,(0.0007) & 0.990 & 0.061\,(0.001) \\
 &  & \texttt{DP-SGLD} & 0.0081\,(0.0006) & 1.000 & 0.226\,($\approx 0$) & 0.0091\,(0.0006) & 1.000 & 0.231\,($\approx 0$) \\
 &  & \texttt{NoisyGibbs} & 0.0461\,(0.0031) & 0.900 & 0.180\,($\approx 0$) & 0.0336\,(0.0023) & 0.960 & 0.171\,($\approx 0$) \\
 &  & \texttt{DP-Bootstrap} & 0.0614\,(0.0030) & 1.000 & 0.630\,(0.018) & 0.0648\,(0.0031) & 1.000 & 0.633\,(0.019) \\
\cmidrule(lr){2-9}
 & \multirow{4}{*}{40{,}000} & \texttt{PGBB} & 0.0052\,(0.0004) & 0.970 & 0.027\,($\approx 0$) & 0.0053\,(0.0004) & 0.970 & 0.030\,($\approx 0$) \\
 &  & \texttt{DP-SGLD} & 0.0049\,(0.0003) & 1.000 & 0.068\,($\approx 0$) & 0.0040\,(0.0003) & 1.000 & 0.069\,($\approx 0$) \\
 &  & \texttt{NoisyGibbs} & 0.0363\,(0.0014) & 0.360 & 0.066\,($\approx 0$) & 0.0114\,(0.0008) & 0.950 & 0.061\,($\approx 0$) \\
 &  & \texttt{DP-Bootstrap} & 0.0607\,(0.0010) & 0.840 & 0.173\,(0.005) & 0.0634\,(0.0010) & 0.830 & 0.173\,(0.005) \\
\bottomrule
\end{tabular}
\end{table}

\clearpage
\begin{table}[ht!]
\centering
\scriptsize
\setlength{\tabcolsep}{3pt}
\renewcommand{\arraystretch}{0.95}
\caption{Results of $\beta_3$ in the regression example. Refer to Table~\ref{tab:linreg-summary} for exposition.}
\label{tab:linreg-b3}
\begin{tabular}{c c l c c c c c c}
\toprule
\multicolumn{3}{c}{} & \multicolumn{3}{c}{Contam. normal} & \multicolumn{3}{c}{Student-$t$} \\
\cmidrule(lr){4-6} \cmidrule(lr){7-9}
$\epsilon$ & $n$ & Method & MAE (SE) & Cov & Width (SE) & MAE (SE) & Cov & Width (SE) \\
\midrule
\multirow{8}{*}{1} & \multirow{4}{*}{10{,}000} & \texttt{PGBB} & 0.0131\,(0.0010) & 0.990 & 0.093\,(0.001) & 0.0156\,(0.0011) & 0.980 & 0.094\,(0.001) \\
 &  & \texttt{DP-SGLD} & 0.0101\,(0.0007) & 1.000 & 0.737\,(0.004) & 0.0101\,(0.0008) & 1.000 & 0.748\,(0.004) \\
 &  & \texttt{NoisyGibbs} & 0.1246\,(0.0095) & 0.940 & 0.574\,(0.002) & 0.1102\,(0.0084) & 0.940 & 0.544\,(0.003) \\
 &  & \texttt{DP-Bootstrap} & 0.0817\,(0.0057) & 1.000 & 2.028\,(0.064) & 0.0822\,(0.0059) & 1.000 & 2.027\,(0.064) \\
\cmidrule(lr){2-9}
 & \multirow{4}{*}{40{,}000} & \texttt{PGBB} & 0.0059\,(0.0004) & 1.000 & 0.042\,(0.001) & 0.0067\,(0.0005) & 1.000 & 0.043\,(0.001) \\
 &  & \texttt{DP-SGLD} & 0.0041\,(0.0003) & 1.000 & 0.200\,($\approx 0$) & 0.0053\,(0.0003) & 1.000 & 0.205\,($\approx 0$) \\
 &  & \texttt{NoisyGibbs} & 0.0373\,(0.0026) & 0.950 & 0.161\,($\approx 0$) & 0.0325\,(0.0026) & 0.950 & 0.158\,($\approx 0$) \\
 &  & \texttt{DP-Bootstrap} & 0.0387\,(0.0022) & 0.990 & 0.552\,(0.020) & 0.0405\,(0.0024) & 1.000 & 0.553\,(0.020) \\
\midrule
\multirow{8}{*}{2} & \multirow{4}{*}{10{,}000} & \texttt{PGBB} & 0.0112\,(0.0009) & 0.960 & 0.069\,(0.001) & 0.0127\,(0.0009) & 0.960 & 0.070\,(0.001) \\
 &  & \texttt{DP-SGLD} & 0.0094\,(0.0007) & 1.000 & 0.399\,(0.001) & 0.0091\,(0.0007) & 1.000 & 0.407\,(0.001) \\
 &  & \texttt{NoisyGibbs} & 0.0660\,(0.0048) & 0.960 & 0.305\,(0.001) & 0.0590\,(0.0045) & 0.960 & 0.295\,(0.001) \\
 &  & \texttt{DP-Bootstrap} & 0.0521\,(0.0036) & 1.000 & 1.105\,(0.035) & 0.0529\,(0.0039) & 1.000 & 1.104\,(0.035) \\
\cmidrule(lr){2-9}
 & \multirow{4}{*}{40{,}000} & \texttt{PGBB} & 0.0052\,(0.0004) & 0.980 & 0.032\,($\approx 0$) & 0.0061\,(0.0004) & 1.000 & 0.032\,(0.001) \\
 &  & \texttt{DP-SGLD} & 0.0039\,(0.0003) & 1.000 & 0.111\,($\approx 0$) & 0.0053\,(0.0003) & 1.000 & 0.114\,($\approx 0$) \\
 &  & \texttt{NoisyGibbs} & 0.0246\,(0.0017) & 0.900 & 0.094\,($\approx 0$) & 0.0173\,(0.0013) & 0.970 & 0.091\,($\approx 0$) \\
 &  & \texttt{DP-Bootstrap} & 0.0382\,(0.0014) & 0.970 & 0.298\,(0.011) & 0.0408\,(0.0015) & 0.970 & 0.298\,(0.011) \\
\midrule
\multirow{8}{*}{4} & \multirow{4}{*}{10{,}000} & \texttt{PGBB} & 0.0107\,(0.0009) & 0.920 & 0.055\,(0.001) & 0.0113\,(0.0008) & 0.940 & 0.055\,(0.001) \\
 &  & \texttt{DP-SGLD} & 0.0092\,(0.0007) & 1.000 & 0.225\,($\approx 0$) & 0.0089\,(0.0007) & 1.000 & 0.231\,($\approx 0$) \\
 &  & \texttt{NoisyGibbs} & 0.0386\,(0.0027) & 0.950 & 0.181\,($\approx 0$) & 0.0314\,(0.0024) & 0.980 & 0.173\,($\approx 0$) \\
 &  & \texttt{DP-Bootstrap} & 0.0413\,(0.0026) & 0.990 & 0.613\,(0.019) & 0.0447\,(0.0026) & 0.980 & 0.613\,(0.019) \\
\cmidrule(lr){2-9}
 & \multirow{4}{*}{40{,}000} & \texttt{PGBB} & 0.0048\,(0.0004) & 0.980 & 0.027\,($\approx 0$) & 0.0057\,(0.0004) & 0.970 & 0.027\,($\approx 0$) \\
 &  & \texttt{DP-SGLD} & 0.0039\,(0.0003) & 1.000 & 0.068\,($\approx 0$) & 0.0053\,(0.0003) & 1.000 & 0.069\,($\approx 0$) \\
 &  & \texttt{NoisyGibbs} & 0.0213\,(0.0013) & 0.800 & 0.066\,($\approx 0$) & 0.0108\,(0.0008) & 0.970 & 0.061\,($\approx 0$) \\
 &  & \texttt{DP-Bootstrap} & 0.0390\,(0.0009) & 0.900 & 0.165\,(0.006) & 0.0416\,(0.0010) & 0.880 & 0.165\,(0.006) \\
\bottomrule
\end{tabular}
\end{table}

\subsubsection{Effect of the Monte Carlo Replicate Count $K$}
\label{app:mc-K-effect}

On the linear-regression setup with $S=20$ and $\gamma=\gamma^\star(n,S)$, we compare the results of $K=10$ and $K=1$ over $\epsilon\in\{1,2,4\}$, $n\in\{10{\text{k}},40{\text{k}}\}$, contaminated-normal and Student-$t$ noise. The algorithm runs for $100$ replicates independently. Table~\ref{tab:K-comparison} summarizes the evaluation metrics for each combination. As supported in Theorem~\ref{thm:conv-gbb}, increasing $K$ hardly degrades the estimation performance even though the $TK$ number of compositions magnifies $\sigma^2\propto TK$.
\begin{table}[ht!]
\centering
\footnotesize
\setlength{\tabcolsep}{4pt}
\setlength{\tabcolsep}{3pt}
\caption{PGBB linear regression at $S=20$, $\gamma=\gamma^\star$, both at $T=10\text{k}$: $K=1$ versus $K=10$ Monte Carlo replicates, averaged over $100$ replicates and $p=3$ coordinates per cell. MAE is the mean absolute deviation of the posterior average from the true coefficient, Coverage is the $95\%$ inclusion rate, and Width is the credible-interval length. Standard errors across the $100$ replicates are in parentheses for MAE and Width.}
\label{tab:K-comparison}
\begin{tabular}{ll c cc cc cc}
\toprule
 & & & \multicolumn{2}{c}{MAE} & \multicolumn{2}{c}{Coverage} & \multicolumn{2}{c}{Width} \\
\cmidrule(lr){4-5}\cmidrule(lr){6-7}\cmidrule(lr){8-9}
Noise & $n$ & $\epsilon$ & $K{=}1$ & $K{=}10$ & $K{=}1$ & $K{=}10$ & $K{=}1$ & $K{=}10$ \\
\midrule
\multirow{6}{*}{Contam.\ normal} & \multirow{3}{*}{10k} & 1 & 0.0142 (0.0006) & 0.0129 (0.0006) & 0.983 & 0.993 & 0.104 (0.001) & 0.103 (0.001) \\
& & 2 & 0.0110 (0.0005) & 0.0104 (0.0005) & 0.983 & 0.987 & 0.074 (0.001) & 0.073 (0.001) \\
& & 4 & 0.0098 (0.0005) & 0.0094 (0.0004) & 0.967 & 0.973 & 0.058 ($\approx 0$) & 0.057 ($\approx 0$) \\
\cmidrule(lr){2-9}
& \multirow{3}{*}{40k} & 1 & 0.0062 (0.0003) & 0.0060 (0.0003) & 0.993 & 0.983 & 0.044 ($\approx 0$) & 0.044 ($\approx 0$) \\
& & 2 & 0.0054 (0.0002) & 0.0054 (0.0002) & 0.980 & 0.980 & 0.033 ($\approx 0$) & 0.032 ($\approx 0$) \\
& & 4 & 0.0051 (0.0002) & 0.0051 (0.0002) & 0.967 & 0.973 & 0.027 ($\approx 0$) & 0.027 ($\approx 0$) \\
\midrule
\multirow{6}{*}{Student-$t$} & \multirow{3}{*}{10k} & 1 & 0.0160 (0.0006) & 0.0142 (0.0006) & 0.987 & 0.993 & 0.107 (0.001) & 0.105 (0.001) \\
& & 2 & 0.0123 (0.0004) & 0.0116 (0.0005) & 0.983 & 0.987 & 0.076 (0.001) & 0.075 (0.001) \\
& & 4 & 0.0108 (0.0004) & 0.0105 (0.0004) & 0.973 & 0.973 & 0.059 (0.001) & 0.059 (0.001) \\
\cmidrule(lr){2-9}
& \multirow{3}{*}{40k} & 1 & 0.0068 (0.0003) & 0.0063 (0.0003) & 0.997 & 0.987 & 0.046 ($\approx 0$) & 0.045 ($\approx 0$) \\
& & 2 & 0.0060 (0.0002) & 0.0055 (0.0002) & 0.993 & 0.977 & 0.034 ($\approx 0$) & 0.034 ($\approx 0$) \\
& & 4 & 0.0055 (0.0002) & 0.0052 (0.0002) & 0.970 & 0.953 & 0.029 ($\approx 0$) & 0.028 ($\approx 0$) \\
\bottomrule
\end{tabular}
\end{table}

\subsubsection{Effective Sample Size of Competing Methods}
\label{supp:ess-comp}

Table~\ref{tab:ess-expo} reports the effective sample sizes (ESS) for both linear-regression and exponential-family examples. Both methods discard the first half as burn-in. The total number of iterations is tuned to reach an effective sample size at least $500$. 

\begin{table}[h]
    \centering
    \scriptsize
    \caption{Effective sample size (ESS) of the two MCMC samples for $\epsilon\in\{1,2,4\}$. The number in parentheses is the ESS per $1000$ post-burn-in iterations. The ``(A/B)'' column reports the \emph{total} number of MCMC iterations for each one. For both methods, the first half is burned. The chain lengths are chosen so that DP-SGLD attains $\mathrm{ESS}\ge 500$ in every cell.}
    \begin{tabular}{ll c ccc ccc}
    \toprule
    & & & \multicolumn{3}{c}{DP-SGLD} & \multicolumn{3}{c}{NoisyGibbs} \\
    \cmidrule(lr){4-6}\cmidrule(lr){7-9}
    Example & Case & (SGLD/NG) & $\varepsilon{=}1$ & $\varepsilon{=}2$ & $\varepsilon{=}4$ & $\varepsilon{=}1$ & $\varepsilon{=}2$ & $\varepsilon{=}4$ \\
    \midrule
\multirow{4}{*}{\shortstack[l]{ExpFam.}} & Exp, $n{=}500$ & 40k / 4k & 522\,(26) & 562\,(28) & 678\,(34) & 644\,(322) & 1292\,(646) & 1716\,(858) \\
 & Exp $n=\text{5k}$ & 40k / 4k & 688\,(34) & 666\,(33) & 660\,(33) & 1527\,(763) & 1812\,(906) & 1902\,(951) \\
 & Gauss, $n{=}500$ & 40k / 4k & 709\,(35) & 676\,(34) & 676\,(34) & 98\,(49) & 325\,(163) & 862\,(431) \\
 & Gauss, $n=\text{5k}$ & 40k / 4k & 668\,(33) & 669\,(33) & 671\,(34) & 521\,(261) & 1154\,(577) & 1653\,(826) \\
\midrule
\multirow{2}{*}{\shortstack[l]{LinearReg}} & $n=\text{10k}$ & 20k / 400k & 991\,(99) & 1000\,(100) & 1018\,(102) & 761\,(4) & 2485\,(12) & 7150\,(36) \\
 & $n=\text{40k}$ & 20k / 400k & 1017\,(102) & 1020\,(102) & 1018\,(102) & 2164\,(11) & 6564\,(33) & 12518\,(63) \\
    \bottomrule
    \end{tabular}
    \label{tab:ess-expo}
\end{table}

\subsubsection{Further Results of Real Data Application}
\label{app:further-result-realdata}

Tables~\ref{tab:supp-acfv} and~\ref{tab:supp-natality} extend Table~\ref{tab:main_bias_unc} to all covariates. Across both applications the two methods attain comparable MAE, while PGBB produces the narrower credible interval for every coefficient. On the U.S.\ Census (Table~\ref{tab:supp-acfv}), PGBB's intervals are several times tighter than DP-SGLD's at similar MAE. Its coverage, the inclusion rate of the non-private point estimate, is close to nominal for \texttt{black} but lower for the returns to schooling, \texttt{educ} and \texttt{exper}, whose non-private signal is small relative to the privacy noise. DP-SGLD is much wider and over-covers throughout. Figure~\ref{fig:acfv_allvars} plots the released coefficients across $\tau$ for all covariates and both years. On the natality data (Table~\ref{tab:supp-natality}), the same pattern holds. PGBB gives the tightest intervals for all thirteen covariates and near-nominal coverage for the well-identified ones such as \texttt{BLACK}, \texttt{MARRIED}, and \texttt{BOY}. Its coverage is lower for \texttt{NOSMOKE} and \texttt{AGE\_s}, improving as $\epsilon$ grows, and lowest for the weakly identified \texttt{CIGAR\_s}. DP-SGLD attains slightly smaller MAE on several coefficients but at two-to-three times the interval width, and it over-covers almost everywhere. Figure~\ref{fig:natality_allvars} plots the released coefficients across $\tau$ for all thirteen covariates.

The two datasets show the same trade-off. From a single quantile loss with no scale parameter, PGBB tends to give the smaller and better-calibrated intervals at MAE competitive with DP-SGLD. Its coverage is near nominal where the coefficient is well identified and lower where the non-private signal is small relative to the privacy noise. DP-SGLD is governed by $\omega$. Lowering it to $\omega=0.1$ widens the intervals and raises coverage on the well-identified coefficients, but even at $\omega=1$ its intervals stay wider than PGBB's while over-covering. The weakly identified coefficients, such as \texttt{CIGAR\_s} and \texttt{exper}, carry little non-private signal, so the privacy noise dominates them.

\clearpage
\begin{table}[H]
\centering
\tiny
\setlength{\tabcolsep}{4pt}
\caption{Numerics for the U.S.\ Census returns-to-schooling application, over $\tau$ and $10$ replicates, for all covariates by year. MAE is the absolute deviation of the posterior average from the non-private reference, Coverage is the fraction of $\tau$ whose non-private reference lies in the $95\%$ interval, and Width is the $95\%$ interval length. Standard errors across the replicates are in parentheses for MAE and Width. The two DP-SGLD rows use asymmetric-Laplace scale $\omega\in\{1,0.1\}$.}
\label{tab:supp-acfv}
\resizebox{\textwidth}{!}{%
\begin{tabular}{lll ccc ccc}
\toprule
& & & \multicolumn{3}{c}{$\epsilon=1$} & \multicolumn{3}{c}{$\epsilon=4$} \\
\cmidrule(lr){4-6}\cmidrule(lr){7-9}
Year & Variable & Method & MAE (SE) & Cov & Width (SE) & MAE (SE) & Cov & Width (SE) \\
\midrule
\multirow{9}{*}{1980} & \multirow{3}{*}{\texttt{educ}} & PGBB & 0.52\,(0.09) & 0.700 & 1.29\,(0.09) & 0.17\,(0.04) & 0.840 & 0.77\,(0.02) \\
& & $\omega{=}1$ & 0.98\,(0.20) & 1.000 & 10.91\,(0.44) & 0.30\,(0.05) & 1.000 & 3.16\,(0.12) \\
& & $\omega{=}0.1$ & 1.05\,(0.19) & 1.000 & 11.11\,(0.45) & 0.37\,(0.06) & 1.000 & 4.30\,(0.13) \\
\cmidrule(lr){2-9}
& \multirow{3}{*}{\texttt{exper}} & PGBB & 0.0038\,(0.0007) & 0.620 & 0.0074\,(0.0009) & 0.0012\,(0.0002) & 0.800 & 0.0050\,(0.0004) \\
& & $\omega{=}1$ & 0.0073\,(0.0013) & 1.000 & 0.0787\,(0.0039) & 0.0021\,(0.0004) & 1.000 & 0.0228\,(0.0010) \\
& & $\omega{=}0.1$ & 0.0077\,(0.0014) & 1.000 & 0.0804\,(0.0038) & 0.0027\,(0.0006) & 1.000 & 0.0302\,(0.0011) \\
\cmidrule(lr){2-9}
& \multirow{3}{*}{\texttt{black}} & PGBB & 0.023\,(0.002) & 0.900 & 0.090\,(0.009) & 0.013\,(0.001) & 0.840 & 0.049\,(0.004) \\
& & $\omega{=}1$ & 0.082\,(0.004) & 1.000 & 1.083\,(0.032) & 0.012\,(0.001) & 1.000 & 0.279\,(0.005) \\
& & $\omega{=}0.1$ & 0.087\,(0.004) & 1.000 & 1.140\,(0.035) & 0.017\,(0.001) & 1.000 & 0.389\,(0.008) \\
\midrule
\multirow{9}{*}{2000} & \multirow{3}{*}{\texttt{educ}} & PGBB & 0.69\,(0.11) & 0.680 & 1.70\,(0.11) & 0.30\,(0.03) & 0.820 & 0.99\,(0.06) \\
& & $\omega{=}1$ & 0.51\,(0.09) & 1.000 & 9.26\,(0.48) & 0.17\,(0.02) & 1.000 & 2.79\,(0.16) \\
& & $\omega{=}0.1$ & 0.49\,(0.10) & 1.000 & 9.59\,(0.54) & 0.22\,(0.05) & 1.000 & 4.04\,(0.22) \\
\cmidrule(lr){2-9}
& \multirow{3}{*}{\texttt{exper}} & PGBB & 0.0034\,(0.0006) & 0.640 & 0.0082\,(0.0012) & 0.0016\,(0.0002) & 0.780 & 0.0048\,(0.0004) \\
& & $\omega{=}1$ & 0.0034\,(0.0008) & 1.000 & 0.0616\,(0.0021) & 0.0011\,(0.0002) & 1.000 & 0.0183\,(0.0008) \\
& & $\omega{=}0.1$ & 0.0033\,(0.0008) & 1.000 & 0.0639\,(0.0028) & 0.0014\,(0.0003) & 1.000 & 0.0267\,(0.0014) \\
\cmidrule(lr){2-9}
& \multirow{3}{*}{\texttt{black}} & PGBB & 0.019\,(0.002) & 0.700 & 0.072\,(0.005) & 0.008\,(0.001) & 0.920 & 0.040\,(0.002) \\
& & $\omega{=}1$ & 0.031\,(0.001) & 1.000 & 0.697\,(0.008) & 0.004\,($\approx 0$) & 1.000 & 0.203\,(0.003) \\
& & $\omega{=}0.1$ & 0.034\,(0.001) & 1.000 & 0.747\,(0.013) & 0.006\,(0.001) & 1.000 & 0.307\,(0.006) \\
\bottomrule
\end{tabular}
}
\end{table}

\clearpage
\begin{table}[H]
\centering
\tiny
\setlength{\tabcolsep}{4pt}
\renewcommand{\arraystretch}{0.9}
\caption{Numerics for the natality (2024 birthweight) application in grams, over $\tau$ and $10$ replicates, for all covariates. MAE is the absolute deviation of the posterior average from the non-private reference, Coverage is the fraction of $\tau$ whose non-private reference lies in the $95\%$ interval, and Width is the $95\%$ interval length. Standard errors across the replicates are in parentheses for MAE and Width. The two DP-SGLD rows use asymmetric-Laplace scale $\omega\in\{1,0.1\}$.}
\label{tab:supp-natality}
\begin{tabular}{ll ccc ccc}
\toprule
& & \multicolumn{3}{c}{$\epsilon=1$} & \multicolumn{3}{c}{$\epsilon=4$} \\
\cmidrule(lr){3-5}\cmidrule(lr){6-8}
Variable & Method & MAE (SE) & Cov & Width (SE) & MAE (SE) & Cov & Width (SE) \\
\midrule
\multirow{3}{*}{\texttt{BLACK}} & PGBB & 2.0\,(0.3) & 0.980 & 17.5\,(1.0) & 1.0\,(0.2) & 1.000 & 11.2\,(0.3) \\
& $\omega{=}1$ & 0.7\,(0.1) & 1.000 & 47.1\,(0.5) & 0.3\,($\approx 0$) & 1.000 & 17.2\,(0.2) \\
& $\omega{=}0.1$ & 0.8\,(0.2) & 1.000 & 57.6\,(0.6) & 0.6\,(0.1) & 1.000 & 37.0\,(0.3) \\
\addlinespace[1pt]
\multirow{3}{*}{\texttt{NOSMOKE}} & PGBB & 16.5\,(2.7) & 0.820 & 58.7\,(6.7) & 6.4\,(1.0) & 0.960 & 38.1\,(2.5) \\
& $\omega{=}1$ & 21.2\,(5.1) & 0.980 & 109.4\,(3.1) & 6.5\,(1.3) & 1.000 & 40.7\,(1.3) \\
& $\omega{=}0.1$ & 22.8\,(4.6) & 1.000 & 137.6\,(4.3) & 11.8\,(1.8) & 1.000 & 91.2\,(3.9) \\
\addlinespace[1pt]
\multirow{3}{*}{\texttt{MARRIED}} & PGBB & 2.5\,(0.2) & 1.000 & 16.1\,(0.5) & 0.9\,(0.1) & 1.000 & 10.7\,(0.3) \\
& $\omega{=}1$ & 0.5\,(0.1) & 1.000 & 43.6\,(0.5) & 0.3\,($\approx 0$) & 1.000 & 15.9\,(0.2) \\
& $\omega{=}0.1$ & 0.7\,(0.1) & 1.000 & 53.3\,(0.7) & 0.6\,(0.1) & 1.000 & 34.4\,(0.3) \\
\addlinespace[1pt]
\multirow{3}{*}{\texttt{BOY}} & PGBB & 1.1\,(0.1) & 1.000 & 12.9\,(0.7) & 0.7\,(0.1) & 1.000 & 8.4\,(0.2) \\
& $\omega{=}1$ & 0.4\,(0.1) & 1.000 & 34.0\,(0.2) & 0.2\,($\approx 0$) & 1.000 & 12.5\,(0.1) \\
& $\omega{=}0.1$ & 0.5\,(0.1) & 1.000 & 41.3\,(0.4) & 0.3\,($\approx 0$) & 1.000 & 26.5\,(0.2) \\
\addlinespace[1pt]
\multirow{3}{*}{\texttt{AGE\_s}} & PGBB & 8.3\,(1.1) & 0.880 & 33.5\,(2.0) & 2.9\,(0.5) & 0.940 & 20.5\,(0.9) \\
& $\omega{=}1$ & 3.6\,(0.8) & 1.000 & 94.8\,(1.9) & 1.4\,(0.2) & 1.000 & 34.6\,(0.6) \\
& $\omega{=}0.1$ & 4.5\,(0.9) & 1.000 & 117.5\,(1.9) & 2.8\,(0.4) & 1.000 & 75.0\,(1.6) \\
\addlinespace[1pt]
\multirow{3}{*}{\texttt{CIGAR\_s}} & PGBB & 98.3\,(9.4) & 0.560 & 204.2\,(18.8) & 43.8\,(6.8) & 0.820 & 168.3\,(17.7) \\
& $\omega{=}1$ & 125.0\,(31.0) & 0.820 & 371.8\,(26.7) & 37.3\,(9.2) & 0.840 & 136.8\,(5.6) \\
& $\omega{=}0.1$ & 130.0\,(31.7) & 0.840 & 463.6\,(19.4) & 68.9\,(9.7) & 0.920 & 332.1\,(19.6) \\
\addlinespace[1pt]
\multirow{3}{*}{\texttt{HSGRAD}} & PGBB & 6.3\,(0.9) & 0.900 & 33.4\,(3.3) & 2.7\,(0.4) & 0.980 & 18.7\,(1.2) \\
& $\omega{=}1$ & 2.5\,(0.6) & 1.000 & 73.8\,(1.6) & 0.9\,(0.2) & 1.000 & 26.3\,(0.5) \\
& $\omega{=}0.1$ & 3.0\,(0.7) & 1.000 & 89.2\,(1.8) & 1.8\,(0.2) & 1.000 & 57.5\,(1.1) \\
\addlinespace[1pt]
\multirow{3}{*}{\texttt{SOMECOLL}} & PGBB & 6.6\,(0.8) & 0.940 & 33.1\,(2.8) & 2.5\,(0.4) & 0.960 & 18.3\,(1.1) \\
& $\omega{=}1$ & 2.5\,(0.6) & 1.000 & 75.3\,(1.5) & 0.9\,(0.2) & 1.000 & 27.1\,(0.5) \\
& $\omega{=}0.1$ & 2.9\,(0.6) & 1.000 & 92.1\,(1.8) & 1.5\,(0.3) & 1.000 & 58.5\,(1.1) \\
\addlinespace[1pt]
\multirow{3}{*}{\texttt{COLLGRAD}} & PGBB & 7.1\,(1.0) & 0.920 & 33.5\,(3.1) & 2.5\,(0.3) & 1.000 & 19.8\,(1.1) \\
& $\omega{=}1$ & 2.6\,(0.6) & 1.000 & 77.3\,(1.4) & 0.9\,(0.2) & 1.000 & 27.8\,(0.5) \\
& $\omega{=}0.1$ & 2.8\,(0.7) & 1.000 & 94.7\,(1.7) & 1.4\,(0.3) & 1.000 & 60.5\,(1.2) \\
\addlinespace[1pt]
\multirow{3}{*}{\texttt{WTGAIN\_s}} & PGBB & 7.5\,(1.0) & 0.980 & 55.8\,(4.3) & 3.3\,(0.3) & 1.000 & 29.3\,(1.2) \\
& $\omega{=}1$ & 3.1\,(0.6) & 1.000 & 102.9\,(1.4) & 1.1\,(0.2) & 1.000 & 37.6\,(0.8) \\
& $\omega{=}0.1$ & 3.1\,(0.7) & 1.000 & 127.8\,(2.5) & 2.2\,(0.5) & 1.000 & 83.8\,(2.1) \\
\addlinespace[1pt]
\multirow{3}{*}{\texttt{NATAL2}} & PGBB & 2.2\,(0.3) & 1.000 & 16.9\,(0.6) & 1.2\,(0.1) & 1.000 & 11.2\,(0.3) \\
& $\omega{=}1$ & 0.9\,(0.1) & 1.000 & 48.0\,(0.6) & 0.4\,($\approx 0$) & 1.000 & 17.4\,(0.2) \\
& $\omega{=}0.1$ & 1.1\,(0.2) & 1.000 & 58.5\,(0.8) & 0.5\,(0.1) & 1.000 & 38.2\,(0.4) \\
\addlinespace[1pt]
\multirow{3}{*}{\texttt{NATAL3}} & PGBB & 4.4\,(0.4) & 0.960 & 31.6\,(1.9) & 2.0\,(0.3) & 1.000 & 18.1\,(0.4) \\
& $\omega{=}1$ & 2.1\,(0.6) & 1.000 & 87.8\,(1.6) & 0.8\,(0.2) & 1.000 & 31.1\,(0.4) \\
& $\omega{=}0.1$ & 2.6\,(0.6) & 1.000 & 106.3\,(1.4) & 1.5\,(0.4) & 1.000 & 67.8\,(1.4) \\
\addlinespace[1pt]
\multirow{3}{*}{\texttt{NOVISIT}} & PGBB & 13.5\,(0.6) & 0.860 & 55.7\,(4.7) & 5.3\,(0.4) & 0.980 & 30.0\,(1.5) \\
& $\omega{=}1$ & 6.9\,(1.4) & 1.000 & 140.5\,(4.0) & 2.7\,(0.4) & 1.000 & 50.6\,(2.0) \\
& $\omega{=}0.1$ & 9.3\,(1.6) & 1.000 & 173.5\,(6.6) & 5.3\,(1.2) & 1.000 & 107.1\,(3.8) \\
\bottomrule
\end{tabular}
\end{table}

\begin{figure}[t]
\centering
\includegraphics[width=\linewidth]{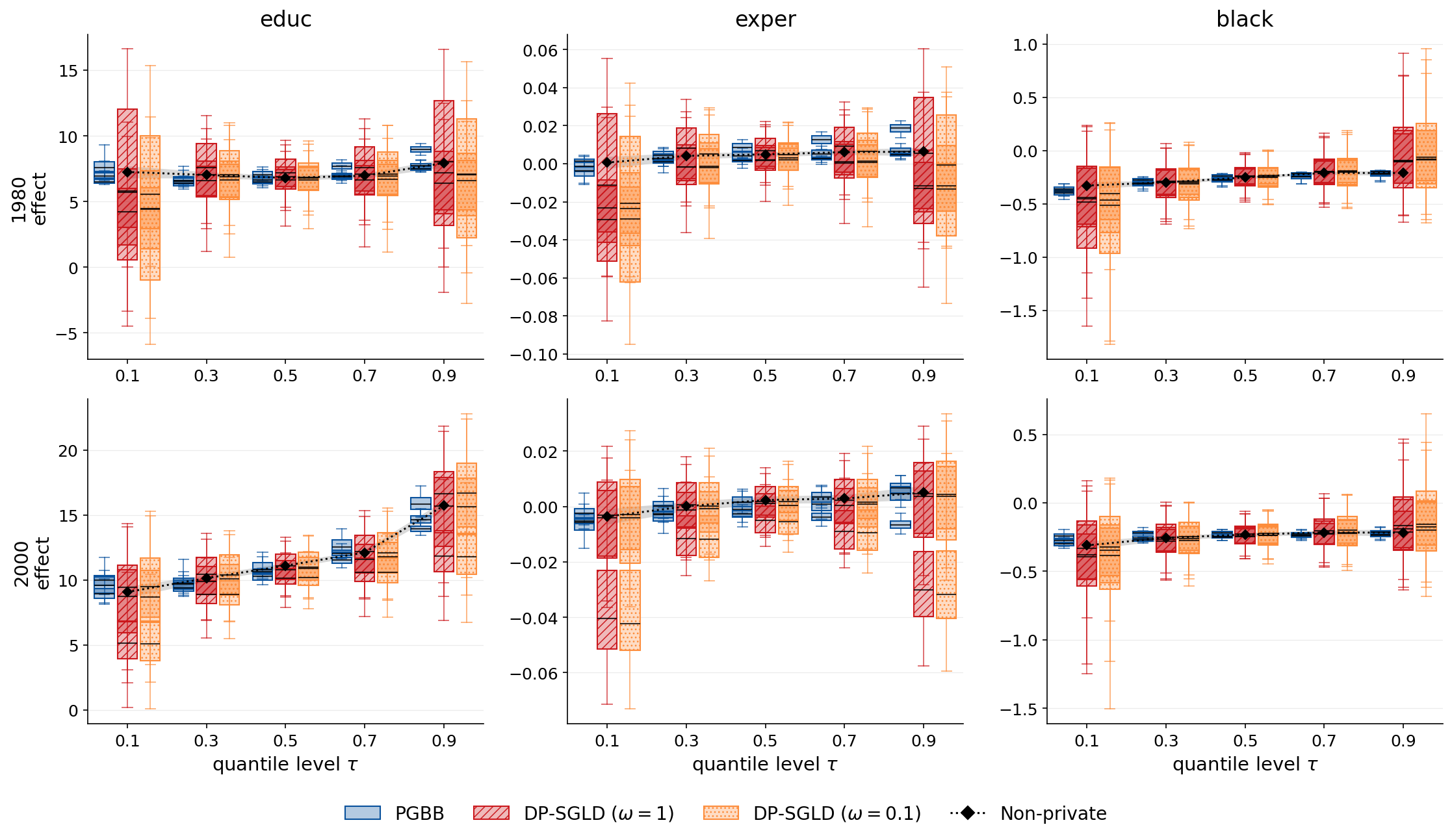}
\caption{Returns-to-schooling coefficients across $\tau$ for all covariates and both years.}
\label{fig:acfv_allvars}
\end{figure}

\begin{figure}[p]
\centering
\includegraphics[width=\linewidth]{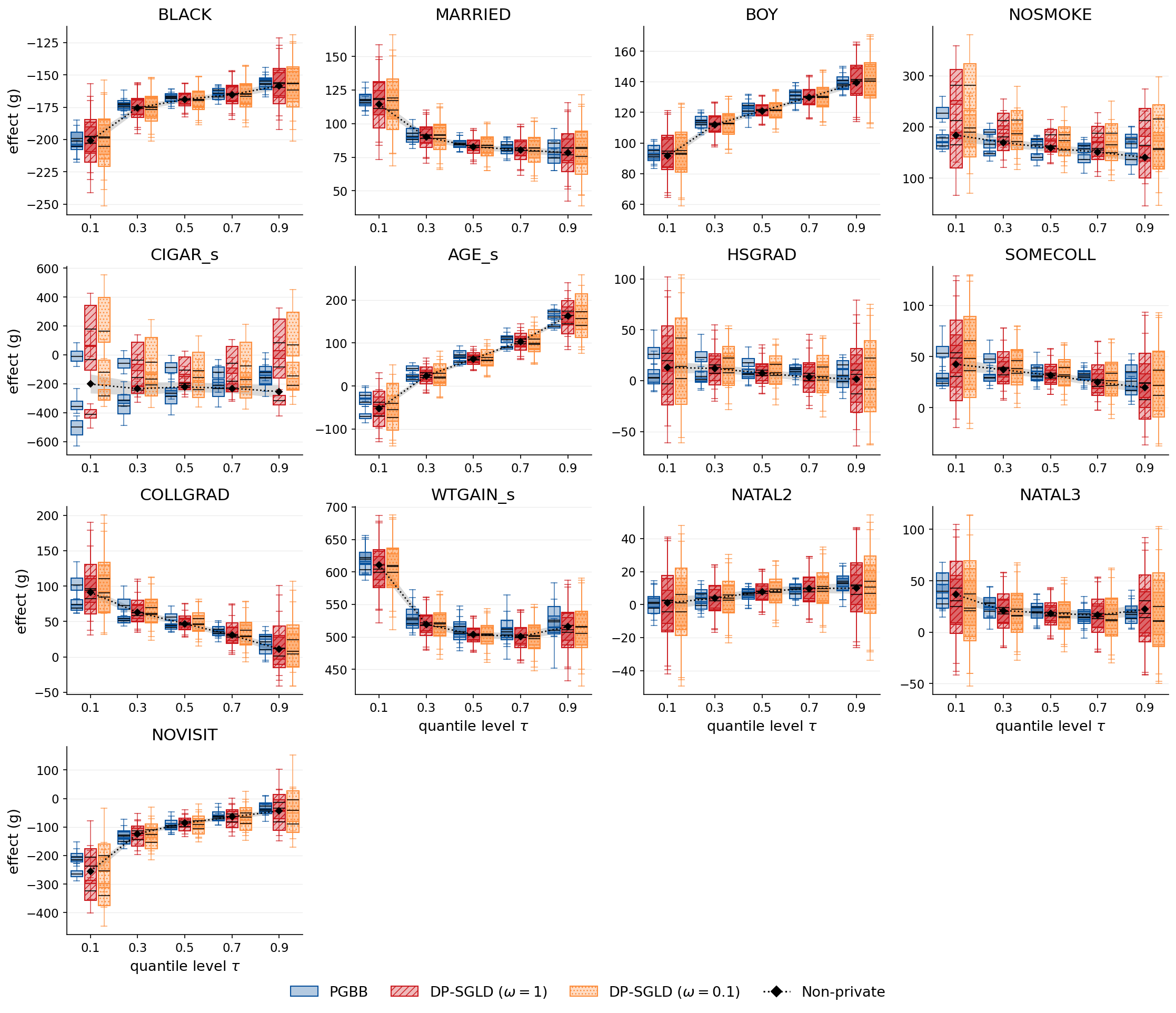}
\caption{Birthweight coefficients (grams) for the Natality application across $\tau$ for all covariates.}
\label{fig:natality_allvars}
\end{figure}

\subsection{Conversion from RDP to DP}
\label{app:conversion-to-dp}

Proposition~\ref{prop:formal-dp} presents $(\epsilon,\delta)$-DP for simplicity. Its derivation, however, exploits the R\'enyi differential privacy (RDP) that measures privacy by the R\'enyi divergence between the two output distributions on neighboring databases \citep{miro:17}. For an integer order $\lambda>1$, the $\lambda$-R\'enyi divergence between $P$ and $Q$ is defined as $D_\lambda(P\|Q)=\tfrac{1}{\lambda-1}\log\E_{Q}[(dP/dQ)^\lambda]$. A mechanism $\cM$ is $(\lambda,\gamma(\lambda))$-RDP if $D_\lambda(\cM(D)\|\cM(D'))\le\gamma(\lambda)$ for every replace-one neighboring pair $D,D'$. Proof of Proposition~\ref{prop:formal-dp} mainly uses the order-$\lambda$ log moment, i.e., $\log\E_{Q}[(dP/dQ)^\lambda]$ and the RDP-to-DP conversion formula suggested in \cite{miro:17} to state $(\epsilon,\delta)$-DP. More specifically, Proposition~\ref{prop:formal-dp} bounds the order-$\lambda$ log-moment of one iteration by $A_{J,\lambda,m}(q)$, so the single release is $\bigl(\lambda,\,A_{J,\lambda,m}(q)/(\lambda-1)\bigr)$-RDP. The generator runs $TK$ such releases, so the total RDP is $\gamma^{\mathrm{tot}}_{M}(\lambda)=TK\,A_{J,\lambda,m}(q)/(\lambda-1)$. The conversion formula of \citet{miro:17} gives $\delta\le\exp\{(\lambda-1)(\gamma(\lambda)-\epsilon)\}=\exp\{TK\,A_{J,\lambda,m}(q)-(\lambda-1)\epsilon\}$. Minimizing this over the integer orders $\lambda\ge2$ recovers $\delta_\star(\epsilon)$ in Proposition~\ref{prop:formal-dp}. Hence $\delta_\star(\epsilon)$ is simply the total R\'enyi cost over the $TK$ steps, converted to $(\epsilon,\delta)$ and tightened over the order $\lambda$.

The Mironov bound underlying $\delta_\star(\epsilon)$ is transparent but not tight enough. Lemma~1 in \citet{asoo:etal:21} characterizes a more efficient conversion formula by adopting an optimization perspective on the relationship between DP and RDP. It applies to any $(\lambda,\gamma(\lambda))$-RDP mechanism at a fixed order $\lambda>1$. With $\zeta_\lambda:=(\lambda-1)^{\lambda-1}/\lambda^\lambda$ and a target $\delta\in(0,1)$, if $\lambda\delta\le 1$ then $\cM$ is $(\epsilon,\delta)$-DP whenever
\begin{align}\label{eqn:asoo-conversion}
    \epsilon \ge \frac{1}{\lambda-1} 
    \min\Bigl\{ \bigl[(\lambda-1)\gamma(\lambda)-\log(\delta/\zeta_\lambda)\bigr]_+, 
    \log \Bigl(1+\tfrac{e^{(\lambda-1)\gamma(\lambda)}-1}{\lambda\delta}\Bigr)\Bigr\},
\end{align}
and if $\lambda\delta\ge 1$ it suffices that $\epsilon\ge\bigl(\gamma(\lambda)+\log(1-\delta)\bigr)_+$. We substitute $\gamma(\lambda)=\gamma^{\mathrm{tot}}_M(\lambda)$ from Section~\ref{sec:bir-step3} and minimize over the integer orders $\lambda\ge2$. This also satisfies the $(\epsilon,\delta)$ guarantee, and it is never weaker than $(\epsilon,\delta_\star(\epsilon))$. We therefore state Proposition~\ref{prop:formal-dp} in the transparent form for better readability and leverage this sharper conversion for simulation.

\subsection{Bisection Scheme to Select the Noise Scale $\sigma$}
\label{app:bisection-sigma}

In the simulation, the noise scale $\sigma$ is determined by the target privacy budget $(\epsilon,\delta)$ and other hyperparameters $(n,S,\alpha,q,C,T,K)$. The per-iteration log-moment $A_{J,\lambda,m}(q)$ depends on $\sigma$ only through the ratio $C/\sigma$, which enters $M_{J,\lambda}$, $R_{J,\lambda}$, and the higher-order remainder $\mathcal{R}_{J,\lambda,m}(q)$, so the total RDP $\gamma^{\mathrm{tot}}_M(\lambda)=TK\,A_{J,\lambda,m}(q)/(\lambda-1)$ is a function of $\sigma$. At the fixed target $\delta$, the conversion \eqref{eqn:asoo-conversion} turns this RDP into a privacy level at each order, and the tightest one over the integer orders is
\begin{align*}
    \epsilon(\sigma) = \min_{\lambda\ge2}\ \epsilon^{\delta}_{\lambda}\bigl(\gamma^{\mathrm{tot}}_M(\lambda)\bigr),
\end{align*}
where $\epsilon^{\delta}_{\lambda}(\gamma)$ is the smallest $\epsilon$ that \eqref{eqn:asoo-conversion} admits at order $\lambda$, that is,
\begin{align*}
    \epsilon^{\delta}_{\lambda}(\gamma)=
    \begin{cases}
    \dfrac{1}{\lambda-1}\min\Bigl\{\bigl[(\lambda-1)\gamma-\log(\delta/\zeta_\lambda)\bigr]_+,\ \log\Bigl(1+\tfrac{e^{(\lambda-1)\gamma}-1}{\lambda\delta}\Bigr)\Bigr\}, & \lambda\delta<1,\\[6pt]
    \bigl(\gamma+\log(1-\delta)\bigr)_+, & \lambda\delta\ge1.
    \end{cases}
\end{align*}
We then solve for the smallest noise that meets the target,
\begin{align*}
    \sigma^\star = \min\{\sigma>0:\ \epsilon(\sigma)\le\epsilon\}.
\end{align*}

The per-iteration log-moment $A_{J,\lambda,m}(q)$ decreases as $\sigma$ grows, i.e., $\epsilon(\sigma)$ decreases monotonically in $\sigma$. The feasible set $\{\sigma:\epsilon(\sigma)\le\epsilon\}$ is then a half-line $[\sigma^\star,\infty)$, and $\sigma^\star$ is its unique boundary. We find it by a bisection scheme on $\log\sigma$. The root is first bracketed by doubling $\sigma_{\mathrm{hi}}$ until $\epsilon(\sigma_{\mathrm{hi}})\le\epsilon$ and halving $\sigma_{\mathrm{lo}}$ until $\epsilon(\sigma_{\mathrm{lo}})>\epsilon$. The geometric midpoint $\sigma_{\mathrm{mid}}=\sqrt{\sigma_{\mathrm{lo}}\sigma_{\mathrm{hi}}}$ then replaces $\sigma_{\mathrm{lo}}$ when it is infeasible and $\sigma_{\mathrm{hi}}$ otherwise, until the relative width $(\sigma_{\mathrm{hi}}-\sigma_{\mathrm{lo}})/\sigma_{\mathrm{hi}}$ falls below a tolerance. The feasible endpoint is returned, so the released $\sigma$ always satisfies $\epsilon(\sigma)\le\epsilon$ and the guarantee is conservative. The same routine applies to the choice of $\sigma$ for DP-SGLD, which shares the replace-one Poisson-subsampled Gaussian accountant.

\section{Proof}
\label{app:mathematical-proof}

\subsection{Proof of Proposition~\ref{prop:choice-alpha} and Theorem~\ref{thm:solution-gap}}
\label{app:newproof}

Throughout, (A1-5) hold. All constants depend only on the population quantities $\mu_\infty,d_\theta,d_x$ and the uniform derivative bounds of $l$ from (A3), never on $n$ or $S$. We develop the common machinery first (Section~\ref{app:common}), then prove the Proposition (Section~\ref{app:proof-prop}) and the Theorem (Section~\ref{app:proof-thm}).

\subsubsection{Common Notation and Lemmas}
\label{app:common}

\paragraph{Notation.}
For a weight vector $v=(v_1,\dots,v_n)$ with $v_i\ge0$ and $\sum_i v_i=1$, write the weighted empirical risk, gradient, and Hessian
\begin{align*}
L(\theta;v):=\sum_{i=1}^n v_i l(X_i,\theta),\qquad
G(\theta;v):=\nabla_\theta L(\theta;v),\qquad
H(\theta;v):=\nabla_\theta^2 L(\theta;v).
\end{align*}
Let $\theta_0^\star:=\argmin_\theta L(\theta;n^{-1}\mathbf 1_n)$ and $\theta_\infty^\star:=\argmin_\theta\E\,l(X,\theta)$, and set $H_{0,n}:=H(\theta_0^\star;n^{-1}\mathbf 1_n)$. At $\theta_0^\star$ we define the scores and the centered Hessian fluctuations
\begin{align*}
\psi_i:=\nabla_\theta l(X_i,\theta_0^\star),\qquad
A_i:=\nabla_\theta^2 l(X_i,\theta_0^\star)-H_{0,n}.
\end{align*}
Note that $\sum_i \psi_i=0$ by the first-order condition and $\sum_{i=1}^n A_i=0$ by construction. The per-record bootstrap draws $\bw\sim\Diri(\mathbf 1_n)$ with weights $w_i$, and the block bootstrap draws $\bu\sim\Diri(\gamma\mathbf 1_S)$ and reallocates block-constantly through $\bar w_i:=u_{\pi(i)}/n_S$ over the balanced partition $\pi$ with groups $I_1,\dots,I_S$ of size $n_S=n/S$. Recall $\theta_\sub^\star=\theta_\sub^\star(\bar \bw):=\argmin_\theta L(\theta;\bar \bw)$ and $\theta_\Diri^\star=\theta_\Diri^\star(\bw):=\argmin_\theta L(\theta;\bw)$. We set the group means $\bar\psi_s:=\frac1{n_S}\sum_{i\in I_s}\psi_i$ and $\bar A_s:=\frac1{n_S}\sum_{i\in I_s}A_i$, and the centered score and Hessian sums
\begin{align*}
&g_\sub:=\sum_{s=1}^S(u_s-\tfrac1S)\bar\psi_s,\qquad
g_\Diri:=\sum_{i=1}^n(w_i-\tfrac1n)\psi_i,\\
&B_\sub:=\sum_{s=1}^S(u_s-\tfrac1S)\bar A_s,\qquad
B_\Diri:=\sum_{i=1}^n(w_i-\tfrac1n)A_i,
\end{align*}
together with the moment summaries
\begin{align*}
\Sigma_{\psi,n}:=\tfrac1n\sum_i\psi_i\psi_i^\top,\quad
\bar\Sigma_{\psi,n}:=\tfrac1S\sum_s\bar\psi_s\bar\psi_s^\top,\quad
\sigma_{\psi,n}^2:=\tr\Sigma_{\psi,n},\quad
\bar\sigma_{\psi,n}^2:=\tr\bar\Sigma_{\psi,n}.
\end{align*}
We also define $\kappa_n^2:=\frac1n\sum_i\|A_i\|_F^2$ and $\bar\kappa_n^2:=\frac1S\sum_s\|\bar A_s\|_F^2$, write $\E_{\bX_{1:n}},\E_\pi,\E_u,\E_w$ for expectation over the respective source, and abbreviate $\bullet\in\{\sub,\Diri\}$.

By (A1-3) the following population constants are finite and used throughout. The gradient bounds are $C_\infty:=\sup_{x}\|\nabla_\theta l(x,\theta_\infty^\star)\|$ and $C:=\sup_{x,\theta}\|\nabla_\theta l(x,\theta)\|$. The Hessian bound is $K_2:=\sup_{x,\theta}\|\nabla_\theta^2 l(x,\theta)\|_\op$, and the Hessian-Lipschitz constant $K_3$ makes $\theta\mapsto\nabla_\theta^2 l(x,\theta)$ Lipschitz uniformly in $x$. We also set $\rho_\infty:=\mathrm{dist}(\theta_\infty^\star,\partial\Theta)>0$ by (A2), $\mu_n:=\mu_\infty/2$, and $D_\Theta:=\diam(\Theta)$, and fix once and for all the localization radius
\begin{align*}
r:=\min\{\tfrac{\rho_\infty}{4},\ \tfrac{\mu_\infty}{8K_3}\}>0
\qquad(\tfrac{\mu_\infty}{8K_3}:=\infty\text{ when }K_3=0),
\end{align*}
which satisfies $r\le\rho_\infty/4$ and $K_3 r\le\mu_\infty/8$. The same symbol $r$ is used in every lemma below.



\begin{lemma}
\label{lem:cm-gamma}
Let $G_1,\dots,G_K\overset{\mathrm{iid}}\sim\mathrm{Gamma}(\alpha,1)$ and $Y:=\sum_{k}G_k$. Set $u_k=G_k/Y$. Then $(u_k)_{k=1}^K\sim\Diri(\alpha\mathbf 1_K)$ with $\E u_k=1/K$ and
\begin{align*}
\Cov(u_k,u_\ell)=\frac{K\delta_{kl}-1}{K^2(K\alpha+1)},
\end{align*}
where $\delta_{kl}=1$ if $k=l$ and $0$ otherwise.
\end{lemma}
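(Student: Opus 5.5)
The plan is to use the standard Gamma representation of the Dirichlet law and then compute the first and second moments directly, exploiting the independence of the normalized vector from the total.

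\textbf{Distributional identity.} The joint density of $(G_1,\dots,G_K)$ is $\prod_k \Gamma(\alpha)^{-1} g_k^{\alpha-1}e^{-g_k}$. Apply the change of variables $(g_1,\dots,g_K)\mapsto(u_1,\dots,u_{K-1},Y)$ with $g_k=u_kY$ for $k<K$ and $g_K=(1-\sum_{k<K}u_k)Y$. The Jacobian is $Y^{K-1}$. The transformed density then factorizes as
\[
\Bigl[\tfrac{\Gamma(K\alpha)}{\Gamma(\alpha)^K}\prod_k u_k^{\alpha-1}\Bigr]\cdot\Bigl[\tfrac{1}{\Gamma(K\alpha)}Y^{K\alpha-1}e^{-Y}\Bigr].
\]
This gives two conclusions at once: $(u_k)\sim\Diri(\alpha\mathbf 1_K)$, and the vector is independent of $Y\sim\mathrm{Gamma}(K\alpha,1)$.

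\textbf{Moments via independence.} Since $G_k=u_kY$ with $u$ independent of $Y$, we have $\E G_k=\E u_k\,\E Y$. Hence $\E u_k=\alpha/(K\alpha)=1/K$.

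For second moments, $\E[G_kG_\ell]=\E[u_ku_\ell]\,\E[Y^2]$, where $\E Y^2=K\alpha(K\alpha+1)$. The left-hand side equals $\alpha(\alpha+1)$ when $k=\ell$ and $\alpha^2$ otherwise. This yields
\[
\E u_k^2=\frac{\alpha+1}{K(K\alpha+1)},\qquad \E u_ku_\ell=\frac{\alpha}{K(K\alpha+1)}\quad (k\neq\ell).
\]
Subtracting $1/K^2$ gives
\[
\Var(u_k)=\frac{K(\alpha+1)-(K\alpha+1)}{K^2(K\alpha+1)}=\frac{K-1}{K^2(K\alpha+1)},\qquad \Cov(u_k,u_\ell)=\frac{K\alpha-(K\alpha+1)}{K^2(K\alpha+1)}=\frac{-1}{K^2(K\alpha+1)}.
\]
Both cases match the unified formula $(K\delta_{k\ell}-1)/(K^2(K\alpha+1))$.

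\textbf{Main obstacle.} The only nontrivial step is establishing independence of $u$ and $Y$ through the Jacobian computation; it is routine but must be done carefully. Once that is in hand, the moment calculations are immediate algebra.
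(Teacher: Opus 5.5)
Your proposal is correct and follows essentially the paper's route: the paper calls the Gamma--Dirichlet representation and the independence of $u$ from $Y$ ``classical,'' states the same moments $\E u_k^2=\frac{\alpha+1}{K(K\alpha+1)}$ and $\E u_ku_\ell=\frac{\alpha}{K(K\alpha+1)}$, and subtracts $1/K^2$. You additionally supply the Jacobian factorization, which is the one the paper writes out in its next lemma, and derive those moments from $\E[G_kG_\ell]=\E[u_ku_\ell]\,\E[Y^2]$; all your computations check out.
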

\begin{proof}
The Gamma-Dirichlet construction and the independence of $(G_k/Y)_k$ from $Y$ are classical. With $\E u_k=1/K$, $\E u_k^2=\frac{\alpha+1}{K(K\alpha+1)}$, and $\E u_ku_\ell=\frac{\alpha}{K(K\alpha+1)}$ for $k\ne\ell$, subtracting $1/K^2$ gives the stated covariance.
\end{proof}

\begin{lemma}
\label{lem:gamma-indep}
Let $E_1,\dots,E_K\sim\mathrm{Gamma}(\alpha,1)$ i.i.d. with $\alpha>0$, set $V:=\sum_{k=1}^KE_k$ and $\zeta_k:=E_k/V$. Then, for any fixed vectors $v_1,\dots,v_K\in\R^d$ with $\sum_{k}v_k=0$, the centered linear form $h:=\sum_k(\zeta_k-\tfrac1K)v_k$ satisfies $Vh=\sum_k(E_k-\alpha)v_k=:M$ and $h\perp V$, and hence $\E\|M\|^4=\E[V^4]\,\E\|h\|^4$.
\end{lemma}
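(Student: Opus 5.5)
The plan rests on the Gamma–Dirichlet representation already used in Lemma~\ref{lem:cm-gamma}, together with Lukacs' independence theorem.

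\emph{Step 1 (the identity $Vh=M$).} Since $\zeta_k=E_k/V$, we have $Vh=\sum_k(E_k-V/K)v_k=\sum_kE_kv_k-\tfrac{V}{K}\sum_kv_k$. The constraint $\sum_kv_k=0$ removes the second term. It also lets us subtract $\alpha\sum_kv_k=0$ freely, so $Vh=\sum_kE_kv_k=\sum_k(E_k-\alpha)v_k=M$. This step is pure algebra; the only point to note is that the zero-sum constraint is used twice.

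\emph{Step 2 (independence $h\perp V$).} I will invoke the classical fact (Lukacs) that for independent $\mathrm{Gamma}(\alpha,1)$ variables, the normalized vector $(\zeta_1,\dots,\zeta_K)$ is independent of the total $V$, with $V\sim\mathrm{Gamma}(K\alpha,1)$. To check it directly, change variables from $(E_1,\dots,E_K)$ to $(\zeta_1,\dots,\zeta_{K-1},V)$; the Jacobian is $V^{K-1}$. The joint density becomes
\[
\prod_k\zeta_k^{\alpha-1}\cdot V^{K\alpha-1}e^{-V}/\Gamma(\alpha)^K,
\]
which factorizes into a $\Diri(\alpha\mathbf 1_K)$ part and a $\mathrm{Gamma}(K\alpha,1)$ part. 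Because $h$ is a measurable function of $\zeta$ alone (the $v_k$ are fixed), it follows that $h\perp V$.

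\emph{Step 3 (fourth moments).} From Step 1, $\|M\|^4=V^4\|h\|^4$. Independence then gives $\E\|M\|^4=\E[V^4]\,\E\|h\|^4$. Both factors are finite: $V$ has Gamma moments of all orders, and $\|h\|\le 2\max_k\|v_k\|$ is bounded. So the product formula holds without any integrability caveat.

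The only nontrivial ingredient is the independence in Step 2, which is classical. I would cite it, or include the short Jacobian computation above.
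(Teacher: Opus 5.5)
Your proposal is correct and follows essentially the same route as the paper. Both change variables to $(\zeta_1,\dots,\zeta_{K-1},V)$ with Jacobian $V^{K-1}$, factor the joint density into a $\Diri(\alpha\mathbf 1_K)$ part times a $\mathrm{Gamma}(K\alpha,1)$ part, and conclude from $\|M\|^4=V^4\|h\|^4$ together with independence. You spell out two things the paper leaves implicit: the zero-sum algebra behind $Vh=M$ and the finiteness of both factors. The paper, in turn, also records $\E[V^4]=K\alpha(K\alpha+1)(K\alpha+2)(K\alpha+3)$ for use in the next lemma.
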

\begin{proof}
The additivity of the Gamma distribution shows $V\sim \mathrm{Gamma}(K\alpha,1)$. To establish the independence we find the joint density of $(\zeta_1,\dots,\zeta_{K-1},V)$ via the change of variables from $(E_1,\dots,E_K)$, noting that $\zeta_K$ is a deterministic function of $\zeta_1,\dots,\zeta_{K-1}$.

The joint density of $(E_1,\dots,E_K)$ on $(0,\infty)^K$ is $f_E(e)=\Gamma(\alpha)^{-K}(\prod_ke_k^{\alpha-1})e^{-\sum_ke_k}$. Pass to coordinates $(\zeta_1,\dots,\zeta_{K-1},V)$ through the bijection $e_k=V\zeta_k$ for $k<K$ and $e_K=V(1-\sum_{j<K}\zeta_j)=V\zeta_K$, defined on the simplex $\{\zeta_k>0,\ \sum_k\zeta_k=1\}$ times $(0,\infty)$. The derivative matrix $\partial(e_1,\dots,e_K)/\partial(\zeta_1,\dots,\zeta_{K-1},V)$ has $V\,I_{K-1}$ as its top-left $(K-1)\times(K-1)$ block from $\partial e_k/\partial\zeta_j=V\,\mathbf 1\{j=k\}$, the vector $(\zeta_1,\dots,\zeta_{K-1})^\top$ in the first $K-1$ entries of its last column from $\partial e_k/\partial V=\zeta_k$, and last row $(-V,\dots,-V,\zeta_K)$ from $\partial e_K/\partial\zeta_j=-V$ and $\partial e_K/\partial V=\zeta_K$. The determinant is invariant under a row operation, so adding rows $1,\dots,K-1$ to the last row turns it into $(0,\dots,0,1)$ and makes an upper triangular matrix with diagonal $(V,\dots,V,1)$, whose determinant is $V^{K-1}$. Hence
\begin{align*}
f_{\zeta,V}(\zeta,V)=\frac{1}{\Gamma(\alpha)^K}\Big(\prod_k\zeta_k^{\alpha-1}\Big)V^{K\alpha-1}e^{-V}
=\underbrace{\frac{\Gamma(K\alpha)}{\Gamma(\alpha)^K}\prod_k\zeta_k^{\alpha-1}}_{\Diri(\alpha\mathbf 1_K)\text{ density}}\cdot\underbrace{\frac{V^{K\alpha-1}e^{-V}}{\Gamma(K\alpha)}}_{\mathrm{Gamma}(K\alpha,1)\text{ density}}.
\end{align*}
Therefore $\|M\|^4=V^4\|h\|^4$ with $V$ independent of $\|h\|^4$, giving $\E\|M\|^4=\E[V^4]\,\E\|h\|^4$. Finally $\E[V^4]=\Gamma(K\alpha+4)/\Gamma(K\alpha)=K\alpha(K\alpha+1)(K\alpha+2)(K\alpha+3)$ since $V\sim\mathrm{Gamma}(K\alpha,1)$.
\end{proof}

\begin{lemma}
\label{lem:cm-diri}
Conditional on $(\bX_{1:n},\pi)$,
\begin{align*}
\Cov_w(\sqrt n g_\Diri\mid \bX_{1:n})=\frac{n}{n+1}\Sigma_{\psi,n},
\qquad
\Cov_u(\sqrt n g_\sub\mid \bX_{1:n},\pi)=\frac{n}{S\gamma+1}\bar\Sigma_{\psi,n},
\end{align*}
and $\E_w\|B_\Diri\|_F^2=\frac{1}{n+1}\kappa_n^2$, $\E_u\|B_\sub\|_F^2=\frac{1}{S\gamma+1}\bar\kappa_n^2$. Moreover, for a uniformly random balanced partition,
\begin{align*}
\E_\pi[\bar\Sigma_{\psi,n}\mid \bX_{1:n}]=\frac{S-1}{n-1} \Sigma_{\psi,n}.
\end{align*}
\end{lemma}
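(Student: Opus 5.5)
The plan is to reduce every identity to the Dirichlet covariance formula of Lemma~\ref{lem:cm-gamma}, combined with the centering constraints $\sum_i\psi_i=0$ and $\sum_iA_i=0$. These constraints give $\sum_s\bar\psi_s=0$ and $\sum_s\bar A_s=0$ as well.

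\emph{Ordinary bootstrap.} Write $\sqrt n\,g_\Diri=\sqrt n\sum_i(w_i-\tfrac1n)\psi_i$. Lemma~\ref{lem:cm-gamma} with $K=n$ and $\alpha=1$ gives
\begin{align*}
\Cov_w(\sqrt n\,g_\Diri)=n\sum_{i,j}\frac{n\delta_{ij}-1}{n^2(n+1)}\psi_i\psi_j^\top=\frac{1}{n+1}\Big(n\sum_i\psi_i\psi_i^\top-\sum_i\psi_i\sum_j\psi_j^\top\Big).
\end{align*}
The cross term vanishes because $\sum_j\psi_j=0$, which leaves $\frac{n}{n+1}\Sigma_{\psi,n}$.

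\emph{Block bootstrap.} Here $K=S$ and $\alpha=\gamma$, so
\begin{align*}
\Cov_u(\sqrt n\,g_\sub)=\frac{n}{S^2(S\gamma+1)}\Big(S\sum_s\bar\psi_s\bar\psi_s^\top-0\Big)=\frac{n}{S\gamma+1}\bar\Sigma_{\psi,n}.
\end{align*}
The Frobenius statements follow in the same way, applied coordinatewise to the matrix entries (equivalently, take the trace of the covariance of $\mathrm{vec}(A_i)$). Since $\E B=0$, this yields $\E_w\|B_\Diri\|_F^2=\frac{1}{n(n+1)}\sum_i\|A_i\|_F^2=\frac{1}{n+1}\kappa_n^2$ and $\E_u\|B_\sub\|_F^2=\frac{1}{S(S\gamma+1)}\sum_s\|\bar A_s\|_F^2=\frac{\bar\kappa_n^2}{S\gamma+1}$.

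\emph{Partition average.} Expand
\begin{align*}
\bar\psi_s\bar\psi_s^\top=n_S^{-2}\sum_{i,j\in I_s}\psi_i\psi_j^\top .
\end{align*}
Under a uniform balanced partition, $P(i\in I_s)=1/S$ and $P(i,j\in I_s)=\frac{n_S(n_S-1)}{n(n-1)}$ for $i\neq j$. Summing over $s$, the diagonal part contributes $S\cdot n_S^{-2}\cdot\frac1S\sum_i\psi_i\psi_i^\top$. Then use $\sum_{i\neq j}\psi_i\psi_j^\top=-\sum_i\psi_i\psi_i^\top$, again from $\sum\psi_i=0$. This gives
\begin{align*}
\E_\pi\bar\Sigma_{\psi,n}=\frac1S\cdot n_S^{-2}\Big[n_S-\frac{Sn_S(n_S-1)}{n-1}\Big]\sum_i\psi_i\psi_i^\top .
\end{align*}
With $n=Sn_S$, the bracket simplifies to $n_S\frac{n-1-n+S}{n-1}=n_S\frac{S-1}{n-1}$. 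Hence $\E_\pi\bar\Sigma_{\psi,n}=\frac{1}{Sn_S}\frac{S-1}{n-1}\sum_i\psi_i\psi_i^\top=\frac{S-1}{n-1}\Sigma_{\psi,n}$.

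The only delicate step is this last combinatorial count of pair co-membership probabilities and its cancellation through $\sum\psi_i=0$. I would double-check it with the sanity cases $S=n$, which gives $\Sigma_{\psi,n}$, and $S=1$, which gives $0$.
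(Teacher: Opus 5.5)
Your argument is correct and follows the paper's proof essentially line by line. It uses Lemma~\ref{lem:cm-gamma} for the Dirichlet covariances, the centering $\sum_i\psi_i=0$ (hence $\sum_s\bar\psi_s=0$, and likewise $\sum_iA_i=0$) to kill the rank-one cross term, and the same computation applied coordinatewise for the Frobenius moments. For the partition mean it uses the co-membership probabilities $1/S$ and $\frac{n_S(n_S-1)}{n(n-1)}=\frac1S\frac{n_S-1}{n-1}$.

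Two displayed intermediate expressions need fixing, though the stated conclusions are right:
\begin{itemize}
\item In the per-record covariance the prefactor should be $\frac{1}{n(n+1)}$, not $\frac{1}{n+1}$. As written, your display would give $\frac{n^2}{n+1}\Sigma_{\psi,n}$.
\item In the partition average, the probabilities you quote yield the bracket $\bigl[1-\frac{n_S-1}{n-1}\bigr]$ under your prefactor $\frac1S n_S^{-2}$, not $\bigl[n_S-\frac{Sn_S(n_S-1)}{n-1}\bigr]$. The two happen to be equal, both being $\frac{n_S(S-1)}{n-1}$, so the final $\frac{S-1}{n-1}\Sigma_{\psi,n}$ stands.
\end{itemize}
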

\begin{proof}
The characterization below uses the first-order condition $\sum_i\psi_i=0$ and its variants $\sum_s\bar\psi_s=\frac1{n_S}\sum_i\psi_i=0$ and $\sum_i A_i=0$.

\paragraph{Per-record covariance.} By Lemma~\ref{lem:cm-gamma} with $(K,\alpha)=(n,1)$, $\Cov(w_i,w_j)=\frac{n\delta_{ij}-1}{n^2(n+1)}$. Since $g_\Diri=\sum_i(w_i-\frac1n)\psi_i$ is linear in $\bw$ and shifting by the constant $-\frac1n$ does not change covariances,
\begin{align*}
\Cov_w(g_\Diri)=\sum_{i,j}\Cov(w_i,w_j) \psi_i\psi_j^\top
=\frac{1}{n^2(n+1)}\Big[n\sum_i\psi_i\psi_i^\top-\Big(\sum_i\psi_i\Big)\Big(\sum_j\psi_j\Big)^\top\Big],
\end{align*}
where the second term vanishes because $\sum_i \psi_i=0$. Hence $\Cov_w(g_\Diri)=\frac1{n(n+1)}\sum_i\psi_i\psi_i^\top=\frac1{n+1}\Sigma_{\psi,n}$, which also gives $\E_w\|g_\Diri\|^2=\frac{1}{n}\tr\Cov_w(\sqrt n g_\Diri)=\frac{\sigma_{\psi,n}^2}{n+1}$.

\paragraph{Block covariance.} The block-constant reallocation $\bar w_i=u_{\pi(i)}/n_S$ gives, after grouping the indices $i\in I_s$, $g_\sub=\sum_i(\bar w_i-\frac1n)\psi_i=\sum_s u_s\bar\psi_s-\frac1S\sum_s\bar\psi_s=\sum_s(u_s-\frac1S)\bar\psi_s$. This is the same linear form as in the per-record case with $(K,\alpha)=(S,\gamma)$ and $\bar\psi_s$ in place of $\psi_i$, and $\sum_s\bar\psi_s=0$ plays the role of the first-order condition. Hence $\Cov_u(\sqrt n g_\sub)=\frac{n}{S\gamma+1}\bar\Sigma_{\psi,n}$ and $\E_u\|g_\sub\|^2=\frac{\bar\sigma_{\psi,n}^2}{S\gamma+1}$.

\paragraph{Hessian second moments.} The sums $B_\sub=\sum_s(u_s-\frac1S)\bar A_s$ and $B_\Diri=\sum_i(w_i-\frac1n)A_i$ have the same linear structure with the matrices $\bar A_s$ and $A_i$ as coefficients. Viewing matrices as vectors under the Frobenius inner product $\langle M,M'\rangle_F=\tr(M^\top M')$ and applying the per-record and block computations coordinatewise gives $\E_u\|B_\sub\|_F^2=\frac{1}{S\gamma+1}\frac1S\sum_s\|\bar A_s\|_F^2=\frac{\bar\kappa_n^2}{S\gamma+1}$ and $\E_w\|B_\Diri\|_F^2=\frac{\kappa_n^2}{n+1}$, using $\sum_s\bar A_s=0$ and $\sum_iA_i=0$.

\paragraph{Partition mean.} For a uniformly random balanced partition $\pi$, each index $i$ is equally likely to occupy any of the $n$ slots, so $\P_\pi(i\in I_s)=n_S/n=\frac1S$. Given $i\in I_s$, a distinct index $j$ occupies one of the remaining $n-1$ slots, exactly $n_S-1$ of which lie in $I_s$, so $\P_\pi(\{i,j\}\subset I_s)=\frac1S\frac{n_S-1}{n-1}$. From $\bar\psi_s\bar\psi_s^\top=\frac1{n_S^2}\sum_{i,j\in I_s}\psi_i\psi_j^\top=\frac{1}{n_S^2}\sum_{i,j}\mathbf 1\{\pi(i)=\pi(j)=s\}\psi_i\psi_j^\top$ and the identity $\sum_{i\ne j}\psi_i\psi_j^\top=\big(\sum_i\psi_i\big)\big(\sum_j\psi_j\big)^\top-\sum_i\psi_i\psi_i^\top=-n\Sigma_{\psi,n}$,
\begin{align*}
\E_\pi[\bar\psi_s\bar\psi_s^\top]
=\frac1{n_S^2}\Big[\frac1S\sum_i\psi_i\psi_i^\top+\frac1S\frac{n_S-1}{n-1}\sum_{i\ne j}\psi_i\psi_j^\top\Big]
=\frac{n}{Sn_S^2}\Big(1-\frac{n_S-1}{n-1}\Big)\Sigma_{\psi,n}
=\frac{S-1}{n-1}\Sigma_{\psi,n},
\end{align*}
where the last step uses $n=Sn_S$ and $1-\frac{n_S-1}{n-1}=\frac{n-n_S}{n-1}=\frac{n_S(S-1)}{n-1}$. Averaging over $s$ leaves the right-hand side unchanged, giving $\E_\pi[\bar\Sigma_{\psi,n}\mid \bX_{1:n}]=\frac{S-1}{n-1}\Sigma_{\psi,n}$.
\end{proof}

\begin{lemma}
\label{lem:fourth-moment}
Let $\xi_1,\dots,\xi_K$ be independent mean-zero random vectors in a Hilbert space with $\E\|\xi_k\|^4<\infty$. Then
\begin{align*}
\E\Big\|\sum_{k=1}^K\xi_k\Big\|^4 \le 3\Big(\sum_{k=1}^K\E\|\xi_k\|^2\Big)^2+\sum_{k=1}^K\E\|\xi_k\|^4.
\end{align*}
\end{lemma}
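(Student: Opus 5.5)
The plan is to expand the fourth power through inner products and use independence and the mean-zero property to kill most cross terms. Write $S_K:=\sum_{k=1}^K\xi_k$, so that $\|S_K\|^2=\sum_{k,l}\langle\xi_k,\xi_l\rangle$ and
\begin{align*}
\E\|S_K\|^4=\sum_{k,l,m,p}\E\bigl[\langle\xi_k,\xi_l\rangle\langle\xi_m,\xi_p\rangle\bigr].
\end{align*}
All these expectations are finite because $\E\|\xi_k\|^4<\infty$; Cauchy--Schwarz and H\"older bound each summand by products of fourth moments.

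The next step is to classify the index quadruples. Suppose some index appears exactly once. Conditioning on the other vectors (independence) and using linearity of the inner product in that vector together with $\E\xi_k=0$ (Bochner expectation in the Hilbert space) shows the term vanishes. Since each index in a surviving term appears at least twice, the survivors are of two types. In the first, all four indices are equal, and these contribute $\sum_k\E\|\xi_k\|^4$. In the second, the indices pair up as two distinct values $k\neq l$, in one of three pairings:
\begin{enumerate}
\item $(k,k,l,l)$ gives $\E\|\xi_k\|^2\,\E\|\xi_l\|^2$ by independence;
\item $(k,l,k,l)$ gives $\E\langle\xi_k,\xi_l\rangle^2$;
\item $(k,l,l,k)$ gives $\E\langle\xi_k,\xi_l\rangle^2$ as well.
\end{enumerate}

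For the two mixed pairings, Cauchy--Schwarz gives $\langle\xi_k,\xi_l\rangle^2\le\|\xi_k\|^2\|\xi_l\|^2$, and independence then yields $\E\langle\xi_k,\xi_l\rangle^2\le\E\|\xi_k\|^2\,\E\|\xi_l\|^2$. So each of the three pairings, summed over ordered pairs $k\neq l$, is at most $\sum_{k\neq l}\E\|\xi_k\|^2\E\|\xi_l\|^2\le(\sum_k\E\|\xi_k\|^2)^2$. This gives the total
\begin{align*}
\E\|S_K\|^4\le 3\Bigl(\sum_k\E\|\xi_k\|^2\Bigr)^2+\sum_k\E\|\xi_k\|^4.
\end{align*}
The constant $3$ in the statement leaves room here, because we may drop the diagonal in the squared sum or bound crudely.

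The only delicate step is the vanishing of terms with a singleton index when that index sits inside an inner product with a vector sharing the same index elsewhere. Two cases arise. For a term like $(k,k,k,l)$ with $l$ distinct, we have
\begin{align*}
\E\bigl[\|\xi_k\|^2\langle\xi_k,\xi_l\rangle\bigr]=\E\bigl\langle\|\xi_k\|^2\xi_k,\ \E\xi_l\bigr\rangle=0,
\end{align*}
using independence and Fubini, justified by integrability from the fourth moments. For $(k,k,l,m)$ with $l,m$ distinct from $k$ and each other, the term factors and vanishes in the same way.
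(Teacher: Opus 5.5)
Your proof is correct and follows the paper's argument essentially step for step. You expand $\E\|\sum_k\xi_k\|^4$ as a quadruple sum of products of inner products, kill every term with a singleton index via independence and the mean-zero property, keep the diagonal $\sum_k\E\|\xi_k\|^4$, and bound each of the three two-pair patterns by $(\sum_k\E\|\xi_k\|^2)^2$ using independence and Cauchy--Schwarz. Your explicit treatment of integrability and of the $(k,k,k,l)$-type singleton terms adds detail that the paper leaves implicit.
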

\begin{proof}
Write $W=\sum_k\xi_k$. From $\|W\|^2=\langle W,W\rangle=\sum_{i,j}\langle\xi_i,\xi_j\rangle$,
\begin{align*}
\E\|W\|^4=\E\Big[\Big(\sum_{i,j}\langle\xi_i,\xi_j\rangle\Big)\Big(\sum_{k,l}\langle\xi_k,\xi_l\rangle\Big)\Big]
=\sum_{i,j,k,l}\E[\langle\xi_i,\xi_j\rangle\langle\xi_k,\xi_l\rangle],
\end{align*}
a sum over the four slots $(i,j,k,l)\in\{1,\dots,K\}^4$.

\paragraph{Singletons vanish.} If some value occupies exactly one slot, say $i$ with $j,k,l\ne i$, then $\xi_i$ is independent of $(\xi_j,\xi_k,\xi_l)$, and by linearity of the inner product in its first argument,
\begin{align*}
\E[\langle\xi_i,\xi_j\rangle\langle\xi_k,\xi_l\rangle]
=\E\big[\langle\xi_k,\xi_l\rangle \langle\E\xi_i,\ \xi_j\rangle\big]=0.
\end{align*}

\paragraph{Diagonal.} The diagonal term is $\langle\xi_i,\xi_i\rangle^2=\|\xi_i\|^4$, contributing $\sum_i\E\|\xi_i\|^4$.

\paragraph{Two-pair patterns.} The four slots split into two equal-index pairs in exactly three ways, and each two-pair quadruple matches one of the following,
\begin{align*}
\begin{array}{lll}
\{1,2\}\{3,4\}: & i=j,\ k=l,\ i\ne k & \displaystyle\sum_{i\ne k}\E\|\xi_i\|^2 \E\|\xi_k\|^2,\\[4pt]
\{1,3\}\{2,4\}: & i=k,\ j=l,\ i\ne j & \displaystyle\sum_{i\ne j}\E\langle\xi_i,\xi_j\rangle^2,\\[4pt]
\{1,4\}\{2,3\}: & i=l,\ j=k,\ i\ne j & \displaystyle\sum_{i\ne j}\E\langle\xi_i,\xi_j\rangle^2,
\end{array}
\end{align*}
where the first line uses independence ($i\ne k$) to factor $\E\|\xi_i\|^2\|\xi_k\|^2$. For $i\ne j$ the Cauchy-Schwarz inequality gives $\E\langle\xi_i,\xi_j\rangle^2\le\E\|\xi_i\|^2\,\E\|\xi_j\|^2$, so each of the three sums is at most $\big(\sum_k\E\|\xi_k\|^2\big)^2$. Adding the diagonal and the three two-pair contributions yields the claim.
\end{proof}

\begin{lemma}
\label{lem:cm-four}
On $\mathcal G_n$, uniformly over $\gamma\in\Gamma=[\underline\gamma,\overline\gamma]\subset(0,\infty)$,
\begin{align*}
&\E_w[\|g_\Diri\|^4\mid \bX_{1:n}]=O(n^{-2}),\quad
\E_w[\|B_\Diri\|_F^4\mid \bX_{1:n}]=O(n^{-2}),\\
&\E_{u,\pi}[\|g_\sub\|^4\mid \bX_{1:n}]=O(\gamma^{-3}n^{-2})+O(\gamma^{-3}S^{-1}n^{-2}),\\
&\E_{u,\pi}[\|B_\sub\|_F^4\mid \bX_{1:n}]=O(\gamma^{-3}n^{-2})+O(\gamma^{-3}S^{-1}n^{-2}),
\end{align*}
with constants depending only on $(C,K_2,d_\theta,\underline\gamma)$ and uniform in $S$.
\end{lemma}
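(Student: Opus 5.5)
Each of the four centered sums is written as a sum of independent mean-zero pieces through the Gamma representation of the Dirichlet, so that Lemma~\ref{lem:fourth-moment} applies. The normalization is then removed with the independence factorization of Lemma~\ref{lem:gamma-indep}.

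\emph{Per-record terms.} Take $\bw=E/V$ with $E_i\sim\mathrm{Gamma}(1,1)$ i.i.d.\ and $V=\sum_iE_i$. Since $\sum_i\psi_i=0$, Lemma~\ref{lem:gamma-indep} gives $Vg_\Diri=M:=\sum_i(E_i-1)\psi_i$ and $\E\|M\|^4=\E[V^4]\,\E\|g_\Diri\|^4$. The summands $\xi_i=(E_i-1)\psi_i$ are independent and mean zero, with $\E\|\xi_i\|^2=\|\psi_i\|^2\le C^2$ and $\E\|\xi_i\|^4=9\|\psi_i\|^4\le 9C^4$. Lemma~\ref{lem:fourth-moment} therefore yields $\E\|M\|^4\le 3n^2C^4+9nC^4$. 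Because $\E V^4=n(n+1)(n+2)(n+3)\ge n^4$, we get $\E_w\|g_\Diri\|^4=O(n^{-2})$. The argument for $B_\Diri$ is identical in the Hilbert space of matrices with the Frobenius norm, using $\sum_iA_i=0$ and $\|A_i\|_F\le 2\sqrt{d_\theta}K_2$. The event $\mathcal G_n$ is needed only so that $\theta_0^\star$ is interior, which makes the first-order condition, and hence $\sum_i\psi_i=0$, exact.

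\emph{Block terms.} Fix $\pi$ and take $u_s=E_s/V$ with $E_s\sim\mathrm{Gamma}(\gamma,1)$ and $V\sim\mathrm{Gamma}(S\gamma,1)$. Again $\sum_s\bar\psi_s=0$, so $\E_u\|g_\sub\|^4=\E\|M\|^4/\E V^4$ with $M=\sum_s(E_s-\gamma)\bar\psi_s$. For the Gamma moments we use $\E(E_s-\gamma)^2=\gamma$ and $\E(E_s-\gamma)^4=3\gamma^2+6\gamma$. Lemma~\ref{lem:fourth-moment} then gives
\[
\E\|M\|^4\le 3\gamma^2\Big(\sum_s\|\bar\psi_s\|^2\Big)^2+(3\gamma^2+6\gamma)\sum_s\|\bar\psi_s\|^4 .
\]
Dividing by $\E V^4\ge (S\gamma)^4$, the first term becomes $3\gamma^{-2}S^{-2}\bar\sigma_{\psi,n}^4$. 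The second term is at most $(3\gamma^{-2}+6\gamma^{-3})S^{-3}\sum_s\|\bar\psi_s\|^4$, and on $\Gamma$ both are bounded by a multiple of $\gamma^{-3}$, with the constant depending on $\underline\gamma$.

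It remains to average over $\pi$. The needed inputs are
\[
\E_\pi\bar\sigma_{\psi,n}^4=O(S^2/n^2)
\quad\text{and}\quad
\E_\pi\|\bar\psi_s\|^4=O(n_S^{-2})=O(S^2/n^2).
\]
Both follow from sampling-without-replacement moment bounds for the block mean $\bar\psi_s$ of centered, bounded vectors. The second-moment version, $\E_\pi\|\bar\psi_s\|^2=\tfrac{S-1}{n-1}\sigma^2_{\psi,n}$, is already in Lemma~\ref{lem:cm-diri}. The fourth-moment analogue comes from the same counting of index pairs and quadruples inside one block: terms with a singleton index are reduced by the identity $\sum_i\psi_i=0$ to lower order, and this is where the $S^{-1}$ correction arises. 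For $\bar\sigma^4_{\psi,n}=(S^{-1}\sum_s\|\bar\psi_s\|^2)^2$, Jensen's inequality reduces the bound to $\E\|\bar\psi_s\|^4$. Substituting, $S^{-2}\E_\pi\bar\sigma^4_{\psi,n}=O(n^{-2})$ and $S^{-3}\sum_s\E_\pi\|\bar\psi_s\|^4=O(S^{-1}n^{-2})$. This gives exactly the stated form $O(\gamma^{-3}n^{-2})+O(\gamma^{-3}S^{-1}n^{-2})$, with constants uniform in $S$. $B_\sub$ is handled the same way with $\bar A_s$ in the Frobenius space.

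\emph{Main obstacle.} The hard step is the permutation fourth-moment bound $\E_\pi\|\bar\psi_s\|^4\lesssim C^4 n_S^{-2}$ with a constant free of $S$, because the indices within a block are dependent. I would first try the counting argument: expand the fourth power over quadruples in a block, compute inclusion probabilities such as $\tfrac1S\tfrac{(n_S-1)(n_S-2)(n_S-3)}{(n-1)(n-2)(n-3)}$, and collapse the off-diagonal sums using $\sum_i\psi_i=0$. As a fallback, Hoeffding's comparison of sampling without replacement to sampling with replacement gives the same $n_S^{-2}$ rate via convex domination, since $x\mapsto\|x\|^4$ is convex.
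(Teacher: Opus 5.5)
Your proposal is correct and follows the paper's proof essentially step for step. Both use the Gamma representation with Lemma~\ref{lem:gamma-indep} to remove the normalizer, then Lemma~\ref{lem:fourth-moment} for the independent centered sum. For the block fourth moment $\E_\pi\|\bar\psi_s\|^4=O(n_S^{-2})$, the paper uses exactly your fallback, Hoeffding's convex comparison of sampling without replacement to sampling with replacement. For the squared-sum term the paper uses Minkowski rather than your Jensen step. One bookkeeping slip: dividing by $(S\gamma)^4$ leaves $S^{-4}\sum_s\|\bar\psi_s\|^4$, not $S^{-3}$. It is this diagonal term of Lemma~\ref{lem:fourth-moment}, not singleton-index cancellations in the permutation counting, that produces the $O(\gamma^{-3}S^{-1}n^{-2})$ piece; with your $S^{-3}$ the term would be only $O(n^{-2})$, which is still harmless for the stated bound.
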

\begin{proof}
Let $c_i$ denote either $\psi_i$ (for $g$) or $A_i$ (for $B$). In both cases $\sum_i c_i=0$ on $\mathcal G_n$ and $\|c_i\|\le C_\star:=\max\{C,2\sqrt{d_\theta}K_2\}$ by (A3), where $A_i=\nabla_\theta^2 l(X_i,\theta_0^\star)-H_{0,n}$ has $\|A_i\|_F\le2\sqrt{d_\theta}K_2$.

Both $g_\Diri$ and $g_\sub$ have the form of Lemma~\ref{lem:gamma-indep}, so we may combine it with Lemma~\ref{lem:fourth-moment}. Since $\E\|\xi_k\|^2=\alpha\|v_k\|^2$ and $\E\|\xi_k\|^4=(3\alpha^2+6\alpha)\|v_k\|^4$ from $\Var(E_k)=\alpha$ and $\E|E_k-\alpha|^4=3\alpha^2+6\alpha$, and using $3\alpha^2\le3\alpha^2+6\alpha$,
\begin{align}\label{eqn:cm-four-M}
\E\|M\|^4\le(3\alpha^2+6\alpha)\Big[\Big(\sum_k\|v_k\|^2\Big)^2+\sum_k\|v_k\|^4\Big].
\end{align}

\paragraph{org-BB case} $(K=n,\ \alpha=1,\ v_i=c_i,\ \zeta=w,\ V=T)$. Here $\sum_i\|c_i\|^2\le nC_\star^2$ and $\sum_i\|c_i\|^4\le C_\star^2\sum_i\|c_i\|^2\le nC_\star^4$, so \eqref{eqn:cm-four-M} with $\alpha=1$, that is $3\alpha^2+6\alpha=9$, gives $\E\|M\|^4\le 9[(nC_\star^2)^2+nC_\star^4]=O(n^2)$, while $\E[T^4]=n(n+1)(n+2)(n+3)\ge n^4$. Hence $\E_w[\|g_\Diri\|^4\mid \bX_{1:n}]=\E\|M\|^4/\E[T^4]=O(n^{-2})$, and likewise $\E_w[\|B_\Diri\|_F^4\mid \bX_{1:n}]=O(n^{-2})$.

\paragraph{blk-BB case} $(K=S,\ \alpha=\gamma,\ v_s=\bar c_s:=\frac1{n_S}\sum_{i\in I_s}c_i,\ \zeta=u,\ V=\sum_sE_s)$, with $\sum_s\bar c_s=\frac1{n_S}\sum_ic_i=0$. Here $g_\sub$ depends on two independent sources, the Dirichlet weights $u$ and the balanced partition $\pi$. Conditionally on $\pi$, \eqref{eqn:cm-four-M} gives
\begin{align}\label{eqn:cm-four-sub1}
\E_u[\|g_\sub\|^4\mid \bX_{1:n},\pi]=\frac{\E\|M\|^4}{\E[V^4]}
\le\frac{3\gamma^2+6\gamma}{\E[V^4]}\Big[\Big(\sum_s\|\bar c_s\|^2\Big)^2+\sum_s\|\bar c_s\|^4\Big].
\end{align}
We invoke Hoeffding's sampling-without-replacement argument \citep[Theorem~4]{hoef:63} only through its Jensen step. Conditional on $\bX_{1:n}$ the vectors $c_1,\dots,c_n$ are fixed, and Jensen applies directly to any convex scalar function of their sum. In particular, for the convex norm power used below,
\begin{align*}
\E_\pi\Big\|\sum_{i\in I_s}c_i\Big\|^4\le\E\Big\|\sum_{j=1}^{n_S}c_{J_j}\Big\|^4,
\qquad J_1,\dots,J_{n_S}\overset{\mathrm{iid}}\sim\mathrm{Unif}\{1,\dots,n\},
\end{align*}
where $z\mapsto\|z\|^4$ is convex because $\|\cdot\|$ is convex and $t\mapsto t^4$ is convex nondecreasing on $[0,\infty)$. The right-hand sum is i.i.d. and mean-zero with $\E\|c_{J_j}\|^2=\frac1n\sum_i\|c_i\|^2\le C_\star^2$ and $\E\|c_{J_j}\|^4\le C_\star^4$, so $\E\|\sum_jc_{J_j}\|^4\le3(n_SC_\star^2)^2+n_SC_\star^4$ and
\begin{align*}
\E_\pi\|\bar c_s\|^4=n_S^{-4}\E_\pi\Big\|\sum_{i\in I_s}c_i\Big\|^4
\le n_S^{-4}[3(n_SC_\star^2)^2+n_SC_\star^4]=O(n_S^{-2})=O(S^2/n^2).
\end{align*}
By symmetry over $s$,
\begin{align*}
\begin{gathered}
\E_\pi\Big[\sum_s\|\bar c_s\|^4\Big]=S \E_\pi\|\bar c_s\|^4=O(S^3/n^2),\\[2pt]
\E_\pi\Big[\Big(\sum_s\|\bar c_s\|^2\Big)^2\Big]\le\Big(\sum_s(\E_\pi\|\bar c_s\|^4)^{1/2}\Big)^2=O(S^4/n^2),
\end{gathered}
\end{align*}
where the second bound follows by Minkowski's inequality applied to $Y_s=\|\bar c_s\|^2$ in $L^2(\pi)$, that is $\|\sum_s Y_s\|_{L^2(\pi)}\le\sum_s\|Y_s\|_{L^2(\pi)}$, and squaring. Substituting into \eqref{eqn:cm-four-sub1} and taking $\E_\pi$ (with $\E[V^4]$ constant in $\pi$),
\begin{align*}
\E_{u,\pi}[\|g_\sub\|^4\mid \bX_{1:n}]
\le\frac{3\gamma^2+6\gamma}{\E[V^4]}[O(S^4/n^2)+O(S^3/n^2)].
\end{align*}
Finally $\E[V^4]=S\gamma(S\gamma+1)(S\gamma+2)(S\gamma+3)\ge(S\gamma)^4$, so
\begin{align*}
\E_{u,\pi}[\|g_\sub\|^4\mid \bX_{1:n}]
\le\frac{3\gamma^2+6\gamma}{(S\gamma)^4}[O(S^4/n^2)+O(S^3/n^2)]
=O(\gamma^{-3}n^{-2})+O(\gamma^{-3}S^{-1}n^{-2})
\end{align*}
uniformly over $\gamma\in\Gamma$. The identical computation with $c_i=A_i$ gives $\E_{u,\pi}[\|B_\sub\|_F^4\mid \bX_{1:n}]$ the same rate.
\end{proof}

\begin{lemma}
\label{lem:cm-reg}
Let $\theta_0^\star:=\arg\min_{\theta\in\Theta}M_n(\theta)$ with $M_n(\theta):=L(\theta;n^{-1}\mathbf 1_n)=\frac1n\sum_i l(X_i,\theta)$ and $\theta^{\star}_{\infty}:=\arg\min_{\theta \in \Theta} \E[l(X,\theta)]$. Define $\mathcal G_n^{(1)}:=\{\|\theta_0^\star-\theta_\infty^\star\|\le r\}$ with $r=\min\{\frac{\rho_\infty}{4},\ \frac{\mu_\infty}{8K_3}\}$, $\mathcal G_n^{(2)}:=\{\|H_n(\theta_\infty^\star)-H_\infty\|_\op\le\mu_\infty/4\}$, and the good event $\mathcal G_n=\mathcal G_n^{(1)}\cap\mathcal G_n^{(2)}$. Then the following hold.
\begin{enumerate}
\item[\textnormal{(i)}] $\P(\mathcal G_n^c)=O(e^{-c'n})$ for some constant $c'>0$.
\item[\textnormal{(ii)}] On $\mathcal G_n$, $M_n$ is $\mu_\infty/2$-strongly convex on $\overline{B(\theta_\infty^\star,r)}$, and $\theta_0^\star\in\overline{B(\theta_\infty^\star,r)}$. Moreover, $\theta_0^\star$ is the unique minimizer of $M_n$, lies in $\mathrm{int}(\Theta)$ with $\mathrm{dist}(\theta_0^\star,\partial\Theta)\ge\rho_\infty/2$, satisfies the first-order condition $\sum_i\psi_i=0$, and $H_{0,n}=H_n(\theta_0^\star)\succeq\mu_\infty/2\,I$.
\item[\textnormal{(iii)}] $\E_{\bX_{1:n}}\|\theta_0^\star-\theta_\infty^\star\|^2=O(1/n)$.
\end{enumerate}
\end{lemma}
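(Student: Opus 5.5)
The plan is to reduce everything to two concentration statements at the fixed point $\theta_\infty^\star$: one for the empirical score $\nabla M_n(\theta_\infty^\star)=\frac1n\sum_i\nabla_\theta l(X_i,\theta_\infty^\star)$ and one for the empirical Hessian $H_n(\theta_\infty^\star)$. I would then run a deterministic localization argument that converts these into control of $\theta_0^\star$.

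\textbf{Step 1: curvature on the ball.} By (A2), $\theta_\infty^\star\in\mathrm{int}(\Theta)$. By (A3) and dominated convergence, $\E\nabla_\theta l(X,\theta_\infty^\star)=0$. On $\mathcal G_n^{(2)}$ we have $H_n(\theta_\infty^\star)\succeq \tfrac34\mu_\infty I$ by (A5). The Hessian-Lipschitz constant $K_3$ gives, for $\theta\in\overline{B(\theta_\infty^\star,r)}$,
\[
H_n(\theta)\succeq \bigl(\tfrac34\mu_\infty-K_3r\bigr)I\succeq \tfrac58\mu_\infty I,
\]
since $K_3r\le\mu_\infty/8$. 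So $M_n$ is $\mu_\infty/2$-strongly convex on the ball, which lies in $\Theta$ because $r\le\rho_\infty/4$.

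\textbf{Step 2: localization.} Introduce the auxiliary event
\[
\mathcal E_n:=\bigl\{\|\nabla M_n(\theta_\infty^\star)\|\le \mu_\infty r/8\bigr\}.
\]
On $\mathcal G_n^{(2)}\cap\mathcal E_n$, strong convexity on the ball gives, for every $\theta$ on the sphere $\|\theta-\theta_\infty^\star\|=r$,
\[
M_n(\theta)-M_n(\theta_\infty^\star)\ge -\|\nabla M_n(\theta_\infty^\star)\|\,r+\tfrac{\mu_\infty}{4}r^2>0 .
\]
Hence the minimizer over the ball is interior. By convexity (A4), a local minimizer of the convex $M_n$ on the convex $\Theta$ is global, so it equals $\theta_0^\star$. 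Strong convexity then yields the key bound
\[
\|\theta_0^\star-\theta_\infty^\star\|\le \tfrac{2}{\mu_\infty}\|\nabla M_n(\theta_\infty^\star)\|\le r/4 .
\]
Uniqueness follows from strict convexity on the ball together with global convexity: any other minimizer would force the segment between them to consist of minimizers, including points inside the ball, which is a contradiction. We also get $\mathrm{dist}(\theta_0^\star,\partial\Theta)\ge\rho_\infty-r\ge\rho_\infty/2$. Interiority gives the first-order condition $\sum_i\psi_i=0$, and $H_{0,n}\succeq\tfrac58\mu_\infty I\succeq\tfrac{\mu_\infty}{2}I$ because $\theta_0^\star$ lies in the ball. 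This proves (ii) and shows $\mathcal G_n^{(2)}\cap\mathcal E_n\subseteq\mathcal G_n$.

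\textbf{Step 3: probability bound (i).} The summands of $\nabla M_n(\theta_\infty^\star)$ are i.i.d., mean zero, and bounded by $C_\infty$. Hoeffding's inequality coordinatewise, with a union bound over $d_\theta$ coordinates, gives $\P(\mathcal E_n^c)\le 2d_\theta e^{-cn}$. Likewise, the entries of $\nabla^2_\theta l(X_i,\theta_\infty^\star)$ are bounded by $K_2$. Entrywise Hoeffding with a union over $d_\theta^2$ entries, using $\|\cdot\|_\op\le\|\cdot\|_F$, gives $\P(\mathcal G_n^{(2)c})\le 2d_\theta^2e^{-cn}$. Since $\mathcal G_n^c\subseteq \mathcal G_n^{(2)c}\cup\mathcal E_n^c$, this proves (i).

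\textbf{Step 4: mean-squared error (iii).} Split on $\mathcal G^\circ:=\mathcal G_n^{(2)}\cap\mathcal E_n$. On $\mathcal G^\circ$, Step 2 gives
\[
\E\bigl[\|\theta_0^\star-\theta_\infty^\star\|^2\mathbf 1_{\mathcal G^\circ}\bigr]\le \tfrac{4}{\mu_\infty^2}\E\|\nabla M_n(\theta_\infty^\star)\|^2=\tfrac{4\,\tr\Sigma_\infty}{\mu_\infty^2 n}.
\]
On the complement, the error is at most $D_\Theta^2\,\P(\mathcal G^{\circ c})=O(e^{-cn})$.

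The main obstacle is Step 2: passing from local strong convexity on the ball to the global statement that $\theta_0^\star$ lies in the ball and is unique. The approach relies on convexity (A4) together with the boundary comparison on the sphere, and some care is needed to place the threshold defining $\mathcal E_n$ strictly below $\mu_\infty r/4$ so that the inequality on the sphere is strict.
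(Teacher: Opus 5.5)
Your proof is correct, and for part (i) it takes a genuinely different route from the paper.

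\textbf{The paper's route.} It bounds $\P(\mathcal G_n^{(1)c})$ by the classical M-estimation consistency argument. Uniqueness of $\theta_\infty^\star$ and compactness give a separation constant $\gamma_0:=\inf\{M(\theta)-M(\theta_\infty^\star):\theta\in\Theta,\ \|\theta-\theta_\infty^\star\|\ge r\}>0$. The paper then combines the basic inequality $M_n(\theta_0^\star)\le M_n(\theta_\infty^\star)$ with a uniform deviation bound for $M_n-M$ over $\Theta$ (pointwise Hoeffding plus an $\epsilon$-net). It treats $\mathcal G_n^{(2)}$ with matrix Bernstein. The bound $\|\theta_0^\star-\theta_\infty^\star\|\le\frac{2}{\mu_\infty}\|\nabla M_n(\theta_\infty^\star)\|$ appears only later, in (iii), via a mean-value linearization on $\mathcal G_n$.

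\textbf{Your route and what it buys.} You use convexity (A4) to localize deterministically from the score and Hessian at the single point $\theta_\infty^\star$. The whole lemma then reduces to two fixed-point concentration inequalities and one auxiliary event $\mathcal G_n^{(2)}\cap\mathcal E_n\subseteq\mathcal G_n$ that serves (i)--(iii) at once. This gives explicit constants in $c'$, with no non-explicit $\gamma_0$ and no covering number. It also gives the sharper radius $r/4$. Your segment-of-minimizers argument is a cleaner proof of uniqueness than the paper's brief remark that ``the basic inequality applies to any minimizer''. That remark does not by itself place every minimizer in the ball on $\mathcal G_n$, since $\mathcal G_n^{(1)}$ constrains only the selected $\theta_0^\star$. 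Conversely, the paper's consistency step does not need (A4), so it is the more general argument for the localization itself.

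\textbf{A scope point for (ii).} Part (ii) is asserted on all of $\mathcal G_n$, but your Step 2 proves uniqueness only on the smaller event $\mathcal G_n^{(2)}\cap\mathcal E_n$. The other claims of (ii) follow on $\mathcal G_n$ directly from $\|\theta_0^\star-\theta_\infty^\star\|\le r$ and Step 1. For uniqueness, run Step 1 on the ball of radius $2r$ around $\theta_\infty^\star$. There $H_n\succeq(\frac34\mu_\infty-2K_3r)I\succeq\frac{\mu_\infty}{2}I$, and this ball still lies in $\mathrm{int}(\Theta)$ because $2r\le\rho_\infty/2$. It contains a neighbourhood of $\theta_0^\star$, so your segment-of-minimizers argument applies verbatim.
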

\begin{proof}
Write $\Delta_n:=\theta_0^\star-\theta_\infty^\star$ and $M(\theta):=\E\,l(X,\theta)$, and recall $H_n(\theta)=\frac1n\sum_i\nabla_\theta^2 l(X_i,\theta)$ and the standing constants $K_2,K_3,C_\infty,C,\rho_\infty,D_\Theta,r$ of Section~\ref{app:common}.

\paragraph{Probability of the complement (i).} For $\mathcal G_n^{(2)}$, apply the matrix Bernstein inequality \citep{trop:12} to the i.i.d. $Z_i:=\nabla_\theta^2 l(X_i,\theta_\infty^\star)-H_\infty$ with $\E Z_i=0$, $\|Z_i\|_\op\le 2K_2$, and $\|\sum_i\E Z_i^2\|_\op\le 4K_2^2 n$. This gives $\P(\mathcal G_n^{(2),c})=O(e^{-c'n})$ for some constant $c'>0$. For $\mathcal G_n^{(1)}$, we have $\theta_0^\star\in\Theta$ always, so $\mathcal G_n^{(1),c}=\{\theta_0^\star\in K_0\}$ with $K_0:=\{\theta\in\Theta:\|\theta-\theta_\infty^\star\|\ge r\}$ compact, and $\gamma_0:=\inf_{\theta\in K_0}[M(\theta)-M(\theta_\infty^\star)]>0$ is attained on $K_0$ because $\theta_\infty^\star\notin K_0$ is the unique minimizer of the continuous $M$ on $\Theta$ by (A2). The basic inequality $M_n(\theta_0^\star)\le M_n(\theta_\infty^\star)$ gives $M(\theta_0^\star)-M(\theta_\infty^\star)\le|(M_n-M)(\theta_\infty^\star)-(M_n-M)(\theta_0^\star)|$, so on $\mathcal G_n^{(1),c}$, where the left side is at least $\gamma_0$,
\begin{align*}
\mathcal G_n^{(1),c}\subseteq\Big\{\sup_{\theta\in\Theta}\big|(M_n-M)(\theta_\infty^\star)-(M_n-M)(\theta)\big|\ge\gamma_0\Big\}.
\end{align*}
For fixed $\theta$, the variables $Y_i:=l(X_i,\theta_\infty^\star)-l(X_i,\theta)$ are i.i.d. with $|Y_i|\le CD_\Theta$. Writing $g(\theta)=(M_n-M)(\theta_\infty^\star)-(M_n-M)(\theta)=\frac{1}{n}\sum_{i=1}^n(Y_i-\E Y_i)$, Hoeffding's inequality gives the tail $\P(|g(\theta)|\ge t)\le 2e^{-nt^2/(2C^2D_\Theta^2)}$. Construct an $\epsilon$-net $\{\theta_1,\dots,\theta_N\}$ in which every $\theta\in\Theta$ lies within $\epsilon$ of some $\theta_j$, with $N\le(3D_\Theta/\epsilon)^{d_\theta}$ \citep{vers:18}. Since $\theta\mapsto(M_n-M)(\theta)$ is $2C$-Lipschitz, $|g(\theta)|\le|g(\theta_j)|+2C\epsilon$ and $\sup_\theta|g(\theta)|\le\max_j|g(\theta_j)|+2C\epsilon$. Choosing $\epsilon=\gamma_0/(8C)$ gives $\{\sup|g(\theta)|\ge\gamma_0\}\subseteq\{\max|g(\theta_j)|\ge3\gamma_0/4\}$, and the union bound over the net at $t=3\gamma_0/4$ yields $\P(\mathcal G_n^{(1),c})=O(e^{-c'n})$. Combining the two events, $\P(\mathcal G_n^c)=O(e^{-c'n})$.

\paragraph{Strong convexity and uniqueness (ii).} On $\mathcal G_n$, for every $\theta\in\overline{B(\theta_\infty^\star,r)}$ the $K_3$-Lipschitzness of $H_n(\cdot)$ from (A3) gives
\begin{align*}
\|H_n(\theta)-H_\infty\|_\op\le K_3\|\theta-\theta_\infty^\star\|+\|H_n(\theta_\infty^\star)-H_\infty\|_\op\le K_3 r+\frac{\mu_\infty}4\le\frac{\mu_\infty}8+\frac{\mu_\infty}4<\frac{\mu_\infty}2,
\end{align*}
so by Weyl's inequality and (A5), $H_n(\theta)\succeq\frac{\mu_\infty}2 I$, that is, $M_n$ is $\mu_\infty/2$-strongly convex on $\overline{B(\theta_\infty^\star,r)}$. On $\mathcal G_n$, a minimizer of $M_n$ lies in this convex ball, because the basic inequality applies to any minimizer, and a strongly convex function has at most one minimizer there, so $\theta_0^\star$ is the unique minimizer. As $r\le\rho_\infty/4$, the triangle inequality gives $\mathrm{dist}(\theta_0^\star,\partial\Theta)\ge\rho_\infty-r\ge\frac34\rho_\infty\ge\rho_\infty/2$, so $\theta_0^\star\in\mathrm{int}(\Theta)$ and the interior stationarity $\nabla_\theta L(\theta_0^\star;n^{-1}\mathbf 1_n)=\frac1n\sum_i\psi_i=0$ holds. Finally $H_{0,n}=H_n(\theta_0^\star)\succeq\mu_\infty/2\,I$ since $\theta_0^\star\in\overline{B(\theta_\infty^\star,r)}$.

\paragraph{$L^2$ rate (iii).} Split $\E_{\bX_{1:n}}\|\Delta_n\|^2=\E_{\bX_{1:n}}[\|\Delta_n\|^2\mathbf 1_{\mathcal G_n}]+\E_{\bX_{1:n}}[\|\Delta_n\|^2\mathbf 1_{\mathcal G_n^c}]$. On $\mathcal G_n$, since $r\le\rho_\infty/4<\rho_\infty$, we have $\theta_0^\star\in B(\theta_\infty^\star,r)\subset\mathrm{int}(\Theta)$, so the empirical first-order condition $\nabla M_n(\theta_0^\star)=0$ holds. Applying the fundamental theorem of calculus to $\nabla M_n$ along $[\theta_\infty^\star,\theta_0^\star]$ reads $0=\nabla M_n(\theta_\infty^\star)+\tilde H_n\Delta_n$ with $\tilde H_n:=\int_0^1 H_n(\theta_\infty^\star+t\Delta_n)\,dt$. For each $t$, $\|\theta_\infty^\star+t\Delta_n-\theta_\infty^\star\|=t\|\Delta_n\|\le r$, and $H_n$ is $K_3$-Lipschitz in $\theta$ by (A3), so
\begin{align*}
\|H_n(\theta_\infty^\star+t\Delta_n)-H_\infty\|_\op\le\underbrace{\|H_n(\theta_\infty^\star+t\Delta_n)-H_n(\theta_\infty^\star)\|_\op}_{\le K_3 r\le\mu_\infty/4}+\underbrace{\|H_n(\theta_\infty^\star)-H_\infty\|_\op}_{\le\mu_\infty/4\text{ on }\mathcal G_n^{(2)}}\le\frac{\mu_\infty}2.
\end{align*}
By Weyl's inequality and (A5), $H_n(\theta_\infty^\star+t\Delta_n)\succeq\frac{\mu_\infty}2 I$, hence $\tilde H_n\succeq\frac{\mu_\infty}2 I$ is invertible with $\|\tilde H_n^{-1}\|_\op\le2/\mu_\infty$. Therefore $\|\Delta_n\|\le\frac2{\mu_\infty}\|\nabla M_n(\theta_\infty^\star)\|$. The score $\nabla M_n(\theta_\infty^\star)=\frac1n\sum_i\nabla_\theta l(X_i,\theta_\infty^\star)$ is an average of i.i.d. centered vectors bounded by $C_\infty$, so $\E_{\bX_{1:n}}\|\nabla M_n(\theta_\infty^\star)\|^2=\frac1n\E\|\nabla_\theta l(X,\theta_\infty^\star)\|^2\le\frac{C^2}{n}$, giving
\begin{align*}
\E_{\bX_{1:n}}[\|\Delta_n\|^2\mathbf 1_{\mathcal G_n}]\le\frac{4}{\mu_\infty^2}\E_{\bX_{1:n}}\|\nabla M_n(\theta_\infty^\star)\|^2\le\frac{4C_\infty^2}{\mu_\infty^2 n}.
\end{align*}
Off $\mathcal G_n$, $\|\Delta_n\|\le D_\Theta$ and $\P(\mathcal G_n^c)=O(e^{-c'n})$, so $\E_{\bX_{1:n}}[\|\Delta_n\|^2\mathbf 1_{\mathcal G_n^c}]=O(e^{-c'n})$. Adding the two parts, $\E_{\bX_{1:n}}\|\Delta_n\|^2=O(1/n)$.
\end{proof}

\begin{lemma}
\label{lem:cm-lin}
Let $\bullet\in\{\sub,\Diri\}$ and recall the localization radius $r=\min\{\frac{\rho_\infty}{4},\ \frac{\mu_\infty}{8K_3}\}$. On $\mathcal G_n$, conditionally on $(\bX_{1:n},\pi)$, there is an event $\tilde E_n^\bullet$ on which $\theta_\bullet^\star$ is the unique minimizer of the weighted risk in $\{\|\theta-\theta_0^\star\|\le r\}$ and
\begin{align*}
\sqrt n (\theta_\bullet^\star-\theta_0^\star)=-H_{0,n}^{-1}\sqrt n g_\bullet+r_\bullet,
\end{align*}
with, for the block case,
\begin{align*}
&\P(\tilde E_n^{\sub c}\mid \bX_{1:n},\pi)\le\frac{8^2}{\mu_\infty^2(S\gamma+1)}\Big(\bar\kappa_n^2+\frac{\bar\sigma_{\psi,n}^2}{r^2}\Big)=O\Big(\frac{\bar\sigma_{\psi,n}^2+\bar\kappa_n^2}{S\gamma}\Big),\\
&\E[\|r_\sub\| \mathbf 1_{\tilde E_n^\sub}\mid \bX_{1:n},\pi]\le\Phi_\infty \frac{\sqrt n}{S\gamma}(\bar\sigma_{\psi,n}^2+\bar\kappa_n^2),
\end{align*}
and the per-record analog obtained by replacing $(S,\gamma,\bar\sigma_{\psi,n}^2,\bar\kappa_n^2)$ with $(n,1,\sigma_{\psi,n}^2,\kappa_n^2)$. Here $\Phi_\infty$ is a population constant depending only on $\mu_\infty,K_3,\diam(\Theta),r$.
\end{lemma}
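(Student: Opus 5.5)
We work conditionally on $(\bX_{1:n},\pi)$ on $\mathcal G_n$ and treat the block case; the per-record case follows verbatim with $(S,\gamma,\bar\sigma_{\psi,n}^2,\bar\kappa_n^2)$ replaced by $(n,1,\sigma_{\psi,n}^2,\kappa_n^2)$, since Lemma~\ref{lem:cm-diri} supplies the matching second moments. Write the weighted gradient and Hessian at $\theta_0^\star$ as $G(\theta_0^\star;\bar\bw)=g_\sub$ and $H(\theta_0^\star;\bar\bw)=H_{0,n}+B_\sub$. This uses $\sum_i\psi_i=0$ from Lemma~\ref{lem:cm-reg}(ii) and $\sum_i\bar w_i=1$.

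\textbf{Define the good event.} Set
\begin{align*}
\tilde E_n^\sub:=\bigl\{\|B_\sub\|_\op\le\mu_\infty/8\bigr\}\cap\bigl\{\|g_\sub\|\le \mu_\infty r/8\bigr\}.
\end{align*}
On $\mathcal G_n$, Lemma~\ref{lem:cm-reg}(ii) gives $H_{0,n}\succeq(\mu_\infty/2)I$. For $\|\theta-\theta_0^\star\|\le r$, $K_3$-Lipschitzness of the weighted Hessian gives $\|H(\theta;\bar\bw)-H(\theta_0^\star;\bar\bw)\|_\op\le K_3r\le\mu_\infty/8$. Hence on $\tilde E_n^\sub$ we have $H(\theta;\bar\bw)\succeq(\mu_\infty/4)I$ on the ball, so $L(\cdot;\bar\bw)$ is strongly convex there.

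\textbf{Existence, uniqueness and the radius.} A minimizer exists in the closed ball. Strong convexity with modulus $\mu_\infty/4$ gives $\|\theta_\sub^\star-\theta_0^\star\|\le (4/\mu_\infty)\|g_\sub\|\le r/2$, so the minimizer is interior and stationary. The ball lies in $\mathrm{int}(\Theta)$ because $\mathrm{dist}(\theta_0^\star,\partial\Theta)\ge\rho_\infty/2>r$, and strong convexity makes the minimizer unique there.

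\textbf{Probability bound.} Markov's inequality and $\|\cdot\|_\op\le\|\cdot\|_F$ give
\begin{align*}
\P(\tilde E_n^{\sub c})\le \frac{8^2}{\mu_\infty^2}\E_u\|B_\sub\|_F^2+\frac{8^2}{\mu_\infty^2r^2}\E_u\|g_\sub\|^2.
\end{align*}
Lemma~\ref{lem:cm-diri} gives $\E_u\|B_\sub\|_F^2=\bar\kappa_n^2/(S\gamma+1)$ and $\E_u\|g_\sub\|^2=\bar\sigma_{\psi,n}^2/(S\gamma+1)$, which is exactly the stated bound.

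\textbf{Linearization and the remainder.} Write $\Delta:=\theta_\sub^\star-\theta_0^\star$. The fundamental theorem of calculus gives $0=g_\sub+\tilde H\Delta$ with $\tilde H:=\int_0^1H(\theta_0^\star+t\Delta;\bar\bw)\,dt$. Split
\begin{align*}
\tilde H=H_{0,n}+B_\sub+E,\qquad \|E\|_\op\le K_3\|\Delta\|/2.
\end{align*}
Then $\Delta=-H_{0,n}^{-1}g_\sub+H_{0,n}^{-1}(B_\sub+E)\Delta$, so
\begin{align*}
r_\sub=\sqrt n\,H_{0,n}^{-1}(B_\sub+E)\Delta,\qquad \|r_\sub\|\le \sqrt n\,\tfrac{2}{\mu_\infty}\bigl(\|B_\sub\|_F+\tfrac{K_3}{2}\|\Delta\|\bigr)\|\Delta\|.
\end{align*}
On $\tilde E_n^\sub$ we have $\|\Delta\|\le(4/\mu_\infty)\|g_\sub\|$, hence
\begin{align*}
\|r_\sub\|\lesssim\sqrt n\bigl(\|B_\sub\|_F\|g_\sub\|+\|g_\sub\|^2\bigr)\le\sqrt n\bigl(\tfrac12\|B_\sub\|_F^2+\tfrac32\|g_\sub\|^2\bigr).
\end{align*}
Taking $\E_u$ and applying Lemma~\ref{lem:cm-diri} again gives the bound $\Phi_\infty\sqrt n(\bar\sigma_{\psi,n}^2+\bar\kappa_n^2)/(S\gamma)$, using $S\gamma+1\ge S\gamma$. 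Here $\Phi_\infty$ collects $\mu_\infty^{-1},K_3$; the factor $r$ enters only through the probability bound.

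\textbf{Main obstacle.} The delicate point is uniformity. The remainder bound must avoid fourth moments and rely only on second moments. The Cauchy--Schwarz/AM--GM step that converts the product $\|B_\sub\|_F\|g_\sub\|$ into squares does this. The Hessian-perturbation step must also be controlled by the fixed localization radius $r$ and not by $\Delta$ itself. Both are handled by defining $\tilde E_n^\sub$ through $g_\sub$ and $B_\sub$ before knowing where the minimizer lies.
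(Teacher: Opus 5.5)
Your argument is essentially the paper's proof. The pieces match one for one:
\begin{itemize}
\item the event $\tilde E_n^\sub$ is the same (the paper's thresholds $\mu_n/4$ and $\mu_n r/4$, with $\mu_n=\mu_\infty/2$, are your $\mu_\infty/8$ and $\mu_\infty r/8$);
\item the Weyl/Lipschitz lower bound $(\mu_\infty/4)I$ on the ball is the same;
\item the Markov step through Lemma~\ref{lem:cm-diri} is the same and gives exactly the stated probability bound;
\item your remainder $-\sqrt n\,H_{0,n}^{-1}(B_\sub+E)\Delta$ is the paper's $\sqrt n\,H_{0,n}^{-1}(H_\sub^\sharp-H_{0,n})(H_\sub^\sharp)^{-1}g_\sub$ (you wrote it with a $+$ sign, which is a harmless slip since only norms are used).
\end{itemize}
The only variation is how you show the minimizer lies inside the ball. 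You use the constrained strong-convexity bound $\|\Delta\|\le(4/\mu_\infty)\|g_\sub\|\le r/2$, while the paper uses a positive outward radial derivative on the sphere; both work.

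One link is missing. What you construct is the minimizer of $L(\cdot;\bar\bw)$ over the ball $\{\|\theta-\theta_0^\star\|\le r\}$. The lemma, however, is about $\theta^\star_\sub$, which is defined as the argmin over all of $\Theta$. Strong convexity on the ball alone does not rule out a lower value of $L(\cdot;\bar\bw)$ elsewhere in $\Theta$, in which case you would have linearized the wrong point. You need to add, as the paper does, that by (A4) $L(\cdot;\bar\bw)$ is convex on $\Theta$. Your interior stationary point is then a global minimizer, and local strong convexity makes it the unique one, so it coincides with $\theta^\star_\sub$.
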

\begin{proof}
We give the block case ($\bullet=\sub$, weights $\bar w$). The per-record case is identical with $(S,\gamma,\bar\psi_s,\bar A_s,\bar\sigma_{\psi,n}^2,\bar\kappa_n^2)$ replaced by $(n,1,\psi_i,A_i,\sigma_{\psi,n}^2,\kappa_n^2)$. Throughout we work on $\mathcal G_n$, where $H_{0,n}\succeq \mu_n I$ with $\mu_n=\mu_\infty/2$, the first-order condition $\sum_i\psi_i=0$ holds, and $\overline{B(\theta_0^\star,r)}\subset\mathrm{int}(\Theta)$ since $r\le\rho_\infty/4\le\frac12\mathrm{dist}(\theta_0^\star,\partial\Theta)$ by Lemma~\ref{lem:cm-reg}. Define
\begin{align*}
\tilde E_n^\sub:=\{\|B_\sub\|_\op\le\tfrac{\mu_n}{4}\}\cap\{\|g_\sub\|\le\tfrac{\mu_n r}{4}\}.
\end{align*}
Because $\sum_i(\bar w_i-\frac1n)=0$ on $\mathcal G_n$, we have the two boundary identities
\begin{align*}
G(\theta_0^\star;\bar \bw)&=G(\theta_0^\star;\bar \bw)-\underbrace{G(\theta_0^\star;n^{-1}\mathbf 1_n)}_{=0}=\sum_i(\bar w_i-\tfrac1n)\psi_i=g_\sub,\\
H(\theta_0^\star;\bar \bw)&=H_{0,n}+\sum_i(\bar w_i-\tfrac1n)A_i=H_{0,n}+B_\sub.
\end{align*}

\paragraph{Positive definiteness on the ball.} For $\theta\in\overline{B(\theta_0^\star,r)}$, the (A3) Lipschitzness gives $\|H(\theta;\bar \bw)-H(\theta_0^\star;\bar \bw)\|_\op\le K_3\|\theta-\theta_0^\star\|\le K_3 r\le\frac{\mu_n}4$, since $r\le\mu_\infty/(8K_3)=\mu_n/(4K_3)$. On $\tilde E_n^\sub$, $H(\theta_0^\star;\bar \bw)=H_{0,n}+B_\sub\succeq\mu_n I-\frac{\mu_n}4 I$, so by Weyl's inequality
\begin{align*}
H(\theta;\bar \bw)\succeq\mu_n I-\tfrac{\mu_n}4 I-\tfrac{\mu_n}4 I=\tfrac{\mu_n}2 I,\qquad\theta\in\overline{B(\theta_0^\star,r)}.
\end{align*}
Thus $L(\cdot;\bar \bw)$ is $\frac{\mu_n}2$-strongly convex on this ball.

\paragraph{The minimizer lies in the open ball.} The function $L(\cdot;\bar \bw)=\sum_{i=1}^n \bar w_i l(X_i,\cdot)$ is continuous on the compact ball, so it attains a minimum there, and we show the minimum cannot occur on $\partial B(\theta_0^\star,r)$. For $\theta\in\partial B(\theta_0^\star,r)$, the fundamental theorem of calculus and the boundary identity for $G(\theta_0^\star;\bar \bw)$ give $G(\theta;\bar \bw)=g_\sub+\bar H_\theta(\theta-\theta_0^\star)$ with $\bar H_\theta:=\int_0^1 H(\theta_0^\star+t(\theta-\theta_0^\star);\bar \bw)\,dt\succeq\frac{\mu_n}2 I$ by the positive-definiteness bound above. Taking the inner product with $\theta-\theta_0^\star$ (of norm $r$) and using Cauchy-Schwarz and $\|g_\sub\|\le\frac{\mu_n r}4$,
\begin{align*}
(\theta-\theta_0^\star)^\top G(\theta;\bar \bw)\ge -r\|g_\sub\|+\tfrac{\mu_n}2 r^2\ge -\tfrac{\mu_n r^2}4+\tfrac{\mu_n}2 r^2=\tfrac{\mu_n r^2}4>0.
\end{align*}
The outward radial derivative is strictly positive on the whole sphere, so the minimum lies in the open ball. Call it $\theta_\sub^\star$, a stationary point with $G(\theta_\sub^\star;\bar \bw)=0$. By (A4) each $l(X_i,\cdot)$ is convex, so $L(\cdot;\bar \bw)$ is convex on $\Theta$ and this interior stationary point is a global minimizer.

\paragraph{Linearization.} Define $\Delta_\sub:=\theta_\sub^\star-\theta_0^\star$ and $H_\sub^\sharp:=\int_0^1 H(\theta_0^\star+t\Delta_\sub;\bar \bw)\,dt$. Recall from the boundary identity that $G(\theta_0^\star;\bar \bw)=\sum_i(\bar w_i-\frac1n)\psi_i=\sum_s(u_s-\frac1S)\bar\psi_s=g_\sub$. By the fundamental theorem of calculus,
\begin{align*}
0=G(\theta_\sub^\star;\bar \bw)=G(\theta_0^\star;\bar\bw)+H_\sub^\sharp \Delta_\sub=g_\sub+H_\sub^\sharp\Delta_\sub.
\end{align*}
Thus $H_\sub^\sharp\Delta_\sub=-g_\sub$, and since $\lambda_{\min}(H_\sub^\sharp)\ge\frac{\mu_n}2$ by integrating the positive-definiteness bound, $\|\Delta_\sub\|\le\frac{2}{\mu_n}\|g_\sub\|\le\frac r2$. Subtracting $-H_{0,n}^{-1}\sqrt n g_\sub$ from $\sqrt n\Delta_\sub=-(H_\sub^\sharp)^{-1}\sqrt n g_\sub$ and using the resolvent identity $(H_\sub^\sharp)^{-1}-H_{0,n}^{-1}=-H_{0,n}^{-1}(H_\sub^\sharp-H_{0,n})(H_\sub^\sharp)^{-1}$,
\begin{align*}
\sqrt n \Delta_\sub=-H_{0,n}^{-1}\sqrt n g_\sub+r_\sub,
\qquad
r_\sub=\sqrt n H_{0,n}^{-1}(H_\sub^\sharp-H_{0,n})(H_\sub^\sharp)^{-1}g_\sub.
\end{align*}
Writing $H_\sub^\sharp-H_{0,n}=(H_\sub^\sharp-H(\theta_0^\star;\bar w))+B_\sub$, the integral form and (A3) give
\begin{align*}
\|H_\sub^\sharp-H(\theta_0^\star;\bar w)\|_\op
&=\Big\|\int_0^1 [H(\theta_0^\star+t\Delta_\sub;\bar \bw)-H(\theta_0^\star;\bar w)] dt\Big\|_\op\\
&\le K_3\|\Delta_\sub\|\int_0^1 t dt=\tfrac{K_3}2\|\Delta_\sub\|.
\end{align*}
Since $\frac{K_3}2\|\Delta_\sub\|\le\frac{K_3}{\mu_n}\|g_\sub\|$ and $\|H_{0,n}^{-1}\|_\op,\|(H_\sub^\sharp)^{-1}\|_\op\le\frac{2}{\mu_n}$,
\begin{align*}
\|r_\sub\|\le\frac{4\sqrt n}{\mu_n^2}\Big(\|B_\sub\|_\op\|g_\sub\|+\tfrac{K_3}{\mu_n}\|g_\sub\|^2\Big).
\end{align*}

\paragraph{Remainder expectation.} Taking $\E_u[\,\cdot\,\mathbf 1_{\tilde E_n^\sub}\mid \bX_{1:n},\pi]$, dropping the nonnegative indicator on the right, and applying Cauchy-Schwarz with $\|B_\sub\|_\op\le\|B_\sub\|_F$ together with Lemma~\ref{lem:cm-diri} ($\E_u\|g_\sub\|^2=\frac{\bar\sigma_{\psi,n}^2}{S\gamma+1}$ and $\E_u\|B_\sub\|_F^2=\frac{\bar\kappa_n^2}{S\gamma+1}$),
\begin{align*}
\E_u[\|r_\sub\|\mathbf 1_{\tilde E_n^\sub}\mid \bX_{1:n},\pi]
\le\frac{4\sqrt n}{\mu_n^2}\Big(\frac{\bar\kappa_n\bar\sigma_{\psi,n}}{S\gamma+1}+\frac{K_3}{\mu_n}\frac{\bar\sigma_{\psi,n}^2}{S\gamma+1}\Big)
\le\Phi_\infty \frac{\sqrt n}{S\gamma}(\bar\sigma_{\psi,n}^2+\bar\kappa_n^2),
\end{align*}
using $\bar\kappa_n\bar\sigma_{\psi,n}\le\frac12(\bar\kappa_n^2+\bar\sigma_{\psi,n}^2)$, $S\gamma+1\ge S\gamma$, and $\Phi_\infty:=\frac{4}{\mu_n^2}(\frac12+\frac{K_3}{\mu_n})$.

\paragraph{Localization probability.} By the union bound and conditional Markov,
\begin{align*}
\P(\tilde E_n^{\sub c}\mid \bX_{1:n},\pi)
&\le\P(\|B_\sub\|_F>\tfrac{\mu_n}4)+\P(\|g_\sub\|>\tfrac{\mu_n r}4)\\
&\le\frac{16 \bar\kappa_n^2}{\mu_n^2(S\gamma+1)}+\frac{16 \bar\sigma_{\psi,n}^2}{\mu_n^2 r^2(S\gamma+1)}=O\Big(\frac{\bar\sigma_{\psi,n}^2+\bar\kappa_n^2}{S\gamma}\Big),
\end{align*}
where $\mu_n=\mu_\infty/2$, again by Lemma~\ref{lem:cm-diri} and the boundedness of $\bar\kappa_n^2,\bar\sigma_{\psi,n}^2$ on $\mathcal G_n$.
\end{proof}

\subsubsection{Proof of Proposition~\ref{prop:choice-alpha}}
\label{app:proof-prop}

On $\mathcal G_n$, we define the (unscaled) linearized characterization.
\begin{align}\label{eqn:vm-lin}
r_\bullet:=\theta_\bullet^\star-\theta_0^\star+H_{0,n}^{-1}g_\bullet,
\qquad\text{so}\qquad
\theta_\bullet^\star-\theta_0^\star=-H_{0,n}^{-1}g_\bullet+r_\bullet,\quad\bullet\in\{\Diri,\sub\},
\end{align}
which is an identity by definition, so $\sqrt n r_\bullet$ is the remainder of Lemma~\ref{lem:cm-lin} (here $r$ without a subscript continues to denote the localization radius). By that lemma, dividing its remainder bound by $\sqrt n$, on the localization event $\tilde E_n^\bullet$ where $\|B_\bullet\|_\op\le\mu_n/4$ and $\|g_\bullet\|\le\mu_n r/4$,
\begin{align}\label{eqn:vm-Rptwise}
\|r_\bullet\|\le\frac{4}{\mu_n^2}(\|B_\bullet\|_\op+\tfrac{K_3}{\mu_n}\|g_\bullet\|)\|g_\bullet\|.
\end{align}

\paragraph{Marginal moment rates.} By the partition mean of Lemma~\ref{lem:cm-diri} and the tower property, $\E_\pi[\bar\sigma_{\psi,n}^2\mid \bX_{1:n}]=\tr\E_\pi[\bar\Sigma_{\psi,n}\mid \bX_{1:n}]=\frac{S-1}{n-1}\sigma_{\psi,n}^2$, and the same computation with $\psi_i\mapsto A_i$ in the Frobenius geometry gives $\E_\pi[\bar\kappa_n^2\mid \bX_{1:n}]=\frac{S-1}{n-1}\kappa_n^2$. Since $\sigma_{\psi,n}^2\le(C)^2$ and $\kappa_n^2\le 4d_\theta K_2^2$ deterministically by (A3),
\begin{align}\label{eqn:vm-margrate}
\E_{\bX_{1:n},\pi}\bar\sigma_{\psi,n}^2=\frac{S-1}{n-1}\E_{\bX_{1:n}}\sigma_{\psi,n}^2=O(S/n),
\qquad
\E_{\bX_{1:n},\pi}\bar\kappa_n^2=O(S/n).
\end{align}

\begin{lemma}
\label{lem:cm-rem2}
For $\bw\sim\Diri(\mathbf 1_n)$ and $\bu\sim\Diri(\gamma\mathbf 1_S)$, $\gamma\in\Gamma=[\underline\gamma,\overline\gamma]\subset(0,\infty)$, with $r_\bullet$ the linear approximation \eqref{eqn:vm-lin} defined on $\mathcal G_n$ (so $\E\|r_\bullet\|^2$ abbreviates $\E[\|r_\bullet\|^2\mathbf 1_{\mathcal G_n}]$), uniformly over $\gamma\in\Gamma$,
\begin{align*}
\E\|r_\Diri\|^2=O(n^{-2}),\qquad \E\|r_\sub\|^2=O(\gamma^{-3}n^{-2})+O(\gamma^{-3}S^{-1}n^{-2}).
\end{align*}
\end{lemma}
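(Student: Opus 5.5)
We will bound $\E[\|r_\bullet\|^2\mathbf 1_{\mathcal G_n}]$ by splitting on the localization event $\tilde E_n^\bullet$ of Lemma~\ref{lem:cm-lin}:
\begin{align*}
\E[\|r_\bullet\|^2\mathbf 1_{\mathcal G_n}]=\E[\|r_\bullet\|^2\mathbf 1_{\mathcal G_n\cap\tilde E_n^\bullet}]+\E[\|r_\bullet\|^2\mathbf 1_{\mathcal G_n\cap\tilde E_n^{\bullet c}}].
\end{align*}

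\textbf{Good event.} On $\tilde E_n^\bullet$, the pointwise bound \eqref{eqn:vm-Rptwise} together with $(a+b)^2\le 2a^2+2b^2$ and $\|B_\bullet\|_\op\le\|B_\bullet\|_F$ gives
\begin{align*}
\|r_\bullet\|^2\lesssim \|B_\bullet\|_F^2\|g_\bullet\|^2+\|g_\bullet\|^4 .
\end{align*}
Cauchy--Schwarz then yields
\begin{align*}
\E[\|B_\bullet\|_F^2\|g_\bullet\|^2]\le(\E\|B_\bullet\|_F^4)^{1/2}(\E\|g_\bullet\|^4)^{1/2}.
\end{align*}
Both fourth moments are supplied by Lemma~\ref{lem:cm-four}, conditionally on $\bX_{1:n}$ on $\mathcal G_n$ and after averaging over $(u,\pi)$ in the block case.
\begin{itemize}
\item Per-record case: each fourth moment is $O(n^{-2})$, so this piece is $O(n^{-2})$.
\item Block case: each fourth moment is $O(\gamma^{-3}n^{-2})+O(\gamma^{-3}S^{-1}n^{-2})$, and the geometric mean of two such terms is of the same order.
\end{itemize}
The constants in Lemma~\ref{lem:cm-four} depend only on $(C,K_2,d_\theta,\underline\gamma)$ and are uniform in $S$. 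Taking $\E_{\bX_{1:n}}$ over $\mathcal G_n$ therefore preserves the rates uniformly over $\gamma\in\Gamma$.

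\textbf{Bad event.} On $\mathcal G_n\cap\tilde E_n^{\bullet c}$, a crude deterministic bound suffices. Both $\theta^\star_\bullet$ and $\theta_0^\star$ lie in $\Theta$, so $\|\theta^\star_\bullet-\theta_0^\star\|\le D_\Theta$. Also $\|H_{0,n}^{-1}g_\bullet\|\le(2/\mu_\infty)\|g_\bullet\|\le(2/\mu_\infty)C$, since the weights sum to one. Hence $\|r_\bullet\|\le D_\Theta+2C/\mu_\infty$, and it suffices to show $\P(\tilde E_n^{\bullet c}\mid\bX_{1:n},\pi)=O(\gamma^{-3}n^{-2})$, uniformly.

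The Chebyshev-type bound in Lemma~\ref{lem:cm-lin} only gives order $(S\gamma)^{-1}\cdot S/n$, which is too weak. Instead we use a fourth-moment Markov inequality:
\begin{align*}
\P(\|g_\bullet\|>\mu_n r/4)\le(4/(\mu_n r))^4\,\E\|g_\bullet\|^4,
\qquad
\P(\|B_\bullet\|_F>\mu_n/4)\le(4/\mu_n)^4\,\E\|B_\bullet\|_F^4 .
\end{align*}
By Lemma~\ref{lem:cm-four} again, both right-hand sides are of the required order, so the bad-event piece is $O(\gamma^{-3}n^{-2})+O(\gamma^{-3}S^{-1}n^{-2})$ in the block case and $O(n^{-2})$ in the per-record case. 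In the block case the partition average is taken inside Lemma~\ref{lem:cm-four}, so we apply Markov conditionally on $\bX_{1:n}$ with the $(u,\pi)$-averaged fourth moments rather than conditionally on $\pi$.

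\textbf{Main obstacle.} The delicate step is this bad-event control. The conditional second-moment localization bound of Lemma~\ref{lem:cm-lin} does not reach the $n^{-2}$ rate for fixed $S$, so one must upgrade to fourth moments. The fourth moments must also be handled jointly over $(u,\pi)$ so that the $S$-uniform rate of Lemma~\ref{lem:cm-four} is inherited rather than a rate depending on the realized partition.

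Adding the two pieces gives the stated bounds for both $\bullet\in\{\Diri,\sub\}$.
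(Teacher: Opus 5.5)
Your proposal is correct and follows the paper's proof essentially step for step. You split $\mathcal G_n$ along the localization event $\tilde E_n^\bullet$; on the good event you use the squared pointwise remainder bound, Cauchy--Schwarz and the fourth moments of Lemma~\ref{lem:cm-four}; on the bad event you use a fourth-moment Markov inequality with a crude deterministic bound on $\|r_\bullet\|$, where your explicit constant $D_\Theta+2C/\mu_\infty$ makes concrete the unspecified $D'$ in the paper, and your averaging over $(\bu,\pi)$ given $\bX_{1:n}$ matches the paper's $\E_{\bX_{1:n},\pi}\P(\tilde E_n^{\sub c}\mid\bX_{1:n},\pi)$ step.
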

\begin{proof}
We prove the block bound. The per-record one is identical with $(S,\gamma,g_\sub,B_\sub)\mapsto(n,1,g_\Diri,B_\Diri)$ and no partition averaging. From \eqref{eqn:vm-Rptwise},
\begin{align*}
\|r_\sub\|\le\frac{4}{\mu_n^2}(\|B_\sub\|_\op+\tfrac{K_3}{\mu_n}\|g_\sub\|)\|g_\sub\|,
\end{align*}
on $\tilde E_n^\sub$, which is second order in the pair $(g_\sub,B_\sub)$. Hence on $\tilde E_n^\sub$, with $\|B_\sub\|_\op\le\|B_\sub\|_F$,
\begin{align*}
\|r_\sub\|^2\le\frac{32}{\mu_n^4} \|B_\sub\|_F^2\|g_\sub\|^2+\frac{32K_3^2}{\mu_n^6} \|g_\sub\|^4,
\end{align*}
and Cauchy-Schwarz with the fourth moments of Lemma~\ref{lem:cm-four} give
\begin{align*}
    \E[\|r_\sub\|^2\mathbf 1_{\tilde E_n^\sub}]
&\le\frac{32}{\mu_n^4}(\E\|B_\sub\|_F^4)^{1/2}(\E\|g_\sub\|^4)^{1/2}
+\frac{32K_3^2}{\mu_n^6} \E\|g_\sub\|^4, \\ 
&=O(\gamma^{-3}n^{-2})+O(\gamma^{-3}S^{-1}n^{-2}),
\end{align*}
uniformly over $\gamma\in\Gamma$. Off the localization event, $\tilde E_n^{\sub c}\subseteq\{\|B_\sub\|_F>\mu_n/4\}\cup\{\|g_\sub\|>\mu_n r/4\}$, the Markov inequality and the observation $\|r_\sub\|\le D'$ leads to
\begin{align*}
    \E_{\bX_{1:n},\pi} \P(\tilde E_n^{\sub c}\mid \bX_{1:n},\pi)
&\le(\tfrac{4}{\mu_n})^4\E\|B_\sub\|_F^4+(\tfrac{4}{\mu_n r})^4\E\|g_\sub\|^4,\\ 
&=O(\gamma^{-3}n^{-2})+O(\gamma^{-3}S^{-1}n^{-2}),
\end{align*}
whence $\E[\|r_\sub\|^2\mathbf 1_{\tilde E_n^{\sub c}}]\le(D')^2 \E_{\bX_{1:n},\pi}\P(\tilde E_n^{\sub c}\mid \bX_{1:n},\pi)=O(\gamma^{-3}n^{-2})+O(\gamma^{-3}S^{-1}n^{-2})$. Adding the two parts proves the claim.
\end{proof}

\paragraph{Leading conditional second moments.} For any deterministic $M$ and mean-zero $g$, $\E\|Mg\|^2=\tr(M^\top M \E[gg^\top])$. Applying this with $M=H_{0,n}^{-1}$ and Lemma~\ref{lem:cm-diri} ($\E_w[g_\Diri g_\Diri^\top\mid \bX_{1:n}]=\Sigma_{\psi,n}/(n+1)$, $\E_u[g_\sub g_\sub^\top\mid \bX_{1:n},\pi]=\bar\Sigma_{\psi,n}/(S\gamma+1)$),
\begin{align*}
\E_w[\|H_{0,n}^{-1}g_\Diri\|^2\mid \bX_{1:n}]=\frac{\tr(H_{0,n}^{-2}\Sigma_{\psi,n})}{n+1},
\qquad
\E_u[\|H_{0,n}^{-1}g_\sub\|^2\mid \bX_{1:n},\pi]=\frac{\tr(H_{0,n}^{-2}\bar\Sigma_{\psi,n})}{S\gamma+1}.
\end{align*}
In addition, taking $\E_\pi[\cdot\mid \bX_{1:n}]$ in the second and using the partition mean of Lemma~\ref{lem:cm-diri} yields $\E_{u,\pi}[\|H_{0,n}^{-1}g_\sub\|^2\mid \bX_{1:n}]=\frac{S-1}{(S\gamma+1)(n-1)}\tr(H_{0,n}^{-2}\Sigma_{\psi,n})$. Writing $a_n:=\E_{\bX_{1:n}}[\mathbf 1_{\mathcal G_n}\tr(H_{0,n}^{-2}\Sigma_{\psi,n})]$, which satisfies $0\le a_n\le\mu_n^{-2}(C)^2=O(1)$ with $\mu_n=\mu_{\infty}/2$, since $\tr(H_{0,n}^{-2}\Sigma_{\psi,n})\le\|H_{0,n}^{-2}\|_\op\tr\Sigma_{\psi,n}\le\mu_n^{-2}(C)^2$ on $\mathcal G_n$,
\begin{align}\label{eqn:vm-lead}
\E\|H_{0,n}^{-1}g_\Diri\|^2=\frac{a_n}{n+1},\qquad
\E\|H_{0,n}^{-1}g_\sub\|^2=\frac{(S-1)a_n}{(S\gamma+1)(n-1)}.
\end{align}

\paragraph{Conditional variance and matching.}  In \eqref{eqn:vm-lin}, the observation $\E_\bullet[g_\bullet\mid \bX_{1:n}]=0$ gives $\theta_\bullet^\star-\E_\bullet[\theta_\bullet^\star\mid \bX_{1:n}]=-H_{0,n}^{-1}g_\bullet+(r_\bullet-\E_\bullet[r_\bullet\mid \bX_{1:n}])$ and thus 
\begin{align*}
\Var_\bullet(\theta_\bullet^\star\mid \bX_{1:n})=\E_\bullet[\|H_{0,n}^{-1}g_\bullet\|^2\mid \bX_{1:n}]+\Var_\bullet(r_\bullet\mid \bX_{1:n})-2 \E_\bullet[\langle H_{0,n}^{-1}g_\bullet, r_\bullet-\E_\bullet[r_\bullet\mid \bX_{1:n}]\rangle\mid \bX_{1:n}].
\end{align*}
We then take $\E_{\bX_{1:n}}$ to both sides. The remainder variance is controlled by Lemma~\ref{lem:cm-rem2} via $\E_{\bX_{1:n}}\Var_\bullet(r_\bullet\mid \bX_{1:n})\le\E\|r_\bullet\|^2$, and the cross term by Cauchy-Schwarz with \eqref{eqn:vm-lead} and Lemma~\ref{lem:cm-rem2}, $|\E\langle H_{0,n}^{-1}g_\bullet,r_\bullet-\E_\bullet[r_\bullet\mid \bX_{1:n}]\rangle|\le\sqrt{\E\|H_{0,n}^{-1}g_\bullet\|^2}\sqrt{\E\|r_\bullet\|^2}$. For the per-record law $\E\|r_\Diri\|^2=O(n^{-2})$ and $\E\|H_{0,n}^{-1}g_\Diri\|^2=\frac{a_n}{n+1}=O(1/n)$, so both error terms are $O(n^{-3/2})$. For the block law $\E\|r_\sub\|^2=O(\gamma^{-3}n^{-2})$, the cross term is $\sqrt{\E\|H_{0,n}^{-1}g_\sub\|^2} O(\gamma^{-3/2}n^{-1})$, and the remainder variance is $O(\gamma^{-3}n^{-2})$. Hence
\begin{align*}
    &\E_{\bX_{1:n}}[\Var_\Diri(\theta_\Diri^\star\mid \bX_{1:n})]=\frac{a_n}{n+1}+O(n^{-3/2}),
\\
&\E_{\bX_{1:n}}[\Var_\sub(\theta_\sub^\star\mid \bX_{1:n})]=\frac{(S-1)a_n}{(S\gamma+1)(n-1)}+R_\sub(\gamma),
\end{align*}
where $\Var_\sub(\theta_\sub^\star\mid \bX_{1:n})=\Var_{\bu,\pi}(\theta_\sub^\star\mid \bX_{1:n})$ is the joint $(\bu,\pi)$ variance, and $R_\sub(\gamma):=\sqrt{\E\|H_{0,n}^{-1}g_\sub\|^2} O(\gamma^{-3/2}n^{-1})+O(\gamma^{-3}n^{-2})$ with $\E\|H_{0,n}^{-1}g_\sub\|^2=\frac{(S-1)a_n}{(S\gamma+1)(n-1)}$. For the {partition-conditional} variance of the statement, we use the law of total variance over $(\bu,\pi)$ given $\bX_{1:n}$,
\begin{align*}
\E_{\bX_{1:n},\pi}\Var_\bu[\theta^\star_\sub\mid \bX_{1:n},\pi]
=\E_{\bX_{1:n}}\Var_{\bu,\pi}[\theta^\star_\sub\mid \bX_{1:n}]
-\E_{\bX_{1:n}}\Var_\pi(\E_\bu[\theta^\star_\sub\mid \bX_{1:n},\pi]).
\end{align*}
By the linearization~\eqref{eqn:vm-lin} and the centered score $\E_\bu[g_\sub\mid \bX_{1:n},\pi]=0$, $\E_\bu[\theta^\star_\sub\mid \bX_{1:n},\pi]-\theta^\star_0(\bX_{1:n})=\E_\bu[r_\sub\mid \bX_{1:n},\pi]$, and since $\theta^\star_0(\bX_{1:n})$ is $\pi$-free,
\begin{align*}
\E_{\bX_{1:n}}\Var_\pi(\E_\bu[\theta^\star_\sub\mid \bX_{1:n},\pi])
=\E_{\bX_{1:n}}\Var_\pi(\E_\bu[r_\sub\mid \bX_{1:n},\pi])
\le \E_{X}\E_{\pi,\bu}\|r_\sub\|^2=O(\gamma^{-3}n^{-2}),
\end{align*}
by Jensen and Lemma~\ref{lem:cm-rem2}. This partition-mean correction is of the order of the $O(\gamma^{-3}n^{-2})$ term already in $R_\sub(\gamma)$ and is absorbed into it, so $\E_{\bX_{1:n},\pi}\Var_\bu[\theta^\star_\sub\mid \bX_{1:n},\pi]=\frac{(S-1)a_n}{(S\gamma+1)(n-1)}+R_\sub(\gamma)$, the same leading term. Consequently,
\begin{align*}
\E_{\bX_{1:n},\pi}\Var_\bu[\theta^\star_\sub\mid \bX_{1:n},\pi]-\E_{\bX_{1:n}}\Var_\bw[\theta^\star_\Diri\mid \bX_{1:n}]
=a_{S,n}(\gamma)a_n+R_\sub(\gamma)+O(n^{-3/2}).
\end{align*}
Using the exact value $\E\|H_{0,n}^{-1}g_\sub\|^2=\frac{(S-1)a_n}{(S\gamma+1)(n-1)}$, the cross term $\sqrt{\E\|H_{0,n}^{-1}g_\sub\|^2}\,O(\gamma^{-3/2}n^{-1})=O\!\Big(\gamma^{-3/2}\sqrt{\tfrac{S-1}{S\gamma+1}}\,n^{-3/2}\Big)$, so $R_\sub(\gamma)=O(\gamma^{-3/2}\sqrt{\tfrac{S-1}{S\gamma+1}}\,n^{-3/2}+\gamma^{-3}n^{-2})$. This gives the stated expansion
\begin{align*}
    \E_{\bX_{1:n},\pi}\Var_\bu[\theta^\star_\sub\mid \bX_{1:n},\pi]&-\E_{\bX_{1:n}}\Var_\bw[\theta^\star_\Diri\mid \bX_{1:n}]
\\
&=a_{S,n}(\gamma)a_n+O((1+\sqrt{\tfrac{S-1}{S\gamma+1}}\gamma^{-3/2})n^{-3/2}+\gamma^{-3}n^{-2}).
\end{align*}

At $\gamma=\gamma^\star(n,S)$ one has $S\gamma^\star+1=\frac{(S-1)(n+1)}{n-1}$, so $\frac{S-1}{(S\gamma^\star+1)(n-1)}=\frac{1}{n+1}$ and the leading terms coincide exactly, i.e., $\E\|H_{0,n}^{-1}g_\sub\|^2=\frac{a_n}{n+1}=O(1/n)$ at $\gamma^\star$. Thus,
\begin{align*}
R_\sub(\gamma^\star)=\sqrt{\tfrac{a_n}{n+1}} O((\gamma^\star)^{-3/2}n^{-1})+O((\gamma^\star)^{-3}n^{-2})=O((\gamma^\star)^{-3/2}n^{-3/2}).
\end{align*}
Since $\gamma^\star=\{S(n+1)-2n\}/\{S(n-1)\}\ge\frac{S-2}{S}$ for $S\ge 3$, $(\gamma^\star)^{-3/2}\le(\tfrac{S}{S-2})^{3/2}$, and therefore
\begin{align*}
&\E_{\bX_{1:n},\pi}[\Var_{\bu}(\theta_\sub^\star\mid \bX_{1:n},\pi)]-\E_{\bX_{1:n}}[\Var_{\bw}(\theta_\Diri^\star\mid \bX_{1:n})]\\
&=R_\sub(\gamma^\star)+O(n^{-3/2})=O((S/(S-2))^{3/2}n^{-3/2}),
\end{align*}
which is the variance-matching claim. For fixed $S\ge3$ this is $o(n^{-1})$, so the scaled variances agree in the limit, $n \{\E_{\bX_{1:n},\pi}\Var_\bu[\theta^\star_\sub\mid \bX_{1:n},\pi]-\E_{\bX_{1:n}}\Var_\bw[\theta^\star_\Diri\mid \bX_{1:n}]\}\to0$.

\paragraph{Mean matching (any $\gamma>0$).} On $\mathcal G_n$, $\E_\bullet[H_{0,n}^{-1}g_\bullet\mid \bX_{1:n}]=H_{0,n}^{-1}\E_\bullet[g_\bullet\mid \bX_{1:n}]=0$, so by \eqref{eqn:vm-lin}, $\E[(\theta_\bullet^\star-\theta_0^\star)\mathbf 1_{\mathcal G_n}]=\E[r_\bullet\mathbf 1_{\mathcal G_n}]$ and, adding the $O(e^{-c'n})$ contribution of $\mathcal G_n^c$, $\|\E[\theta_\bullet^\star-\theta_0^\star]\|\le\E\|r_\bullet\|+O(e^{-c'n})$. The mean is the expectation of the quadratic defect, so we bound the first moment $\E\|r_\bullet\|$ directly off \eqref{eqn:vm-Rptwise}, $\|r_\bullet\|\le\frac{4}{\mu_n^2}(\|B_\bullet\|_\op+\frac{K_3}{\mu_n}\|g_\bullet\|)\|g_\bullet\|$ on $\tilde E_n^\bullet$, using only the second moments of Lemma~\ref{lem:cm-diri}. After averaging the partition, $\E\|g_\bullet\|^2=c_\bullet \sigma_{\psi,n}^2$ and $\E\|B_\bullet\|_F^2=c_\bullet \kappa_n^2$ with
\begin{align*}
c_\Diri=\tfrac1{n+1},\qquad c_\sub=\tfrac{S-1}{(S\gamma+1)(n-1)} .
\end{align*}
By Cauchy-Schwarz, $\E[\|r_\bullet\|\mathbf 1_{\tilde E_n^\bullet}]\le\frac{4}{\mu_n^2}(\sqrt{\E\|B_\bullet\|_F^2} \sqrt{\E\|g_\bullet\|^2}+\frac{K_3}{\mu_n}\E\|g_\bullet\|^2)=O(c_\bullet)$. Off $\tilde E_n^\bullet$, $\|r_\bullet\|\le D'$ and $\E_{\bX_{1:n},\pi}\P(\tilde E_n^{\bullet c})=O(c_\bullet)$ (the second-moment localization bound of Lemma~\ref{lem:cm-lin}), so $\E\|r_\bullet\|=O(c_\bullet)$. Hence $\E\|r_\Diri\|=O(\tfrac1{n+1})$ and $\E\|r_\sub\|=O(c_\sub)$, and by the tower property and the triangle inequality
\begin{align*}
\|\E_{\bX_{1:n},\pi}[\E_{\bu}[\theta_\sub^\star\mid \bX_{1:n},\pi]-\E_{\bw}[\theta_\Diri^\star\mid \bX_{1:n}]]\|
&=\|\E\theta_\sub^\star-\E\theta_\Diri^\star\|\\
&\le\E\|r_\sub\|+\E\|r_\Diri\|, \\ 
&=O\Big(\tfrac{S-1}{(S\gamma+1)(n-1)}+\tfrac1{n+1}\Big).
\end{align*}
Since $\frac{S-1}{S\gamma+1}\le\frac1\gamma$, this is $O(n^{-1})$ for every fixed $\gamma>0$, uniformly in $S$. \qed

\subsubsection{Proof of Theorem~\ref{thm:solution-gap}}
\label{app:proof-thm}

We first introduce a known theoretical result of \cite{chen:etal:10} that quantifies the finite approximation error between a sum-type random variable and a standard Gaussian variable in distribution.
\begin{theorem}
\label{thm:cgs}
Let $\zeta_1,\dots,\zeta_K$ be independent, mean-zero random variables with $\sum_{k=1}^K\Var(\zeta_k)=1$. Let $F$ be the distribution function of $\sum_{k=1}^K\zeta_k$ and $\Phi$ that of $\mathcal N(0,1)$. Then
\begin{align*}
\|F-\Phi\|_1\ \le\ \sum_{k=1}^K B(\zeta_k) \E|\zeta_k|^3,
\end{align*}
where the constant factor $B(X)=\tfrac{2\Var(X)\|{\cal L}(X^*)-{\cal L}(X)\|_1}{\E|X^3|}$ and $X^*$ is the zero-bias version of $X$. Note $B(X)$ is scale-invariant, i.e.\ $B(a\zeta)=B(\zeta)$ for $a\ne0$. In one dimension $\|F-\Phi\|_1=W_1(F,\Phi)$, so the left-hand side is the 1-Wasserstein distance.
\end{theorem}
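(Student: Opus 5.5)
The plan is the standard Stein's-method argument through zero-bias couplings, as in \cite{chen:etal:10}. Write $W:=\sum_{k=1}^K\zeta_k$ and $\sigma_k^2:=\Var(\zeta_k)$, so that $\sum_k\sigma_k^2=1$. Recall that the zero-bias law $X^*$ of a mean-zero $X$ with variance $\sigma^2>0$ is characterized by $\E[Xf(X)]=\sigma^2\E[f'(X^*)]$ for all absolutely continuous $f$ with $\E|Xf(X)|<\infty$. The scale-invariance claim is then immediate: substituting $f(x)=g(ax)$ shows $(a\zeta)^*\overset{d}{=}a\zeta^*$. Hence the numerator $2\Var(a\zeta)\|\mathcal L((a\zeta)^*)-\mathcal L(a\zeta)\|_1$ scales by $a^2\cdot|a|=|a|^3$, exactly like the denominator $\E|a\zeta|^3$.

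\emph{Step 1 (zero-bias coupling for the sum).} First I construct $W^*$ with the zero-bias law of $W$:
\begin{itemize}
\item choose an index $I$ independent of everything else, with $\P(I=k)=\sigma_k^2$;
\item for each $k$, let $\zeta_k^*$ have the zero-bias law of $\zeta_k$, coupled with $\zeta_k$ alone through the quantile (optimal $L^1$) coupling, so that $\E|\zeta_k^*-\zeta_k|=\|\mathcal L(\zeta_k^*)-\mathcal L(\zeta_k)\|_1$;
\item keep each pair $(\zeta_k,\zeta_k^*)$ independent of $\{\zeta_j\}_{j\ne k}$, and set $W^*:=W-\zeta_I+\zeta_I^*$.
\end{itemize}
The identity $\E[Wf(W)]=\E[f'(W^*)]$ follows by conditioning on $\{\zeta_j\}_{j\neq k}$ and applying the one-dimensional characterization to each summand. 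It then remains to compute
\[
\E|W^*-W|=\sum_k\sigma_k^2\,\E|\zeta_k^*-\zeta_k|=\sum_k\sigma_k^2\|\mathcal L(\zeta_k^*)-\mathcal L(\zeta_k)\|_1 .
\]

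\emph{Step 2 (Stein bound).} Fix a $1$-Lipschitz test function $h$. Let $f_h$ solve the Stein equation $f'(w)-wf(w)=h(w)-\E h(Z)$ with $Z\sim\mathcal N(0,1)$. Then
\[
\E h(W)-\E h(Z)=\E[f_h'(W)-Wf_h(W)]=\E[f_h'(W)-f_h'(W^*)],
\]
so that
\[
|\E h(W)-\E h(Z)|\le\|f_h''\|_\infty\,\E|W-W^*|.
\]
The analytic input is the classical bound $\|f_h''\|_\infty\le 2\|h'\|_\infty$ for the Stein solution; I will quote it rather than rederive it.

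\emph{Step 3 (conclusion).} Taking the supremum over $1$-Lipschitz $h$ and using Kantorovich--Rubinstein duality gives
\[
W_1(\mathcal L(W),\mathcal N(0,1))\le 2\,\E|W^*-W|=\sum_k 2\sigma_k^2\|\mathcal L(\zeta_k^*)-\mathcal L(\zeta_k)\|_1=\sum_k B(\zeta_k)\,\E|\zeta_k|^3 .
\]
In one dimension, $W_1$ equals $\int|F-\Phi|=\|F-\Phi\|_1$, which is the left-hand side of the theorem.

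The main obstacle is the second-derivative bound $\|f_h''\|_\infty\le2\|h'\|_\infty$ in Step 2. If I cannot cite it, I would derive it from the explicit integral representation of $f_h$ together with Gaussian tail (Mills-ratio) estimates. A secondary technical point is making the coupling in Step 1 measurable and independent as required; this is routine.
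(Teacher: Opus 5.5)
Your proposal is correct and takes essentially the same approach the paper relies on. The paper gives no proof of its own and simply quotes the result from Chen, Goldstein and Shao (2010), whose argument is exactly yours: the zero-bias coupling $W^*=W-\zeta_I+\zeta_I^*$ with $\P(I=k)=\sigma_k^2$ and $L^1$-optimal pairs $(\zeta_k,\zeta_k^*)$, the Stein-solution bound $\|f_h''\|_\infty\le 2\|h'\|_\infty$, and Kantorovich--Rubinstein duality.

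The only implicit hypotheses to state are $\E|\zeta_k|^3<\infty$, so that $\zeta_k^*$ has a finite first moment, and the convention that summands with $\Var(\zeta_k)=0$ contribute zero.
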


We apply this bound to the projected linearized bootstrap laws.

\paragraph{Notation and scalar variances.} Fix $a$ with $\|a\|=1$, and on $\mathcal G_n$ (Lemma~\ref{lem:cm-reg}(ii)) define
\begin{align*}
b_a:=H_{0,n}^{-1}a,\qquad \xi_i^a:=b_a^\top\psi_i,\qquad \bar\xi_s^a:=b_a^\top\bar\psi_s=\tfrac1{n_S}\sum_{i\in I_s}\xi_i^a,\qquad \hat v_a:=\tfrac1n\sum_i(\xi_i^a)^2.
\end{align*}
Note $\sum_i\xi_i^a=b_a^\top\sum_i\psi_i=0$ because of the first-order condition. Recall $g_\sub=\sum_{s=1}^S(u_s-\tfrac1S)\bar\psi_s$ and $g_\Diri=\sum_{i=1}^n(w_i-\tfrac1n)\psi_i$. Denote by $V_\bullet:=H_{0,n}^{-1}\Cov(\sqrt n g_\bullet)H_{0,n}^{-1}$ the covariance of the linearized terms. With the projection vector $a$, Lemma~\ref{lem:cm-diri} implies
\begin{align*}
v_\Diri^a:=a^\top V_\Diri a=\frac{1}{n+1}\sum_{i=1}^n(\xi_i^a)^2,
\qquad
v_\sub^a:=a^\top V_\sub a=\frac{n}{S\gamma+1}\cdot\frac1S\sum_{s=1}^S(\bar\xi_s^a)^2,
\end{align*}
using $a^\top H_{0,n}^{-1}=b_a^\top$ (symmetry) and $b_a^\top\Sigma_{\psi,n}b_a=\frac1n\sum_i(\xi_i^a)^2$, $b_a^\top\bar\Sigma_{\psi,n}b_a=\frac1S\sum_s(\bar\xi_s^a)^2$. We also define
\begin{align*}
\rho_\Diri^a:=\frac{\sum_i|\xi_i^a|^3}{\Bigl(\sum_i(\xi_i^a)^2\Bigr)^{3/2}},\qquad
\rho_\sub^a:=\frac{\sum_s|\bar\xi_s^a|^3}{\Bigl(\sum_s(\bar\xi_s^a)^2\Bigr)^{3/2}}.
\end{align*}

By Lemma~\ref{lem:cm-diri} and the partition mean, for every $\gamma>0$,
\begin{align}\label{eqn:thm-varmatch}
\E_\pi[v_\sub^a\mid \bX_{1:n}] =\frac{n}{S\gamma+1}\,\frac{S-1}{n-1}\,b_a^\top\Sigma_{\psi,n}b_a
=R_{S,n}(\gamma)\,v_\Diri^a,
\end{align}
where $R_{S,n}(\gamma):=\frac{(S-1)(n+1)}{(S\gamma+1)(n-1)}$, so that, with $a_{S,n}(\gamma)=\tfrac{S-1}{(S\gamma+1)(n-1)}-\tfrac1{n+1}$,
\begin{align}\label{eqn:thm-varbias}
\E_\pi[v_\sub^a\mid \bX_{1:n}]-v_\Diri^a=\pm n \hat v_a\, |a_{S,n}(\gamma)|.
\end{align}
At $\gamma=\gamma^\star=\{S(n+1)-2n\}/\{S(n-1)\}$, we have $S\gamma^\star+1=\frac{(S-1)(n+1)}{n-1}$, hence $R_{S,n}(\gamma^\star)=1$ and $\E_\pi[v_\sub^a\mid\bX_{1:n}]=v_\Diri^a$.

\paragraph{Proof scheme.} Lemma~\ref{lem:cm-lin} linearizes each bootstrap solution as $a^\top\sqrt n(\theta_\bullet^\star-\theta_0^\star)=-\sqrt n b_a^\top g_\bullet+R_a^\bullet$, where the projected remainder $R_a^\bullet:=a^\top r_\bullet$ obeys $|R_a^\bullet|\le\|r_\bullet\|$. Replacing each projected law by the law of its linear part, therefore, costs only the remainder. Conditionally on $(\bX_{1:n},\pi)$, we observe, by definition of the 1-Wasserstein distance, 
\begin{align}\label{eqn:thm-w1remainder}
W_1(\mathcal L_\bullet^{a,\bX_{1:n},\pi},\ \mathrm{Law}(-\sqrt n b_a^\top g_\bullet\mid \bX_{1:n},\pi))\le\E[ |R_a^\bullet|\mid \bX_{1:n},\pi ]\le\E[ \|r_\bullet\|\mid \bX_{1:n},\pi ].
\end{align}
Lemma~\ref{lem:cm-lin} and \eqref{eqn:vm-margrate} show that the remainder decays by $O(n^{-1/2})$ eventually in marginalization.

By the Gamma representation $w_i=G_i/Y$, $u_s=\Gamma_s/Y_S$ (Lemma~\ref{lem:cm-gamma}) and the centrings $\sum_i\xi_i^a=0$, $\sum_s\bar\xi_s^a=0$, each linear part can be represented as
\begin{align}\label{eqn:thm-gammascalar}
-\sqrt n b_a^\top g_\Diri=\frac nY\Bigl(-\tfrac1{\sqrt n}\sum_i(G_i-1)\xi_i^a\Bigr), 
-\sqrt n b_a^\top g_\sub=\frac{1}{Y_S}\Bigl(-\sum_s(\Gamma_s-\gamma)\sqrt n \bar\xi_s^a\Bigr).
\end{align}
The sum-type variables in the parentheses consist of independent terms, and we apply Theorem~\ref{thm:cgs} to each one.

\begin{lemma}
\label{lem:cm-be}
Conditional on $(\bX_{1:n},\pi)$, with $v_\Diri^a,v_\sub^a>0$ and $S\gamma>2$,
\begin{align*}
&W_1(\mathrm{Law}(-\sqrt n b_a^\top g_\Diri\mid \bX_{1:n}), \mathcal N(0,v_\Diri^a))\lesssim\sqrt{v_\Diri^a} (\rho_\Diri^a+n^{-1/2}),    \\ 
&W_1(\mathrm{Law}(-\sqrt n b_a^\top g_\sub\mid \bX_{1:n},\pi), \mathcal N(0,v_\sub^a))\lesssim\sqrt{v_\sub^a} (\rho_\sub^a+(S\gamma)^{-1/2}),
\end{align*}
the suppressed constants being the centered-Gamma third-moment factors of Theorem~\ref{thm:cgs}.
\end{lemma}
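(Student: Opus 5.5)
The plan is to work with the Gamma representation \eqref{eqn:thm-gammascalar} and split each $W_1$ distance into two pieces. The first is a Gaussian approximation of the numerator sum, which is a sum of independent terms, via Theorem~\ref{thm:cgs}. The second is the cost of dividing by the normalizer $Y/n$ (respectively $Y_S$) instead of by its mean. I carry out the block case; the per-record case is the same computation with $(S,\gamma,\sqrt n\bar\xi_s^a)$ replaced by $(n,1,\xi_i^a/\sqrt n)$ and normalizer $Y/n$.

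\textbf{Step 1 (numerator CLT).} Set $N_S:=-\sum_s(\Gamma_s-\gamma)\sqrt n\bar\xi_s^a$. Its summands are independent and mean zero, with variances $\gamma n(\bar\xi_s^a)^2$. Hence $\Var N_S=\gamma n\sum_s(\bar\xi_s^a)^2=(S\gamma)(S\gamma+1)v_\sub^a$. Normalizing by $\tau:=\sqrt{\Var N_S}$ and applying Theorem~\ref{thm:cgs}, scale invariance of $B$ gives $B\equiv B(\Gamma_1-\gamma)$, a constant bounded uniformly over $\gamma\in[\underline\gamma,\overline\gamma]$. Since $\E|\Gamma_1-\gamma|^3\lesssim\gamma^{3/2}$ on this range, we get
\[
W_1\bigl(N_S/\tau,\mathcal N(0,1)\bigr)\lesssim \frac{\sum_s|\bar\xi_s^a|^3}{(\sum_s(\bar\xi_s^a)^2)^{3/2}}=\rho_\sub^a .
\]

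\textbf{Step 2 (normalizer).} Write $-\sqrt n b_a^\top g_\sub=N_S/Y_S$. Compare this with $N_S/(S\gamma)$, whose law is close to $\mathcal N(0,\tau^2/(S\gamma)^2)$. Note $\tau^2/(S\gamma)^2=v_\sub^a(1+1/(S\gamma))$, so
\[
W_1\bigl(\mathcal N(0,\tau^2/(S\gamma)^2),\mathcal N(0,v_\sub^a)\bigr)\le \sqrt{v_\sub^a}\,\bigl(\sqrt{1+1/(S\gamma)}-1\bigr)\lesssim \sqrt{v_\sub^a}(S\gamma)^{-1}.
\]
The main term is $\E|N_S/Y_S-N_S/(S\gamma)|$. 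Here Lemma~\ref{lem:gamma-indep} is the key tool: $N_S/Y_S$ is independent of $Y_S$, which gives $N_S=Y_S\cdot(N_S/Y_S)$ with independent factors. Therefore
\[
\E\Bigl|\tfrac{N_S}{Y_S}\Bigl(1-\tfrac{Y_S}{S\gamma}\Bigr)\Bigr|=\E\Bigl|\tfrac{N_S}{Y_S}\Bigr|\,\E\Bigl|1-\tfrac{Y_S}{S\gamma}\Bigr|\le \sqrt{v_\sub^a}\,(S\gamma)^{-1/2},
\]
using $\E|N_S/Y_S|\le(\E(N_S/Y_S)^2)^{1/2}=\sqrt{v_\sub^a}$ by Lemma~\ref{lem:cm-diri}, and $\Var(Y_S)=S\gamma$. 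This step is what produces the $(S\gamma)^{-1/2}$ term.

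\textbf{Step 3 (assemble).} By the triangle inequality for $W_1$, the scaling $W_1(cX,cY)=|c|W_1(X,Y)$, and $\tau/(S\gamma)\lesssim\sqrt{v_\sub^a}$ when $S\gamma>2$, the three pieces combine to
\[
\sqrt{v_\sub^a}\bigl(\rho_\sub^a+(S\gamma)^{-1/2}+(S\gamma)^{-1}\bigr)\lesssim \sqrt{v_\sub^a}\bigl(\rho_\sub^a+(S\gamma)^{-1/2}\bigr).
\]
The per-record case follows identically with $S\gamma=n$, giving the $n^{-1/2}$ term. The main obstacle is Step 2. A naive bound via Cauchy--Schwarz would need the negative moment $\E Y_S^{-2}$, which is where the condition $S\gamma>2$ enters. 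The independence trick of Lemma~\ref{lem:gamma-indep} avoids this, and if needed I would fall back on $\E Y_S^{-2}=1/((S\gamma-1)(S\gamma-2))$.
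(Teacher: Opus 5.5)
Your proof is correct, and its architecture matches the paper's. Both apply Theorem~\ref{thm:cgs} to the Gamma numerator, replace the random normalizer $Y_S$ by its mean $S\gamma$, and use a Gaussian-to-Gaussian step to absorb the inflation $v_\sub^a(1+1/(S\gamma))$ versus $v_\sub^a$.

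The one genuine difference is the normalizer step. The paper writes the difference as $\frac{T}{S\gamma}\bigl(\frac{S\gamma}{Y_S}-1\bigr)$ and applies Cauchy--Schwarz. That needs the inverse-Gamma moment $\E(S\gamma/Y_S-1)^2=\frac{S\gamma+2}{(S\gamma-1)(S\gamma-2)}$, which is exactly where the hypothesis $S\gamma>2$ comes from.

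You instead write the difference as $\frac{N_S}{Y_S}\bigl(1-\frac{Y_S}{S\gamma}\bigr)$. By Lemma~\ref{lem:gamma-indep}, $N_S/Y_S=-\sqrt n\sum_s(u_s-\tfrac1S)\bar\xi_s^a$ is a function of $\bu$ alone and hence independent of $Y_S$, so the expectation factors exactly. With $\Var(Y_S)=S\gamma$ and the exact variance $v_\sub^a$ from Lemma~\ref{lem:cm-diri}, this gives $\sqrt{v_\sub^a}\,(S\gamma)^{-1/2}$ with constant one and no negative moments.

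What your route buys is that $S\gamma>2$ is not needed for this lemma. The other places where you use a lower bound on $S\gamma$ need only $S\gamma\gtrsim 1$: $\tau/(S\gamma)\lesssim\sqrt{v_\sub^a}$, and absorbing $(S\gamma)^{-1}$ into $(S\gamma)^{-1/2}$. Even that can be dropped using $\rho_\sub^a\le 1$ and $\sqrt{1+x}\le 1+\sqrt x$. So the condition is an artifact of the Cauchy--Schwarz step rather than something intrinsic.

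What the paper's route buys is robustness. It uses only a scalar inverse-Gamma moment and would go through for any jointly distributed numerator and normalizer with the right second moments. Yours exploits the exact self-normalized Dirichlet structure.

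Your remark that $B(\Gamma_1-\gamma)$ is uniformly bounded is true, since this ratio never exceeds one. It is not needed, though, because the statement lets these centered-Gamma factors sit in the suppressed constant.
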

\begin{proof}
We show the block case. The per-record one is identical after replacing $(\Gamma_s,\gamma,S)$ by $(G_i,1,n)$. By the Gamma representation \eqref{eqn:thm-gammascalar}, let's define $-\sqrt n b_a^\top g_\sub=T/Y_S$ where $T:=-\sum_s(\Gamma_s-\gamma)\sqrt n \bar\xi_s^a$ is a sum of $S$ independent mean-zero terms with variance $\tau^2:=\Var(T)=\gamma n\sum_s(\bar\xi_s^a)^2$ for $\Gamma_s \sim \mathrm{Gamma}(\gamma,1)$. 
Define $\zeta_s:=(\Gamma_s-\gamma)\sqrt n \bar\xi_s^a/\tau$ so that $\sum_s\Var\zeta_s=1$. Theorem~\ref{thm:cgs} applies with the common scale-invariant factor $B(\zeta_s)=B(\Gamma_1-\gamma)$ and yields
\begin{align*}
W_1(T/\tau,\mathcal N(0,1))\le B(\Gamma_1-\gamma)\E|\Gamma_1-\gamma|^3 \frac{\sum_s|\bar\xi_s^a|^3}{\Bigl(\gamma\sum_s(\bar\xi_s^a)^2\Bigr)^{3/2}}=\frac{B(\Gamma_1-\gamma)\E|\Gamma_1-\gamma|^3}{\gamma^{3/2}} \rho_\sub^a,
\end{align*}
For the normalizer, write $-\sqrt n b_a^\top g_\sub=\frac{S\gamma}{Y_S}(-\frac{T}{S\gamma})$. As $Y_S\sim\mathrm{Gamma}(S\gamma,1)$, the multiplicative error has second moment $\E(\frac{S\gamma}{Y_S}-1)^2=\frac{S\gamma+2}{(S\gamma-1)(S\gamma-2)}=O((S\gamma)^{-1})$  for $S\gamma>2$ and Cauchy-Schwarz turns
\begin{align*}
    W_1(-\sqrt n b_a^\top g_\sub, T/(S\gamma))\leq \E\Big|\frac{S \gamma}{Y_S} - 1\Big| \Big|\frac{T}{S\gamma}\Big|&\leq \sqrt{\E\Bigl(\frac{S\gamma}{Y_S}-1\Bigr)^2}\sqrt{\E[T^2]} (S\gamma)^{-1}, \\ 
    &=O(\sqrt{v^a_\sub}(S\gamma)^{-1/2}), 
\end{align*}
since $\E[T]^2 = S\gamma(S\gamma+1)v^a_\sub$. Finally $-T/(S\gamma)$ has variance $v_\sub^a(1+\frac1{S\gamma})$, so
\begin{align*}
    &W_1(T/(S\gamma), \mathcal N(0,v^a_\sub))\\ 
    &\leq W_1(T/(S\gamma), \mathcal N(0,v^a_\sub(1+(S\gamma)^{-1}))) + W_1(N(0,v^a_\sub),N(0,v^a_\sub(1+(S\gamma)^{-1}))),
\end{align*}
where the second term reduces to $\sqrt{v^a_\sub}(S\gamma)^{-1}$. Now rescaling
\begin{align*}
\frac{S\gamma}{\tau}\times W_1(T/(S\gamma), \mathcal N(0,v^a_\sub(1+(S\gamma)^{-1})))&\leq W_1(T/\tau, \mathcal N\Bigl(0,v^a_\sub(1+(S\gamma)^{-1})\frac{(S\gamma)^2}{\tau^2}\Bigr)), \\ 
&=W_1(T/\tau, \mathcal N(0,1)).
\end{align*}
Since $\frac{\tau}{S\gamma}=\sqrt{v_\sub^a(1+\frac1{S\gamma})}$ and $|v_\sub^a|<\infty$, we have
\begin{align*}
    W_1(-\sqrt n b_a^\top g_\sub,\mathcal N(0,v^a_\sub))\lesssim \sqrt{v_\sub^a} (\rho_\sub^a+(S\gamma)^{-1/2}).
\end{align*}
The per-record case differs only in the normalizer moment, $\E(\frac nY-1)^2=\frac{n+2}{(n-1)(n-2)}=O(n^{-1})$.
\end{proof}

\begin{lemma}
\label{lem:cm-sdgap}
Recall $\hat v_a=\tfrac1n\sum_i(\xi_i^a)^2$. On $\mathcal G_n$ with $|\xi_i^a|\le B_a$ and
$\hat v_a>0$, for every $\gamma$ with $S\gamma\ge3$,
\begin{align*}
\E_\pi\bigl[\,|\sqrt{v_\sub^a}-\sqrt{v_\Diri^a}|\ \big|\ \bX_{1:n}\bigr]
\;\lesssim\;\sqrt{\hat v_a}\,\Bigl(\bigl|1-\sqrt{R_{S,n}(\gamma)}\bigr|+(S\gamma+1)^{-1/2}\Bigr).
\end{align*}
At $\gamma=\gamma^\star$ the first summand vanishes and $S\gamma^\star+1\asymp S$, so the bound
reduces to $\lesssim\sqrt{\hat v_a}\,S^{-1/2}$.
\end{lemma}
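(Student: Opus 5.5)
We split $\sqrt{v_\sub^a}-\sqrt{v_\Diri^a}$ through the partition mean $\bar v:=\E_\pi[v_\sub^a\mid\bX_{1:n}]=R_{S,n}(\gamma)\,v_\Diri^a$ from \eqref{eqn:thm-varmatch}:
\begin{align*}
|\sqrt{v_\sub^a}-\sqrt{v_\Diri^a}|\le|\sqrt{v_\sub^a}-\sqrt{\bar v}|+|\sqrt{\bar v}-\sqrt{v_\Diri^a}|.
\end{align*}
The second term is deterministic given $\bX_{1:n}$. It equals $\sqrt{v_\Diri^a}\,|\sqrt{R_{S,n}(\gamma)}-1|$, and since $v_\Diri^a=\frac{n}{n+1}\hat v_a\le\hat v_a$ it is at most $\sqrt{\hat v_a}\,|1-\sqrt{R_{S,n}(\gamma)}|$. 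This gives the first summand, which vanishes at $\gamma^\star$ because $R_{S,n}(\gamma^\star)=1$.

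For the fluctuation term we use $|\sqrt x-\sqrt y|\le|x-y|/\sqrt y$ and then Jensen:
\begin{align*}
\E_\pi|\sqrt{v_\sub^a}-\sqrt{\bar v}|\le\frac{\sqrt{\Var_\pi(v_\sub^a\mid\bX_{1:n})}}{\sqrt{\bar v}}.
\end{align*}
Write $v_\sub^a=\frac{n}{S(S\gamma+1)}\sum_s(\bar\xi_s^a)^2$. It then suffices to show
\begin{align*}
\Var_\pi\Bigl(\sum_s(\bar\xi_s^a)^2\Bigr)\lesssim\frac{S\,B_a^2\,\hat v_a}{n_S^{2}},
\end{align*}
or a bound of that order. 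This would give $\sqrt{\Var_\pi v_\sub^a}\lesssim\frac{n}{S(S\gamma+1)}\cdot\frac{\sqrt S\,B_a\sqrt{\hat v_a}}{n_S}=\frac{\sqrt S\,B_a\sqrt{\hat v_a}}{S\gamma+1}$. Dividing by $\sqrt{\bar v}\asymp\sqrt{\hat v_a}\,\sqrt{(S-1)/(S\gamma+1)}$ leaves $\lesssim B_a(S\gamma+1)^{-1/2}$. Since $B_a$ is a population constant on $\mathcal G_n$, we would then need to convert $B_a$ into $\sqrt{\hat v_a}$, or otherwise accept $\sqrt{\hat v_a}$ as the stated scale by absorbing the ratio $B_a/\sqrt{\hat v_a}$ into $\lesssim$.

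A cleaner route, which also covers the case where $R_{S,n}$ is small, is to bound $\E_\pi|\sqrt{v_\sub^a}-\sqrt{\bar v}|\le\sqrt{\E_\pi|v_\sub^a-\bar v|}$ via $|\sqrt x-\sqrt y|\le\sqrt{|x-y|}$ and control the first absolute moment. I would try the ratio form first and fall back on this one.

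The key computation is the permutation variance of $Q:=\sum_s(\bar\xi_s^a)^2$, a quadratic form in the sampling-without-replacement indicators with $\sum_i\xi_i^a=0$. Expanding $(\bar\xi_s^a)^2=n_S^{-2}\sum_{i,j\in I_s}\xi_i\xi_j$ writes $Q$ as a U-statistic-type sum over pairs sharing a block. Its variance needs the co-membership probabilities of up to four indices, $\P(\pi(i)=\pi(j))$ and $\P(\pi(i)=\pi(j),\pi(k)=\pi(l))$, which are explicit hypergeometric quantities. Alternatively, we use the Hoeffding comparison with i.i.d. sampling as in Lemma~\ref{lem:cm-four}: $\E_\pi\|\bar c_s\|^4=O(n_S^{-2})$ with $c_i=\xi_i^a$, combined with Minkowski, bounds $\E_\pi Q^2$. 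That alone gives only $\E_\pi Q^2\lesssim(S/n_S)^2$, which is too crude for the variance because it ignores the cancellation against $(\E_\pi Q)^2$.

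The main obstacle is therefore obtaining the correct $1/S$ relative variance. This requires showing that the block sums $\bar\xi_s^a$ for different $s$ are only weakly (negatively) correlated under the balanced partition, so that $\Var_\pi Q\approx\sum_s\Var_\pi(\bar\xi_s^2)\lesssim S\cdot B_a^2\hat v_a/n_S^2$. The cross terms $\Cov_\pi(\bar\xi_s^2,\bar\xi_t^2)$ for $s\ne t$ would be handled by the explicit four-index inclusion probabilities, each differing from the product of marginals by a relative $O(1/n)$ correction. Finally, at $\gamma^\star$ we have $S\gamma^\star+1=\frac{(S-1)(n+1)}{n-1}\asymp S$, so the bound reduces to $\sqrt{\hat v_a}\,S^{-1/2}$. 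The condition $S\gamma\ge3$ is used only to keep $S\gamma+1\asymp S\gamma$ and the constants uniform.
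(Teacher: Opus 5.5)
Your bias/fluctuation split, the bias bound via $v_\Diri^a\le\hat v_a$, and the reduction $\E_\pi|\sqrt{v_\sub^a}-\sqrt{\bar v}|\le(\Var_\pi v_\sub^a)^{1/2}/\sqrt{\bar v}$ are exactly the paper's. The gap is that the one substantive step, the permutation variance of $Q=\sum_s(\bar\xi_s^a)^2$, is only planned, and the plan as stated does not close.

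Write $x_i=\xi_i^a$ and $V=\sum_ix_i^2=n\hat v_a$. Suppose you control $\Cov_\pi((\bar\xi_s^a)^2,(\bar\xi_t^a)^2)$, $s\ne t$, through relative $O(1/n)$ corrections to the four-index inclusion probabilities, and then sum $|x_ix_jx_kx_l|$ over distinct indices. You then pay $(\sum_i|x_i|)^4$, which is $\asymp n^2V^2$ when the $|x_i|$ are of comparable size. Your bound on the distinct-index cross contribution is then of order $S\hat v_a^2/n_S$. That is a factor $n_S$ larger than the correct order $S\hat v_a^2/n_S^2$, and also larger than your own target $SB_a^2\hat v_a/n_S^2$ whenever $\hat v_a\asymp B_a^2$.

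The heuristic $\Var_\pi Q\approx\sum_s\Var_\pi((\bar\xi_s^a)^2)$ is also false in general. Take $x_1=1$, $x_2=-1$ and all other $x_i=0$. For large $S$ the diagonal sum is about $2n_S^{-4}$, while $\Var_\pi Q$ is about $4/(Sn_S^4)$. So the cross terms are as large as the diagonal and must be controlled as signed sums.

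The missing idea is to let exchangeability and the centering $\sum_ix_i=0$ do the work. Write $n_S^2Q=V+2A$ with $A=\sum_{i<j}\mathbf 1\{\pi(i)=\pi(j)\}x_ix_j$. Under the uniform balanced partition, the covariance of two co-membership indicators depends only on how many indices the two pairs share (two, one or none). It can therefore be pulled out of each overlap class and bounded by $2/S$. The remaining signed class sums are then evaluated exactly as $\tfrac12(V^2-p_4)$, $2p_4-V^2$ and $\tfrac34V^2-\tfrac32p_4$, with $p_4=\sum_ix_i^4\le V^2$. This gives $\Var_\pi(A)\le 9V^2/(2S)$, hence $\Var_\pi(v_\sub^a)=O(\hat v_a^2S/(S\gamma+1)^2)$. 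Dividing by $\sqrt{\bar v}\ge\sqrt{(S-1)\hat v_a/(S\gamma+1)}$ yields $\sqrt{\hat v_a}\,(S\gamma+1)^{-1/2}$ with a numerical constant.

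This $\hat v_a^2$ scaling is what the statement needs. Your target only yields $B_a(S\gamma+1)^{-1/2}$. Absorbing $B_a/\sqrt{\hat v_a}$ into $\lesssim$ is not licensed by the hypotheses, which give only $\hat v_a>0$. It would be harmless downstream on $\mathcal G_n'$, where $\hat v_a\ge c_a$, but that is a weaker lemma.

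Your fallback does not rescue this either. The chain $\E_\pi|\sqrt{v_\sub^a}-\sqrt{\bar v}|\le(\E_\pi|v_\sub^a-\bar v|)^{1/2}\le(\Var_\pi v_\sub^a)^{1/4}$ gives only $\sqrt{\hat v_a}\,S^{1/4}(S\gamma+1)^{-1/2}$, losing $S^{1/4}$ in the rate.
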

 
\begin{proof}
Write $\bar v:=\E_\pi[v_\sub^a\mid\bX_{1:n}]$ and split
\begin{align*}
\E_\pi|\sqrt{v_\sub^a}-\sqrt{v_\Diri^a}|
\;\le\;\underbrace{\bigl|\sqrt{\bar v}-\sqrt{v_\Diri^a}\bigr|}_{\text{bias}}
\;+\;\underbrace{\E_\pi\bigl|\sqrt{v_\sub^a}-\sqrt{\bar v}\bigr|}_{\text{fluctuation}}.
\end{align*}
By \eqref{eqn:thm-varmatch} and \eqref{eqn:thm-varbias} the bias equals
$\sqrt{v_\Diri^a}\,\bigl|1-\sqrt{R_{S,n}(\gamma)}\bigr|
\le\sqrt{\hat v_a}\,\bigl|1-\sqrt{R_{S,n}(\gamma)}\bigr|$,
since $v_\Diri^a=\tfrac{n}{n+1}\hat v_a\le\hat v_a$.
For the fluctuation, $|\sqrt x-\sqrt c|\le|x-c|/\sqrt c$ with $x=v_\sub^a$, $c=\bar v$ gives
$\E_\pi|\sqrt{v_\sub^a}-\sqrt{\bar v}|\le(\Var_\pi v_\sub^a)^{1/2}/\sqrt{\bar v}$, so it remains to
bound $\Var_\pi(v_\sub^a)$.

Put $x_i:=\xi_i^a$ (centered, $|x_i|\le B_a$), $V:=\sum_ix_i^2=n\hat v_a$, $m:=n_S$, $\delta_{ij}:=\mathbf 1[\pi(i)=\pi(j)]$. Since $\sum_s(\sum_{i\in I_s}x_i)^2=V+2A$ with $A:=\sum_{i<j}\delta_{ij}x_ix_j$ deterministic up to $A$, and $v_\sub^a=\frac{n}{S\gamma+1}M_2$, $M_2:=\frac1S\sum_s(\bar\xi_s^a)^2=\frac{V+2A}{Sm^2}$,
\begin{align*}
\Var_\pi(M_2)=\frac{4}{S^2m^4}\Var_\pi(A),\qquad
\Var_\pi(A)=\sum_{e,f}\Cov_\pi(\delta_e,\delta_f) w_ew_f\quad(w_e:=x_ix_j,\ e=\{i,j\}).
\end{align*}
By exchangeability of the uniform partition, $\Cov_\pi(\delta_e,\delta_f)$ depends only on the overlap $|e\cap f|\in\{0,1,2\}$ and is therefore {constant within each overlap class}, with
\begin{align*}
 |\Cov_\pi(\delta_e,\delta_f)|\leq |\P(\delta_e=1)+\P(\delta_e=1)\P(\delta_f=1)|   \le\frac1S+\frac1{S^2}\le\frac2S.
\end{align*}
Pulling that constant out of each class,
\begin{align*}
|\Var_\pi(A)|\le\frac2S(|\Sigma_2|+|\Sigma_1|+|\Sigma_0|),\qquad
\Sigma_2:=\sum_{e}w_e^2,\quad
\Sigma_1:=\sum_{|e\cap f|=1}w_ew_f,\quad
\Sigma_0:=\sum_{e\cap f=\varnothing}w_ew_f,
\end{align*}
where each $\Sigma_\ell$ is a signed weight sum (the $w_e=x_ix_j$ can be negative). Write $p_4:=\sum_i x_i^4\le(\sum_i x_i^2)^2=V^2$; the centring is $\sum_i x_i=0$, equivalently $\sum_{j\ne i}x_j=-x_i$. Each class is $O(V^2)$. Specifically, we see
\paragraph{Diagonal ($e=f$).} $\Sigma_2=\sum_{i<j}x_i^2x_j^2=\tfrac12(V^2-p_4)$, so $|\Sigma_2|\le\tfrac12V^2$.

\paragraph{Shared index ($e=\{i,j\}$, $f=\{i,k\}$, $j\ne k$).} Here $w_ew_f=x_i^2x_jx_k$, so summing over the shared index $i$,
\begin{align*}
\Sigma_1=\sum_i x_i^2\Bigl[\Bigl(\sum_{j\ne i}x_j\Bigr)^2-\sum_{j\ne i}x_j^2\Bigr]
=\sum_i x_i^2[x_i^2-(V-x_i^2)]=\sum_i x_i^2(2x_i^2-V)=2p_4-V^2,
\end{align*}
using $\sum_{j\ne i}x_j=-x_i$ (centring) and $\sum_{j\ne i}x_j^2=V-x_i^2$, so $|\Sigma_1|\le V^2$.

\paragraph{Disjoint ($e\cap f=\varnothing$).} From $\sum_e w_e=\tfrac12((\sum_i x_i)^2-V)=-\tfrac V2$ (centring) we get $(\sum_e w_e)^2=\tfrac{V^2}4$, and since $\sum_{e,f}w_ew_f=(\sum_e w_e)^2$ splits over the three classes, $\Sigma_0=\tfrac{V^2}4-\Sigma_2-\Sigma_1=\tfrac34V^2-\tfrac32p_4$, so $|\Sigma_0|\le\tfrac34V^2$.
Thus, $|\Var_\pi(A)|\le\frac2S(\tfrac12+1+\tfrac34)V^2=\frac{9V^2}{2S}=O(V^2/S)$, and as $S^3m^4=n^4/S$,
\begin{align*}
    &\Var_\pi(M_2)=\frac{4}{S^2m^4}O(V^2/S)=O\Bigl(\frac{V^2 S}{n^4}\Bigr)=O\Bigl(\frac{\hat v_a^2 S}{n^2}\Bigr), \\
&\Var_\pi(v_\sub^a)=\Bigl(\tfrac{n}{S\gamma+1}\Bigr)^2 O\Bigl(\frac{\hat v_a^2 S}{n^2}\Bigr)
=O\Bigl(\frac{\hat v_a^2 S}{(S\gamma+1)^2}\Bigr).
\end{align*}
Since $\bar v=\frac{n}{S\gamma+1}\frac{S-1}{n-1}\hat v_a\ge\frac{S-1}{S\gamma+1}\hat v_a$, the
fluctuation obeys
\begin{align*}
\frac{(\Var_\pi v_\sub^a)^{1/2}}{\sqrt{\bar v}}
=O\Bigl(\frac{\hat v_a\sqrt S}{S\gamma+1}\Bigr)\Big/
\sqrt{\frac{(S-1)\hat v_a}{S\gamma+1}}
\;\lesssim\;\sqrt{\hat v_a}\,(S\gamma+1)^{-1/2},
\end{align*}
using $S/(S-1)\le2$ for $S\ge2$. Adding the bias completes the proof.
\end{proof}

\begin{lemma}
\label{lem:cm-floor}
Set $B_a:=2C/\mu_\infty$, $c_a:=\tfrac12 v_\infty^a$, $\kappa_0:=\frac{K_2}{\mu_\infty}\vee\frac{2C}{\mu_\infty^2}$, $\delta:=\frac{v_\infty^a}{8B_a\kappa_0}$, and let $\xi_i^{a,\infty}:=(H_\infty^{-1}a)^\top\nabla l(X_i,\theta_\infty^\star)$, $D_n:=\|\theta_0^\star-\theta_\infty^\star\|+\|H_{0,n}-H_\infty\|_\op$. Define the $\bX_{1:n}$-measurable event
\begin{align}\label{eqn:thm-floorset}
\mathcal G_n':=\mathcal G_n\cap\{\tfrac1n\sum_i(\xi_i^{a,\infty})^2\ge\tfrac34 v_\infty^a\}\cap\{D_n\le\delta\}.
\end{align}
Under (A1-5) and $v_\infty^a>0$, $\P(\mathcal G_n'^c)=O(n^{-1})$, and on $\mathcal G_n'$ the contrast scores are bounded and the empirical variance is floored, i.e., $|\xi_i^a|,|\bar\xi_s^a|\le B_a$ and $c_a\le\hat v_a\le B_a^2$.
\end{lemma}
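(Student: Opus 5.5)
The plan is to bound $\P(\mathcal G_n'^c)$ by a union over its three defining events, and then to derive the deterministic bounds on $\mathcal G_n'$ by perturbation from the population quantities. By Lemma~\ref{lem:cm-reg}(i), $\P(\mathcal G_n^c)=O(e^{-c'n})$.

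For the second event, the variables $(\xi_i^{a,\infty})^2$ are i.i.d. with mean $v_\infty^a$. This is because $\E[\nabla l\nabla l^\top]=\Sigma_\infty$ gives $\E(\xi^{a,\infty})^2=a^\top H_\infty^{-1}\Sigma_\infty H_\infty^{-1}a=v_\infty^a$. They are also bounded by $(C/\mu_\infty)^2$, so Hoeffding's inequality gives $\P(\tfrac1n\sum_i(\xi_i^{a,\infty})^2<\tfrac34v_\infty^a)=O(e^{-cn})$.

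For the third event, Markov's inequality gives $\P(D_n>\delta)\le\delta^{-2}\E D_n^2$. Lemma~\ref{lem:cm-reg}(iii) gives $\E\|\theta_0^\star-\theta_\infty^\star\|^2=O(1/n)$. For the Hessian term, I split
\[
\|H_{0,n}-H_\infty\|_\op\le\|H_n(\theta_0^\star)-H_n(\theta_\infty^\star)\|_\op+\|H_n(\theta_\infty^\star)-H_\infty\|_\op .
\]
The first piece is at most $K_3\|\theta_0^\star-\theta_\infty^\star\|$ by (A3). The second piece has second moment $O(1/n)$, since it is an average of i.i.d. bounded centered matrices. Together these give $\E D_n^2=O(1/n)$, hence $\P(\mathcal G_n'^c)=O(n^{-1})$. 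This is the only polynomial rate among the three events, which explains the stated order.

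On $\mathcal G_n'$, Lemma~\ref{lem:cm-reg}(ii) gives $H_{0,n}\succeq\frac{\mu_\infty}2I$, so $\|b_a\|\le 2/\mu_\infty$. Hence $|\xi_i^a|\le\|b_a\|\,\|\psi_i\|\le 2C/\mu_\infty=B_a$, and $|\bar\xi_s^a|\le B_a$ follows because it is an average of such terms. The upper bound $\hat v_a\le B_a^2$ is then immediate.

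The main step is the lower floor $\hat v_a\ge c_a$. I will compare $\xi_i^a$ with $\xi_i^{a,\infty}$ by writing
\[
\xi_i^a-\xi_i^{a,\infty}=(b_a-H_\infty^{-1}a)^\top\psi_i+(H_\infty^{-1}a)^\top\bigl(\nabla l(X_i,\theta_0^\star)-\nabla l(X_i,\theta_\infty^\star)\bigr).
\]
For the first term I use the resolvent identity $H_{0,n}^{-1}-H_\infty^{-1}=H_{0,n}^{-1}(H_\infty-H_{0,n})H_\infty^{-1}$. This bounds the term by $\frac{2C}{\mu_\infty^2}\|H_{0,n}-H_\infty\|_\op$. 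The second term is at most $\frac{K_2}{\mu_\infty}\|\theta_0^\star-\theta_\infty^\star\|$. Altogether, $|\xi_i^a-\xi_i^{a,\infty}|\le\kappa_0D_n\le\kappa_0\delta$, with the constants up to the factor-of-2 bookkeeping built into $\kappa_0$.

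Then, using $|x^2-y^2|\le|x-y|(|x|+|y|)$ together with $|\xi_i^a|,|\xi_i^{a,\infty}|\le B_a$, I get
\[
\Bigl|\hat v_a-\tfrac1n\textstyle\sum_i(\xi_i^{a,\infty})^2\Bigr|\le 2B_a\kappa_0\delta=\tfrac14v_\infty^a .
\]
Combined with the second event, this gives $\hat v_a\ge\frac34v_\infty^a-\frac14v_\infty^a=\frac12v_\infty^a=c_a$.

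The only care needed is the constant matching in $\kappa_0$: the bound $\|H_\infty^{-1}\|\le1/\mu_\infty$ versus $2/\mu_\infty$ for $H_{0,n}^{-1}$. If this is off by a factor of 2, I will absorb it by slightly shrinking $\delta$, which does not affect the $O(n^{-1})$ rate.
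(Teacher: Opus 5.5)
Your proposal is correct and matches the paper's proof essentially step for step. Both arguments use the same three-event union bound: exponential tails for $\mathcal G_n$ and for the Hoeffding event, and an $O(n^{-1})$ Markov-type bound for $\{D_n>\delta\}$, where you bound the second moment of the Hessian average in place of the paper's matrix Bernstein step. They also share the envelope $\|b_a\|\le 2/\mu_\infty$ for the score bounds and the $2B_a$-Lipschitz perturbation $|e_n|\le 2B_a\kappa_0 D_n\le\tfrac14 v_\infty^a$ for the floor. Your worry about constants is unfounded: the resolvent bound gives exactly $\tfrac{2C}{\mu_\infty^2}\|H_{0,n}-H_\infty\|_\op+\tfrac{K_2}{\mu_\infty}\|\theta_0^\star-\theta_\infty^\star\|\le\kappa_0 D_n$, which is precisely what the definition of $\kappa_0$ absorbs, so no shrinking of $\delta$ is needed.
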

\begin{proof}
The score bounds hold throughout $\mathcal G_n$: $\|b_a\|=\|H_{0,n}^{-1}a\|\le2/\mu_\infty$ gives $|\xi_i^a|\le\|b_a\|C\le B_a$ and $|\bar\xi_s^a|\le B_a$, whence $\hat v_a\le B_a^2$. For the floor, decompose $\hat v_a=\frac1n\sum_i(\xi_i^{a,\infty})^2+e_n$ with $e_n:=\hat v_a-\frac1n\sum_i(\xi_i^{a,\infty})^2$. The $\xi_i^{a,\infty}$ are i.i.d.\ with $\E(\xi_i^{a,\infty})^2=v_\infty^a$ and $|\xi_i^{a,\infty}|\le C/\mu_\infty\le B_a$. On $\mathcal G_n'$ the leading term is $\ge\frac34 v_\infty^a$ by \eqref{eqn:thm-floorset}. For the plug-in error, $x\mapsto x^2$ is $2B_a$-Lipschitz on $[-B_a,B_a]$, so $|e_n|\le2B_a\kappa_0 D_n$, using $\|\psi_i-\psi_i^\infty\|\le K_2\|\theta_0^\star-\theta_\infty^\star\|$ and $\|b_a-H_\infty^{-1}a\|\le\mu_\infty^{-2}\|H_{0,n}-H_\infty\|_\op$. On $\mathcal G_n'$, $D_n\le\delta$, so $|e_n|\le2B_a\kappa_0\delta=\frac14 v_\infty^a$ by the choice of $\delta$. Hence $\hat v_a\ge\frac34 v_\infty^a-\frac14 v_\infty^a=c_a$. For the probability, each of the three events in \eqref{eqn:thm-floorset} fails with probability $O(n^{-1})$. The event $\mathcal G_n^c$ has $\P(\mathcal G_n^c)=O(e^{-c'n})$ by Lemma~\ref{lem:cm-reg}(i). The mean-square event fails with probability $O(e^{-cn})$ by Hoeffding's inequality applied to the i.i.d.\ bounded $(\xi_i^{a,\infty})^2$. Finally $\P(D_n>\delta)\le\P(\|\theta_0^\star-\theta_\infty^\star\|>\delta/2)+\P(\|H_{0,n}-H_\infty\|_\op>\delta/2)=O(n^{-1})$ by Lemma~\ref{lem:cm-reg}(iii) with Markov and by (A3)-Lipschitzness with matrix Bernstein. Summing gives $\P(\mathcal G_n'^c)=O(n^{-1})$.
\end{proof}

\paragraph{The conditional comparison.} Combining the linearization error with Lemma~\ref{lem:cm-be}, conditional on $(\bX_{1:n},\pi)$,
\begin{align}
W_1(\mathcal L_\sub^{a,\bX_{1:n},\pi},\mathcal L_\Diri^{a,\bX_{1:n}})&\lesssim 
\underbrace{\sqrt{v_\sub^a} (\rho_\sub^a +(S\gamma)^{-1/2})}_{\text{(BE-blk)}} \label{eqn:thm-cond} \\
&+\underbrace{|\sqrt{v_\sub^a}-\sqrt{v_\Diri^a}|}_{\text{(gap)}} \label{eqn:thm-cond2}\\
&+\underbrace{\sqrt{v_\Diri^a} (\rho_\Diri^a+n^{-1/2})}_{\text{(BE-dir)}} \label{eqn:thm-cond3}\\
&+ 
\E[\|r_\sub\|\mid \bX_{1:n},\pi]+\E[\|r_\Diri\|\mid \bX_{1:n}]. \label{eqn:thm-cond4}
\end{align}
The three bracketed terms are, in order, the block Berry-Esseen error, the standard-deviation gap between the two Gaussians, and the per-record Berry-Esseen error. The last two expectations are the linearization remainders. 

\paragraph{Taking expectations.} It remains to integrate \eqref{eqn:thm-cond}-\eqref{eqn:thm-cond4} over $(\bX_{1:n},\pi)$. We do so on the floor event $\mathcal G_n'$ of Lemma~\ref{lem:cm-floor}, bounding the four contributions one at a time and handling the complement $\mathcal G_n'^c$ at the very end. 
We work under $S\gamma\ge3$, which is what Lemma~\ref{lem:cm-be} requires.  Note that $S\gamma^\star+1=\frac{(S-1)(n+1)}{n-1}$ gives $S\gamma^\star>3$ for every $S\ge5$ and $n>1$, so the calibrated choice always satisfies it.

\paragraph{Bounding \eqref{eqn:thm-cond3}.} Observe  $\sum_i|\xi_i^a|^3\le B_a\sum_i(\xi_i^a)^2$ and $\sqrt{v_\Diri^a}\le B_a$, whence on $\mathcal G_n'$
\begin{align*}
\sqrt{v_\Diri^a} \rho_\Diri^a=\frac{\sum_i|\xi_i^a|^3}{\sqrt{n+1} \sum_i(\xi_i^a)^2}\le\frac{B_a}{\sqrt{n+1}},\qquad
\sqrt{v_\Diri^a} n^{-1/2}\le B_a n^{-1/2},
\end{align*}
and the per-record Berry-Esseen contribution $\sqrt{v_\Diri^a}(\rho_\Diri^a+n^{-1/2})$ is $\lesssim n^{-1/2}$.

\paragraph{Bounding \eqref{eqn:thm-cond}.} Fix $\bX_{1:n}$ on $\mathcal G_n'$, so the scores $\xi_i^a$ are fixed, centered, and bounded by $B_a$, and view $\bar\xi_1^a=\frac1{n_S}\sum_{i\in I_1}\xi_i^a$ as the mean of a size-$n_S$ sample drawn without replacement from $\{\xi_i^a\}$. By Theorem~4 of \cite{hoef:63} and its MGF argument therein, $\E_\pi e^{\lambda\bar\xi_1^a}\le e^{\lambda^2\tau^2/2}$ with $\tau:=B_a/\sqrt{n_S}=B_a\sqrt{S/n}$, so $\bar\xi_1^a$ is sub-Gaussian in $\pi$ with $\|\bar\xi_1^a\|_{\psi_2}\le c \tau$ (\citealp[Prop.~2.6.1]{vers:18}) for some constant $c>0$. 

Recall $M_2=\frac{1}{S}\sum_{s=1}^S (\bar \xi_s^a)^2$ with $v_\sub^a=\frac{n}{S\gamma+1}M_2$. Define $\sigma_g^2:=\E_\pi M_2=\frac{S-1}{n-1}\hat v_a$ (Lemma~\ref{lem:cm-diri}), and on $\mathcal G_n'$ the floor $c_a\le\hat v_a\le B_a^2$ (Lemma~\ref{lem:cm-floor}) together with $\frac{S}{2n}\le\frac{S-1}{n-1}\le\frac Sn$ implies the bounds
\begin{align}\label{eqn:thm-sigmag}
\tfrac{c_a S}{2n}\le\sigma_g^2\le\tfrac{B_a^2 S}{n}=\tau^2\le\tfrac{2B_a^2}{c_a} \sigma_g^2.
\end{align}
Writing $M_k:=\frac1S\sum_s|\bar\xi_s^a|^k$, the term to control is $\sqrt{v_\sub^a}\rho_\sub^a=\sqrt{\frac{n}{(S\gamma+1)S}} \frac{M_3}{M_2}$. We split on the typical event $\mathcal T:=\{M_2\ge\frac12\sigma_g^2\}$.

On $\mathcal T$ we have $\frac{M_3}{M_2}\le\frac{2M_3}{\sigma_g^2}$, and exchangeability with the sub-Gaussian third-moment bound (\citealp[Prop.~2.6.1]{vers:18}) and \eqref{eqn:thm-sigmag} give $\E_\pi[\frac{M_3}{M_2}\mathbf 1_\mathcal T]\le\frac2{\sigma_g^2}\E_\pi|\bar\xi_1^a|^3\lesssim_a\sigma_g$. On the rare event $\mathcal T^c$ we use $\frac{M_3}{M_2}\le\max_s|\bar\xi_s^a|$ and apply the Cauchy-Schwarz inequality to $\E_\pi[\frac{M_3}{M_2}\mathbf 1_{\mathcal T^c}]$. First, we observe the maximal inequality $\E_\pi\max_s(\bar\xi_s^a)^2\lesssim\sigma_g^2\log S$ (\citealp[Prop.~2.7.6]{vers:18}) and the Chebyshev bound $\P_\pi(\mathcal T^c)\le4\Var_\pi(M_2)/\sigma_g^4=O(1/S)$ (Lemma~\ref{lem:cm-sdgap}). These lead to the bound $\E_\pi[\frac{M_3}{M_2}\mathbf 1_{\mathcal T^c}]\lesssim_a\sigma_g\sqrt{(\log S)/S}$. 
Adding the two, $\E_\pi[\frac{M_3}{M_2}\mathbf 1_{\mathcal G_n'}]\lesssim_a\sigma_g\le\tau=B_a\sqrt{S/n}$, so that
\begin{align*}
\E_\pi[\sqrt{v_\sub^a} \rho_\sub^a \mathbf 1_{\mathcal G_n'}] \lesssim_a\sqrt{\frac{n}{(S\gamma+1)S}}\cdot B_a\sqrt{S/n} =B_a\,(S\gamma+1)^{-1/2}.
\end{align*}
For the self-normalizer part, by Jensen and \eqref{eqn:thm-varmatch},
$\E_\pi\sqrt{v_\sub^a}\le\sqrt{\E_\pi v_\sub^a}=\sqrt{R_{S,n}(\gamma)}\sqrt{v_\Diri^a}$, whence
\begin{align*}
\E_\pi\bigl[\sqrt{v_\sub^a}\,(S\gamma)^{-1/2}\bigr]
\le B_a\sqrt{\frac{R_{S,n}(\gamma)}{S\gamma}}
=B_a\sqrt{\frac{(S-1)(n+1)}{(n-1)\,S\gamma}}\,(S\gamma+1)^{-1/2}
\le\frac{2B_a}{\sqrt{\underline\gamma}}\,(S\gamma+1)^{-1/2}
\end{align*}
for $\gamma\ge\underline\gamma$, using $(S-1)/S\le1$ and $(n+1)/(n-1)\le2$ for $n\ge3$. Combined with the preceding display, the block term is $\lesssim_a(S\gamma+1)^{-1/2}$ in $\pi$-mean.

\paragraph{Bounding \eqref{eqn:thm-cond2}.} This is exactly the content of
Lemma~\ref{lem:cm-sdgap}: with the floor $\hat v_a\ge c_a$, the gap averages to
$\E_\pi[|\sqrt{v_\sub^a}-\sqrt{v_\Diri^a}|\mid\bX_{1:n}]
\lesssim_a\bigl|1-\sqrt{R_{S,n}(\gamma)}\bigr|+(S\gamma+1)^{-1/2}$ on $\mathcal G_n'$.

\paragraph{Bounding \eqref{eqn:thm-cond4}.} By the linearization bound \eqref{eqn:thm-w1remainder} and Lemma~\ref{lem:cm-lin}, together with the partition-mean rate $\E_\pi[\bar\sigma_{\psi,n}^2+\bar\kappa_n^2\mid \bX_{1:n}]=O(S/n)$ of \eqref{eqn:vm-margrate}, the block remainder splits as in Section~\ref{app:proof-prop}. On $\tilde E_n^\sub$ the small-remainder bound gives $\E_{\bX_{1:n},\pi}[\|r_\sub\|\mathbf 1_{\tilde E_n^\sub}\mathbf 1_{\mathcal G_n'}]\le\Phi_\infty\frac{\sqrt n}{S\gamma} O(S/n)=O(n^{-1/2})$; off $\tilde E_n^\sub$ the crude bound $\|r_\sub\|\le\sqrt n D'$ and $\E_{\bX_{1:n}}\P(\tilde E_n^{\sub c}\mid \bX_{1:n},\pi)=O(\frac{S/n}{S\gamma})=O(1/(n\gamma))$ give $\sqrt n D'\cdot O(1/(n\gamma))=O(\underline\gamma^{-1}n^{-1/2})$ for
$\gamma\ge\underline\gamma$. The per-record remainder is $O(n^{-1/2})$ by the same argument, so the two remainders together contribute $O(n^{-1/2})$.

\paragraph{The complement of the floor event.} Finally, both projected laws are supported in $[-\sqrt n D_\Theta,\sqrt n D_\Theta]$, so $W_1\le2\sqrt n D_\Theta$ deterministically. Since $\P(\mathcal G_n'^c)=O(n^{-1})$, the discarded part contributes $\E[W_1\mathbf 1_{\mathcal G_n'^c}]\le2\sqrt n D_\Theta\cdot O(n^{-1})=O(n^{-1/2})$.

Collecting the four bounds gives
\begin{align*}
\E_{\bX_{1:n},\pi}\bigl[W_1(\mathcal L_\sub^{a,\bX_{1:n},\pi},\mathcal L_\Diri^{a,\bX_{1:n}})\bigr]
\lesssim (S\gamma+1)^{-1/2}+n^{-1/2}+\bigl|1-\sqrt{R_{S,n}(\gamma)}\bigr|,
\end{align*}
which at $\gamma=\gamma^\star$ reduces to $S^{-1/2}+n^{-1/2}$. \qed

\newpage 
\subsection{Proposition~\ref{prop:formal-dp}}
\label{sec:proof-formal-dp}

The proof has three steps. First, we bound the one-iteration R\'enyi divergence between $\cM_t(D)$ and $\cM_t(D')$, conditional on a realization $\bar w_n=w$ of the $n$th instance. This follows the replace-one analysis of Theorem C.9 in \cite{birr:etal:24}. Secondly, we then integrate the $w$-dependent one-iteration bound over the law of $\bar w_n$. Throughout, we work with the scaled R\'enyi divergence
\begin{align*}
    A_{\cM}(\lambda) := (\lambda-1) D_\lambda(P_{\cM(D)} \| P_{\cM(D')})
     = \log \E_{o_{1:T}\sim P_{\cM(D')}} [(P_{\cM(D)}(o_{1:T})/P_{\cM(D')}(o_{1:T}))^{\lambda}]
\end{align*}
at integer order $\lambda\ge 2$. Letting $A_{\cM_t}(\lambda):=\sup_{D,D',o_{<t}}A_{\cM_t|<t}(\lambda)$ where $A_{\cM_t|<t}(\lambda)$ is the conditional logarithmic moment given the prior outputs $o_{<t}$ for $K=1$ and, applying the standard additive argument \citep{abad:etal:16},
\begin{align}\label{eqn:bir-chain}
    A_{\cM}(\lambda) \le \sum_{t=1}^{T} A_{\cM_{t}}(\lambda)\leq T \sup_t A_{\cM_t}(\lambda).
\end{align}
If we run $K$ independent Monte Carlo replicates of an identical release,  the bound becomes $A_{\cM}(\lambda)\le TK\sup_t A_{\cM_t}(\lambda)$. Hence, it suffices to characterize $A_{\cM_{t}}$.

Note the bounded-gradient assumption makes the sup over $o_{<t}$ well-defined. Also, the Dirichlet allocation has nothing to do with the data. Throughout the privacy analysis we work with the general Dirichlet $(\bu,\tilde\bu)\sim\Diri(\gamma{\bf 1}_S,\alpha{\bf 1}_{\tilde S}/\tilde S)$ used in the posterior bootstrap setup (SM~\ref{app:pseudo-sample-alpha}). The marginal of the differing record is $\bar w_n=u_{s(n)}/n_S\sim\Beta(\gamma,(S-1)\gamma+\alpha)/n_S$. We also set $q$ is fixed.


\subsubsection{Step 1: one-iteration R\'enyi divergence conditional on $\bar w_n=w$}
\label{sec:bir-step1}

This part follows \citet[Appendix~C.1, Eqs.~(54)-(57), (89)-(90)]{birr:etal:24}. For the replace-one scheme, we assume $x_n\in D$ and $x_n'\in D'$ are differing records. Define $T_{\mathrm{com}}=B_t\cap\{1,\dots,n-1\}$ of the common indices where $B_t$ is the set of indices selected from Poisson sampling. The differing records come into play for privacy analysis by the Bernoulli inclusion indicator $B_n\sim\mathrm{Ber}(q)$ of the differing record. The per-step mechanism can be expressed by
\begin{align*}
    \cM_t(D)
    =\sum_{i\in T_{\mathrm{com}}}\bar w_i l'(x_i,g_t)
    +B_n \bar w_n l'(x_n,g_t)+Z,
\end{align*}
where $\mu_{\mathrm{com}}(T_{\mathrm{com}},\bw_{-n}):=\sum_{i\in T_{\mathrm{com}}}\bar w_i l'(x_i,g_t)$. Note $g_t$ is a function of the previous iterate. Centering the means of the conditional Gaussian releases by the common-sample shift $\mu_{\mathrm{com}}(T_{\mathrm{com}},\bar \bw)$, the conditional one-iteration R\'enyi divergence at order $\lambda$ is upper bounded by
\begin{align}\label{eqn:bir-Eq56-ours}
    D_\lambda^{|<t,\bar w=w}:=&D_\lambda (P_{\cM_t(D)|\bar w=w}\|P_{\cM_t(D')|\bar w=w})
    \\
    &\le 
    D_\lambda (q N_{\Delta,\sigma^{2}I_d}+(1-q) N_{0,\sigma^{2}I_d}\|
                     q N_{\Delta',\sigma^{2}I_d}+(1-q) N_{0,\sigma^{2}I_d}),
\end{align}
where
\begin{align*}
    \Delta := \bar w_n l'(x_n,g) = w l'(x_n,g),\qquad
    \Delta' := \bar w_n l'(x_n',g) = w l'(x_n',g),
\end{align*}
are the differing-record contributions on the event $\{B_n=1\}$ under $D$ and $D'$ respectively. Under the bounded gradient norm assumption, it follows that
\begin{align}\label{eqn:bir-Eq57-ours}
    \|\Delta\| \le wC,\qquad
    \|\Delta'\| \le wC,\qquad
    \|\Delta-\Delta'\| \le 2wC.
\end{align}
Birrell's Theorem~C.9 provides a non-asymptotic Taylor expansion in $q$, with explicit closed-form remainder, of the right-hand side of \eqref{eqn:bir-Eq56-ours} subject to \eqref{eqn:bir-Eq57-ours}. For any integer $m\ge 3$ and integer $\lambda\ge 2$, the conditional divergence $D_\lambda^{|<t,\bar w=w}$ in \eqref{eqn:bir-Eq56-ours} is bounded above
\begin{align}\label{eqn:bir-c9-ours}
    \exp [(\lambda-1) D_\lambda^{|<t,\bar w=w}]
     \le 
    1 + q^{2} \Phi_\lambda(w)
     + \sum_{k=3}^{m-1}\frac{q^{k}}{k!} \tilde F_{\lambda,k}^{\mathrm{Pois,ro}}(w)
     + \tilde E_{\lambda,m}^{\mathrm{Pois,ro}}(q,w),
\end{align}
where
\begin{align*}
    \Phi_\lambda(w)
     := \lambda(\lambda-1)(e^{w^{2}C^{2}/\sigma^{2}}-e^{-w^{2}C^{2}/\sigma^{2}})
     = 2\lambda(\lambda-1) \sinh \Bigl(\frac{w^{2}C^{2}}{\sigma^{2}}\Bigr).
\end{align*}

\paragraph{Explicit forms of $\tilde F$ and $\tilde E$.}
For convenience, we introduce the explicit Birrell-Theorem-C.9 forms with the Dirichlet weight. Let
\begin{align*}
    \tau_w := \frac{w^{2}C^{2}}{2\sigma^{2}}
\end{align*}
Define the base Gaussian-moment quantity
\begin{align}\label{eqn:bir-Mk}
    M_k(w)
     := \sum_{\ell=2}^{k}(-1)^{k-\ell}\binom{k}{\ell} e^{\ell(\ell-1)\tau_w}
     + (-1)^{k-1}(k-1),
    \qquad k\ge 2,
\end{align}
with $M_0(w)=M_1(w)=0$, and
\begin{align*}
    \tilde B_k(w)
     := \begin{cases}
        M_k(w), & k\text{ even},\\
        \sqrt{M_{k-1}(w) M_{k+1}(w)}, & k\text{ odd}.
    \end{cases}
\end{align*}
With these, for $2\le k\le m-1$,
\begin{align}\label{eqn:bir-Ftilde}
    \tilde F_{\lambda,k}^{\mathrm{Pois,ro}}(w)
    & = (\lambda-1)\lambda^{k-1}[ 4 M_k(w) \mathbf{1}\{k\text{ even}\}
         + 3 \sqrt{M_{k-1}(w) M_{k+1}(w)} \mathbf{1}\{k\text{ odd}\} ]\nonumber\\
    &\quad+ (\lambda-1)\lambda^{k-1} \tilde B_k(w) \Lambda_k(\lambda),\\[2pt]
\Lambda_k(\lambda)
    & := \sum_{j=0}^{k}\binom{k}{j} \left| \frac{\lambda}{\lambda-1} 
        \prod_{\ell=0}^{j-1} (1-\tfrac{\ell}{\lambda}) 
        \prod_{\ell=0}^{k-j-1} (1+\tfrac{\ell-1}{\lambda}) - 1 \right|, \notag
\end{align}
and the integral remainder reads
\begin{align}\label{eqn:bir-Etilde}
    \tilde E_{\lambda,m}^{\mathrm{Pois,ro}}(q,w)
     = \frac{q^{m}}{m!}\sum_{j=0}^{m}\binom{m}{j} (1-q)^{-(\lambda+m-j-1)} 
        \prod_{\ell=0}^{j-1}|\lambda-\ell|\prod_{\ell=0}^{m-j-1}(\lambda+\ell-1) 
        \Theta_{\lambda,m,j}(q,w),
\end{align}
where the inner remainder factor splits according to the sign of $\lambda-j$:
\begin{align}\label{eqn:bir-Theta}
    \Theta_{\lambda,m,j}(q,w)
     := \begin{cases}
        (1-q)^{\lambda-j} \tilde B_m(w), & j\ge\lambda,\\[4pt]
        \displaystyle\tilde B_m(w)+\sum_{\ell=0}^{\lambda-j}q^{\ell} 
            \frac{(\lambda-j)! m!}{(\lambda-j-\ell)! (m+\ell)!} \tilde B_{m+\ell}(w),
            & j<\lambda.
    \end{cases}
\end{align}
The vanishing factor $|\lambda-\ell|$ at $\ell=\lambda$ truncates the effective range of $j$ in \eqref{eqn:bir-Etilde} to $\{0,1,\dots,\lambda\}$. Empty products equal $1$ by convention. Note $w$ is mainly activated via $M_k(w)$. In \eqref{eqn:bir-c9-ours}, the $k=2$ Taylor contribution is written separately as the leading term $q^2\Phi_\lambda(w)$, so the displayed Taylor sum starts at $k=3$ even though $\tilde F_{\lambda,k}^{\mathrm{Pois,ro}}$ is defined from $k=2$ onward. This is the same bookkeeping as \citet[Appendix~C.1, Eqs.~(92)-(95)]{birr:etal:24}: the raw Taylor expansion includes $k=2,\ldots,m-1$, the $k=2$ coefficient is evaluated separately, and Theorem~C.9 displays the remainder sum from $k=3$.

\subsubsection{Step 2: handling the joint randomness of $\bar w$}
\label{sec:bir-step2}

The bound \eqref{eqn:bir-c9-ours} controls the conditional R\'enyi divergence at fixed $\bar w_n=w$. We now integrate it over the law of $\bar w_n$ under $\Diri(\gamma{\bf 1}_S,\alpha{\bf 1}_{\tilde S}/\tilde S)/n_S$ and its marginal
$\Beta(\gamma,(S-1)\gamma+\alpha)/n_S$. 

\paragraph{Joint-convexity Jensen step.}
The Hessian of $f_\lambda(x,y)=x^{\lambda}y^{1-\lambda}$ on $(0,\infty)^{2}$ has determinant zero and positive trace, so it is positive semidefinite. Pointwise Jensen and Fubini \citep[Theorem~13]{van:harr:14} give, for any kernels $w\mapsto p_w,q_w$ and any law $\mu$ on $w$,
\begin{align*}
    e^{(\lambda-1) D_\lambda(\E_w p_w \| \E_w q_w)}
     \le \E_{w\sim\mu} [ e^{(\lambda-1) D_\lambda(p_w \| q_w)} ].
\end{align*}
Since $P_{\cM_t}(o)=\int P_{\cM_t|\bar \bw}(o)dP_{\bar\bw}$, the marginal logarithmic R\'enyi moment of one release satisfies
\begin{align}\label{eqn:bir-marginal}
    \exp\{A_{\cM_t}(\lambda)\}
    &\le\E_{\bar w} [\exp \{(\lambda-1)D_\lambda^{|<t,\bar w}\}]
     \notag \\ 
    &\le 1+q^{2}\E_{\bar w} [\Phi_\lambda(\bar w)]
    +\E_{\bar w} \Bigl[\sum_{k=3}^{m-1}\frac{q^{k}}{k!} \tilde F_{\lambda,k}^{\mathrm{Pois,ro}}(\bar w)
    +\tilde E_{\lambda,m}^{\mathrm{Pois,ro}}(q,\bar w)\Bigr].
\end{align}

\paragraph{Beta-moment Taylor building block.}
Every $w$-dependence in \eqref{eqn:bir-marginal} reduces to expectations of the form
\begin{align}\label{eqn:bir-g}
    g_\ell := \E_{\bar w} [\exp(\beta_\ell \bar w^{2})],
    \qquad \beta_\ell := \frac{\ell(\ell-1) C^{2}}{2\sigma^{2}},
    \qquad \ell\ge 2,
\end{align}
either directly (through $\Phi_\lambda$ and the $M_k$'s in $\tilde F, \tilde E$) or after Cauchy-Schwarz for odd $k$. Expanding the exponential, applying termwise the Beta-moment formula $\E[\bar w^{2j}]=(\gamma)_{2j}/((S\gamma+\alpha)_{2j}\,n_S^{2j})$ (dominated convergence is satisfied because $\bar w\le 1/n_S$), and truncating at level $J$,
\begin{align*}
    {\cal T}_{J,\ell} := \sum_{j=0}^{J}\frac{\beta_\ell^{j}}{j!} 
        \frac{(\gamma)_{2j}}{(S\gamma+\alpha)_{2j} n_S^{2j}}
     \le  g_\ell  \le {\cal T}_{J,\ell} + \mathcal{R}_{J,\ell},
    \quad
    \mathcal{R}_{J,\ell} := e^{\beta_\ell/n_S^{2}}-\sum_{j=0}^{J}\frac{(\beta_\ell/n_S^{2})^{j}}{j!},
\end{align*}
where the lower bound is the truncated series (every term non-negative) and the upper bound uses $\bar w\le 1/n_S$ to dominate the tail. The two quantities ${\cal T}_{J,\ell}$ and $\mathcal{R}_{J,\ell}$ are the only pieces of arithmetic that depend on the ratio $n_S=n/S$ between the sample size and the number of blocks. Everything else in Step~2 is purely combinatorial.

\paragraph{Dirichlet integration of the leading term.}
For $\Phi_\lambda(w)=2\lambda(\lambda-1)\sinh(w^{2}C^{2}/\sigma^{2})$, expanding $\sinh$ as $\sum_{j\ge 0}x^{2j+1}/(2j+1)!$ at $x=\bar w^{2}C^{2}/\sigma^{2}$ and applying the Beta-moment formula gives the closed-form series
\begin{align*}
    \E_{\bar w} [\Phi_\lambda(\bar w)]
     = 2\lambda(\lambda-1)\sum_{j=0}^{\infty}\frac{(C/\sigma)^{4j+2}}{(2j+1)!}\cdot\frac{(\gamma)_{4j+2}}{(S\gamma+\alpha)_{4j+2} n_S^{4j+2}}.
\end{align*}
Truncating at level $J$ and using $\bar w\le 1/n_S$ on the tail,
\begin{align}\label{eqn:bir-MR-def}
    M_{J,\lambda}
    &:= 2\lambda(\lambda-1)\sum_{j=0}^{J}\frac{(C/\sigma)^{4j+2}}{(2j+1)!}\cdot\frac{(\gamma)_{4j+2}}{(S\gamma+\alpha)_{4j+2} n_S^{4j+2}},\nonumber\\
    R_{J,\lambda}
    &:= 2\lambda(\lambda-1)\Bigl[\sinh (C^{2}/(\sigma^{2}n_S^{2}))-\sum_{j=0}^{J}\frac{(C/(\sigma n_S))^{4j+2}}{(2j+1)!}\Bigr],
\end{align}
so that $\E_{\bar w}[\Phi_\lambda(\bar w)]\le M_{J,\lambda}+R_{J,\lambda}$.

\paragraph{Dirichlet integration of the higher-order terms.}
For $M_k(\bar w)$ in \eqref{eqn:bir-Mk}, $\E[M_k(\bar w)]=\sum_{\ell=2}^{k}(-1)^{k-\ell}\binom{k}{\ell}g_\ell+(-1)^{k-1}(k-1)$; upper-bounding $g_\ell$ on the positive-sign terms ($\ell\equiv k  \mod 2$) and lower-bounding on the negative-sign terms via the building-block inequality of \eqref{eqn:bir-g} gives
\begin{align}\label{eqn:bir-Mbar}
    \E[M_k(\bar w)]
     \le \overline{M}_{J,k}^{\mathrm{Dir}}
     := \sum_{\ell=2}^{k}(-1)^{k-\ell}\binom{k}{\ell} {\cal T}_{J,\ell}
     + \sum_{\substack{\ell=2\\ \ell\equiv k (\mathrm{mod} 2)}}^{k}\binom{k}{\ell} \mathcal{R}_{J,\ell}
     + (-1)^{k-1}(k-1).
\end{align}
For the odd-$k$ envelope, Cauchy-Schwarz on $\sqrt{M_{k-1}}\sqrt{M_{k+1}}$ yields
\begin{align}\label{eqn:bir-Bbar}
    \E[\tilde B_k(\bar w)]
     \le \overline{B}_{J,k}^{\mathrm{Dir}}
     := \begin{cases}
        \overline{M}_{J,k}^{\mathrm{Dir}}, & k\text{ even},\\
        \sqrt{\overline{M}_{J,k-1}^{\mathrm{Dir}} \overline{M}_{J,k+1}^{\mathrm{Dir}}}, & k\text{ odd}.
    \end{cases}
\end{align}
Substituting \eqref{eqn:bir-Mbar} and \eqref{eqn:bir-Bbar} into the explicit forms \eqref{eqn:bir-Ftilde}-\eqref{eqn:bir-Theta} of $\tilde F$ and $\tilde E$ (every $w$-dependence enters through either $M_k(w)$ or $\tilde B_k(w)$),
\begin{align*}
    &\E[\tilde F_{\lambda,k}^{\mathrm{Pois,ro}}(\bar w)]
     \le \overline{F}_{J,\lambda,k}^{\mathrm{Dir}}
     \\ 
    &:= (\lambda-1)\lambda^{k-1}[
        4 \overline{M}_{J,k}^{\mathrm{Dir}} \mathbf{1}\{k\text{ even}\}
        +3 \sqrt{\overline{M}_{J,k-1}^{\mathrm{Dir}} \overline{M}_{J,k+1}^{\mathrm{Dir}}} \mathbf{1}\{k\text{ odd}\}]
        +(\lambda-1)\lambda^{k-1} \overline{B}_{J,k}^{\mathrm{Dir}} \Lambda_k(\lambda),
\end{align*}
and, with $\overline{\Theta}_{J,\lambda,m,j}(q)$ obtained from \eqref{eqn:bir-Theta} by the substitution $\tilde B_p(w)\to\overline{B}_{J,p}^{\mathrm{Dir}}$,
\begin{align*}
    &\E[\tilde E_{\lambda,m}^{\mathrm{Pois,ro}}(q,\bar w)]
     \le \overline{E}_{J,\lambda,m}^{\mathrm{Dir}}(q)
     \\ 
    &:= \frac{q^{m}}{m!}\sum_{j=0}^{m}\binom{m}{j} (1-q)^{-(\lambda+m-j-1)}
        \prod_{\ell=0}^{j-1}|\lambda-\ell|\prod_{\ell=0}^{m-j-1}(\lambda+\ell-1) 
        \overline{\Theta}_{J,\lambda,m,j}(q).
\end{align*}
Bundling the two contributions into the non-asymptotic Dirichlet-integrated upper bound denoted by $\mathcal R_{J,\lambda,m}$ in Proposition~\ref{prop:formal-dp},
\begin{align}\label{eqn:bir-Rcal-bar}
    \mathcal{R}_{J,\lambda,m}(q)
     := \sum_{k=3}^{m-1}\frac{q^{k}}{k!} \overline{F}_{J,\lambda,k}^{\mathrm{Dir}}
     + \overline{E}_{J,\lambda,m}^{\mathrm{Dir}}(q),
\end{align}
which by construction satisfies \( \E_{\bar w} [\sum_{k=3}^{m-1} \frac{q^{k}}{k!}\tilde F_{\lambda,k}^{\mathrm{Pois,ro}}(\bar w)+\tilde E_{\lambda,m}^{\mathrm{Pois,ro}}(q,\bar w)] \;\le\;\mathcal{R}_{J,\lambda,m}(q). \) Substituting \eqref{eqn:bir-MR-def} and \eqref{eqn:bir-Rcal-bar} into \eqref{eqn:bir-marginal} gives the per-release accountant
\begin{align}\label{eqn:bir-Aprim}
    A_{\cM_t}(\lambda)
    \leq \log [ 1 + q^{2}(M_{J,\lambda}+R_{J,\lambda}) + \mathcal{R}_{J,\lambda,m}(q) ]
     =: A_{J,\lambda,m}(q),
\end{align}
uniformly in $t$ and in the prior outputs $o_{<t}$. 

\subsubsection{Step 3: composition over $TK$ releases and conversion to $(\epsilon,\delta)$-DP}
\label{sec:bir-step3}

By the chain rule \eqref{eqn:bir-chain} together with the $K$-fold identical Monte Carlo replication of each iteration,
\begin{align}\label{eqn:bir-total-A}
    A_{\cM}(\lambda)\leq TK A_{J,\lambda,m}(q),\qquad
    \gamma_M^{\mathrm{tot}}(\lambda) := \frac{A_{\cM}(\lambda)}{\lambda-1} \le \frac{TK A_{J,\lambda,m}(q)}{\lambda-1},
\end{align}
so $\cM$ is $(\lambda,\gamma_M^{\mathrm{tot}}(\lambda))$-RDP under replace-one adjacency for every integer $\lambda\ge 2$. Converting this RDP guarantee to $(\epsilon,\delta)$-DP by the bound recalled in Section~\ref{app:conversion-to-dp} shows that $\cM$ is $(\epsilon,\delta_\star(\epsilon))$-DP with the transparent budget
\begin{align*}
    \delta_\star(\epsilon) = \inf_{\lambda\ge 2}\exp\{ TK A_{J,\lambda,m}(q)-(\lambda-1)\epsilon \},
\end{align*}
which is the one in Proposition~\ref{prop:formal-dp}. The optimal conversion \eqref{eqn:asoo-conversion} of \citet{asoo:etal:21}, stated in Section~\ref{app:conversion-to-dp}, only sharpens it.

\newpage

\subsection{Theorem~\ref{thm:asymp-dp}}
\label{sec:bir-step4}

Refer to the setup in the proof of Proposition~\ref{sec:proof-formal-dp}. Note $(\bu,\tilde\bu)\sim\Diri(\gamma\mathbf 1_S,\alpha{\bf 1}_{\tilde S}/\tilde S)$, i.e., $u_{s(n)}\sim\Beta(\gamma,(S-1)\gamma+\alpha)$ and $\bar w_n=u_{s(n)}/n_S$, with Beta moments
\begin{align}\label{eqn:step4-Beta-moments}
    \E[\bar w_n^{ 2j}]
     = \frac{(\gamma)_{2j}}{(S\gamma+\alpha)_{2j} n_S^{2j}},
    \qquad
    \E[\bar w_n^{ 2}]
     = \frac{\gamma(\gamma+1)}{(S\gamma+\alpha)(S\gamma+\alpha+1) n_S^{2}}.
\end{align}

\paragraph{Linearization of the per-release accountant.}
We expand the per-release bound \eqref{eqn:bir-c9-ours} in the regime $C/(\sigma n_S)\ll 1$. Three groups of terms appear there: the leading $q^{2}\Phi_\lambda(\bar w)$, the Taylor sum $\sum_{k=3}^{m-1}\tfrac{q^{k}}{k!}\tilde F_{\lambda,k}^{\mathrm{Pois,ro}}(\bar w)$, and the integral remainder $\tilde E_{\lambda,m}^{\mathrm{Pois,ro}}(q,\bar w)$.

Using the Taylor series $\sinh(z)=\sum_{j=0}^{\infty}z^{2j+1}/(2j+1)!$ at $z=\bar w^{2}C^{2}/\sigma^{2}$ and DCT with  $\bar w \le 1/n_S$ on the truncated Beta marginal of \eqref{eqn:step4-Beta-moments},
\begin{align*}
    \E_{\bar w} [\Phi_\lambda(\bar w)]
     = 2\lambda(\lambda-1)\sum_{j=0}^{\infty}\frac{(C/\sigma)^{4j+2}}{(2j+1)!} \E[\bar w^{4j+2}].
\end{align*}
Splitting off the $j=0$ slot, the relative correction is
\begin{align}\label{eqn:step4-Phi-relcorr}
    R_\Phi(\sigma,n_S;C)
     := \sum_{j=1}^{\infty}\frac{(C/\sigma)^{4j}}{(2j+1)!}\cdot\frac{\E[\bar w^{4j+2}]}{\E[\bar w^{2}]},
\end{align}
so that $\E_{\bar w}[\Phi_\lambda(\bar w)]=2\lambda(\lambda-1)(C/\sigma)^{2}\E[\bar w^{2}](1+R_\Phi(\sigma,n_S;C))$. The fixed Dirichlet-parameter dependence of $R_\Phi$ through the law of $\bar w$ is suppressed in this notation. The deterministic bound $\bar w\le 1/n_S$ gives the moment-ratio inequality
\begin{align*}
    \frac{\E[\bar w^{4j+2}]}{\E[\bar w^{2}]}
     = \frac{\E[\bar w^{2}\cdot\bar w^{4j}]}{\E[\bar w^{2}]}
    \le \frac{1}{n_S^{4j}}, \qquad j\ge 1,
\end{align*}
and substituting into \eqref{eqn:step4-Phi-relcorr},
\begin{align}\label{eqn:step4-Phi-RsharpUB}
    R_\Phi(\sigma,n_S;C)
     \le \sum_{j=1}^{\infty}\frac{1}{(2j+1)!} \Bigl(\frac{C}{\sigma n_S}\Bigr)^{4j}.
\end{align}
The geometric-tail sum on the right of \eqref{eqn:step4-Phi-RsharpUB} admits the closed form
\begin{align*}
    \sum_{j=1}^{\infty}\frac{1}{(2j+1)!}\Bigl(\frac{C}{\sigma n_S}\Bigr)^{4j}
     = \sum_{j=1}^{\infty}\frac{\eta^{2j}}{(2j+1)!}
     = \frac{\sinh(\eta)}{\eta}-1
    \quad\text{with}\quad
    \eta := \frac{C^{2}}{\sigma^{2}n_S^{2}},
\end{align*}
obtained from $\sinh(z)/z=\sum_{k\ge 0}z^{2k}/(2k+1)!$ at $z=\eta$ after subtracting the $k=0$ slot. Substituting into the definition of $R_\Phi$,
\begin{align}\label{eqn:step4-Phi-leading}
\begin{aligned}
    \E_{\bar w} [\Phi_\lambda(\bar w)]
    & = 2\lambda(\lambda-1) \frac{C^{2}}{\sigma^{2}} \E[\bar w_n^{2}](1+R_\Phi(\sigma,n_S;C))\\
    & = \frac{2\lambda(\lambda-1)\gamma(\gamma+1) C^{2}}{\sigma^{2} (S\gamma+\alpha)(S\gamma+\alpha+1) n_S^{2}}(1+R_\Phi(\sigma,n_S;C)),
\end{aligned}
\end{align}
with the explicit pointwise residual
\begin{align*}
    0 \le R_\Phi(\sigma,n_S;C) \le \frac{\sinh(\eta)}{\eta}-1
     \le \tfrac{1}{6} \eta^{2} e^{\eta}.
\end{align*}
The last inequality follows from
\begin{align*}
    \frac{\sinh(\eta)}{\eta}-1
    =\eta^{2}\sum_{k=0}^{\infty}\frac{\eta^{2k}}{(2k+3)!}
    \le \frac{\eta^{2}}{6}\sum_{k=0}^{\infty}\frac{\eta^{2k}}{(2k)!}
    =\frac{\eta^{2}}{6}\cosh(\eta)
    \le\frac{\eta^{2}}{6}e^\eta,
    \qquad \eta\ge0.
\end{align*}

Thus, the leading-slot residual is controlled by the local expansion parameter itself:
\begin{align*}
    R_\Phi(\sigma,n_S;C)
     = O \left(\Bigl(\frac{C}{\sigma n_S}\Bigr)^4\right)
     = o(1)
    \qquad\text{whenever } C/(\sigma n_S)\to0,
\end{align*}


\paragraph{Bounding the higher Taylor terms and the integral remainder.}
We now show the contributions of the third and fourth terms in \eqref{eqn:bir-c9-ours}.

\paragraph{Cancellation of $M_k(w)$ to second order.}
Recall from \eqref{eqn:bir-Mk} that $M_k(w)=\sum_{\ell=2}^{k}(-1)^{k-\ell}\binom{k}{\ell}e^{\ell(\ell-1)\tau_w}+(-1)^{k-1}(k-1)$ with $\tau_w=w^{2}C^{2}/(2\sigma^{2})$. Since $M_k$ is real-analytic in $\tau_w$, expanding each exponential as $e^{\ell(\ell-1)\tau_w}=\sum_{p\ge 0}[\ell(\ell-1)]^{p}\tau_w^{p}/p!$ and collecting powers gives $M_k(w)=\sum_{p=0}^{\infty}\mathsf c_{k,p}\,\tau_w^{p}$ with $\mathsf c_{k,0}=\sum_{\ell=2}^{k}(-1)^{k-\ell}\binom{k}{\ell}+(-1)^{k-1}(k-1)$ and, for $p\ge 1$,
\begin{align}\label{eqn:step4-Mk-coef}
    \mathsf c_{k,p} := \frac{1}{p!}\sum_{\ell=2}^{k}(-1)^{k-\ell}\binom{k}{\ell}[\ell(\ell-1)]^{p}.
\end{align}
For the two-order cancellation claimed in this paragraph, it is enough to show $\mathsf c_{k,0}=0$ and $\mathsf c_{k,1}=0$ for $k\ge 3$, because then $M_k(w)=O(\tau_w^2)=O\{(wC/\sigma)^4\}$. The sharper envelope used below, $M_k(w)=O\{(wC/\sigma)^{2\lceil k/2\rceil}\}$, follows from the same finite-difference identity applied to higher coefficients.

\paragraph{Constant coefficient ($p=0$).}
The binomial identity $\sum_{\ell=0}^{k}(-1)^{k-\ell}\binom{k}{\ell}=(1-1)^{k}=0$ ($k\ge 1$) split into $\ell\in\{0,1\}$ and $\ell\in\{2,\ldots,k\}$ gives $\sum_{\ell=2}^{k}(-1)^{k-\ell}\binom{k}{\ell}=(-1)^{k}(k-1)$, which combined with $(-1)^{k-1}(k-1)=-(-1)^{k}(k-1)$ yields
\begin{align*}
    \mathsf c_{k,0} = (-1)^{k}(k-1)+(-1)^{k-1}(k-1) = 0,\qquad k\ge 2.
\end{align*}

\paragraph{Linear coefficient ($p=1$).}
The factor $\ell(\ell-1)$ vanishes at $\ell=0$ and $\ell=1$, so the sum in \eqref{eqn:step4-Mk-coef} at $p=1$ is unchanged by extending the range to $\ell\in\{0,\ldots,k\}$:
\begin{align*}
    \mathsf c_{k,1}
     = \sum_{\ell=0}^{k}(-1)^{k-\ell}\binom{k}{\ell}\ell(\ell-1)
     = \sum_{\ell=0}^{k}(-1)^{k-\ell}\binom{k}{\ell}\ell^{2}
     - \sum_{\ell=0}^{k}(-1)^{k-\ell}\binom{k}{\ell}\ell.
\end{align*}
The finite-difference identity $\sum_{\ell=0}^{k}(-1)^{k-\ell}\binom{k}{\ell}\ell^{q}=k!\,S(q,k)$, with $S(q,k)$ the Stirling number of the second kind, then gives
\begin{align*}
    \mathsf c_{k,1}
     = k![S(2,k)-S(1,k)]
     = 0,\qquad k\ge 3,
\end{align*}
since $S(1,k)=0$ for $k\ge 2$ and $S(2,k)=0$ for $k\ge 3$.


\paragraph{Asymptotic order and the unified envelope $\tilde B_k$.}
The Stirling-number argument above gives the sharper cancellation: $\mathsf c_{k,p}=0$ for every $p$ with $2p<k$ (because $[\ell(\ell-1)]^{p}$ is a polynomial of degree $2p$ in $\ell$ whose finite-difference image $\sum_{\ell=0}^{k}(-1)^{k-\ell}\binom{k}{\ell}\ell^{q}=k!\,S(q,k)$ vanishes for $q<k$). Hence all powers below $p_k:=\lceil k/2\rceil$ vanish. Taylor's theorem then gives a uniform local bound over the fixed truncation range: for every fixed $m$ and every $\tau_0<\infty$, there exists $C_{m,M}<\infty$ such that, whenever $0\le\tau_w\le\tau_0$ and $2\le k\le m$,
\begin{align}\label{eqn:step4-Mk-asymp}
    |M_k(w)|
     \le C_{m,M} \tau_w^{p_k}
     \le C_{m,M}\left(\frac{wC}{\sigma}\right)^{2p_k}
     = \begin{cases}
        C_{m,M}(wC/\sigma)^{k},& k\text{ even,}\\[2pt]
        C_{m,M}(wC/\sigma)^{k+1},& k\text{ odd.}
    \end{cases}
\end{align}
In the asymptotic regime used here, $\tau_w\le C^2/(2\sigma^2 n_S^2)=o(1)$ uniformly over $0\le w\le1/n_S$, so \eqref{eqn:step4-Mk-asymp} applies uniformly over any fixed finite range of $k$. In particular, $M_k=O\{(wC/\sigma)^4\}$ already at $k=3,4$. Thus, for each fixed $k$,
\begin{align}\label{eqn:step4-Btilde-uniform}
    \tilde B_k(w) = O_k \left(\Bigl(\frac{wC}{\sigma}\Bigr)^k\right),
    \qquad 0\le w\le1/n_S,
\end{align}
i.e., $\tilde B_k$ vanishes to order $k$ in the local parameter $wC/\sigma$. On the finite Taylor-sum range $2\le k\le m-1$, the constants in \eqref{eqn:step4-Btilde-uniform} are absorbed into constants depending only on $m$.

\paragraph{Ratio bound between higher-$k$ terms and the leading slot.}
We first bound $\tilde F_{\lambda,k}^{\mathrm{Pois,ro}}(w)$ from \eqref{eqn:bir-Ftilde}, which splits into two pieces:
\begin{align*}
    &\underbrace{(\lambda-1)\lambda^{k-1}[4M_k(w)\mathbf 1\{k\text{ even}\}+3\sqrt{M_{k-1}(w) M_{k+1}(w)} \mathbf 1\{k\text{ odd}\}]}_{\text{Term A}}\\
    & + \underbrace{(\lambda-1)\lambda^{k-1}\tilde B_k(w) \Lambda_k(\lambda)}_{\text{Term B}}.
\end{align*}
Term A is bounded directly by the uniform envelope \eqref{eqn:step4-Btilde-uniform}: on the finite range $2\le k\le m-1$, $\tilde B_k(w)\le C_{m,B}(wC/\sigma)^{k}$ covers both $M_k$ for even $k$ and $\sqrt{M_{k-1}M_{k+1}}$ for odd $k$, so Term A $=O\{\lambda^{k}(wC/\sigma)^{k}\}$ uniformly over this finite range.

For Term B we must control the $\Lambda_k(\lambda)$ factor of \eqref{eqn:bir-Ftilde}, written as $\Lambda_k(\lambda)=\sum_{j=0}^{k}\binom{k}{j}\frac{\lambda}{\lambda-1}[P_j(\lambda)-1]$ with $P_j(\lambda):=\prod_{\ell=0}^{j-1}(1-\ell/\lambda)\prod_{\ell=0}^{k-j-1}(1+(\ell-1)/\lambda)$.

We provide a uniform bound of this under the fixed-truncation regime. The function $\tilde F_{\lambda,k}^{\mathrm{Pois,ro}}$ is defined for $2\le k\le m-1$, while only $k=3,\ldots,m-1$ appears in the Taylor sum because the $k=2$ term is the leading $q^2\Phi_\lambda$ term. For every integer $\lambda\ge2$ and $0\le \ell\le k-1$, with $2\le k\le m-1$, the two product factors are bounded by constants depending only on $m$: $|1-\ell/\lambda|\le 1+(m-2)/2$ and $|1+(\ell-1)/\lambda|\le 1+(m-2)/2$. Hence $|P_j(\lambda)|\le C_{m,\Lambda}$ and $|P_j(\lambda)-1|\le C_{m,\Lambda}$ uniformly over $j$, $k$, and $\lambda$. With $\binom{k}{j}$ summing to $2^{k}\le 2^m$ and $\lambda/(\lambda-1)\le 2$,
\begin{align*}
    |\Lambda_k(\lambda)| \le C_{m,\Lambda}\qquad\text{uniformly for integer }\lambda\ge2\text{ and }2\le k\le m-1,
\end{align*}
which is a finite constant at fixed truncation order.

Combining the two terms gives the pointwise uniform bound
\begin{align*}
    |\tilde F_{\lambda,k}^{\mathrm{Pois,ro}}(w)|
     \le C_{m,F} \lambda^k\Bigl(\frac{wC}{\sigma}\Bigr)^k,
    \qquad 0\le w\le 1/n_S,\quad 3\le k\le m-1,
\end{align*}
for a constant $C_{m,F}$ depending only on the fixed truncation order. Therefore the expectation is bounded by integrating this pointwise inequality:
\begin{align*}
    \E_{\bar w} [\tilde F_{\lambda,k}^{\mathrm{Pois,ro}}(\bar w)]
    & \le C_{m,F} \lambda^k(C/\sigma)^k \E[\bar w^k]\nonumber\\
    & \le C_{m,F} \lambda^{k} (C/\sigma)^{2} \E[\bar w^{2}] (C/(\sigma n_S))^{k-2},
    \qquad k\ge 3.
\end{align*}
Comparing to the leading slot $q^{2}\E[\Phi_\lambda(\bar w)]$ from \eqref{eqn:step4-Phi-leading},
\begin{align}\label{eqn:step4-ratio-bound}
    \frac{q^{k} \E[\tilde F_{\lambda,k}^{\mathrm{Pois,ro}}(\bar w)] / k!}
         {q^{2} \E[\Phi_\lambda(\bar w)]}
     = O (\tfrac{1}{k!}(q\lambda\cdot C/(\sigma n_S))^{k-2}),
    \qquad k\ge 3.
\end{align}
The suppression in \eqref{eqn:step4-ratio-bound} is controlled, at the current R\'enyi order, by the single small parameter $\varrho=\varrho(\lambda;\sigma,n_S):=q\lambda\cdot C/(\sigma n_S)$; below we write simply $\varrho$ while $\lambda$ is fixed. Each Taylor term has the explicit form $\varrho^{k-2}/k!$. Closing the Taylor sum,
\begin{align}\label{eqn:step4-ratio-sum}
    \sum_{k=3}^{m-1}\frac{q^{k} \E[\tilde F_{\lambda,k}^{\mathrm{Pois,ro}}(\bar w)]/k!}{q^{2}\E[\Phi_\lambda(\bar w)]}
     \le C_{m,F}\sum_{k=3}^{m-1}\frac{\varrho^{k-2}}{k!}
     \le C_{m,F}\sum_{k=3}^{\infty}\frac{\varrho^{k-2}}{k!}
     = C_{m,F} \frac{e^{\varrho}-1-\varrho-\tfrac{1}{2}\varrho^{2}}{\varrho^{2}},
\end{align}
where $C_{m,F}<\infty$ depends only on the fixed truncation order $m$. The closed form follows by writing $\sum_{k\ge 3}\varrho^{k}/k!=e^{\varrho}-1-\varrho-\tfrac{1}{2}\varrho^{2}$ and dividing by $\varrho^{2}$. As $\varrho\to 0$, the Taylor expansion $e^{\varrho}-1-\varrho-\tfrac{1}{2}\varrho^{2}=\tfrac{1}{6}\varrho^{3}+O(\varrho^{4})$ gives $C_{m,F}\sum_{k=3}^{\infty}\frac{\varrho^{k-2}}{k!}=C_{m,F}\{\frac{\varrho}{6}+O(\varrho^{2})\}=O(\varrho)$ as $\varrho\to 0$, dominated by the $k=3$ slot.

\paragraph{Separate bound on the integral remainder $\tilde E_{\lambda,m}^{\mathrm{Pois,ro}}$.}
The Lagrange-type remainder of \eqref{eqn:bir-Etilde}-\eqref{eqn:bir-Theta} is structurally distinct from the Taylor terms and admits its own bound. Writing $A_{j}:=\prod_{\ell=0}^{j-1}|\lambda-\ell|$ and $B_{m-j}:=\prod_{\ell=0}^{m-j-1}(\lambda+\ell-1)$ for the two $\lambda$-factor products in \eqref{eqn:bir-Etilde}, we bound each piece in turn.

\paragraph{Effective range of $j$.} The factor $A_{j}$ vanishes at
$\ell=\lambda$, so $j\ge\lambda+1$ contributes zero, and $\binom{m}{j}=0$ for $j>m$. Hence the sum runs effectively over $j\in\{0,1,\ldots,\min(\lambda,m)\}$.

\paragraph{Per-term $\lambda$-factor bounds.}
$A_{j}=\lambda(\lambda-1)\cdots(\lambda-j+1)\le \lambda^{j}$ and $B_{m-j}=(\lambda-1)\lambda\cdots(\lambda+m-j-2)\le (\lambda+m)^{m-j}$, each by bounding every factor by its largest member.

\paragraph{Binomial-theorem summation over $j$.} Summing the per-term
bound above weighted by $\binom{m}{j}$,
\begin{align*}
    \sum_{j=0}^{m}\binom{m}{j} A_{j} B_{m-j}
     \le \sum_{j=0}^{m}\binom{m}{j} \lambda^{j} (\lambda+m)^{m-j}
     = (\lambda+\lambda+m)^{m} = (2\lambda+m)^{m},
\end{align*}
by the binomial theorem applied to $(a+b)^{m}$ with $a=\lambda$ and $b=\lambda+m$. For $\lambda\gg m$, $(2\lambda+m)^{m}=(2\lambda)^{m}(1+m/(2\lambda))^{m}\le (2\lambda)^{m}\,e^{m^{2}/(2\lambda)}=O(\lambda^{m})$.

\paragraph{The $(1-q)$ factor.} This is the only place where the
sampling-ratio condition is used. Uniformly in $j$, $(1-q)^{-(\lambda+m-j-1)}\le (1-q)^{-(\lambda+m-1)}$ and, more precisely, the integral remainder carries the multiplicative factor
\begin{align*}
    (1-q)^{-(\lambda+m-1)}
    \le \exp \left\{\frac{(\lambda+m-1)q}{1-q}\right\}.
\end{align*}
At the optimizing order used below, and at the leading boundary scale, $\lambda^\star\asymp L/\epsilon$ with $L=\log(1/\delta)$. Therefore this factor is uniformly absorbed into the constants when $qL/\{(1-q)\epsilon\}=O(1)$. If $q$ is bounded away from one, this reduces to the simpler condition $qL/\epsilon=O(1)$. Without such a condition the same proof would carry an additional $\exp\{qL/((1-q)\epsilon)\}$ multiplier in the integral-remainder term, and the clean leading expansion would no longer follow from the stated small-o conditions alone.

\paragraph{The $\Theta$ factor.} For $j<\lambda$, \eqref{eqn:bir-Theta} reads
$\Theta_{\lambda,m,j}(q,w)=\tilde B_m(w)+\sum_{\ell=0}^{\lambda-j}q^{\ell}c_{\ell}\tilde B_{m+\ell}(w)$ with $c_{\ell}:=\frac{(\lambda-j)!\,m!}{(\lambda-j-\ell)!(m+\ell)!}$. Each factor in the numerator of $c_{\ell}$ is at most $\lambda$, each in the denominator is at least $m+1$, so
\begin{align*}
    c_{\ell} = \frac{(\lambda-j)(\lambda-j-1)\cdots(\lambda-j-\ell+1)}{(m+1)(m+2)\cdots(m+\ell)}
     \le \Bigl(\frac{\lambda}{m+1}\Bigr)^{\ell}.
\end{align*}
Combining with $\tilde B_{m+\ell}(w)=O((wC/\sigma)^{m+\ell}) =O((wC/\sigma)^{m})\cdot(C/(\sigma n_S))^{\ell}$ (via \eqref{eqn:step4-Btilde-uniform} and the a.s.\ bound $w\le 1/n_S$),
\begin{align*}
    |\Theta_{\lambda,m,j}(q,w)|
     \le  \tilde B_m(w) \Bigl[ 1+O(1)\sum_{\ell=0}^{\infty}(\tfrac{q\lambda C}{(m+1)\sigma n_S})^{\ell} \Bigr]
     = O(\tilde B_m(w)),
\end{align*}
where the geometric series converges whenever $\varrho=q\lambda C/(\sigma n_S)<m+1$, which is precisely the same small-$\varrho$ condition used in \eqref{eqn:step4-ratio-sum}. For $j=\lambda$, $\Theta_{\lambda,m,\lambda}=\tilde B_m(w)$ directly. So $|\Theta_{\lambda,m,j}(q,w)|=O(\tilde B_m(w))$ uniformly in $j$.

\paragraph{Assembling.}
Putting the bounds above together,
\begin{align*}
\begin{aligned}
    |\tilde E_{\lambda,m}^{\mathrm{Pois,ro}}(q,w)|
    & \le \frac{q^{m}}{m!}\cdot O(\tilde B_m(w))\cdot O(1)\cdot (2\lambda+m)^{m}\\
    & = O (\tfrac{(q\lambda)^{m}}{m!} \tilde B_m(w))
     = O (\tfrac{(q\lambda)^{m}}{m!} (wC/\sigma)^{m}),
\end{aligned}
\end{align*}
where the second equality absorbs $(2\lambda+m)^{m}/\lambda^{m}=O(1)$ from the binomial summation and the last uses $\tilde B_m(w)=O\{(wC/\sigma)^m\}$ from \eqref{eqn:step4-Btilde-uniform}. Taking expectation against $\bar w$ and applying the a.s. bound $\bar w\le 1/n_S$ via $\E[\bar w^{m}]=\E[\bar w^{2}\cdot\bar w^{m-2}]\le \E[\bar w^{2}]/n_S^{m-2}$,
\begin{align*}
    \E [\tilde E_{\lambda,m}^{\mathrm{Pois,ro}}(q,\bar w)]
     \le O (\tfrac{q^{m} \lambda^{m}}{m!} (C/\sigma)^{m} \E[\bar w^{2}]/n_S^{m-2})
     = O (\tfrac{q^{m} \lambda^{m}}{m!} (C/\sigma)^{2} \E[\bar w^{2}] (C/(\sigma n_S))^{m-2}).
\end{align*}
Dividing by $q^{2}\E[\Phi_\lambda(\bar w)]$ from \eqref{eqn:step4-Phi-leading},
\begin{align}\label{eqn:step4-Etilde-ratio}
    \frac{\E [\tilde E_{\lambda,m}^{\mathrm{Pois,ro}}(q,\bar w)]}{q^{2}\E[\Phi_\lambda(\bar w)]}
     = O (\tfrac{1}{m!} (q\lambda\cdot C/(\sigma n_S))^{m-2})
     = O (\varrho^{m-2}/m!),
    \qquad m\ge 3.
\end{align}

\paragraph{Summary: characterization of $\exp\{A_{\cM_t}(\lambda)\}$.}
Combining the per-iteration bound \eqref{eqn:bir-Aprim} from Step~2 with the linearization \eqref{eqn:step4-Phi-leading} of the leading slot, the bound \eqref{eqn:step4-ratio-bound}-\eqref{eqn:step4-ratio-sum} on the higher-$k$ Taylor terms, and the bound \eqref{eqn:step4-Etilde-ratio} on the integral remainder, the per-iteration moment-generating function admits the explicit characterization
\begin{align*}
    \exp\{A_{\cM_t}(\lambda)\}
     &\le 1 + q^{2} \E_{\bar w} [\Phi_\lambda(\bar w)]
         + \sum_{k=3}^{m-1}\tfrac{q^{k}}{k!} \E_{\bar w} [\tilde F_{\lambda,k}^{\mathrm{Pois,ro}}(\bar w)]
         + \E_{\bar w} [\tilde E_{\lambda,m}^{\mathrm{Pois,ro}}(q,\bar w)]\nonumber\\
     &\le 1 + q^{2} \E_{\bar w} [\Phi_\lambda(\bar w)] (1+O(\varrho))\nonumber\\
     &= 1 + \frac{2 q^{2}\lambda(\lambda-1)\gamma(\gamma+1) C^{2}}{\sigma^{2} (S\gamma+\alpha)(S\gamma+\alpha+1) n_S^{2}} (1+R_\Phi(\sigma,n_S;C)+O(\varrho)),
\end{align*}
where $\varrho$ denotes the same current-order quantity $q\lambda C/(\sigma n_S)$. Taking logarithms and using $\log(1+x)=x+O(x^{2})$,
\begin{align*}
    A_{\cM_t}(\lambda)
     \le \frac{2 q^{2}\lambda(\lambda-1)\gamma(\gamma+1) C^{2}}{\sigma^{2} (S\gamma+\alpha)(S\gamma+\alpha+1) n_S^{2}} (1+O(R_\Phi(\sigma,n_S;C)+\varrho)).
\end{align*}
This is the explicit per-iteration rate ready for composition over the $T$ outer iterations and $K$ inner Monte Carlo replicates. Along sequences with fixed truncation order, $C/(\sigma n_S)\to0$, and $\varrho(\lambda;\sigma,n_S)=o(1)$ at the R\'enyi order under consideration, the resulting expansion gives
\begin{align}\label{eqn:step4-gamma-tot}
\begin{aligned}
    \gamma_M^{\mathrm{tot}}(\lambda;\sigma)
    & := \frac{A_{\cM}(\lambda)}{\lambda-1}
     \le c(\sigma) \lambda (1+O(R_\Phi(\sigma,n_S;C)+\varrho)),\\
    c&=c(\sigma) := \frac{2 TKq^{2}\gamma(\gamma+1) C^{2}}{\sigma^{2} (S\gamma+\alpha)(S\gamma+\alpha+1) n_S^{2}}.
\end{aligned}
\end{align}

\paragraph{Conversion to approximate DP.}
Step~3 leaves $\cM$ certified as $(\lambda,\gamma_M^{\mathrm{tot}}(\lambda;\sigma))$-RDP at every integer $\lambda\ge 2$, with the slope-rate decomposition
\begin{align*}
    \gamma_M^{\mathrm{tot}}(\lambda;\sigma)
     \le c(\sigma) \lambda (1+\Delta_\lambda),
    \qquad \Delta_\lambda= C_{\mathrm{rem}} \{R_\Phi(\sigma,n_S;C)+\varrho\},
\end{align*}
for some constant $C_{\text{rem}}$, where $c(\sigma)$ is
\begin{align}\label{eqn:step4-c-def}
    c(\sigma) := \frac{2TKq^{2}\gamma(\gamma+1) C^{2}}{\sigma^{2}(S\gamma+\alpha)(S\gamma+\alpha+1) n_S^{2}},
\end{align}
and $\Delta_\lambda$ is the relative slack inherited from \eqref{eqn:step4-gamma-tot}. Here $\varrho$ means $\varrho(\lambda;\sigma,n_S)=q\lambda C/(\sigma n_S)$ at the same integer order $\lambda$. Mironov's RDP-to-DP conversion \citep[Prop.~3]{miro:17} bounds the privacy budget at every integer $\lambda\ge 2$ by
\begin{align}\label{eqn:step4-mironov-new}
    \epsilon(\lambda,\delta)
     \le c(\sigma) \lambda (1+\Delta_\lambda) + \frac{\log(1/\delta)}{\lambda-1}.
\end{align}

To extract the leading scale, write $L:=\log(1/\delta)$ and drop $\Delta_\lambda$ from the optimization. The reduced objective $\psi(\lambda):=c(\sigma)\lambda+L/(\lambda-1)$ has continuous minimizer
\begin{align*}
    \lambda^{\star} = 1+\sqrt{L/c(\sigma)},
    \qquad
    \psi(\lambda^{\star}) = c(\sigma)+2\sqrt{c(\sigma) L},
\end{align*}
where the continuous optimizer is used only to read off the leading scale. If Birrell's integer-order restriction is enforced literally, take $\hat\lambda^\star=\max\{2,\lceil\lambda^\star\rceil\}$. Writing $s=\lambda^\star-1=\sqrt{L/c(\sigma)}$ and $d=\hat\lambda^\star-\lambda^\star\in[0,1)$ gives $\psi(\hat\lambda^\star)-\psi(\lambda^\star)=c(\sigma)d^2/(s+d)\le c(\sigma)$, and at the calibrated scale, where $c(\sigma)\asymp\epsilon^2/L$, this is $O(\epsilon^2/L)=\epsilon\,O(\epsilon/L)$. Hence, the integer rounding does not change the leading calibration. The regime condition $c(\sigma)\le L$ is enough to keep the linear term no larger than the square-root term:
\begin{align}\label{eqn:step4-eps-lead-scale-new}
    \varepsilon_{\mathrm{lead}}(\delta)
     = c(\sigma)+2\sqrt{c(\sigma) L}
     \lesssim \sqrt{c(\sigma)L}=  \frac{Cq\sqrt{TK L \gamma(\gamma+1)/[(S\gamma+\alpha)(S\gamma+\alpha+1)]}}{\sigma n_S}.
\end{align}
Requiring $\varepsilon_{\mathrm{lead}}\lesssim \epsilon$ forces the noise scale to obey
\begin{align}\label{eqn:step4-sigma-leading-new}
    \sigma \gtrsim \frac{C \sqrt{TKq^{2} L \gamma(\gamma+1)}}
                         {n_S \sqrt{(S\gamma+\alpha)(S\gamma+\alpha+1)} \epsilon},
\end{align}
which is the leading-order scale of Theorem~\ref{thm:asymp-dp}.

\paragraph{The dropped slack.}
It remains to verify that dropping $\Delta_\lambda$ in the optimization costs nothing at leading order. Substituting $\lambda^{\star}-1=\sqrt{L/c(\sigma)}$ into $\varrho(\lambda;\sigma,n_S)=q\lambda C/(\sigma n_S)$ and then evaluating at $\lambda=\lambda^\star$ splits the relative slack into a $\sigma$-dependent piece and a purely external piece:
\begin{align*}
    \left.\varrho\right|_{\lambda=\lambda^\star}
     = \frac{qC}{\sigma n_S} + \sqrt{L/c(\sigma)}\cdot\frac{qC}{\sigma n_S}
     = \frac{qC}{\sigma n_S} + \varrho^{\star},
\end{align*}
where the $\sigma n_S/C$ implicit in $\sqrt{L/c(\sigma)}$ cancels the $C/(\sigma n_S)$ multiplier to leave the external quantity $\varrho^{\star}=\sqrt{L/(2TK\,\kappa_{S,\gamma,\alpha})}$ with $\kappa_{S,\gamma,\alpha}\;:=\;\frac{\gamma(\gamma+1)}{(S\gamma+\alpha)(S\gamma+\alpha+1)}$. If the integer-order restriction is enforced literally, take $\hat\lambda^\star=\lceil\lambda^\star\rceil$. Then $\varrho(\hat\lambda^\star;\sigma,n_S)=\varrho(\lambda^\star;\sigma,n_S)+q(\hat\lambda^\star-\lambda^\star)C/(\sigma n_S)$, so rounding adds at most the already controlled term $qC/(\sigma n_S)$. The corresponding absolute slack on the right-hand side of \eqref{eqn:step4-mironov-new} is
\begin{align*}
    c(\sigma) \lambda^{\star} \Delta_{\lambda^{\star}}
    = C_{\mathrm{rem}} (c+\sqrt{c(\sigma)L})(R_\Phi(\sigma,n_S;C)+\tfrac{qC}{\sigma n_S}+\varrho^{\star}),
\end{align*}
and dividing by $\varepsilon_{\mathrm{lead}}$,
\begin{align*}
    \frac{c(\sigma) \lambda^{\star} \Delta_{\lambda^{\star}}}{\varepsilon_{\mathrm{lead}}}
     \leq C_{\mathrm{rem}} \Bigl( R_\Phi(\sigma,n_S;C)+\varrho^{\star} + \frac{qC}{\sigma n_S} \Bigr).
\end{align*}
Thus the dropped slack is asymptotically negligible only under the two small-o expansion requirements $\varrho^\star=o(1)$ and $C/(\sigma n_S)=o(1)$, since the latter also gives $R_\Phi(\sigma,n_S;C)=o(1)$ and $qC/(\sigma n_S)=o(1)$. Separately, the reduced Mironov objective is kept in the square-root regime by the constant-order regime condition $c(\sigma)\le L$. We therefore use
\begin{align}\label{eqn:step4-two-regimes}
    \varrho^{\star} \ll 1,
    \qquad
    \frac{C}{\sigma n_S} \ll 1, \qquad c(\sigma)\le L,
\end{align}
When the RDP-to-DP bound is optimized over $\lambda$, the expansion is invoked only at the selected order (and, if needed, at its integer rounding). Thus $\varrho=o(1)$ means the selected-order condition $\varrho(\lambda^\star;\sigma,n_S)=o(1)$, equivalently $\varrho(\hat\lambda^\star;\sigma,n_S)=o(1)$ under the preceding rounding bound. It is not a uniform assertion over all $\lambda$ in the infimum. In \eqref{eqn:step4-two-regimes}, the $\varrho^\star$ condition is external to the noise calibration and is exactly the first regime condition of Theorem~\ref{thm:asymp-dp}. The condition $C/(\sigma n_S)\ll1$ asks the added noise to dominate the per-record scale $C/n_S$. The regime condition is left in the exact form $c(\sigma)\le L$ rather than being converted into a separate noise scale, since that conversion only hides constants. Thus the leading-order calibration is justified for any $\sigma$ satisfying
\begin{align}\label{eqn:step4-safe-sigma}
    \sigma \gtrsim \frac{Cq}{n_S}\frac{\sqrt{TK L \kappa_{S,\gamma,\alpha}}}{\epsilon},
    \qquad
    c(\sigma)\le L,
    \qquad
    \frac{C}{n_S}=o(\sigma).
\end{align}
Since the finite accountant is monotone decreasing in $\sigma$, once these conditions hold at some boundary scale $\sigma_0$, any $\sigma\ge\sigma_0$ also satisfies the privacy bound.


\newpage

\subsubsection{Privacy Amplification from Dirichlet Randomness (Remark~\ref{rem:amplification})}
\label{app:privacy-amplification}

We compare two leading-order calibrations that follow the same privacy accounting chain in Sections~\ref{sec:bir-step1}-\ref{sec:bir-step4}. The difference is how the record-level block weight is handled. The Dirichlet-aware accountant integrates over the actual law of the block weight, whereas the worst-case accountant ignores this law and uses only the deterministic envelope $\bar w_i\le 1/n_S$.

\paragraph{Dirichlet moment versus worst-case envelope.}
Fix the possibly changed record, written as record $n$ without loss of generality. Under the $\alpha=0$ blocked bootstrap used in the main text,
\[
    \bar w_n=\frac{u_{\pi(n)}}{n_S},
    \qquad
    u_{\pi(n)}\sim\Beta(\gamma,(S-1)\gamma).
\]
Hence
\[
    \E_{\mathrm{Dir}}[\bar w_n^2]
    =
    \frac{1}{n_S^2}
    \frac{\gamma(\gamma+1)}{S\gamma(S\gamma+1)}
    =
    \frac{\kappa_{S,\gamma}}{n_S^2},
    \qquad
    \kappa_{S,\gamma}
    :=
    \frac{\gamma+1}{S(S\gamma+1)}.
\]
By contrast, the worst-case calibration does not assign a second moment to $\bar w_n$. It replaces the random weight by the deterministic upper bound
\[
    \bar w_n^2\le \frac{1}{n_S^2}.
\]
Thus the comparison is between the actual Dirichlet second moment $\kappa_{S,\gamma}/n_S^2$ and the worst-case envelope $1/n_S^2$. The leading coefficient is therefore reduced by the factor $\kappa_{S,\gamma}$ when the Dirichlet randomness is used.

\paragraph{Propagation to the leading privacy scale.}
At leading order, \eqref{eqn:step4-Phi-leading} gives
\[
    \E_{\bar w}[\Phi_\lambda(\bar w)]
    =
    2\lambda(\lambda-1)(C^2/\sigma^2)
    \E[\bar w_n^2]\{1+R_\Phi(\sigma,n_S;C)\}.
\]
Accordingly, the slope rate in \eqref{eqn:step4-c-def} takes the form
\[
    c_\bullet(\sigma)
    =
    \frac{2TKq^2C^2 m_\bullet}{\sigma^2 n_S^2},
    \qquad
    m_{\mathrm{Dir}}=\kappa_{S,\gamma},
    \qquad
    m_{\mathrm{worst}}=1.
\]
The RDP-to-DP conversion in \eqref{eqn:step4-mironov-new} gives the reduced leading objective
\[
    \varepsilon_{\mathrm{lead},\bullet}
    =
    c_\bullet(\sigma)+2\sqrt{c_\bullet(\sigma)L},
    \qquad
    L:=\log(1/\delta).
\]
In the square-root regime $c_\bullet(\sigma)\le L$, requiring
$\varepsilon_{\mathrm{lead},\bullet}\lesssim \epsilon$ yields
\[
    \Sigma_{0,\bullet}
    \asymp
    \frac{Cq\sqrt{TKL\,m_\bullet}}{n_S\epsilon}.
\]
Therefore
\[
    \Sigma_{0,\mathrm{Dir}}
    \asymp
    \frac{Cq\sqrt{TKL\,\kappa_{S,\gamma}}}{n_S\epsilon},
    \qquad
    \Sigma_{0,\mathrm{worst}}
    \asymp
    \frac{Cq\sqrt{TKL}}{n_S\epsilon},
\]
and the leading noise-scale ratio is
\[
    \frac{\Sigma_{0,\mathrm{Dir}}}{\Sigma_{0,\mathrm{worst}}}
    \asymp
    \sqrt{\kappa_{S,\gamma}}.
\]
This is the amplification effect stated in Remark~\ref{rem:amplification}.

Notably, the comparison above is a leading-order comparison and is valid only when the same expansion conditions used in Theorem~\ref{thm:asymp-dp} hold for the calibration under consideration. With $m_\bullet\in\{\kappa_{S,\gamma},1\}$, the selected-order slack in Step~4 becomes
\[
    \varrho^\star_\bullet
    =
    \sqrt{\frac{L}{2TK\,m_\bullet}}.
\]
Thus the Dirichlet-aware calibration requires
$\sqrt{L/(2TK\kappa_{S,\gamma})}\ll1$, while the worst-case calibration requires $\sqrt{L/(2TK)}\ll1$. Since $\kappa_{S,\gamma}\le1$, the former is the more restrictive selected-order condition.

The local expansion also requires $C/(\sigma n_S)\ll1$. Evaluated at the boundary scale $\Sigma_{0,\bullet}$,
\[
    \frac{C}{\Sigma_{0,\bullet} n_S}
    \asymp
    \frac{\epsilon}{q\sqrt{TKL\,m_\bullet}}.
\]
Hence the Dirichlet-aware version is larger by the factor $1/\sqrt{\kappa_{S,\gamma}}$ relative to the worst-case version, again because it uses a smaller noise scale. Finally, the square-root regime condition $c_\bullet(\sigma)\le L$ reduces at $\sigma=\Sigma_{0,\bullet}$ to the same requirement $\epsilon\lesssim L$ for both calibrations. The sampling-ratio condition
\[
    \frac{q\log(1/\delta)}{(1-q)\epsilon}=O(1)
\]
does not involve $\kappa_{S,\gamma}$ and is therefore identical in the two comparisons.

\newpage

\subsubsection{Privacy-Side Effect of Pseudo-Samples}
\label{app:pseudo-sample-alpha}

The analysis in Section~\ref{sec:bir-step4} is written for the general posterior-bootstrap setup with $\alpha>0$ \citep{fong:etal:19,ohag:rock:25}. Suppose that, in addition to the private observations $X_1,\dots,X_n$, we have data-independent pseudo-samples $\tilde X_1,\dots,\tilde X_{\tilde n}\sim F_{\mathrm{base}}$ where $F_{\mathrm{base}}$ is a pre-specified distribution. Partition the private observations into $S$ equal-sized blocks and also the pseudo-samples into $\tilde S$ equal-sized blocks, with sizes $n_S=n/S$ and $\tilde n_{\tilde S}=\tilde n/\tilde S$. Draw $(\bu,\tilde\bu)\sim
    \Diri(\gamma{\bf 1}_S,\alpha{\bf 1}_{\tilde S}/\tilde S)$ and set $\bar w_i=u_{\pi(i)}/n_S$ and
$\bar{\tilde w}_j=\tilde u_{\tilde\pi(j)}/\tilde n_{\tilde S}$. The amortized objective for the posterior bootstrap extends \eqref{eqn:nbb} to
\[
    \min_{G\in\mathcal G}
    \E_{(\bu,\tilde\bu)}
    \left[
    \sum_{i=1}^n \bar w_i\,l(G(\bu,\tilde\bu),X_i)
    +\sum_{j=1}^{\tilde n}\bar{\tilde w}_j\,
    l(G(\bu,\tilde\bu),\tilde X_j)
    \right].
\]
For $\alpha=0$, the pseudo-sample part is omitted and this reduces to the blocked Bayesian-bootstrap objective used in the main text.

The pseudo-samples are treated here as fixed independently of the private database. If they were generated from private data, their generation would need its own privacy accounting. Under this data-independent interpretation, the pseudo-samples affect the privacy analysis only through the Dirichlet allocation. For a private record, $u_{\pi(i)}\sim\Beta(\gamma,(S-1)\gamma+\alpha)$, and hence
\[
    \E[\bar w_i^2]
    =
    \frac{\gamma(\gamma+1)}
    {(S\gamma+\alpha)(S\gamma+\alpha+1)n_S^2}
    =
    \frac{\kappa_{S,\gamma,\alpha}}{n_S^2},
    \qquad
    \kappa_{S,\gamma,\alpha}
    :=
    \frac{\gamma(\gamma+1)}
    {(S\gamma+\alpha)(S\gamma+\alpha+1)}.
\]
Thus the safe scale in \eqref{eqn:step4-safe-sigma} carries
$\sigma\propto\sqrt{\kappa_{S,\gamma,\alpha}}$. Since
$\kappa_{S,\gamma,\alpha}$ decreases with $\alpha$, larger prior concentration can reduce the privacy noise required for a fixed $(\epsilon,\delta)$. This privacy-side gain comes with statistical shrinkage toward $F_{\mathrm{base}}$, so $\alpha>0$ trades lower privacy noise against possible bias.

\newpage 

\subsection{Proof of Theorem~\ref{thm:conv-gbb}}
\label{app:thm-conv-bridge}

\subsubsection{Setup}
\label{app:thm-conv-setup}


The data $\bX_{1:n}=(X_1,\dots,X_n)\stackrel{\mathrm{iid}}{\sim}F$ and the random partition $\pi:\{1,\dots,n\}\to\{1,\dots,S\}$ generate the sample and its blocks, and the random initialization $\phi_0$ is drawn from the known distribution (e.g., He/Xavier Gaussian). Training uses the per-step Dirichlet draws $\{\bu_t^{(k)}\}_{t<T,k\le K}\stackrel{\mathrm{iid}}{\sim}\Diri(\gamma\mathbf 1_S)$, the Poisson subsampling indicators $\{I_{j,t}^{(k)}\}\stackrel{\mathrm{iid}}{\sim}\mathrm{Bernoulli}(q)$, and the Gaussian privacy noise $\{\zeta_t^{(k)}\}\stackrel{\mathrm{iid}}{\sim}\mathcal N(0,\sigma^2 I_{d_\theta})$. Collectively we write $\bomega_T=(\phi_0,\{\bu_t^{(k)},I_{j,t}^{(k)},\zeta_t^{(k)}\}_{t<T,\,k\le K})$ for the algorithmic randomness of Section~\ref{sec:theory}, so that $\mathcal F_T=\sigma(\bX_{1:n},\pi,\bomega_T)$. Finally, the fresh test draw $\bu\sim\Diri(\gamma\mathbf 1_S)$ enters Theorem~\ref{thm:conv-gbb} and is independent of training.  Define $\mathcal F^{\rm data}:=\sigma(\bX_{1:n},\pi,\phi_0)$ and $\mathcal F_t := \mathcal F^{\rm data}\vee\sigma(\bu_s^{(k)},I_{j,s}^{(k)},\zeta_s^{(k)}:s<t, k\le K, j\le n)$, i.e., $\mathcal F^{\rm data}\subseteq\mathcal F_0\subseteq\mathcal F_1\subseteq\cdots\subseteq\mathcal F_T$ and $\phi_t$ is $\mathcal F_t$-measurable.  Namely, $\bomega_T=(\phi_0,\{\bu_t^{(k)},I_{j,t}^{(k)},\zeta_t^{(k)}\}_{t<T,\,k\le K})$, so that $\mathcal F_T=\sigma(\bX_{1:n},\pi,\bomega_T)$ and the fresh input $\bu$ is independent of $\mathcal F_T$. The objective
$\mathcal{L}(\phi;\bX_{1:n},\pi):=\E_{\bu}[L(G_\phi(\bu);\bar\bw(\bu))\mid \bX_{1:n},\pi]$ is $\sigma(\bX_{1:n},\pi)$-measurable as a function of $\phi$ (independent of $\phi_0$), i.e., the infimum $\mathcal{L}^\star_{\Phi}$ and the local minimizer $\phi^\star$ (in $\Phi$) are accordingly random functions of $(\bX_{1:n},\pi)$. Note the structural conditional independence $\phi_T\;\perp\!\!\!\perp\;\bu\;\big|\;(\bX_{1:n},\pi,\phi_0)$ holds. Let's define common notations $\psi_j(\bu,\phi)\;:=\;\nabla_\theta l(x_j,G_\phi(\bu)) \nabla_\phi G_\phi(\bu)\;\in\;\R^{d_\phi}$. By dominated convergence due to the bounded gradient and $\beta_1$, it follows that
\begin{align}
\label{eqn:app-conv-nablaF}
\nabla \mathcal{L}(\phi;\bX_{1:n},\pi) = \E_{\bu}\Bigl[\sum_{j=1}^n\bar w_j(\bu) \psi_j(\bu,\phi)\Bigr].
\end{align}
The stochastic gradient $\hat g_t(\phi_t)$ of Algorithm~\ref{alg:bayesian_bootstrap} is, with $I_j^{(k)}:=\mathbf 1\{j\in B_t^{(k)}\}$ and $\zeta_t^{(k)}\sim\mathcal N(0,\sigma^2 {\bf I}_{d_\theta})$ mutually independent across $k$ and of $\mathcal F_t$,
\begin{align*}
\hat g_t(\phi_t)=\frac{1}{K}\sum_{k=1}^K
\Big(\sum_{j=1}^n I_j^{(k)} \bar w_j^{(k)} \psi_j(\bu^{(k)},\phi_t) + \zeta_t^{(k)}\nabla_\phi G_{\phi_t}(\bu^{(k)})\Big).
\end{align*}

\subsubsection{Statements and proofs of the lemmas}
\label{app:thm-conv-lemmas}

\begin{lemma}
\label{lem:appB-doob}
Under (A1-5) and the bounded-weight network class $\Phi$, for any $\mathcal F_t$-measurable iterate $\phi_t\in\Phi$,
\begin{align}
\label{eqn:appB-Egt}
\E[\hat g_t(\phi_t)\mid\mathcal F_t] = q \nabla \mathcal{L}(\phi_t;\bX_{1:n},\pi),
\end{align}
and the centered residual $\xi_t:=\hat g_t(\phi_t)-q\nabla \mathcal{L}(\phi_t;\bX_{1:n},\pi)$ admits the orthogonal decomposition $\xi_t=\xi_t^{\rm MC}+\xi_t^{\rm Pois}+\xi_t^{\rm DP}$ with
\begin{align*}
\xi_t^{\rm MC}
&:=\frac{q}{K}\sum_{k=1}^K\Big(\sum_{j=1}^n\bar w_j^{(k)}\psi_j(\bu^{(k)},\phi_t)-\nabla \mathcal{L}(\phi_t;\bX_{1:n},\pi)\Big),\\
\xi_t^{\rm Pois}
&:=\frac{1}{K}\sum_{k=1}^K\sum_{j=1}^n(I_j^{(k)}-q)\bar w_j^{(k)}\psi_j(\bu^{(k)},\phi_t),\\
\xi_t^{\rm DP}
&:=\frac{1}{K}\sum_{k=1}^K\zeta_t^{(k)}\nabla_\phi G_{\phi_t}(\bu^{(k)}),\notag
\end{align*}
each $\mathcal F_t$-conditionally mean zero and $\mathcal F_t$-conditionally pairwise uncorrelated. Their second moments satisfy
\begin{align}
\label{eqn:appB-vars}
\E[\|\xi_t^{\rm DP}\|^2\mid\mathcal F_t] \le \frac{\beta_1^2 d_\theta \sigma^2}{K},
\quad
\E[\|\xi_t^{\rm Pois}\|^2\mid\mathcal F_t] \le \frac{\beta_1^2 C^2 q(1-q) (\gamma+1) S}{K n (S\gamma+1)},
\end{align}
\begin{align}
\label{eqn:appB-var-MC}
\E[\|\xi_t^{\rm MC}\|^2\mid\mathcal F_t] \le \frac{q^2 \beta_1^2 C^2}{K}.
\end{align}
\end{lemma}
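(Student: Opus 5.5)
The plan is to condition on $\mathcal F_t$ throughout, so that $\phi_t$, $\bX_{1:n}$ and $\pi$ are fixed. The only randomness left in step $t$ is then the three mutually independent families $\{\bu^{(k)}\}_k$, $\{I_j^{(k)}\}_{j,k}$ and $\{\zeta_t^{(k)}\}_k$, which are independent across $k$.

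\textbf{Mean and decomposition.} For \eqref{eqn:appB-Egt}, take expectations in the order $\zeta$, then $I$, then $\bu$. The noise term vanishes because $\E\zeta_t^{(k)}=0$ and $\zeta_t^{(k)}$ is independent of $\bu^{(k)}$. For the data term, $\E[I_j^{(k)}\mid\bu^{(k)}]=q$ gives $q\sum_j\bar w_j^{(k)}\psi_j(\bu^{(k)},\phi_t)$, and averaging over $\bu^{(k)}$ yields $q\nabla\mathcal L$ by \eqref{eqn:app-conv-nablaF}. The decomposition $\xi_t=\xi^{\rm MC}+\xi^{\rm Pois}+\xi^{\rm DP}$ follows by adding and subtracting $q\sum_j\bar w_j^{(k)}\psi_j$ inside each summand.

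\textbf{Zero means and orthogonality.} $\xi^{\rm DP}$ has mean zero given $(\bu,I)$. $\xi^{\rm Pois}$ has mean zero given $\bu$. $\xi^{\rm MC}$ is a function of $\bu$ only, with mean zero. Conditioning on $\bu$ in the cross term with $\xi^{\rm MC}$, and on $(\bu,I)$ in the cross terms with $\xi^{\rm DP}$, kills all cross moments, since one factor is measurable and the other has conditional mean zero. Hence $\E\|\xi_t\|^2$ splits into the three second moments.

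\textbf{Variance bounds.} Independence across $k$ gives a factor $1/K$ in each case, so it suffices to bound one summand.

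For DP, $\E\|\zeta^\top J\|^2=\sigma^2\|J\|_F^2\le\sigma^2 d_\theta\|J\|_{\op}^2\le d_\theta\beta_1^2\sigma^2$. Here $J=\nabla_\phi G$ has rank at most $d_\theta$, so $\|J\|_F^2\le d_\theta\|J\|_\op^2$.

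For Poisson subsampling, independence of the $I_j$ given $\bu$ gives a conditional variance of $\sum_j q(1-q)\bar w_j^2\|\psi_j\|^2\le q(1-q)\beta_1^2C^2\sum_j\bar w_j^2$. Since $\bar w_j=u_{\pi(j)}/n_S$, we have $\sum_j\bar w_j^2=\sum_s u_s^2/n_S$. Lemma~\ref{lem:cm-gamma} gives $\E u_s^2=\frac{\gamma+1}{S(S\gamma+1)}$, so $\E\sum_j\bar w_j^2=\frac{\gamma+1}{n_S(S\gamma+1)}=\frac{(\gamma+1)S}{n(S\gamma+1)}$, which is the stated bound.

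For MC, bound the variance by the second moment: $\|\sum_j\bar w_j\psi_j\|\le\sum_j\bar w_j\beta_1C=\beta_1C$ because the weights sum to one. This gives $q^2\beta_1^2C^2/K$.

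\textbf{Main obstacle.} Each step is elementary once the conditioning is set up. The delicate point is the orthogonality argument, where the measurability structure must be kept straight: $\phi_t$ is fixed given $\mathcal F_t$, but the Jacobian $\nabla_\phi G_{\phi_t}(\bu^{(k)})$ is random through $\bu^{(k)}$. This is why the noise and Poisson cross terms must be handled by conditioning on $\bu$ first rather than by a naive independence claim. I would also check the Frobenius-versus-operator-norm bound in the DP term carefully, because the $d_\theta$ factor enters exactly there.
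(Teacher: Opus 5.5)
Your proposal is correct and follows the paper's proof essentially step for step. It uses the same tower-property conditioning, first on the Dirichlet draws and then on the Poisson indicators, for the mean, the decomposition and the orthogonality, and the same Dirichlet second-moment identity and weights-sum-to-one bounds for the Poisson and Monte Carlo variances. The only cosmetic difference is in the DP term: you compute $\sigma^2\|J\|_F^2\le d_\theta\sigma^2\|J\|_{\op}^2$, whereas the paper bounds $\|\zeta J\|^2\le\beta_1^2\|\zeta\|^2$ and then uses $\E\|\zeta\|^2=d_\theta\sigma^2$, and both routes give the same constant.
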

\begin{proof}
For~\eqref{eqn:appB-Egt}, condition on $\mathcal F_t$. The fresh draws $(\bu^{(k)},B_t^{(k)},\zeta_t^{(k)})_{k=1}^K$ are independent of $\mathcal F_t$, so $\E[I_j^{(k)}\mid\mathcal F_t]=q$, $\E[\zeta_t^{(k)}\mid\mathcal F_t]=0$, and $\E[\sum_j\bar w_j^{(k)}\psi_j(\bu^{(k)},\phi_t)\mid\mathcal F_t]=\nabla \mathcal{L}(\phi_t;\bX_{1:n},\pi)$ by~\eqref{eqn:app-conv-nablaF}, and averaging over $k$ yields $q \nabla \mathcal{L}(\phi_t;\bX_{1:n},\pi)$. The decomposition follows from the tower identity $\xi_t=(\hat g_t-\E[\hat g_t\mid\mathcal G_t])+(\E[\hat g_t\mid\mathcal G_t]-q\nabla \mathcal{L}(\phi_t;\bX_{1:n},\pi))$ with $\mathcal G_t:=\sigma(\mathcal F_t,\bu^{(1)},\dots,\bu^{(K)})$. The inner residual is $\xi_t^{\rm Pois}+\xi_t^{\rm DP}$ and the outer is $\xi_t^{\rm MC}$. For pairwise uncorrelatedness, since ${\cal F}_t\subset {\cal G}_t$,
\begin{align*}
    \E[\langle \xi_t^{\text{MC}},\xi_t^{\text{Pois}}\rangle|{\cal F}_t]=\E[\E[\langle \xi_t^{\text{MC}},\xi_t^{\text{Pois}}\rangle |{\cal G}_t] |{\cal F}_t]=\E[\langle \xi_t^{\text{MC}},\E[\xi_t^{\text{Pois}}|{\cal G}_t]\rangle|{\cal F}_t]=0, \because \E[\xi_t^{\text{Pois}}|{\cal G}_t]=0,
\end{align*}
\begin{align*}
    \E[\langle \xi_t^{\text{MC}},\xi_t^{\text{DP}}\rangle|{\cal F}_t]=\E[\E[\langle \xi_t^{\text{MC}},\xi_t^{\text{DP}}\rangle |{\cal G}_t] |{\cal F}_t]=\E[\langle \xi_t^{\text{MC}},\E[\xi_t^{\text{DP}}|{\cal G}_t]\rangle|{\cal F}_t]=0, \quad \because \E[\xi_t^{\text{DP}}|{\cal G}_t]=0.
\end{align*}
Define ${\cal H}_t:=\sigma({\cal G}_t,I_j^{(k)}:j\leq n,k\leq K)$. Then $\xi_{t}^{\text{Pois}}$ is ${\cal H}_t$-measurable and $\E[\xi_t^{\text{DP}}|{\cal H}_t]=0$. Hence, by the iterative rule with respect to $ {\cal F}_t \subset {\cal H}_t$, we also have $ \E[\langle \xi_t^{\text{Pois}},\xi_t^{\text{DP}}\rangle|{\cal F}_t]=0$.

\paragraph{Gaussian privacy variance.} The summands in $\xi_t^{\rm DP}$ are conditionally uncorrelated across $k$.  Also $\|\zeta_t^{(k)}\nabla_\phi G_{\phi_t}(\bu^{(k)})\|^2 \le\beta_1^2\|\zeta_t^{(k)}\|^2$, and $\E\|\zeta_t^{(k)}\|^2=d_\theta\sigma^2$. Therefore,  the $1/K^2$ averaging over $K$ independent replicates gives the first bound in~\eqref{eqn:appB-vars}.

\paragraph{Poisson variance.}
Condition on $\mathcal G_t$, under which the weights $\bar w_j^{(k)}$ and scores $\psi_j(\bu^{(k)},\phi_t)$ are deterministic. The Poisson indicators $\{I_j^{(k)}\}_j$ are independent $\mathrm{Bernoulli}(q)$, so the inner $j$-sum has conditional second moment
\begin{align*} 
\E\Bigl[\|\sum_j(I_j^{(k)}-q)\bar w_j^{(k)}\psi_j\|^2 \big| \mathcal G_t\Bigr] =q(1-q)\sum_{j=1}^n(\bar w_j^{(k)})^2 \|\psi_j(\bu^{(k)},\phi_t)\|^2.
\end{align*}
Using $\|\psi_j\|\le\beta_1 C$ from the Jacobian bound $\beta_1$ and the within-group constancy $\bar w_j=u_{\pi(j)}/n_S$ (so $\sum_j\bar w_j^2=n_S^{-1}\sum_s u_s^2=(S/n)\sum_s u_s^2$),
\begin{align*}
\E_{\bu}\Bigl[\sum_j\bar w_j^2 \|\psi_j\|^2\Bigr]
 \le \beta_1^2 C^2 \E_{\bu}\Bigl[\sum_j\bar w_j^2\Bigr]
 = \beta_1^2 C^2 \frac{S}{n} \E_{\bu}\Bigl[\sum_{s=1}^S u_s^2\Bigr].
\end{align*}
The Dirichlet second-moment identity (for $\bu\sim\Diri(\gamma\mathbf 1_S)$, $\E[u_s^2]=\Var(u_s)+(\E u_s)^2=\tfrac{S-1}{S^2(S\gamma+1)}+\tfrac1{S^2}$, hence $\E[\sum_s u_s^2]=\tfrac{\gamma+1}{S\gamma+1}$) then gives
\begin{align}
\label{eqn:appB-pois-Ew2}
\E_{\bu}\Bigl[\sum_{j=1}^n\bar w_j^2\Bigr] = \frac{(\gamma+1) S}{n (S\gamma+1)}.
\end{align}
Combining the conditional variance with~\eqref{eqn:appB-pois-Ew2} and the $1/K^2\cdot K$-averaging over the $K$ independent replicates yields the Poisson bound in~\eqref{eqn:appB-vars}.

\paragraph{Monte Carlo variance.} Write $Z_k:=\sum_{j=1}^n\bar w_j^{(k)}\psi_j(\bu^{(k)},\phi_t)$, so that $\xi_t^{\rm MC}=\frac{q}{K}\sum_{k=1}^K(Z_k-\nabla\mathcal L(\phi_t;\bX_{1:n},\pi))$ with $\nabla\mathcal L(\phi_t;\bX_{1:n},\pi)=\E_\bu Z_k$ by~\eqref{eqn:app-conv-nablaF}. The $Z_k$ are i.i.d.\ across $k$ (fresh $\bu^{(k)}$), so by independence and centring,
\begin{align*}
\E[\|\xi_t^{\rm MC}\|^2\mid\mathcal F_t]
=\frac{q^2}{K^2}\sum_{k=1}^K\E\|Z_k-\nabla\mathcal L\|^2
=\frac{q^2}{K} \Var_\bu(Z_1).
\end{align*}
Note $\Var_\bu(Z_1)\le\E\|Z_1\|^2\le\beta_1^2 C^2$ because of  $\|Z_1\|\le\sum_j\bar w_j\|\psi_j\|\le\beta_1 C\sum_j\bar w_j=\beta_1 C$.


\end{proof}

\begin{lemma}
\label{lem:appB-descent}
Suppose the conditions of Theorem~\ref{thm:conv-gbb} hold. By Lemma~\ref{lem:appB-doob}, the iterates $\phi_{t+1}=\phi_t-\eta\hat g_t(\phi_t)$ satisfy
\begin{align}
\label{eqn:appB-onestep}
\E[(\mathcal{L}(\phi_{t+1};\bX_{1:n},\pi)-\mathcal L^\star_{\Phi,\bX_{1:n},\pi}) \big| \mathcal F_t]
 \le 
(1-\eta q\mu_{\mathcal L})(\mathcal{L}(\phi_t;\bX_{1:n},\pi)-\mathcal L^\star_{\Phi,\bX_{1:n},\pi})
 + \frac{M_{\mathcal L}\eta^2}{2} \tau^2,
\end{align}
where $\tau^2$ is an upper bound of $\E[\|\xi_t\|^2\mid\mathcal F_t] =\E[\|\xi_t^{\rm MC}\|^2\mid\mathcal F_t] +\E[\|\xi_t^{\rm Pois}\|^2\mid\mathcal F_t] +\E[\|\xi_t^{\rm DP}\|^2\mid\mathcal F_t]$ by~\eqref{eqn:appB-vars}-\eqref{eqn:appB-var-MC}.
\end{lemma}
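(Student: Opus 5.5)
This is the standard one-step descent argument for an $M_{\mathcal L}$-smooth objective with an unbiased, scaled stochastic gradient, combined with the PL inequality. Throughout, $(\bX_{1:n},\pi)$ is fixed inside the conditioning on $\mathcal F_t$, so $\mathcal L(\cdot)=\mathcal L(\cdot;\bX_{1:n},\pi)$ is a deterministic $M_{\mathcal L}$-smooth function on $\Phi$. The steps, in order, are as follows.

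\emph{Descent lemma.} Since $\phi_t,\phi_{t+1}\in\Phi$ by hypothesis and $\Phi$ is convex, the segment between them lies in $\Phi$. On that segment the Hessian bound $\|\nabla^2_\phi\mathcal L\|_{\op}\le M_{\mathcal L}$ holds, so the quadratic upper bound applies:
\begin{align*}
\mathcal L(\phi_{t+1})\le \mathcal L(\phi_t)-\eta\langle\nabla\mathcal L(\phi_t),\hat g_t(\phi_t)\rangle+\tfrac{M_{\mathcal L}\eta^2}{2}\|\hat g_t(\phi_t)\|^2 .
\end{align*}

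\emph{Conditional expectation.} Take $\E[\cdot\mid\mathcal F_t]$ and substitute $\hat g_t=q\nabla\mathcal L(\phi_t)+\xi_t$, where Lemma~\ref{lem:appB-doob} gives $\E[\xi_t\mid\mathcal F_t]=0$. The linear term becomes $-\eta q\|\nabla\mathcal L(\phi_t)\|^2$. The quadratic term becomes $\tfrac{M_{\mathcal L}\eta^2}{2}\bigl(q^2\|\nabla\mathcal L(\phi_t)\|^2+\E[\|\xi_t\|^2\mid\mathcal F_t]\bigr)$, because the cross term vanishes. The conditional orthogonality in Lemma~\ref{lem:appB-doob} splits $\E\|\xi_t\|^2$ into the three pieces bounded in \eqref{eqn:appB-vars}--\eqref{eqn:appB-var-MC}, whose sum is at most $\tau^2$.

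\emph{Step-size condition.} The restriction $\eta\le 1/(qM_{\mathcal L})$ gives $\tfrac{M_{\mathcal L}\eta^2q^2}{2}\le \tfrac{\eta q}{2}$. Hence the gradient-norm coefficient is at most $-\tfrac{\eta q}{2}\|\nabla\mathcal L(\phi_t)\|^2$.

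\emph{PL inequality.} Apply $\|\nabla\mathcal L(\phi_t)\|^2\ge 2\mu_{\mathcal L}(\mathcal L(\phi_t)-\mathcal L^\star_{\Phi,\bX_{1:n},\pi})$, which is valid because the left-hand coefficient is negative. This turns $-\tfrac{\eta q}{2}\|\nabla\mathcal L\|^2$ into $-\eta q\mu_{\mathcal L}(\mathcal L(\phi_t)-\mathcal L^\star)$.

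\emph{Conclusion.} Subtract $\mathcal L^\star_{\Phi,\bX_{1:n},\pi}$ from both sides; this is legitimate because $\mathcal L^\star$ is $\mathcal F_t$-measurable, depending only on $(\bX_{1:n},\pi)$. The result is \eqref{eqn:appB-onestep}.

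The step that needs the most care is the first one: smoothness is known only on $\Phi$, so the descent inequality requires the segment $[\phi_t,\phi_{t+1}]$ to stay inside $\Phi$. This is exactly why the theorem assumes $\phi_t\in\Phi$ for all $t$ (alternatively, one could use a projection, which is nonexpansive). The second point to check is that the smoothness constant $M_{\mathcal L}$ is uniform over $(\bX_{1:n},\pi)$, so it is not random under the conditioning. Both points follow from the setup preceding Theorem~\ref{thm:conv-gbb}. Everything else is routine.
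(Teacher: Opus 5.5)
Your proposal is correct and follows the paper's proof essentially step for step. Both arguments apply the descent lemma on the convex set $\Phi$ and substitute $\hat g_t=q\nabla\mathcal L(\phi_t)+\xi_t$, with the cross term vanishing conditionally by Lemma~\ref{lem:appB-doob}; both then use $\eta\le 1/(qM_{\mathcal L})$ to get the factor $1-\eta qM_{\mathcal L}/2\ge 1/2$ and the PL inequality to obtain the contraction factor $1-\eta q\mu_{\mathcal L}$.
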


\begin{proof}
Since $M_{\mathcal L}\le\beta_1^2 K_2+\beta_2 C<\infty$, $\mathcal{L}(\phi;\bX_{1:n},\pi)$ is an $M_{\cal L}$-smooth function over the convex domain $\Phi$ for any given $\bX_{1:n},\pi$. Thus, the descent lemma gives $\mathcal{L}(\phi_{t+1};\bX_{1:n},\pi)\le \mathcal{L}(\phi_t;\bX_{1:n},\pi)+\langle\nabla \mathcal{L}(\phi_t;\bX_{1:n},\pi),\phi_{t+1}-\phi_t\rangle+\tfrac{M_{\mathcal L}}{2}\|\phi_{t+1}-\phi_t\|^2$. Substitute $\phi_{t+1}-\phi_t=-\eta(q\nabla \mathcal{L}(\phi_t;\bX_{1:n},\pi)+\xi_t)$ and take $\E[\cdot\mid\mathcal F_t]$. The linear cross term vanishes by Lemma~\ref{lem:appB-doob}, and
\begin{align*}
\E[\|q\nabla \mathcal{L}(\phi_t;\bX_{1:n},\pi)+\xi_t\|^2|\mathcal F_t]
 = q^2\|\nabla \mathcal{L}(\phi_t;\bX_{1:n},\pi)\|^2+\tau_t^2.
\end{align*}
where $\tau_t^2$ has the uniform upper bound $\tau^2$ from Lemma~\ref{lem:appB-doob}. Then we have \( \E[\mathcal{L}(\phi_{t+1};\bX_{1:n},\pi)\mid\mathcal F_t]\le \mathcal{L}(\phi_t;\bX_{1:n},\pi)-\eta q(1-\eta qM_{\mathcal L}/2)\|\nabla \mathcal{L}(\phi_t;\bX_{1:n},\pi)\|^2+(M_{\mathcal L}\eta^2/2)\tau^2 \) and subtract $\mathcal L^\star_{\Phi,\bX_{1:n},\pi}$ in both sides. The step-size choice $\eta\le 1/(qM_{\mathcal L})$ gives $1-\eta qM_{\mathcal L}/2\ge 1/2$, and the PL condition leads to $-\eta q(1-\eta qM_{\mathcal L}/2)\|\nabla\mathcal L\|^2\le -\eta q\mu_{\mathcal L}(\mathcal L(\phi_t;\bX_{1:n},\pi)-\mathcal L^\star_{\Phi,\bX_{1:n},\pi})$.
\end{proof}

\begin{lemma}
\label{lem:appB-var-bridge}
Assume (A1-5). Let $\mathcal E:=\mathcal G_n\cap\tilde E_n^\sub$ with $\mathcal G_n$ and $\tilde E_n^\sub$ as in Lemmas~\ref{lem:cm-reg} and~\ref{lem:cm-lin}, and adopt the Lemma~\ref{lem:cm-lin} notation $\theta^\sub:=\theta^\star_\sub(\bar\bw)$, $\mu:=\mu_\infty/2$, and $r$ the localization radius of Lemma~\ref{lem:cm-reg}. Then, it follows that
\begin{enumerate}
\item[(i)] on $\mathcal E$, for the realized block weight vector
$\bar\bw=\bar\bw(\bu)$ and every $\theta\in\Theta$,
\begin{align}
\label{eqn:appB-trunc-loj}
L(\theta;\bar\bw)-L(\theta^\sub;\bar\bw)
 \ge 
\begin{cases}
\dfrac{\mu}{4} \|\theta-\theta^\sub\|^2,
& \|\theta-\theta^\sub\|\le r/2,\\[8pt]
\dfrac{\mu r}{8} \|\theta-\theta^\sub\|,
& \|\theta-\theta^\sub\|>r/2.
\end{cases}
\end{align}
\item[(ii)] for every $\theta\in\Theta$,
\begin{align}
\label{eqn:appB-bridge}
\E[\mathbf 1_{\mathcal E} \|\theta-\theta^\sub\|^2]
 \le 
\frac{4}{\mu}\left(1+\frac{2D}{r}\right) 
\E[\mathbf 1_{\mathcal E}(L(\theta;\bar\bw)-L(\theta^\sub;\bar\bw))],
\end{align}
where $D_\Theta:=\diam(\Theta)<\infty$ by (A2).
\end{enumerate}
\end{lemma}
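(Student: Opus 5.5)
Part (i) will follow from strong convexity of $L(\cdot;\bar\bw)$ near its minimizer, extended globally by convexity. On $\mathcal E$, the positive-definiteness step in the proof of Lemma~\ref{lem:cm-lin} gives $H(\theta;\bar\bw)\succeq \tfrac{\mu_n}{2}I$ on $\overline{B(\theta_0^\star,r)}$, with $\mu_n=\mu_\infty/2=\mu$. The linearization step there gives $\|\theta^\sub-\theta_0^\star\|\le r/2$. Hence $\overline{B(\theta^\sub,r/2)}\subset\overline{B(\theta_0^\star,r)}$, so $L(\cdot;\bar\bw)$ is $\mu/2$-strongly convex on $B(\theta^\sub,r/2)$. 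Since $\theta^\sub$ is an interior stationary point, $G(\theta^\sub;\bar\bw)=0$. A second-order Taylor expansion along the segment then yields
\[
L(\theta;\bar\bw)-L(\theta^\sub;\bar\bw)\ge \tfrac{\mu}{4}\|\theta-\theta^\sub\|^2
\]
for $\|\theta-\theta^\sub\|\le r/2$. This is the first case of \eqref{eqn:appB-trunc-loj}.

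For $\|\theta-\theta^\sub\|>r/2$, I will use the one-dimensional convexity of $h(t):=L(\theta^\sub+t(\theta-\theta^\sub);\bar\bw)-L(\theta^\sub;\bar\bw)$. This holds on $\Theta$ because $\Theta$ is convex and (A4) applies. Set $t_0:=r/(2\|\theta-\theta^\sub\|)<1$. Convexity with $h(0)=0$ gives $h(1)\ge h(t_0)/t_0$. The first case applied at the point at distance $r/2$ gives $h(t_0)\ge \mu r^2/16$. Therefore
\[
h(1)\ge \frac{\mu r^2}{16}\cdot\frac{2\|\theta-\theta^\sub\|}{r}=\frac{\mu r}{8}\|\theta-\theta^\sub\|,
\]
which is the second case.

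For (ii), I convert (i) into a single quadratic lower bound that holds on all of $\Theta$. In the far regime, $\|\theta-\theta^\sub\|\le D_\Theta$, so
\[
\|\theta-\theta^\sub\|^2\le D_\Theta\|\theta-\theta^\sub\|\le \frac{8D_\Theta}{\mu r}\bigl(L(\theta;\bar\bw)-L(\theta^\sub;\bar\bw)\bigr).
\]
Since $\tfrac{8D_\Theta}{\mu r}=\tfrac{4}{\mu}\cdot\tfrac{2D_\Theta}{r}$, it is bounded by $\tfrac4\mu(1+\tfrac{2D_\Theta}{r})$. In the near regime the constant is $\tfrac4\mu$, which is also bounded by $\tfrac4\mu(1+\tfrac{2D_\Theta}{r})$. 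Thus on $\mathcal E$ we have pointwise
\[
\|\theta-\theta^\sub\|^2\le \tfrac4\mu\bigl(1+\tfrac{2D_\Theta}{r}\bigr)\bigl(L(\theta;\bar\bw)-L(\theta^\sub;\bar\bw)\bigr).
\]
Multiplying by $\mathbf 1_{\mathcal E}$ and taking expectations gives \eqref{eqn:appB-bridge}. The gap is nonnegative because $\theta^\sub$ is a global minimizer, so there is no sign issue. The same argument applies to a random $\theta$, such as $G_\phi(\bu)$, since the inequality holds pointwise.

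The only delicate point is confirming that the ball $B(\theta^\sub,r/2)$ lies inside the region where the Hessian lower bound holds and inside $\Theta$. This follows from $\|\theta^\sub-\theta_0^\star\|\le \tfrac{2}{\mu_n}\|g_\sub\|\le r/2$ on $\tilde E_n^\sub$ and $\overline{B(\theta_0^\star,r)}\subset\mathrm{int}(\Theta)$ on $\mathcal G_n$. If a point at distance $r/2$ along the ray fell outside $\Theta$, the convexity argument on $[0,1]$ would break. That cannot happen here, because the ray's portion at distance $\le r/2$ from $\theta^\sub$ lies in $B(\theta_0^\star,r)\subset\Theta$.
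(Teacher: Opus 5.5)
Your proposal is correct and follows the paper's proof essentially step for step. Both arguments use the Hessian floor $\mu/2$ on $\overline{B(\theta_0^\star,r)}$ and the bound $\|\theta^{\sub}-\theta_0^\star\|\le r/2$ from Lemma~\ref{lem:cm-lin}, a second-order Taylor expansion at the stationary point inside $\overline{B(\theta^{\sub},r/2)}$, the convex chord inequality along the segment outside that ball, and $\|\theta-\theta^{\sub}\|^2\le D_\Theta\|\theta-\theta^{\sub}\|$ to merge the two regimes; the only cosmetic difference is that you bound each regime's constant by $\frac{4}{\mu}\bigl(1+\frac{2D_\Theta}{r}\bigr)$ rather than adding the two regime bounds, which yields the same constant.
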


\begin{proof}
We first prove the deterministic growth bound~\eqref{eqn:appB-trunc-loj} on $\mathcal E$ and then deduce the expectation bound~\eqref{eqn:appB-bridge} by a two-regime split.

\paragraph{Setup on the good event.} By Lemma~\ref{lem:cm-lin}, applied at concentration $\gamma$, on $\mathcal E$ the weighted minimizer $\theta^\sub$ satisfies the quantitative bound $\|\theta^\sub-\theta_0^\star\|\le r/2$, sharper than the $\overline{B(\theta_0^\star,r)}$ containment.  By Lemma~\ref{lem:cm-lin}, the weighted Hessian satisfies $\nabla_\theta^2 L(\theta;\bar\bw)\succeq(\mu/2)I$ for every $\theta\in\overline{B(\theta_0^\star,r)}$.  The triangle inequality then gives $\overline{B(\theta^\sub,r/2)}\subset\overline{B(\theta_0^\star,r)}$, so the weighted Hessian is bounded below by $(\mu/2)I$ throughout
$\overline{B(\theta^\sub,r/2)}$. 

\paragraph{Inside the ball.} Fix $\theta\in \overline{B(\theta^\sub,r/2)}$.  The segment from $\theta^\sub$ to $\theta$ lies in this ball, so the mean-value form of second-order Taylor's theorem yields some $\xi$ on the segment with
\begin{align*}
L(\theta;\bar\bw)
 = L(\theta^\sub;\bar\bw)
+\nabla_\theta L(\theta^\sub;\bar\bw)^\top(\theta-\theta^\sub)
+\tfrac{1}{2}(\theta-\theta^\sub)^\top H(\xi;\bar\bw)(\theta-\theta^\sub).
\end{align*}
where the first term disappears by the first-order condition. The Hessian PD bound $H(\xi;\bar\bw)\succeq(\mu/2)I$ at this $\xi$ gives the first branch of~\eqref{eqn:appB-trunc-loj}:
\begin{align}
\label{eqn:appB-A5-inside}
L(\theta;\bar\bw)-L(\theta^\sub;\bar\bw)
 \ge \tfrac{\mu}{4} \|\theta-\theta^\sub\|^2.
\end{align}

\paragraph{Outside the ball.} Fix $\theta\in\Theta$ with
$\|\theta-\theta^\sub\|>r/2$, and set $t:=(r/2)/\|\theta-\theta^\sub\|\in(0,1)$. The point \( \theta_t:=\theta^\sub+t(\theta-\theta^\sub) \) lies on the boundary $\partial B(\theta^\sub,r/2)$, so~\eqref{eqn:appB-A5-inside} applied at $\theta_t$ gives \( L(\theta_t;\bar\bw)-L(\theta^\sub;\bar\bw)\ge(\mu/4)(r/2)^2=\mu r^2/16. \) By (A4), $L(\cdot;\bar\bw)$ is convex on $\Theta$, so the restriction $s\mapsto L(\theta^\sub+s(\theta-\theta^\sub);\bar\bw)$ to $s\in[0,1]$ is convex with value $L(\theta^\sub;\bar\bw)$ at $s=0$. The standard chord inequality $f(s)\le sf(1)+(1-s)f(0)$ applied at $s=t$ gives \( L(\theta_t;\bar\bw)-L(\theta^\sub;\bar\bw) \le t(L(\theta;\bar\bw)-L(\theta^\sub;\bar\bw)), \) so
\begin{align}
\label{eqn:appB-A5-outside}
L(\theta;\bar\bw)-L(\theta^\sub;\bar\bw)
 \ge \frac{1}{t}\cdot\frac{\mu r^2}{16}
 = \frac{\mu r}{8} \|\theta-\theta^\sub\|,
\end{align}
which is the second branch of~\eqref{eqn:appB-trunc-loj}.

\paragraph{From growth to expected squared norm.}
For a random $\theta\in\Theta$, split on the event $A:=\{\|\theta-\theta^\sub\|\le r/2\}$. On $A$, inverting~\eqref{eqn:appB-A5-inside}, \( \|\theta-\theta^\sub\|^2 \le (4/\mu)(L(\theta;\bar\bw)-L(\theta^\sub;\bar\bw)). \) On $A^c$, inverting~\eqref{eqn:appB-A5-outside}, \( \|\theta-\theta^\sub\| \le (8/(\mu r))(L(\theta;\bar\bw)-L(\theta^\sub;\bar\bw)). \) Since $\theta,\theta^\sub\in\Theta$ and $\diam(\Theta)=D_\Theta$, $\|\theta-\theta^\sub\|\le D_\Theta$, hence \( \|\theta-\theta^\sub\|^2 \le D_\Theta \|\theta-\theta^\sub\| \le (8D_\Theta/(\mu r))(L(\theta;\bar\bw)-L(\theta^\sub;\bar\bw)). \) Adding the two pieces with the indicator $\mathbf 1_\mathcal E$,
\begin{align*}
\mathbf 1_\mathcal E \|\theta-\theta^\sub\|^2
 \le 
(\tfrac{4}{\mu}+\tfrac{8D}{\mu r}) 
\mathbf 1_\mathcal E(L(\theta;\bar\bw)-L(\theta^\sub;\bar\bw)).
\end{align*}
Taking expectations and factoring $(4/\mu)(1+2D_\Theta/r)$ gives~\eqref{eqn:appB-bridge}.
\end{proof}

\subsubsection{Proof of Theorem~\ref{thm:conv-gbb}}
\label{app:thm-conv-pf}

The proof assembles the three lemmas (Lemmas~\ref{lem:appB-doob}-\ref{lem:appB-var-bridge}).

\paragraph{Unrolling the one-step descent.}
Define the $\phi$-space function-value excess $u_t := \E_{\bX_{1:n},\pi,\bomega_T}[\mathcal{L}(\phi_t;\bX_{1:n},\pi)-\mathcal L^\star_{\Phi,\bX_{1:n},\pi}]$, and it is assumed that each update satisfies $\phi_t\in\Phi$ for every $t$. Recall that $\mathcal L(\cdot;\bX_{1:n},\pi)$ is $M_{\mathcal L}$-smooth, i.e., the finite constant $M_{\mathcal L}=\beta_1^2K_2+\beta_2C$. Taking $\E_{\bX_{1:n},\pi,\bomega_T}[\cdot]$ on both sides of~\eqref{eqn:appB-onestep}, we observe $u_{t+1}\;\le\;\rho u_t\;+\;(M_{\mathcal L}\eta^2/2) \tau^2$ with $\rho:=1-\eta q\mu_{\mathcal L}\in[0,1)$. By unrolling it,
\begin{align}
\label{eqn:appB-uT}
u_T
 \le \rho^T u_0 + \tfrac{M_{\mathcal L}\eta^2\tau^2}{2}\sum_{s=0}^{T-1}\rho^s
 \le \rho^T\E_{\bX_{1:n},\pi,\phi_0}(\mathcal{L}(\phi_0)-\mathcal L^\star_{\Phi,\bX_{1:n},\pi}) + \tfrac{M_{\mathcal L}\eta\tau^2}{2q\mu_{\mathcal L}},
\end{align}
since $u_0=\E_{\bX_{1:n},\pi,\phi_0}\mathcal{L}(\phi_0)-\mathcal L^\star_{\Phi,\bX_{1:n},\pi}$ and $\sum_{s\ge 0}\rho^s\le 1/(1-\rho)=1/(\eta q\mu_{\mathcal L})$.

\paragraph{Converting $\phi$-space into $\theta$-space.}
Define the function-value excess of the iterate against the weighted minimizer,
\begin{align}
\label{eqn:appB-DeltaT}
\Delta_T
 :=
\E_{\bX_{1:n},\pi,\bomega_T,\bu}[L(G_{\phi_T}(\bu);\bar\bw(\bu))-L(\theta^\star_\sub(\bar\bw(\bu));\bar\bw(\bu))],
\end{align}
where the expectation is over the training randomness $\bomega_T$, the fresh draw $\bu$, and the data $(\bX_{1:n},\pi)$. The integrand in~\eqref{eqn:appB-DeltaT} is non-negative because $\theta^\star_\sub(\bar\bw)$ is by definition a global minimizer of $L(\cdot;\bar\bw)$ on $\Theta$. With $\mathcal{L}_{{\rm opt},\bX_{1:n},\pi}:=\E_\bu[L(\theta^\star_\sub(\bar\bw(\bu));\bar\bw(\bu))]$ and $\mathcal{L}(\phi;\bX_{1:n},\pi):=\E_\bu[L(G_\phi(\bu);\bar\bw(\bu))]$, the tower property gives \( \Delta_T=\E_{\bX_{1:n},\pi,\bomega_T}[\mathcal{L}(\phi_T;\bX_{1:n},\pi)-\mathcal{L}_{{\rm opt},\bX_{1:n},\pi}]=\E[u_T]+\varepsilon_{\rm app}, \) where $\varepsilon_{\rm app}=\E_{\bX_{1:n},\pi}[\mathcal L^\star_{\Phi,\bX_{1:n},\pi}-\mathcal L_{{\rm opt},\bX_{1:n},\pi}]\ge 0$ is the network-class approximation gap of the in-class minimum to the per-realization target. Hence, \eqref{eqn:appB-uT} leads to
\begin{align}
\label{eqn:appB-DeltaT-bound}
\Delta_T
 \le \rho^T\E_{\bX_{1:n},\pi,\phi_0}[\mathcal{L}(\phi_0;\bX_{1:n},\pi)-\mathcal L^\star_{\Phi,\bX_{1:n},\pi}]+\tfrac{M_{\mathcal L}\eta\tau^2}{2q\mu_{\mathcal L}}+\varepsilon_{\rm app}.
\end{align}
With $\theta=G_{\phi_T}(\bu)\in\Theta$, Lemma~\ref{lem:appB-var-bridge} is applied to
\begin{align}
\label{eqn:appB-onE}
\E_{\bX_{1:n},\pi,\bomega_T,\bu}[\mathbf 1_{\mathcal E} \|G_{\phi_T}(\bu)-\theta^\star_\sub(\bar\bw(\bu))\|^2]
 \le 
C_\sub \Delta_T,
\end{align}
where $C_\sub=(8/\mu_\infty)(1+2D_\Theta/r)$.

\paragraph{The bad set $\mathcal E^c$.}
On the complement event $\mathcal E^c$, both $G_{\phi_T}(\bu)$ and $\theta^\star_\sub(\bar\bw(\bu))$ lie in $\Theta$ (by (A2) and the definition of $\theta^\star_\sub$ as a minimizer on $\Theta$), so the diameter bound gives, pointwise, $\|G_{\phi_T}(\bu)-\theta^\star_\sub(\bar\bw(\bu))\|^2\le D_\Theta^2$. Therefore
\begin{align}
\label{eqn:appB-offE}
\E_{\bX_{1:n},\pi,\bomega_T,\bu}[\mathbf 1_{\mathcal E^c} \|G_{\phi_T}(\bu)-\theta^\star_\sub(\bar\bw(\bu))\|^2]
 \le D_\Theta^2 \P(\mathcal E^c).
\end{align}
By Lemma~\ref{lem:cm-reg} the regularity event obeys $\P(\mathcal G_n^c)=O(e^{-c'n})$. The localization thresholds $\mu_n/4$ and $\mu_n r/4$ are fixed constants, so a fourth-moment Markov inequality with Lemma~\ref{lem:cm-four}, which gives $\E_{u,\pi}[\|g_\sub\|^4\mid \bX_{1:n}]=O(\gamma^{-3}n^{-2})$ and $\E_{u,\pi}[\|B_\sub\|_F^4\mid \bX_{1:n}]=O(\gamma^{-3}n^{-2})$ on $\mathcal G_n$, yields
\begin{align*}
\P(\mathcal G_n\cap\tilde E_n^{\sub,c})
\le\frac{\E[\|B_\sub\|_F^4\mathbf 1_{\mathcal G_n}]}{(\mu_n/4)^4}
+\frac{\E[\|g_\sub\|^4\mathbf 1_{\mathcal G_n}]}{(\mu_n r/4)^4}
=O(n^{-2}).
\end{align*}
Hence $\P(\mathcal E^c)\le\P(\mathcal G_n^c)+\P(\mathcal G_n\cap\tilde E_n^{\sub,c})=O(n^{-2})$. Summing~\eqref{eqn:appB-onE} and~\eqref{eqn:appB-offE} and substituting~\eqref{eqn:appB-DeltaT-bound},
\begin{align*}
\E_{\bX_{1:n},\pi,\bomega_T,\bu}\|G_{\phi_T}(\bu)-\theta^\star_\sub(\bar\bw(\bu))\|^2
 \le C_\sub \Delta_T+D_\Theta^2 \P(\mathcal E^c),
\end{align*}
which is~\eqref{eqn:thm-conv-G}.\qed

\newpage 

\subsection{Proof of Corollary~\ref{prop:var-transfer}}
\label{app:proof-var-transfer}

Recall that the variance is defined as $\Var(Z):=\E\|Z-\E Z\|^2$.

\begin{lemma}
\label{lem:var-stable}
Let $A,B\sim P$ be jointly distributed random vectors in $\R^{d}$, taking a diameter $D$ with finite second moments. Then
\begin{align*}
\bigl|\Var_P(A)-\Var_P(B)\bigr| \le 2D \E_P\|A-B\| \le 2D \bigl(\E_P\|A-B\|^2\bigr)^{1/2}.
\end{align*}
\end{lemma}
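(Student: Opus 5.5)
The plan is to write both variances through their centered versions and compare them with a product-of-differences identity. Set $\bar A:=A-\E_P A$ and $\bar B:=B-\E_P B$. Then $\Var_P(A)-\Var_P(B)=\E_P\bigl[\|\bar A\|^2-\|\bar B\|^2\bigr]=\E_P\langle \bar A-\bar B,\bar A+\bar B\rangle$. The argument then bounds each factor inside this inner product.

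For the first factor, use $\bar A-\bar B=(A-B)-\E_P(A-B)$. For the second, use the diameter assumption. Reading ``taking a diameter $D$'' as saying that $A$ and $B$ take values in a common set of diameter $D$ (in our application, $\Theta$ with $D=D_\Theta$), the means $\E_P A$ and $\E_P B$ lie in the closed convex hull of that set, which has the same diameter. Hence $\|\bar A\|\le D$ and $\|\bar B\|\le D$ pointwise, so $\|\bar A+\bar B\|\le 2D$. Cauchy--Schwarz inside the expectation then gives
\begin{align*}
|\Var_P(A)-\Var_P(B)|\le 2D\,\E_P\|(A-B)-\E_P(A-B)\|.
\end{align*}

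The main obstacle is removing the centering of $A-B$ without losing the constant $2D$. The crude triangle inequality gives $\E\|(A-B)-\E(A-B)\|\le 2\E\|A-B\|$, which costs a factor $2$. I would avoid this by noting that the cross terms are orthogonal to constants:
\begin{align*}
\E\langle \bar A-\bar B,\bar A+\bar B\rangle=\E\langle A-B,\bar A+\bar B\rangle,
\end{align*}
because $\E(\bar A+\bar B)=0$, so $\E\langle c,\bar A+\bar B\rangle=0$ for the constant $c=\E_P(A-B)$. Applying the pointwise bound $\|\bar A+\bar B\|\le 2D$ to this form yields $|\Var_P(A)-\Var_P(B)|\le 2D\,\E_P\|A-B\|$, which is the first inequality.

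The second inequality $\E_P\|A-B\|\le(\E_P\|A-B\|^2)^{1/2}$ is Jensen's inequality, or equivalently Cauchy--Schwarz against the constant $1$. Finite second moments, which hold automatically under bounded support, ensure that every expectation above is well defined. I would also remark that this lemma feeds Corollary~\ref{prop:var-transfer} with $A=G^{(\gamma),\text{DP}}_{\phi^{(T)}}(\bu)$, $B=\theta^\star_\sub(\bar\bw(\bu))$ and $D=D_\Theta$, which produces the $2D_\Theta\Delta_T(\gamma)^{1/2}$ term.
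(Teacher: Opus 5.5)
Your proof is correct, but it takes a different route from the paper's.

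\textbf{The paper's route.} The paper uses the independent-copy representation $\Var(Z)=\tfrac12\E\|Z-Z'\|^2$ and writes
$\Var(A)-\Var(B)=\tfrac12\E\bigl[(\|A-A'\|-\|B-B'\|)(\|A-A'\|+\|B-B'\|)\bigr]$.
It bounds the first factor by the reverse triangle inequality, $\bigl|\|A-A'\|-\|B-B'\|\bigr|\le\|A-B\|+\|A'-B'\|$, and the second factor by $2D$.

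\textbf{Your route.} You work with the centered variables $\bar A:=A-\E A$ and $\bar B:=B-\E B$. Killing the constant $\E(A-B)$ against the mean-zero factor gives the exact identity $\Var(A)-\Var(B)=\E\langle A-B,\bar A+\bar B\rangle$, and one pointwise Cauchy--Schwarz finishes.

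\textbf{Comparison.} Both routes give the constant $2D$. The paper needs only $\|A-A'\|\le D$, which is immediate from the definition of diameter. Yours needs $\|A-\E A\|\le D$, which you correctly justify via the closed convex hull; equivalently, $\|A-\E A\|\le\E_{A'}\|A-A'\|\le D$ by Jensen. In exchange, your identity is more flexible. Applying Cauchy--Schwarz in $L^2$ instead of pointwise, and then Minkowski, gives $|\Var(A)-\Var(B)|\le\bigl(\sqrt{\Var(A)}+\sqrt{\Var(B)}\bigr)\bigl(\E\|A-B\|^2\bigr)^{1/2}$. This needs only finite second moments, with no bounded support. When the diameter bound does hold, $\Var(A),\Var(B)\le D^2/2$, so the constant in the second ($L^2$) inequality improves from $2D$ to $\sqrt2\,D$. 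That $L^2$ form is exactly what Corollary~\ref{prop:var-transfer} consumes, after conditioning on $(\bX_n,\pi,\bomega_T)$ and applying Jensen.
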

\begin{proof}
Let $(A',B')$ be an independent copy of the pair $(A,B)$. Since $\E\langle Z-\E Z,Z'-\E Z\rangle=0$ for an independent copy $Z'$, $\Var(Z)=\tfrac12\E\|Z-Z'\|^2$ holds. Thus,
\begin{align*}
\Var(A)-\Var(B)=\tfrac12 \E\bigl[(\|A-A'\|-\|B-B'\|)(\|A-A'\|+\|B-B'\|)\bigr].
\end{align*}
By the reverse triangle inequality applied to $A-A'$ and $B-B'$, $|\,\|A-A'\|-\|B-B'\|\,|\le\|(A-A')-(B-B')\|=\|(A-B)-(A'-B')\|\le\|A-B\|+\|A'-B'\|$. Hence $|\Var(A)-\Var(B)|\le D(\E\|A-B\|+\E\|A'-B'\|)=2D\,\E\|A-B\|$, and Jensen's inequality gives the last bound.
\end{proof}

We split the deviation by inserting the ideal block target.

\smallskip\noindent\emph{Step 1 (generator vs.\ block).}\;
Apply Lemma~\ref{lem:var-stable} conditionally on $(\bX_{1:n},\pi,\bomega_T)$ with $A=G_{\phi_T}^{(\gamma)}(\bu)$ and $B=\theta^\star_\sub(\bar\bw_\gamma(\bu))$, the variance taken over the fresh draw $\bu$ (with an independent copy $\bu'$). Now, averaging over $(\bX_{1:n},\pi,\bomega_T)$ via $|\E(\cdot)|\le\E|\cdot|$ gives
\begin{align*}
\bigl|\E_{\bX_{1:n},\pi,\bomega_T}\Var_\bu[G_{\phi_T}^{(\gamma)}]-\E_{\bX_{1:n},\pi}\Var_\bu[\theta^\star_\sub]\bigr|
\le 2D_\Theta \E_{\bX_{1:n},\pi,\bomega_T,\bu}\bigl\|G_{\phi_T}^{(\gamma)}-\theta^\star_\sub\bigr\|\le 2D_\Theta \Delta_T(\gamma)^{1/2},
\end{align*}
where $\Delta_T(\gamma)=\E_{\bX_{1:n},\pi,\bomega_T,\bu}\bigl\|G_{\phi_T}^{(\gamma)}-\theta^\star_\sub\bigr\|^2$.

\smallskip\noindent\emph{Step 2 (triangle split).}\;
Inserting $\theta^\star_\sub$,
\begin{align*}
    &\bigl|\E_{\bX_{1:n},\pi,\bomega_T}\Var_\bu[G_{\phi_T}^{(\gamma)}]-\E_{\bX_{1:n}}\Var_\bw[\theta^\star_\Diri]\bigr|\\
&\le \bigl|\E_{\bX_{1:n},\pi,\bomega_T}\Var_\bu[G_{\phi_T}^{(\gamma)}]-\E_{\bX_{1:n},\pi}\Var_\bu[\theta^\star_\sub]\bigr|
+\bigl|\E_{\bX_{1:n},\pi}\Var_\bu[\theta^\star_\sub]-\E_{\bX_{1:n}}\Var_\bw[\theta^\star_\Diri]\bigr|.
\end{align*}
Therefore, the first term is handled by the quantity in Step 1.

\smallskip\noindent\emph{Step 3}
Proposition~\ref{prop:choice-alpha} gives
\begin{align*}
\E_{\bX_{1:n},\pi}\Var_\bu[\theta^\star_\sub]
-\E_{\bX_{1:n}}\Var_\bw[\theta^\star_\Diri]
=a_{S,n}(\gamma)a_n+O((1+\gamma^{-3/2})n^{-3/2}+\gamma^{-3}n^{-2}).
\end{align*}
By combining Steps 1-3, we have the statement. \qed

\newpage

\section{Implementation Details}
\label{app:simulation}

This section details the implementation of every method in the three simulation studies of Section~\ref{sec:simul}: the linear-regression study under heavy-tailed errors (Section~\ref{sec:simul-linear}) and the exponential-family study (Section~\ref{app:expo-family}).

\subsection{Competing Methods}
\label{sec:supp_competitors}

We compare against three differentially private baselines. Each represents a major branch of private Bayesian or bootstrap inference. We briefly describe them and give the method-specific implementation in the subsections that follow. Refer to the original papers for more details.

\paragraph{DP-SGLD.} Our customized differentially private stochastic-gradient Langevin dynamics privatizes the score at each Langevin step. Each update clips the per-sample gradient and adds Gaussian noise for privacy, and the iterates are taken as approximate posterior samples. The Langevin Gaussian noise decays by $t^{-1/3}$ following \citep{bai:etal:19}. We acknowledge that a single draw from DP-SGLD need not invite additional Gaussian noise for privacy under regularity conditions. However, for UQ with many draws, we perturb the gradient as a safeguard. Its privacy accountant has the same systematic structure as ours for fair comparison; the replace-one R\'enyi bound of \citet[Theorem~C.9]{birr:etal:24} is composed over the $T$ iterations and converted to $(\epsilon,\delta)$-DP by the Asoodeh conversion of Section~\ref{app:conversion-to-dp}.

\paragraph{NoisyGibbs \citep{bern:shel:18,bern:shel:19}.} It releases private sufficient statistics under the Gaussian mechanism, treats the noisy statistics as observed and the true statistics as latent, and runs a Gibbs sampler over the latent statistics and the parameters. The posterior draws are DP by post-processing. We use \cite{bern:shel:18} for the exponential-family experiment and \cite{bern:shel:19} for the linear-regression experiment. Both require specifying a working likelihood. The single release is the classical Gaussian mechanism, i.e., the sufficient statistics are perturbed with noise scale $\sigma=\Delta\sqrt{2\log(1.25/\delta)}\,/\,\epsilon$ \citep{dwor:etal:14}, where $\Delta$ is the $\ell_2$ sensitivity of the sufficient statistics over the bounded design. 

\paragraph{DP-Bootstrap \citep{wang:etal:25}.} This is a private nonparametric bootstrap. It resamples the data, fits an $\ell_2$-regularized $M$-estimator with output perturbation on each bootstrap replicate, and releases $B$ replicates. The collection satisfies $\mu$-GDP \citep{dong:etal:22} as $B\to\infty$, with output-perturbation scale $\sigma_e=\sqrt{(2-2/e)\,B}\,\Delta/\mu$ where $\Delta$ is the sensitivity. To match the other methods, we turn the target $(\epsilon,\delta)$ into the largest $\mu$ whose Gaussian-DP tradeoff function satisfies $\delta(\epsilon;\mu)\le\delta$ \citep[Corollary 2.13]{dong:etal:22}, so the release is calibrated to the same $(\epsilon,\delta)$ budget.

\subsection{Linear-regression and Exponential-family Experiments}
\label{sec:supp_linear}
\paragraph{Data.} The regression study is that of Section~\ref{sec:simul-linear}: $X_i\sim\mathrm{Uniform}[-1,1]^3$, $Y_i=X_i^\top\boldsymbol{\beta}^\star+\varepsilon_i$ with $\boldsymbol{\beta}^\star=(1.5,-1.5,1.0)$ and $\sigma_\varepsilon=0.5$, under Student-$t$ ($\mathrm{df}=3$) and contaminated-normal ($0.9\,N(0,\sigma_\varepsilon^2)+0.1\,N(0,64\sigma_\varepsilon^2)$) noise, with $n\in\{10\text{k},40\text{k}\}$. The exponential-family study is one-dimensional mean estimation under a correctly specified likelihood: a Gaussian model $X_i\sim N(\mu,\sigma^2)$ with known $\sigma=0.5$ and estimand $\mu=1.5$, and an Exponential model $X_i\sim\mathrm{Exp}(\lambda)$ with estimand $1/\lambda=2.0$, using $n\in\{500,5000\}$. Both studies use $\epsilon\in\{1,2,4\}$ at $\delta=1/n$ and $R=100$ replicates. In regression, the covariates lie in the public box $B_x=1$ ($\|X_i\|\le B_x\sqrt p$). In the exponential-family study, each datum is truncated to a public interval $[a,b]$ ($[-5,5]$ for the Gaussian, $[0,12]$ for the Exponential), which bounds the sufficient statistic $t(x)=x$ at replace-one sensitivity $\Delta=b-a$ and, for the score-based methods (DP-SGLD and PGBB), the per-sample derivative is clipped at $C=8$. For Bayesian methods, the credible interval is the $95\%$ quantile interval of the draws. 

In the regression study, DP-Bootstrap, DP-SGLD, and PGBB can use the robust Huber loss of \citet{avel:etal:23}, whose score is bounded independently of $Y$ and $\beta$. In the exponential-family study, DP-Bootstrap, DP-SGLD, and PGBB clip the negative log-likelihood at $C$. For the regression, they estimate $\beta$ and the scale $\sigma$ with Huber constant $c=1.345$ and consistency constant $\kappa=0.71$, and the per-sample gradient has $\ell_2$ norm bounded by $C_H=\sqrt{c^2 p B_x^2+(\tfrac12(c^2+\kappa))^2}$. 

\paragraph{PGBB.} The generator $G_\phi:\mathrm{Dir}(\gamma\mathbf 1_S)\to\mathbb R^p$ is a three-hidden-layer width-$256$ ReLU multilayer perceptron with the Dirichlet input re-concatenated after each hidden block \citep{shin:etal:21} and the linear output clipped to the parameter box. In the regression, it uses $p=3$, $S=20$ blocks ($S\in\{5,20\}$ in the block-count study of Table~\ref{tab:K-comparison}), $T=10000$ iterations, $K=1$ Monte Carlo replicate, Poisson subsampling rate $q=0.1$, the Adam optimizer with a cosine-decayed learning rate from $5\times10^{-4}$ to a $1\%$ floor, gradient bound $C=C_H$, truncation orders $J=32$ and $m=5$, and $\gamma=\gamma^\star(n,S)$. In the exponential-family study it uses $p=1$, $S=20$, $T=2000$, $K=1$, $q=0.2$, gradient bound $C=8$, $J=30$, $m=5$, and $\gamma=1$. The noise scale $\sigma$ is set by the bisection of Section~\ref{app:bisection-sigma}, and we release $B=2000$ posterior draws.

\paragraph{DP-SGLD.} DP-SGLD targets the Gibbs posterior rather than a parametric likelihood so that the comparison isolates the privacy mechanism. In the regression, it runs $T=20000$ Langevin updates at tempering $\omega=1$ with step size $\eta_0$ chosen so $\eta_0 n=20$ ($\eta_0=2\times10^{-3}$ at $n=10$k and $5\times10^{-4}$ at $n=40$k), Poisson subsampling rate $q=0.1$ matched to PGBB, and $50\%$ burn-in. In the exponential-family study, it clips the per-sample score at $C=8$, runs $T=40000$ updates with $50\%$ burn-in and thinning $1$. 

\paragraph{NoisyGibbs.} NoisyGibbs perturbs the sufficient statistics. In the regression, we release $(\bX^\top\bX,\ \bX^\top Y,\ Y^\top Y)$ together with the second and fourth moments of $\bX$ under the Gaussian mechanism, splitting the budget as $(\tfrac12,\tfrac14,\tfrac14)$ of $(\epsilon,\delta)$. For the sufficient-statistic block the replace-one $\ell_2$ sensitivity is $\Delta=2\sqrt{\tfrac{p(p+1)}{2}B_x^4+p B_x^2 B_y^2+B_y^4}$, the factor $2$ accounting for the simultaneous change of the removed and inserted record and the three terms being the $p(p+1)/2$ unique entries of $\bX^\top\bX$ (each of sensitivity $B_x^2$), the $p$ entries of $\bX^\top Y$ (each $B_x B_y$), and the single $Y^\top Y$ entry (of sensitivity $B_y^2$). The moment vectors $m_2,m_4$ are sample means, so their sensitivities carry a $1/n$: $\Delta_2=\tfrac{2}{n}B_x^2\sqrt{\tfrac{p(p+1)}{2}}$ and $\Delta_4=\tfrac{2}{n}B_x^4\sqrt{D_4}$ with $D_4=\binom{p+3}{4}$. Each release uses $\sigma=\Delta\sqrt{2\log(1.25/\delta')}/\epsilon'$ for its split budget, and the noisy $m_2,m_4$ enter $\mu_t,\Sigma_t$ as the covariate moments $\eta_{ij},\eta_{ijkl}$ (equations (2)-(3) of \citealp{bern:shel:19}). The noise-aware Gibbs sampler runs $400\text{k}$ iterations with $200\text{k}$ burn-in and thinning $10$, returning $20\text{k}$ draws. In the exponential-family study, the sufficient statistic is the scalar $t(x)=x$: we perturb its truncated sum by the Gaussian mechanism at replace-one sensitivity $\Delta=b-a$ and run the noise-aware Gibbs sampler of \citet{bern:shel:18} for $4\text{k}$ iterations with half burn-in and thinning $2$. 

\paragraph{DP-Bootstrap.}

For the regression study, we implement the DP-ERM-like implementation of DP-Bootstrap that reports the asymptotic interval that accounts for the known output-perturbation variance, i.e., Algorithm 2 of \cite{wang:etal:25}. For the univariate exponential-family study, we implement Algorithm 3 of \cite{wang:etal:25}. The target is the mean of the truncated variable on $[a,b]$ where $a=-5,b=5$ in our study, so the base statistic is the bootstrap sample mean rather than an $\ell_2$-regularized ERM. Its replace-one sensitivity is therefore $\Delta=(b-a)/n$.

\subsection{Real Data Application}
\label{sec:supp_realdata}

Both applications in Section~\ref{sec:realdata} are differentially private quantile regressions over the levels $\tau\in\{0.1,0.3,0.5,0.7,0.9\}$, at privacy budgets $\epsilon\in\{1,4\}$ with $\delta=1/n$ and 10 replicates. The U.S. Census is the returns-to-schooling regression for the years 1980 ($n=65,023$) and 2000 ($n=97,397$), where the design is $\bX=[\,1,\text{schooling},\text{experience},\text{black}\,]$ with $p=4$. Natality is the birthweight regression on the 2024 U.S.\ records ($n\approx1.94$M; intercept plus $13$ covariates, $p=14$). In the U.S.\ Census regression, the response is the log weekly wage, \texttt{educ} is completed years of schooling, \texttt{exper} is potential labor-market experience (age minus schooling minus six), and \texttt{black} indicates Black respondents. Covariates are scaled into the unit box so that $B_x=1$ and $\|X_i\|\le B_x\sqrt p$. This rescaling uses only each variable's presumably publicly known range, i.e., the published Census and natality variable bounds. In both, PGBB uses $S=5$ and the variance-matched concentration $\gamma=\gamma^\star(n,S)$. We compare PGBB against DP-SGLD, where the non-private quantile fit serves as the reference. Both private methods use the convolution-smoothed (kernel) quantile loss $\rho_{\tau,h}=\rho_\tau*K_h$ \citep{tan:etal:22} at bandwidth $h=0.1$, whose derivative $\Phi(r/h)-(1-\tau)$ (with the Gaussian kernel $K$) lies in the open interval $(\tau-1,\tau)$ and is bounded in magnitude by $\max(\tau,1-\tau)$, so a single record's gradient is bounded by $\max(\tau,1-\tau)\,B_x\sqrt p$. The coefficients in Tables~\ref{tab:supp-acfv} and~\ref{tab:supp-natality} and Figures~\ref{fig:acfv_allvars} and~\ref{fig:natality_allvars} are reported in interpretable units rather than the scaled design, with \texttt{educ} in \%/yr, \texttt{exper} in /yr, and \texttt{black} in log-wage for the U.S.\ Census and grams for Natality. This conversion uses only the public variable ranges and so carries no privacy cost.

\paragraph{PGBB (joint-$\tau$).} A single $\tau$-joint generator $G_\phi(\bu,\tau)$ is trained once to cover the whole quantile levels: it takes the Dirichlet weights and the level $\tau$ as input and outputs $\beta(\tau)$. Because one mechanism serves every $\tau$, the per-record gradient has the uniform bound $C =  (\max_{\tau}\max(\tau,1-\tau)) B_x\sqrt p  =  0.9 B_x\sqrt p$, which dominates $\max(\tau,1-\tau)$ at every candidate level, so the entire process is released under one $(\epsilon,\delta)$ budget regardless of the number of levels. Training uses the Adam optimizer (learning rate $10^{-3}$), $S=5$, Poisson subsampling rate $q=0.01$, $K=5$ Monte Carlo draws, Beta-MGF truncation $J=32$, convolution-smoothing bandwidth $h=0.1$, and $T=5000$ (Census) / $T=10000$ (Natality) iterations, with $\gamma=\gamma^\star(n,S)$. After training, we draw $B=2000$ posterior samples of $\beta(\tau)$ at each $\tau$.


\paragraph{DP-SGLD (per-$\tau$).} DP-SGLD targets the tempered Gibbs posterior $p(\beta\mid\{Y_i,X_i\})\propto\exp\bigl\{-\omega\textstyle\sum_i \rho_{\tau,h}(Y_i-X_i^\top\beta)\bigr\}$ of the same convolution-smoothed loss, with scaler $\omega$. We report the performance of DP-SGLD for $\omega\in\{1,0.1\}$. The tempering enters only the target posterior, whose precision scales as $n\omega$, and leaves the privacy accounting unchanged. Its per-sample score $\{\Phi(r_i/h)-(1-\tau)\}X_i$ is clipped at the per-level threshold $L(\tau)=\max(\tau,1-\tau)\,B_x\sqrt p$. Because the smoothed score is strictly interior to $[\tau-1,\tau]$, the clip never binds for bounded $X_i$. The score is perturbed by Gaussian noise at each of $T=200\text{k}$ Langevin updates under the matched Poisson rate $q=0.01$. The second half of the chain is retained after thinning by $10$, giving $10^{4}$ iterates. Unlike PGBB, DP-SGLD is fit separately at each of the $M=5$ quantile levels. Rather than splitting the budget by basic $(\epsilon/M,\delta/M)$ composition, we account for the levels jointly at the R\'enyi differential privacy level \citep{miro:17}: the $M$ fits form $M\times T$ Poisson-subsampled Gaussian releases whose R\'enyi divergences add over a shared order grid \citep{birr:etal:24}, and a single R\'enyi-to-$(\epsilon,\delta)$ conversion \citep{asoo:etal:21} certifies the entire released quantile process at the total budget $(\epsilon,1/n)$. 

\paragraph{Non-private reference.} The comparison baseline is the ordinary (non-private) quantile fit on the same design and scaling. For the U.S.\ Census, whose samples are moderate, we use the \texttt{statsmodels} package for quantile regression in Python. For Natality, where $n\approx1.94$M makes the hard check loss costly, we use the convolution-smoothed estimator \citep{tan:etal:22} solved by Newton iteration. Its $95\%$ intervals come from a multiplier bootstrap that re-solves the smoothed fit under $\mathrm{Exp}(1)$ weights ($B=500$ replicates, the $2.5/97.5$ empirical percentiles). Both are evaluated at every $\tau$ and serve only as the non-private target. 


\renewcommand{\refname}{Supplementary References}
\putbib[ref]
\end{bibunit}
\end{document}